\setlist[itemize]{topsep=0pt, itemsep=-1ex,partopsep=1ex, parsep=1ex}
\newtheorem{theorem}{Theorem}
\newtheorem{lemma}{Lemma}
\newcommand{\E}{\mathbb{E} \,} 
\newcommand{\R}{\mathbb{R}} 
\newcommand{\grad}{\nabla} 
\newcommand{\latent}{u} 
\newcommand{\FULLSEQ}{\mathcal{T}}
\newcommand{\SUBSEQ}{{\mathcal{S}}}
\newcommand{\BUFFSUBSEQ}{{\mathcal{S}^*}}
\newcommand{\commentout}[1]{}
\def\abovestrut#1{\rule[0in]{0in}{#1}\ignorespaces}
\DeclareMathOperator{\diag}{diag}
\DeclareMathOperator{\tr}{tr} 
\DeclareRobustCommand\dashedline{\tikz[baseline=-0.6ex]\draw[dashed] (0,0)--(0.34,0);}
\DeclareRobustCommand\fullline{\tikz[baseline=-0.6ex]\draw[] (0,0)--(0.27,0);}
\begin{document}

\title{Stochastic Gradient MCMC for State Space Models}

\author{Christopher Aicher\thanks{Department of Statistics, University of Washington, WA} \and Yi-An Ma\thanks{Department of Electrical Engineering and Computer Sciences, UC Berkeley, CA.}
\and
Nicholas J. Foti\thanks{Paul G. Allen School of Computer Science and Engineering, University of Washington, WA. \newline
\hspace*{1.5em} Email: \texttt{[aicherc, nfoti, ebfox]@uw.edu}, \texttt{yianma@berkeley.edu}}
\and
Emily B. Fox\footnotemark[1] \footnotemark[3]
}
\date{}
\maketitle

\begin{abstract}
State space models (SSMs) are a flexible approach to modeling complex time series.
However, inference in SSMs is often computationally prohibitive for long time series.
Stochastic gradient MCMC (SGMCMC) is a popular method for scalable Bayesian inference for large independent data.
Unfortunately when applied to dependent data, such as in SSMs, SGMCMC's stochastic gradient estimates are biased as they break crucial temporal dependencies.
To alleviate this, we propose stochastic gradient estimators that control this bias by performing additional computation in a `buffer' to reduce breaking dependencies.
Furthermore, we derive error bounds for this bias and show a geometric decay under mild conditions.
Using these estimators, we develop novel SGMCMC samplers for discrete, continuous and mixed-type SSMs with analytic message passing.
Our experiments on real and synthetic data demonstrate the effectiveness of our SGMCMC algorithms compared to batch MCMC, allowing us to scale inference to long time series with millions of time points.
\end{abstract}

\section{Introduction}
\label{sec:intro}
State space models (SSMs) are ubiquitous in the analysis of time series
in fields as diverse as biology~\cite{wulsin2013bayesian},
finance and economics~\cite{kim1999state, zeng2013state},
and systems and control~\cite{elliott2008hidden}.
As a defining feature,
SSMs augment the observed time series with a \emph{latent state sequence}
to model complex time series dynamics
with a latent Markov chain dependence structure.
Given a time series,
inference of model parameters involves sampling or marginalizing this latent state sequence.
Unfortunately, both the runtime and memory required scale with the length of the time series,
which is prohibitive for long time series
(e.g. high frequency stock prices~\cite{goodhart1997high},
genome sequences~\cite{eddy1998profile},
or neural impulse recordings~\cite{davis2016mining}).
In practice, given a long time series, one could `segment' or `downsample' to reduce length;
however, this preprocessing can destroy or change important signals and
computational considerations should ideally not limit scientific modeling.

To help scale inference in SSMs, we consider stochastic gradient Markov chain Monte Carlo (SGMCMC), a popular method for scaling Bayesian inference to large data sets~\cite{chen2015convergence, ma2015complete, welling2011bayesian}.
The key idea of SGMCMC is to employ stochastic gradient estimates based on subsets or `minibatches' of data, avoiding costly computation of gradients on the full dataset,
such that the resulting dynamics produce samples from the posterior distribution over SSM parameters.
This approach has found much success in \emph{independent} data models, where the stochastic gradients are \emph{unbiased} estimates of the true gradients.
However, when applying SGMCMC to SSMs, naive stochastic gradients are \emph{biased}, as subsampling the data breaks dependencies in the SSM's latent state sequence.
This bias can destroy the dynamics of SGMCMC causing it to fail when applied to SSMs.
The challenge is to correct these stochastic gradients for SSMs while maintaining the computational benefits of SGMCMC.

In this work,
we develop computationally efficient stochastic gradient estimators for inference in general discrete-time SSMs.
To control the bias of stochastic gradients, we marginalize the latent state sequence in a \emph{buffer} around each subsequence,
propagating critical information from outside each subsequence to its local gradient estimate while avoiding costly full-chain computations.
Similar buffering ideas have been previously considered for belief propagation~\cite{gonzalez2009residual}, variational inference~\cite{foti2014stochastic}, and in our earlier work on SGMCMC for hidden Markov models (HMMs)~\cite{ma2017stochastic}, but all are limited to discrete latent states.
Here, we present buffering as an approximation to \emph{Fisher's identity}~\cite{cappe2005inference},
allowing us to naturally extend buffering trick to continuous and mixed-type latent states.

We further develop analytic bounds on the bias of our proposed gradient estimator that, under mild conditions, decay geometrically in the buffer size.
To obtain these bounds we prove that the latent state sequence posterior distribution has an \emph{exponential forgetting} property~\cite{cappe2005inference, del2010forward}.
However unlike classic results which prove a geometric decay between the approximate and exact marginal posterior distributions in total variation distance,
we use Wasserstein distance~\cite{villani2008optimal} to allow analysis of continuous and mixed-type latent state SSMs.
Our approach is similar to proofs of Wasserstein ergodicity in homogeneous Markov chains~\cite{durmus2015quantitative, madras2010quantitative, rudolf2015perturbation}; however we extend these ideas to the \emph{nonhomogeneous} Markov chains defined by the latent state sequence posterior distribution.
These geometrically decaying bounds guarantee that we only need a small buffer size in practice, allowing scalable inference in SSMs.

Although our proposed gradient estimator can be generally applied to any stochastic gradient method, here, we develop SGMCMC samplers for Bayesian inference in a variety of SSMs such as HMMs, linear Gaussian SSMs (LGSSM), and switching linear dynamical systems (SLDS)~\cite{cappe2005inference,fox2009bayesian}.
We also derive preconditioning matrices to take advantage of information geometry,
which allows for more rapid mixing and convergence of our samplers~\cite{girolami2011riemann, patterson2013stochastic}.
Finally, we validate our algorithms and theory on a variety of synthetic and real data experiments,
finding that our gradient estimator can provide orders of magnitude run-time speed ups compared to batch sampling.

This paper significantly expands upon our initial work~\cite{ma2017stochastic}, by
(i) connecting buffering to Fisher's identity, simplifying its presentation and analysis,
(ii) non-trivially generalizing the approach to SSMs beyond the HMM, including continuous and mixed-type latent states,
(iii) developing a general framework for bounding the error of buffered gradient estimators using Wasserstein distance,
and (iv) providing extensive validation on a number of real and synthetic datasets.

The paper is organized as follows.
First, we review background on SSMs and SGMCMC methods in Section~\ref{sec:background}.
We then present our framework of constructing buffered gradient estimators to extend SGMCMC to SSMs in Section~\ref{sec:framework}.
We prove the geometrically decaying bounds for our proposed buffered gradient estimate in Section~\ref{sec:bounds}.
We apply our framework and error bounds to discrete, continuous and mixed-type latent state SSMs in Section~\ref{sec:models}.
Finally, we investigate our algorithms on both synthetic and real data in Section~\ref{sec:experiments}.

\section{Background}
\label{sec:background}
\subsection{State Space Models for Time Series}
State space models (SSMs) for time series are a class of discrete-time bivariate stochastic process
$\{\latent_t, y_t \}_{t \in \FULLSEQ}$, $\FULLSEQ = \{1, \ldots, T\}$, consisting of a latent state sequence $\latent := \latent_{1:T}$ generated by a homogeneous Markov chain and an observation sequence $y := y_{1:T}$ generated independently conditioned on $\latent$~\cite{cappe2005inference}.
Examples of state space models include: HMMs, LGSSMs, and SLDSs (see Section~\ref{sec:models} for details).
For a generic SSM, the joint distribution of $y$ and $\latent$ factorizes as
\begin{equation}
\label{eq:complete_data_like}
p(y, \latent \, | \, \theta) =
    \prod_{t = 1}^T p(y_t \, | \, \latent_t, \theta)  p(\latent_t \, | \, \latent_{t-1}, \theta) \cdot p_0(\latent_0) \enspace,
\end{equation}
where $\theta$ are model-specific parameters,
$p(y_t  |  \latent_t, \theta)$ is the \emph{emission density},
$p(\latent_t  |  \latent_{t-1}, \theta)$ is the \emph{transition density},
and $p_0(\latent_0)$ is a prior for the latent states. 
As the latent state sequence $u$ is unobserved,
the likelihood of $\theta$ given only the observations $y$ (marginalizing $u$) is
\begin{equation}
\label{eq:marginal_like}
p(y \, | \, \theta) =
    \int \, \prod_{t = 1}^T p(y_t \, | \, \latent_t, \theta)  p(\latent_t \, | \, \latent_{t-1}, \theta) \cdot p_0(\latent_0) \enspace d\latent  \enspace,
\end{equation}
Unconditionally, the observations $y$ are not independent and the graphical model of this \emph{marginal likelihood}, Eq.~\eqref{eq:marginal_like}, has many long term dependencies, Figure~\ref{fig:ssm_graphical} (right).
In contrast, when conditioned on $u$ the observations $y$ are independent and the \emph{complete-data likelihood}, Eq.~\eqref{eq:complete_data_like}, has a simpler chain structure, Figure~\ref{fig:ssm_graphical} (left).

\begin{figure}[t]
\centering
\begin{minipage}[c]{.45\textwidth}
    \centering
    \includegraphics[width=0.99\linewidth]{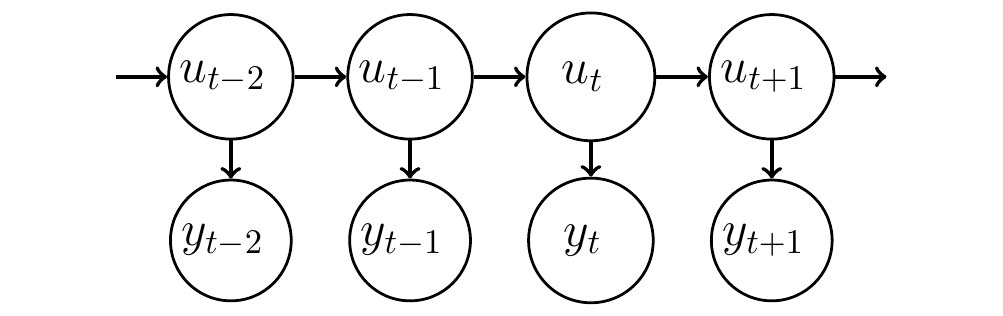}
\end{minipage}
\begin{minipage}[c]{.5\textwidth}
    \centering
    \includegraphics[width=0.99\linewidth]{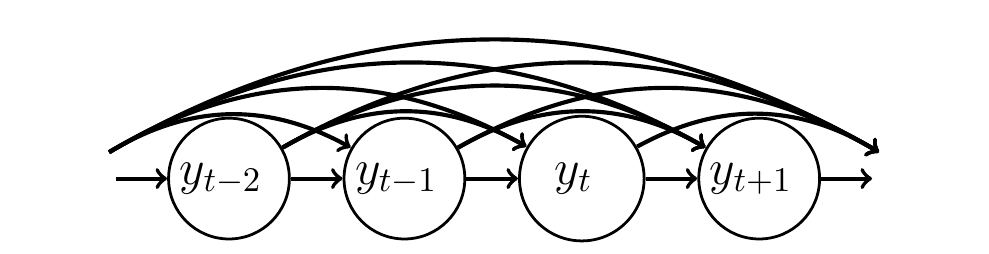}
\end{minipage}
\caption{Graphical Model of a SSM: (left) the joint process $u, y$, Eq.~\eqref{eq:complete_data_like} and (right) $y$ marginalizing out $u$, Eq.~\eqref{eq:marginal_like}.
The parameters $\theta$ are not shown, but connect to all nodes. 
}
\label{fig:ssm_graphical}
\end{figure}

To infer $\theta$ given $y$, we can maximize the marginal likelihood $p(y \, | \, \theta)$ or, given a prior $p(\theta)$, sample from the posterior $p(\theta \, | \, y) \propto p(y \, | \, \theta) p(\theta)$.
However, traditional inference methods for $\theta$, such as expectation maximization (EM), variational inference, or Gibbs sampling,
take advantage of the conditional independence structure in $p(y, u \, | \, \theta)$, Eq.~\eqref{eq:complete_data_like}, rather than working directly with $p(y \, | \, \theta)$, Eq.~\eqref{eq:marginal_like}~\cite{beal2003variational, scott2002bayesian}.
To use $p(y, u \, | \, \theta)$ with unobserved $u$,
these methods rely on sampling or taking expectations of $\latent$ from the posterior $\gamma(\latent) := p(\latent \, | \, y, \theta)$.
As an example, gradient-based methods take advantage of 
\emph{Fisher's identity}
~\cite{cappe2005inference}
\begin{equation}
\label{eq:fisher_identity}
    \grad \log p(y \, | \, \theta) = \E_{u | y, \theta}[ \grad \log p(y, \latent \, | \, \theta)] = \E_{u\sim\gamma}[\grad \log p(y, \latent \, | \, \theta)]\enspace,
\end{equation}
which allows gradients of Eq.~\eqref{eq:marginal_like} to be computed in terms of Eq.~\eqref{eq:complete_data_like}.
To compute the posterior $\gamma(u)$,
these methods use the well-known \emph{forward-backward algorithm}~\cite{cappe2005inference, scott2002bayesian}.
The algorithm works by recursively computing a sequence of
forward messages $\alpha_t(\latent_t)$
and backward messages $\beta_t(\latent_t)$
which are used to compute the pairwise marginals of $\gamma$.
More specifically,
\begin{align}
\label{eq:forward_message}
\alpha_t(\latent_t) &:= p(\latent_t, y_{\leq t} \, | \, \theta) = \int p(y_t, \latent_t \, | \, \latent_{t-1}, \theta) \alpha_{t-1}(\latent_{t-1}) \, d \latent_{t-1} \\
\label{eq:backward_message}
\beta_t(\latent_t) &:= p(y_{>t} \, | \, \latent_t, \theta) = \int p(y_{t+1}, \latent_{t+1} \, | \, \latent_t, \theta) \beta_{t+1}(\latent_{t+1}) \, d \latent_{t+1} \\
\label{eq:pairwise_marginal}
\gamma_{t-1:t}(\latent_{t-1}, \latent_{t}) &:= p(\latent_{t-1}, \latent_t \, | \, y, \theta) \propto \alpha_{t-1}(\latent_{t-1}) p(y_t, \latent_t \, | \, \latent_{t-1}, \theta) \beta_t(\latent_t) 
\enspace.
\end{align}
When message passing is tractable (i.e., when Eqs.~\eqref{eq:forward_message}-\eqref{eq:backward_message} involve discrete or conjugate likelihoods), 
the forward-backward algorithm can be calculated in closed form.
When message passing is intractable, 
the messages can be approximated using Monte-Carlo sampling methods (e.g. blocked Gibbs sampling~\cite{carter1994gibbs, fox2011bayesian}, particle methods~\cite{andrieu2010particle, briers2010smoothing, doucet2009tutorial, sudderth2010nonparametric}).
In both cases, when the length of the time series $|\FULLSEQ|$ is much larger than the dimension of $\theta$, the forward-backward algorithm (running over the entire sequence) requires $O(|\FULLSEQ|)$ time and memory at \emph{each iteration}.

The SSM challenge is to scale inference of model parameters $\theta$ to long time series when the computation and storage per iteration $O(|\FULLSEQ|)$ is prohibitive.

\subsection{Stochastic Gradient MCMC}
One popular method for scalable Bayesian inference is \emph{stochastic gradient} Markov chain Monte Carlo (SGMCMC)~\cite{chen2015convergence, ma2015complete, welling2011bayesian}.
The idea behind gradient-based MCMC is to simulate continuous dynamics for a \emph{potential energy} function $U(\theta) \propto -\log p(y, \theta)$
such that the dynamics generate samples from the posterior distribution $p(\theta \, | \, y)$.
For example, the Langevin diffusion over $U(\theta)$ is given by the stochastic differential equation (SDE)
\begin{equation}
\label{eq:langevin_diffusion}
d\theta_s = g(\theta) ds + \sqrt{2} dW_s \enspace,
\end{equation}
where $dW_s$ is Brownian motion, $g(\theta) = -\grad U(\theta) = \grad_\theta \log p(y, \theta)$, and $s$ indexes continuous time.
As $s \rightarrow \infty$, the distribution of $\theta_s$ converges to the SDE's stationary distribution,
which by the Fokker-Planck equation is the posterior $p(\theta \, | \, y)$~\cite{ma2015complete}.
Because we cannot perfectly simulate Eq.~\eqref{eq:langevin_diffusion},
in practice we use a discretized numerical approximation.
One straightforward approximation is the Euler-Mayurma discretization
\begin{equation}
\label{eq:LMC}
\theta^{(s+1)} \leftarrow \theta^{(s)} + h g(\theta^{(s)}) + \mathcal{N}(0, 2h) \enspace,
\end{equation}
where $h$ is the stepsize and $s$ indexes discrete time steps.
This recursive update defines the
Langevin Monte-Carlo (LMC) algorithm.
Typically, a Metropolis-Hastings correction step is added to account for the discretization error~\cite{roberts1996exponential, roberts1998optimal}.

For large datasets, computing $g(\theta)$ at every step in Eq.~\eqref{eq:LMC} is computationally prohibitive.
To alleviate this, the key ideas of \emph{stochastic gradient} Langevin dynamics (SGLD) are 
to replace $g(\theta)$ with a quick-to-compute unbiased estimator $\hat{g}(\theta)$ and 
to use a decreasing stepsize $h^{(s)}$ to avoid costly Metropolis-Hastings correction steps~\cite{welling2011bayesian}
\begin{equation}
\label{eq:sgld_step}
\theta^{(s+1)} \leftarrow \theta^{(s)} + h^{(s)} \hat{g}(\theta^{(s)}) + \mathcal{N}(0, 2h^{(s)}) \enspace.
\end{equation} 
For i.i.d. data, an example of $\hat{g}(\theta)$ is to use a random minibatch $\SUBSEQ \subset \FULLSEQ$, $|\SUBSEQ|\ll|\FULLSEQ|$
\begin{equation}
\label{eq:approx_potential_iid}
\hat{g}(\theta) = -\frac{1}{\Pr(\SUBSEQ)}
    \sum_{t \in {\SUBSEQ}} \grad\log p(y_t \, | \, \theta) - \grad\log p(\theta) \enspace,
\end{equation}
which only requires $O(|\SUBSEQ|)$ time to compute.
When $\hat{g}(\theta)$ is unbiased and with an appropriate decreasing stepsize schedule $h^{(s)}$, the distribution of $\theta^{(s)}$ asymptotically converges to the posterior distribution~\cite{chen2015convergence, teh2016consistency}.
However, in practice one uses a small, finite step-size for greater efficiency, which introduces a small bias~\cite{dalalyan2017user}.

A Riemannian extension of SGLD (SGRLD) simulates the Langevin diffusion
over a Riemannian manifold with metric $D(\theta)^{-1}$ by preconditioning the gradient and noise of Eq.~\eqref{eq:sgld_step} by $D(\theta)$.
By incorporating geometric information about structure of $\theta$, SGRLD aims for a diffusion which mixes more rapidly.
Suggested examples of the metric $D(\theta)^{-1}$ are the Fisher information matrix $\mathcal{I}(\theta) = \E_{y}[\grad^2\log p(y\,|\,\theta)]$ or a noisy Hessian estimate $\widehat{\grad^2 \log p}(y \, | \,\theta)$
~\cite{girolami2011riemann, patterson2013stochastic}.
Given $D(\theta)$, each step of SGRLD is
\begin{equation}
\label{eq:sgrld_step}
\theta^{(s+1)} \leftarrow \theta^{(s)} + h \left[D(\theta^{(s)}) \cdot \hat{g}(\theta^{(s)}) + \Gamma(\theta^{(s)})\right] + \mathcal{N}\left(0, 2h D(\theta^{(s)})\right)
\end{equation}
where the vector $\Gamma(\theta)$ is a correction term $\Gamma(\theta)_i = \sum_j  \frac{\partial D(\theta)_{ij}}{\partial \theta_{j}}$
to ensure the dynamics converge to the target posterior~\cite{ma2015complete, xifara2014langevin}.
Many extensions to SGMCMC have been proposed such as using control variates to reduce the variance of $\hat{g}(\theta)$~\cite{baker2017control,chatterji2018theory,nagapetyan2017true} or augmented dynamics to improve mixing~\cite{chen2015convergence,chen2014stochastic, ding2014bayesian, li2016preconditioned}.
Although our ideas extend to these formulations as well,
we focus on the popular SGLD and SGRLD algorithms.

To apply SGMCMC to SSMs, we must choose whether to use the complete-data loglikelihood or the marginal data loglikelihood in the potential $U(\theta)$.
If we use the complete-data loglikelihood, then we treat $(\latent, \theta)$ as the parameters.
Although the observations $y$ conditioned on $(\latent, \theta)$ are independent,
we must calculate gradients for $\latent_{-T:T}$ at each iteration, which is prohibitive for long sequences $|\FULLSEQ|$ and intractable for discrete or mixed-type $\latent$.
On the other hand, if we use the marginal loglikelihood, then we only need to take gradients in $\theta$.
However, the observations $y$ conditioned on $\theta$ alone are \emph{not} independent and therefore the minibatch gradient estimator Eq.~\eqref{eq:approx_potential_iid} breaks crucial dependencies causing it to be biased.
Our SGMCMC challenge is correcting the bias in stochastic gradient estimates $\grad \widetilde U(\theta)$ when applied to SSMs.

\section{General Framework}
\label{sec:framework}
We now present our framework for scalable Bayesian inference in SSMs with long observation sequences.
Our approach is to extend SGMCMC to SSMs by developing a gradient estimator that ameliorates the issue of broken temporal dependencies.
In particular, we develop a computationally efficient gradient estimator that uses a \emph{buffer} to avoid breaking crucial dependencies, only breaking weak dependencies.
We first present a (computationally prohibitive) unbiased estimator of $g(\theta) = \grad \log p(y \, | \, \theta)$ for SSMs using Fisher's identity. 
We then derive a general computationally efficient gradient estimate $\tilde{g}(\theta)$ that accounts for the dependence in observations using a buffer. 
We also propose preconditioning matrices for SGRLD with SSMs. 
Finally, we present our general SGMCMC pseudocode for SSMs. 

\subsection{Unbiased Gradient Estimate}
\label{sec:framework-unbiased-gradient}
The main challenge in constructing an efficient estimate $\tilde{g}(\theta)$ of $g(\theta)$
for SSMs is handling the lack of independence (marginally) in $y$.
Because the observations in SSMs are not independent,
we cannot produce an unbiased estimate of $g(\theta)$ with a randomly selected subset of data points as in Eq.\eqref{eq:approx_potential_iid}.
For example, a naive estimate is to take the gradient of a random contiguous \emph{subsequence} $\SUBSEQ = \{t_1, \ldots, t_{S}\} \subset \FULLSEQ$ with $t_i = t_{i-1}+1$
\begin{equation}
\hat{g}(\theta) = - \frac{1}{\Pr(\SUBSEQ)}\grad \log p(y_\SUBSEQ \, | \, \theta) - \grad \log p(\theta) \enspace,
\end{equation}
where $p(y_\SUBSEQ \, | \, \theta)$ is computed with $p(\latent_{t_0}) = p_0(\latent_{t_0})$.
This estimate only requires $O(S)$ time compared to the $O(T)$ for $g(\theta)$.
However because the marginal likelihood does not factorize as in the independent observations case, this estimate is biased $\E_\SUBSEQ[\hat{g}(\theta)] \neq g(\theta)$.
In addition, as $\SUBSEQ$ are contiguous subsequences of $\FULLSEQ$, the scaling factor $\Pr(\SUBSEQ)^{-1}$ is no longer correct as time points in the center of $\mathcal{T}$ are sampled more frequently than the endpoints; instead each time point should be scaled point-wise.

To obtain an unbiased estimate for $g(\theta)$,
we use Fisher's identity Eq.~\eqref{eq:fisher_identity} to
rewrite $g(\theta)$ in terms of the complete-data loglikelihood
as a sum over time points
\begin{align}
g(\theta) &= - \grad \log p(y \, |\, \theta) - \grad \log p(\theta)\\
\nonumber
    &= - \E_{\latent | y, \theta} \left[ \grad \log p(y, \latent \, | \,\theta) \right] - \grad \log p(\theta) \\
\nonumber
    &= -  \sum_{t \in \FULLSEQ}\E_{\latent | y, \theta} \left[ \grad \log p(y_t, \latent_t \, | \, \latent_{t-1}, \theta) \right] - \grad \log p(\theta)
\end{align}
From this,
we straightforwardly identify an unbiased estimator for a subsequence $\SUBSEQ$
\begin{equation}
\label{eq:unbiased_potential_estimate}
\bar{g}(\theta) = - \sum_{t \in \SUBSEQ} \frac{1}{\Pr(t \in \SUBSEQ)}\E_{\latent | y, \theta} \left[ \grad \log p(y_t, \latent_t \, | \, \latent_{t-1}, \theta) \right] - \grad \log p(\theta) \enspace,
\end{equation}
where $\Pr(t \in \SUBSEQ)$ is the probability $t$ is in the random subsequence $\SUBSEQ$.

Although Eq.~\eqref{eq:unbiased_potential_estimate} reduces the number of gradient terms to compute from $T$ to $S$,
the summation terms require calculating expectations of $u \, | \, y, \theta$.
More specifically, Eq.~\eqref{eq:unbiased_potential_estimate} requires expectations with respect to the pairwise marginal posteriors $p(\latent_t, \latent_{t-1}\, | \, y_\FULLSEQ)$ for $t \in \SUBSEQ$.
Recall that computing these marginals take $O(T)$ time to pass messages over the entire sequence $\FULLSEQ$.
This defeats the purpose of using a subsequence.
If we instead only pass messages over the subsequence $\SUBSEQ$,
then the pairwise marginals are $p(\latent_t, \latent_{t-1} \, | \, y_\SUBSEQ)$ and we return to a biased gradient estimator
\begin{equation}
\label{eq:naive_grad_approx}
\hat{g}(\theta) = -\sum_{t \in \SUBSEQ}  \frac{1}{\Pr(t \in \SUBSEQ)} \E_{u | y_\SUBSEQ, \theta}[\grad \log p(y_t, \latent_t \, | \, \latent_{t-1}, \theta)] - \grad \log p(\theta) \enspace.
\end{equation}

\subsection{Approximate Gradient Estimate}
\label{sec:framework-approx-gradient}
\begin{figure}[t]
\centering
\begin{minipage}[c]{.9\textwidth}
    \centering
    \includegraphics[width=0.99\linewidth]{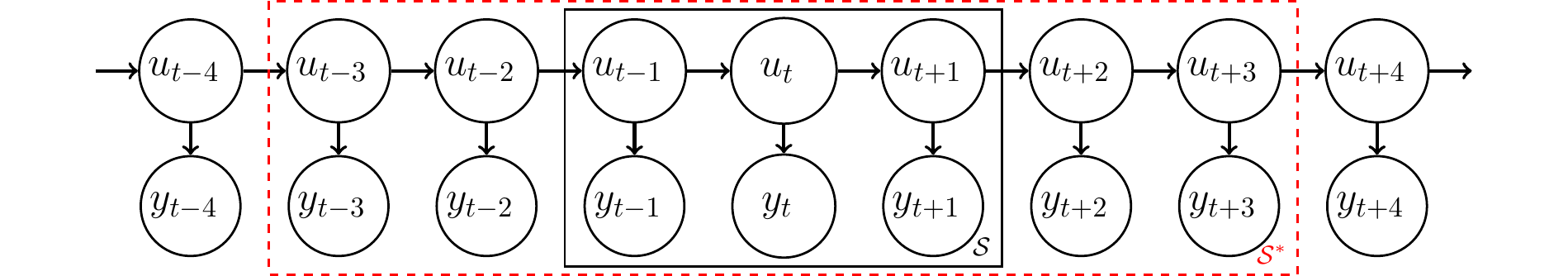}
\end{minipage}
\caption{Graphical model of a buffered subsequence with $S = 3$ and $B=2$.}
\label{fig:ssm_buffer}
\end{figure}

We instead propose passing messages over a \emph{buffered} subsequence
$\BUFFSUBSEQ := \{t_{-B}, \ldots, t_{S+B} \}$ for some positive buffer size $B$,
with $\SUBSEQ \subset \BUFFSUBSEQ \subset \FULLSEQ$ (see Figure~\ref{fig:ssm_buffer}).
The idea is that there exists a large enough $B$ such that $p(u_\SUBSEQ \, | \, y_\BUFFSUBSEQ, \theta) \approx p(u_\SUBSEQ \, | \, y_\FULLSEQ, \theta)$.
Our \emph{buffered gradient estimator} sums only over $\SUBSEQ$, but takes expectations over $u_\SUBSEQ \, | \, y_\BUFFSUBSEQ, \theta$ instead of $u_\SUBSEQ \, | \, y_\FULLSEQ, \theta$
\begin{equation}
\label{eq:efficient_potential_estimate}
\tilde{g}(\theta) = - \sum_{t \in \SUBSEQ}\frac{1}{\Pr(t \in \SUBSEQ)} \E_{\latent | y_{\BUFFSUBSEQ}, \theta} \left[ \grad \log p(y_t, \latent_t \, | \, \latent_{t-1}, \theta) \right] - \grad \log p(\theta) \enspace,
\end{equation}
where $p(\latent_{t_{-B-1}}) = p_0(\latent_{t_{-B-1}})$.
When $B = 0$ this is equivalent to the biased estimator $\hat{g}(\theta)$ of Eq.~\eqref{eq:naive_grad_approx}.
When $B = T$ this is equivalent to the unbiased estimator $\bar{g}(\theta)$ of Eq.~\eqref{eq:unbiased_potential_estimate}.

The trade-off between accuracy (bias) and runtime depends on the size of the buffer $B$ and current model parameters $\theta^{(s)}$.
Intuitively, when $\theta^{(s)}$ produces pairwise marginals that are similar to i.i.d.\ data, we can use a small buffer $B$.
When $\theta^{(s)}$ produces strongly dependent pairwise marginals, we must use a larger buffer $B$.
In Section~\ref{sec:bounds}, we analyze, for a fixed value of $\theta$, how quickly the bias between $\bar{g}(\theta)$ and $\tilde{g}(\theta)$ decays with increasing $B$.
We show a geometric decay
\begin{equation}
\label{eq:geom_decay_rate}
\E_\SUBSEQ \| \bar{g}(\theta) - \tilde{g}(\theta) \|_2 \leq C_\theta \rho_\theta^{-B} \enspace, \enspace \text{ for some } C_\theta > 0 \enspace,
\end{equation}
where $\rho_\theta$ is large for i.i.d.\ data and small for strongly dependent data.
The term $C_\theta$ depends on the smoothness of $g(\theta)$ and how accurately $p_0(\latent_{t_{-B-1}})$ approximates $p(\latent_{t_{-B-1}} \, | \, y_{\FULLSEQ\backslash\BUFFSUBSEQ})$.
For a gradient accuracy of $\epsilon$, we only need a logarithmic buffer size $O(\log\epsilon^{-1})$.\footnote{As $\epsilon \geq C_\theta \rho_\theta^{-B} \Rightarrow B \geq -\log \epsilon/\log\rho_\theta + \log C_\theta/\log\rho_\theta \Rightarrow B$ is $O(\log \epsilon^{-1})$.}
Therefore our buffered gradient estimator reduces the computation time from $O(T)$
to $O(S + \log\epsilon^{-1})$.
By using buffered stochastic gradients $\tilde{g}$ with an appropriate buffer size $B$ in SGMCMC (Eq.~\eqref{eq:sgld_step} or~\eqref{eq:sgrld_step}), we can generate samples $\theta^{(s)}$ that are close to the samples that would be generated if we were to use the unbiased (but intractable) stochastic gradients $\bar{g}$.
In our experiments (Section~\ref{sec:experiments}), we find that modest buffers significantly correct for bias.

Our approach is similar to fixed-lag smoothing methods in the particle filter literature~\cite{chan2016theory,del2017biased,olsson2008sequential}, which approximate $p(\latent_t \, | \, y_{1:T}, \theta)$ using a right buffer $p(\latent_t \, | \, y_{1:t+B}, \theta)$ in a streaming fashion.
However, our approach, Eq.~\eqref{eq:efficient_potential_estimate}, differs by using both a left and a right buffer $p(\latent_t \, | \, y_{1:T}, \theta) = p(\latent_t \, | \, y_{t-B:t+B})$, which allow us to avoid a full passes over the data.

\subsection{Preconditioning and Fisher Information}
\label{sec:framework-sgrld}
The desirable properties for the preconditioning matrix $D(\theta)$ for SGRLD are
(i) the resulting dynamics takes advantage of the geometric structure of $\theta$,
(ii) both $D(\theta)$ and $\Gamma(\theta)$ can be efficiently computed, 
and (iii) neither $D(\theta)g(\theta)$ nor $\Gamma(\theta)$ are numerically unstable.

The \emph{expected Fisher information} $\mathcal{I}_y$ is the Riemannian metric proposed in \cite{girolami2011riemann}
\begin{equation}
\label{eq:preconditioning_with_fisher}
D^{-1}(\theta) = \mathcal{I}_y = \E_{y\,|\,\theta}\left[\grad^2 \log p(y \, | \, \theta) \right] \enspace.
\end{equation}
Unfortunately for SSMs, the lack of independence in the marginal likelihood
requires a double sum over $\FULLSEQ$ to compute $\mathcal{I}_y$, which is computationally intractable for long time series.
We instead replace $I_{y}$ with the \emph{complete data Fisher information} $I_{\latent, y}$
\begin{equation}
\mathcal{I}_{\latent,y} = \E_{\latent, y \,|\, \theta}\left[\grad^2 \log p(y, \latent \, \ \, \theta) \right] =
T \cdot \E_{\latent, y \, | \, \theta}\left[\grad^2 \log p(y_t, \latent_t \, | \, \latent_{t-1}, \theta)\right]  \enspace.
\end{equation}
Because $I_{\latent,y}$ can be calculated analytically for the SSMs we consider (Section~\ref{sec:models}),
we use $D(\theta) = I_{\latent,y}^{-1}$ when possible or approximations of $I_{\latent, y}^{-1}$ when not (see the Supplement for details).
In our experiments, we find that in practice, using preconditioning works well and outperforms vanilla SGLD.

\subsection{Algorithm Pseudocode}
\label{sec:framework-algorithm}
Algorithms~\ref{alg:sgrld} and \ref{alg:noisygradient-analytic} summarize our generic SGMCMC method for SSMs\footnote{Python code for our method is available at~\url{https://github.com/aicherc/sgmcmc_ssm_code}}.

\begin{algorithm}
\caption{SGRLD}
\label{alg:sgrld}
\begin{algorithmic}
\STATE{Input: data $y$, parameters $\theta^{(0)}$, stepsize $h$, subsequence length $S$, error tolerance $\epsilon$}
\FOR{$s = 0, 1, 2, \ldots, N_\text{steps}-1$}
\STATE{$\tilde{g}(\theta^{(s)})$ = \texttt{NoisyGradient}$(y, \theta^{(s)}, S, \epsilon)$}
\COMMENT{Algorithm~\ref{alg:noisygradient-analytic} or \ref{alg:noisygradient-slds}}
\STATE{$D^{(s)}, \Gamma^{(s)}$ =  \texttt{GetPreconditioner}$(\theta^{(s)})$}
\COMMENT{e.g. Eq.~\eqref{eq:preconditioning_with_fisher}}
\STATE{$\theta^{(s+1)} \leftarrow \theta^{(s)} + h^{(s)} \left[ D^{(s)}\tilde{g}(\theta^{(s)}) + \Gamma^{(s)}\right] + \mathcal{N}\left(0, 2h^{(s)} D^{(s)}\right)$}
\COMMENT{Eq.~\eqref{eq:sgrld_step}}
\ENDFOR
\STATE{Return $\theta^{(N_\text{steps})}$}
\end{algorithmic}
\end{algorithm}

\begin{algorithm}
\caption{\texttt{NoisyGradient} for analytic message passing}
\label{alg:noisygradient-analytic}
\begin{algorithmic}
\STATE{Input: data $y$, parameters $\theta$, subsequence length $S$, error tolerance $\epsilon$}
\STATE{$B$ = \texttt{BufferSize}$(\theta, S, \epsilon)$}
\STATE{$\SUBSEQ, \BUFFSUBSEQ$ = \texttt{GetBufferedSubsequence}$(y, S, B)$}
\STATE{$p(\latent_\SUBSEQ \, | \, y_\BUFFSUBSEQ, \theta)$ = \texttt{ForwardBackward}$(y,\, \BUFFSUBSEQ, \theta)$}
\COMMENT{Message Passing}
\STATE{$\tilde{g}(\theta) =
-\sum_{t \in \mathcal{S}}\frac{1}{\Pr(t \in \SUBSEQ)} \, \E_{\latent_\SUBSEQ | y_\BUFFSUBSEQ, \theta}[ \grad_\theta \log p(y_t, \latent_t | \latent_{t-1}) ]$}
\Comment{Eq.~\eqref{eq:efficient_potential_estimate}}
\STATE{Return $\tilde{g}(\theta)$}
\end{algorithmic}
\end{algorithm}

To select the buffer size $B$ in Algorithm~\ref{alg:noisygradient-analytic},
we choose $B$ large enough
such that the error using $B$ and a larger buffer size $B^*$ is small:
\begin{equation}
\label{eq:adaptive_B}
B = \min \left\{\hat{B} \in [0, B^*] \, : \, \E_\SUBSEQ \|\, \tilde{g}(\theta, \SUBSEQ, \hat{B}) - \tilde{g}(\theta, \SUBSEQ, B^*) \| < \epsilon  \right\}
\end{equation}
where $\tilde{g}(\theta, \SUBSEQ, B) = \E_{\latent | y_{\BUFFSUBSEQ}, \theta}[ \grad \log p(y_\SUBSEQ, \latent_\SUBSEQ \, | \, \theta) ]$ and the expectation over $\SUBSEQ$ is approximated with an empirical average over $N_S$ subsequences. 
Eq.~\eqref{eq:adaptive_B} uses $\tilde{g}(\theta, \SUBSEQ, B^*)$ as a proxy for $\tilde{g}(\theta, \SUBSEQ, T)$. 
As the error decays geometrically (Section~\ref{sec:bounds}), we found using $B^* = 100$ was conservative in practice.
Calculating $B$ using Eq.~\eqref{eq:adaptive_B} at every iteration for a new $\theta^{(s)}$ is impractical; therefore for our experiments, we use a fixed $B$, estimated using $\theta$ from a pilot run with $B=B^*$ and $N_S = 1000$.
In addition, instead of evaluating each $\hat{B}$ in $[0, B^*]$, we can estimate the required $B$ for a target error tolerance $\epsilon$ after estimating the error $\hat{\epsilon}$ of a single $\hat{B}$, by taking advantage of the geometric error scaling rate, Eq.~\eqref{eq:geom_decay_rate}, to obtain $B = \hat{B} + \log_{\rho_\theta}(\hat{\epsilon}/\epsilon)$ where $\rho_\theta$ is a bound on the geometric decay rate from theory.

\section{Buffered Gradient Estimator Error Bounds}
\label{sec:bounds}
In this section, we establish a bound on the expected error between
the unbiased gradient $\bar{g}(\theta)$ and our buffered gradient estimator $\tilde{g}(\theta)$ Eq.~\eqref{eq:efficient_potential_estimate}.
Given such a bound, we can control the overall error in our SGLD or SGRLD scheme when the SGMCMC dynamics possess a contraction property~\cite{johndrow2017error}.
Specifically,
if we can uniformly bound $\| \bar{g}(\theta) - \tilde{g}(\theta) \|_2 < \delta$, then the difference in a single step of SGMCMC, Eq.~\eqref{eq:sgrld_step}, using the unbiased and approximate gradients $\bar{g}$ and $\tilde{g}$ is bounded by $\delta h$. Therefore we can apply Theorem 1.11 of~\cite{johndrow2017error} which states the sample average of a test function evaluated on samples of the approximate-gradient $\tilde{g}$ chain, $\sum_{i < s} \varphi(\theta^{(i)})/s$, converges to the posterior expected value of the unbiased-gradient $\bar{g}$ chain, $\E_\theta[\varphi(\theta)]$, plus an additional error term proportional to $\delta h$.
For our analysis, we first consider the simple case of uniformly sampling a single sequence from $T/S$ separate subsequences (i.e. $\Pr(t \in \SUBSEQ) = S/T$ for all $t$) and assume the prior $p_0$ is stationary (i.e. $p_0(u_t) = \int p(u_t | u_{t-1}) p_0(u_{t-1}) du_{t-1}$).

Our approach is to bound $\| \bar{g}(\theta) - \tilde{g}(\theta)\|_2$ 
in terms of the Wasserstein distance between the exact posterior $\gamma_t(\latent_t) = p(\latent_t \, | \, y_\FULLSEQ, \theta)$ and our approximate posterior $\widetilde{\gamma}_t(\latent_t) = p(\latent_t \, | \, y_\BUFFSUBSEQ, \theta)$ and then show this Wasserstein distance decays geometrically.
To bound the Wasserstein distance,
we follow existing work on bounding Markov processes in Wasserstein distance~\cite{durmus2015quantitative, madras2010quantitative, rudolf2015perturbation}.
However, unlike previous work that focuses on the homogeneous Markov process of the joint model $\{\latent, y \, | \, \theta\}$,
we instead focus on the induced \emph{nonhomogeneous} Markov process of the conditional model $\{\latent \, | \, y, \theta\}$.
To do so, we use the forward ($f_t$) and backward ($b_t$) \emph{random maps} of $\{\latent \, | \, y, \theta\}$~\cite{diaconis1999iterated}
\begin{align}
\latent_t \sim p(\latent_t \, | \, y, \theta) \enspace &\Rightarrow \enspace (f_t(\latent_t), \latent_t) \sim p(\latent_{t+1}, \latent_t \, | \, y, \theta) \\
\latent_t \sim p(\latent_t \, | \, y, \theta) \enspace &\Rightarrow \enspace (b_t(\latent_t), \latent_t) \sim p(\latent_{t-1}, \latent_t \, | \, y, \theta) \enspace,
\end{align}
If $f_t$ and $b_t$ satisfy a contractive property, then we can bound the Wasserstein distance between $\gamma_t, \widetilde\gamma_t$ in terms of $\gamma_{t-1}, \widetilde\gamma_{t-1}$ and $\gamma_{t+1}, \widetilde\gamma_{t+1}$ respectively.
Bounding the error of the induced nonhomogeneous Markov process has been previously studied in the SSM literature using total variation (TV) distance~\cite{cappe2005inference, del2010forward, le2000exponential, tong2012ergodicity}.
These works bound the error in total variation distance by quantifying how quickly the smoothed posterior forgets the initial condition.
However, these bounds typically require stringent regularity conditions, which are hard to prove outside of finite or compact spaces\footnote{These bounds have been extended to non-compact spaces for the \emph{filtered} posterior, when the SSM satisfies a multiplicative drift condition~\cite{whiteley2013stability}.}.
In particular, these bounds are not immediately applicable for LGSSMs.
In contrast, we bound the error in Wasserstein distance by proving contraction properties of $f_t$ and $b_t$,
allowing us to handle continuous and mixed-type SSMs such as the LGSSM (Section~\ref{sec:models:LGSSM-bounds}).

Our main result is that if, for each fixed $\theta$, the gradient of $\log p(y, \latent \, | \, \theta)$ satisfies a Lipschitz condition and the random maps $\{f_t, b_t\}_{t \in \BUFFSUBSEQ}$ all satisfy a contraction property,
then the error $\| \bar{g}(\theta) - \tilde{g}(\theta)\|_2$ decays geometrically in the buffer size $B$.
\begin{theorem}
\label{theorem:geometric_error_bound}
Let $\epsilon_{\rightarrow}$ and $\epsilon_{\leftarrow}$ be the 1-Wasserstein distances between $\gamma_t$ and $\widetilde{\gamma}_t$ at the left and right ends of $\BUFFSUBSEQ$ respectively.
Let $\epsilon_1 = \max_{\BUFFSUBSEQ \subset \FULLSEQ}\{\epsilon_\rightarrow, \epsilon_\leftarrow\}$.
If the gradients of $\log p(y_t, \latent_t \, | \, \latent_{t-1}, \theta)$ are all Lipschitz in $\latent_{t-1:t}$ with constant $L_U$,
and random maps $f_t$ and $b_t$ are all Lipschitz\footnote{
The random mapping $\psi$ is Lipschitz with constant $L$ if ${\E_\psi\| \psi(\latent) - \psi(\latent') \|_2} \leq L {\|\latent - \latent' \|_2} \ \forall \latent, \latent'$.
} in $\latent_t$ with constant $L < 1$,
then we have
\begin{equation}
\| \bar{g}(\theta) - \tilde{g}(\theta)\|_2 \leq T \cdot L_U \cdot \frac{1+L}{1-L} \cdot \frac{1-L^S}{S} \cdot L^{B} \cdot 2\epsilon_1 \enspace.
\end{equation}
\end{theorem}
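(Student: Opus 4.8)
The plan is to pass from the gradient error to a sum of Wasserstein distances between the exact and buffered posteriors, and then to propagate the boundary discrepancy inward using the contraction of the random maps. First I would note that the prior terms $\grad \log p(\theta)$ cancel in $\bar g(\theta) - \tilde g(\theta)$, and that under the uniform sampling assumption $\Pr(t \in \SUBSEQ) = S/T$, so that
\[
\bar g(\theta) - \tilde g(\theta) = -\frac{T}{S}\sum_{t \in \SUBSEQ}\left(\E_{\gamma_{t-1:t}} - \E_{\widetilde\gamma_{t-1:t}}\right)\!\left[\grad \log p(y_t, \latent_t \mid \latent_{t-1}, \theta)\right].
\]
Applying the triangle inequality over the $S$ summands and then the Lipschitz hypothesis on the integrand together with the coupling characterization of $W_1$ reduces the claim to bounding $\sum_{t\in\SUBSEQ} W_1(\gamma_{t-1:t}, \widetilde\gamma_{t-1:t})$. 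The relevant elementary fact is that for any $L_U$-Lipschitz vector field $h$ and measures $\mu,\nu$, $\|\E_\mu h - \E_\nu h\|_2 \le L_U\, W_1(\mu,\nu)$, which follows by taking the optimal coupling $\pi$ and using $\|\E_\pi[h(X)-h(Y)]\|_2 \le \E_\pi\|h(X)-h(Y)\|_2 \le L_U\,\E_\pi\|X-Y\|_2$.

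Next I would reduce the pairwise distance to a single-site distance and then bound that. Coupling $\latent_t$ optimally to realize $W_1(\gamma_t,\widetilde\gamma_t)$ and generating $\latent_{t-1}$ through the backward map, the contraction $\E\|b_t(\latent_t)-b_t(\latent_t')\|_2 \le L\|\latent_t-\latent_t'\|_2$ together with $\sqrt{a^2+b^2}\le a+b$ on the product metric gives $W_1(\gamma_{t-1:t},\widetilde\gamma_{t-1:t}) \le (1+L)\,W_1(\gamma_t,\widetilde\gamma_t)$, which is the source of the $(1+L)$ factor. For the single-site distance I would split the discrepancy between $\gamma$ (conditioned on $y_\FULLSEQ$) and $\widetilde\gamma$ (conditioned on $y_\BUFFSUBSEQ$) into a part entering from the left end of $\BUFFSUBSEQ$ and a part entering from the right end. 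The left discrepancy, of size $\epsilon_\rightarrow$, is transported rightward by the forward maps $f$, contracting by $L$ at each of the $B+(i-1)$ steps to the $i$-th site of $\SUBSEQ$; the right discrepancy, of size $\epsilon_\leftarrow$, is transported leftward by the backward maps $b$ over $B+(S-i)$ steps. Bounding both boundary errors by $\epsilon_1$ yields $W_1(\gamma_t,\widetilde\gamma_t) \le \bigl(L^{B+i-1}+L^{B+S-i}\bigr)\epsilon_1$.

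Finally I would collect the pieces: summing the geometric series, $\sum_{i=1}^{S}\bigl(L^{B+i-1}+L^{B+S-i}\bigr) = 2L^B\frac{1-L^S}{1-L}$, so that combining with the $\frac{T}{S}L_U$ and $(1+L)$ factors gives exactly $T\cdot L_U\cdot\frac{1+L}{1-L}\cdot\frac{1-L^S}{S}\cdot L^{B}\cdot 2\epsilon_1$. The main obstacle is the step that ``propagates boundary error with contraction,'' because $\gamma$ and $\widetilde\gamma$ are genuinely distinct nonhomogeneous chains: their forward (resp.\ backward) posterior kernels disagree, since $p(\latent_t\mid\latent_{t-1},y_\FULLSEQ)$ conditions on data outside $\BUFFSUBSEQ$ that $p(\latent_t\mid\latent_{t-1},y_\BUFFSUBSEQ)$ omits, so one cannot naively push both measures through a single shared random map. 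I would resolve this with a hybrid/telescoping decomposition that truncates the data at only one end of $\BUFFSUBSEQ$ at a time; by the Markov property of the posterior the two measures compared along each propagation then share identical forward (or backward) kernels, so the coupling-and-contract estimate applies, and the residual cross-terms driven by data beyond the opposite boundary are themselves dominated by $\epsilon_1$. Establishing $L<1$ for $f_t,b_t$ and checking that this intermediate construction does not inflate the boundary constants is where the real work lies.
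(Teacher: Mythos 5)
Your proposal is correct and follows essentially the same route as the paper: the Lipschitz/Kantorovich reduction to $\sum_{t\in\SUBSEQ} \mathcal{W}_1(\gamma_{t-1:t},\widetilde\gamma_{t-1:t})$ is the paper's Lemma~\ref{lemma:grad_error_to_wasserstein}, and your ``hybrid'' decomposition that truncates the data at only one end of $\BUFFSUBSEQ$ at a time is exactly the paper's intermediate posterior $\widehat\gamma = p(\latent \mid y_{t \ge t_{-B}}, \theta)$, which shares forward random maps with $\gamma$ and backward random maps with $\widetilde\gamma$, so the coupling-and-contract estimate (Lemma~\ref{lemma:geometric_wasserstein}) applies separately to each leg of the triangle inequality. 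Your geometric-series bookkeeping then reproduces the paper's constant exactly.
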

A similar result for when the gradient of the complete data loglikelihood is Lipschitz in $uu^T$ instead of $u$ (as needed for LGSSM)
will be proved in Section~\ref{sec:bounds-main}. 

As $L < 1$, Theorem~\ref{theorem:geometric_error_bound} states that the error of the buffered gradient estimator decays geometrically as $O(L^B)$.
Therefore, the required buffer size $B$ for an error tolerance of $\delta$ scales logarithmically as $O(\log \delta^{-1})$.
In contrast, the error of the gradient estimator decays only linearly in the subsequence length, $O(S^{-1})$;
therefore much longer subsequences, $O(\delta^{-1})$, are required to reduce bias. 
This agrees with the intuition that the bias is dominated by the error at the endpoints of subsequence.

Theorem~\ref{theorem:geometric_error_bound} requires bounding the Lipschitz constants of the gradient of the complete data loglikelihood and the random maps $f_t, b_t$ given the parameters $\theta$ and observations $y_\FULLSEQ$.
We show examples of these bounds for specific models in Section~\ref{sec:models:HMM-bounds} (HMMs) and \ref{sec:models:LGSSM-bounds} (LGSSMs).
Theorem~\ref{theorem:geometric_error_bound} also depends on the maximum Wasserstein distance $\epsilon_1$ between $\gamma_t$ and $\widetilde{\gamma}_t$ for all $\BUFFSUBSEQ \subset \FULLSEQ$ and $t \in \FULLSEQ$, which is finite.

The remainder of this section is as follows.
First, in Section~\ref{sec:bounds-wasserstein}, we show how to bound the error in $\bar{g}, \tilde{g}$ in terms of Wasserstein distances between $\gamma, \widetilde\gamma$.
Second, in Section~\ref{sec:bounds-geometric-decay}, we show these Wasserstein distances decay geometrically in $B$.
Finally, in Section~\ref{sec:bounds-main}, we prove our main results: Theorems~\ref{theorem:geometric_error_bound} and \ref{theorem:geometric_error_bound2}, and discuss relaxations of the assumptions on the sampling of subsequences $\SUBSEQ$ and the prior $p_0$.
To keep the presentation clean, we leave proofs of Lemmas to the Supplement.

\subsection{Functional Bound in terms of Wasserstein}
\label{sec:bounds-wasserstein}
We first review the definition of Wasserstein distance.
Let $\mathcal{W}_p(\gamma, \widetilde\gamma)$ be the $p$-Wasserstein distance
\begin{equation}
\mathcal{W}_p(\gamma, \widetilde\gamma) := \left[\inf_\xi \int \| \latent - \widetilde\latent \|_2^p \, d\xi(\latent, \widetilde\latent) \right]^{1/p}
\end{equation}
where $\xi$ is a joint measure or \emph{coupling} over $(\latent, \widetilde\latent)$ with marginals
$\int_{\widetilde{\latent}} d\xi(\latent, \widetilde{\latent}) = d\gamma(\latent) $
and
$\int_{\latent} d\xi(\latent, \widetilde{\latent}) = d\widetilde{\gamma}(\widetilde{\latent})$.
Wasserstein distance satisfies all the properties of a metric.
A useful property of the $1$-Wasserstein distance is the following Kantorovich-Rubinstein duality formula for the difference of expectations of Lipschitz functions~\cite{villani2008optimal}
\begin{equation}
\label{eq:kantorovich-rubinstein}
\mathcal{W}_1(\gamma, \widetilde\gamma) = \sup_{\|\psi\|_{Lip} \leq 1}\left\{ \int \psi \, d\gamma - \int \psi \, d\widetilde\gamma \right\} \enspace \Rightarrow \enspace |\E_\gamma[\psi] - \E_{\widetilde\gamma}[\psi] | \leq \| \psi \|_{Lip} \cdot \mathcal{W}_1(\gamma, \widetilde\gamma) \enspace,
\end{equation}
where $\| \psi \|_{Lip}$ denotes the Lipchitz constant of $\psi$.

We connect the error $\|\bar{g}-\tilde{g}\|_2$ to the Wasserstein distances between $\gamma, \widetilde\gamma$,
by applying this duality formula Eq.~\eqref{eq:kantorovich-rubinstein} to the difference of Eqs.~\eqref{eq:unbiased_potential_estimate} and \eqref{eq:efficient_potential_estimate}
\begin{equation}
\label{eq:diff_gradient_estimates}
\bar{g}(\theta) - \tilde{g}(\theta) = \frac{T}{S} \sum_{t \in \SUBSEQ} \E_{\gamma_{t-1:t}}\left[\grad \log p(y_t, \latent_t | \latent_{t-1}, \theta )\right] -  \E_{\widetilde\gamma_{t-1:t}}\left[\grad \log p(y_t, \latent_t | \latent_{t-1}, \theta )\right] .
\end{equation}
Applying the triangle inequality gives Lemma~\ref{lemma:grad_error_to_wasserstein}.

\begin{lemma}
\label{lemma:grad_error_to_wasserstein}
If $\grad \log p(y_t, \latent_t | \latent_{t-1}, \theta)$ are Lipschitz in $\latent_{t-1:t}$ with constant $L_U$,
\begin{equation}
\| \bar{g}(\theta) - \tilde{g}(\theta) \|_2 \leq \frac{T}{S} \cdot L_U \cdot \sum_{t \in \SUBSEQ} \mathcal{W}_1(\gamma_{t-1:t}, \widetilde\gamma_{t-1:t}).
\end{equation}
\end{lemma}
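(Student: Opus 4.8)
The plan is to combine the exact gradient-difference identity Eq.~\eqref{eq:diff_gradient_estimates} with the Kantorovich--Rubinstein duality Eq.~\eqref{eq:kantorovich-rubinstein}. First I would take the $\ell_2$ norm of both sides of Eq.~\eqref{eq:diff_gradient_estimates} and pull it inside the sum over $t \in \SUBSEQ$ using the triangle inequality, yielding
\[
\| \bar{g}(\theta) - \tilde{g}(\theta) \|_2 \leq \frac{T}{S} \sum_{t \in \SUBSEQ} \left\| \E_{\gamma_{t-1:t}}[\grad \log p(y_t, \latent_t | \latent_{t-1}, \theta)] - \E_{\widetilde\gamma_{t-1:t}}[\grad \log p(y_t, \latent_t | \latent_{t-1}, \theta)] \right\|_2 .
\]
It then suffices to bound each summand by $L_U \cdot \mathcal{W}_1(\gamma_{t-1:t}, \widetilde\gamma_{t-1:t})$, after which the constant $L_U$ and the sum factor out to give the stated inequality.

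The one subtlety is that Eq.~\eqref{eq:kantorovich-rubinstein} is stated for scalar Lipschitz test functions, whereas $\psi_t(\latent_{t-1:t}) := \grad \log p(y_t, \latent_t | \latent_{t-1}, \theta)$ is vector-valued. To reduce to the scalar case I would write each summand variationally as
\[
\left\| \E_{\gamma_{t-1:t}}[\psi_t] - \E_{\widetilde\gamma_{t-1:t}}[\psi_t] \right\|_2 = \sup_{\|v\|_2 \leq 1} \left\langle v, \, \E_{\gamma_{t-1:t}}[\psi_t] - \E_{\widetilde\gamma_{t-1:t}}[\psi_t] \right\rangle ,
\]
and observe that for each fixed unit vector $v$ the scalar map $\latent_{t-1:t} \mapsto \langle v, \psi_t(\latent_{t-1:t}) \rangle$ is Lipschitz with constant at most $L_U$, since $|\langle v, \psi_t(\latent) - \psi_t(\latent') \rangle| \leq \|v\|_2 \, \|\psi_t(\latent) - \psi_t(\latent')\|_2 \leq L_U \|\latent - \latent'\|_2$ by Cauchy--Schwarz and the Lipschitz hypothesis. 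Applying Eq.~\eqref{eq:kantorovich-rubinstein} to this scalar function, using linearity of expectation to identify $\langle v, \E_{\gamma_{t-1:t}}[\psi_t]\rangle = \E_{\gamma_{t-1:t}}[\langle v, \psi_t\rangle]$, and then taking the supremum over $v$, gives the per-term bound; summing over $t \in \SUBSEQ$ completes the argument.

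There is essentially no hard step here: the lemma is an immediate corollary of the duality formula once the exact identity Eq.~\eqref{eq:diff_gradient_estimates} is in hand. The only point meriting a line in the write-up is the vector-to-scalar reduction via the supremum over unit vectors, which is precisely what lets the scalar Kantorovich--Rubinstein inequality be invoked with the single constant $L_U$ uniformly across $t \in \SUBSEQ$.
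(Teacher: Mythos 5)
Your proposal is correct and follows essentially the same route as the paper's proof: start from the exact identity Eq.~\eqref{eq:diff_gradient_estimates}, pull the $\ell_2$ norm inside the sum over $t \in \SUBSEQ$ by the triangle inequality, and bound each summand by $L_U \cdot \mathcal{W}_1(\gamma_{t-1:t}, \widetilde\gamma_{t-1:t})$ via Kantorovich--Rubinstein duality Eq.~\eqref{eq:kantorovich-rubinstein}. The only difference is that you spell out the vector-to-scalar reduction (taking a supremum over unit vectors $v$ and applying the duality to $\langle v, \psi_t\rangle$), a step the paper's proof leaves implicit when it invokes the scalar duality formula directly for the vector-valued integrand.
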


If $\grad \log p(y_t, \latent_t \, | \, \latent_{t-1}, \theta)$ is not Lipschitz in $\latent_{t-1:t}$, but is Lipschitz in $\latent_{t-1:t}\latent_{t-1:t}^T$ (as in LGSSMs),
then the following Lemma lets us bound the $1$-Wasserstein distance of $\latent\latent^T$ in terms of the $2$-Wasserstein distance of $\latent$.
\begin{lemma}
\label{lemma:second_wass_moment}
Let $\gamma'$ be the distribution of $\latent\latent^T$.
Let $\widetilde\gamma'$ be the distribution of $\widetilde\latent\widetilde\latent^T$.
Let $M = \E_\gamma[\|\latent\|_2^2] < \infty$.
(Note $\mathcal{W}_2(\gamma, \widetilde\gamma) < \infty$ implies $\E_\gamma[\|\latent\|_2^2] < \infty$.)
Then,
\begin{equation*}
\mathcal{W}_1(\gamma', \widetilde\gamma')
    \leq  (2\sqrt{M} + 1) \cdot \max\left\{\mathcal{W}_2(\gamma, \widetilde\gamma)^{1/2},\mathcal{W}_2(\gamma, \widetilde\gamma)\right\}  \enspace.
\end{equation*}
\end{lemma}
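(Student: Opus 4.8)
The plan is to bound $\mathcal{W}_1(\gamma', \widetilde\gamma')$ by transporting mass through the outer-product map $\latent \mapsto \latent\latent^T$. First I would let $\xi$ be an optimal coupling attaining $\mathcal{W}_2(\gamma, \widetilde\gamma)$, i.e. a joint law of $(\latent, \widetilde\latent)$ with the correct marginals and $\E_\xi[\|\latent - \widetilde\latent\|_2^2]^{1/2} = \mathcal{W}_2(\gamma, \widetilde\gamma)$. Pushing $\xi$ forward through $(\latent, \widetilde\latent) \mapsto (\latent\latent^T, \widetilde\latent\widetilde\latent^T)$ produces a valid coupling of $\gamma'$ and $\widetilde\gamma'$, so by the definition of Wasserstein distance
\begin{equation*}
\mathcal{W}_1(\gamma', \widetilde\gamma') \leq \E_\xi\left[ \| \latent\latent^T - \widetilde\latent\widetilde\latent^T \|_2 \right] \enspace.
\end{equation*}

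The key algebraic step is the telescoping identity $\latent\latent^T - \widetilde\latent\widetilde\latent^T = \latent(\latent - \widetilde\latent)^T + (\latent - \widetilde\latent)\widetilde\latent^T$. Applying the triangle inequality together with the fact that a rank-one outer product satisfies $\|ab^T\|_2 = \|a\|_2\|b\|_2$ gives the pointwise estimate $\|\latent\latent^T - \widetilde\latent\widetilde\latent^T\|_2 \leq \|\latent - \widetilde\latent\|_2(\|\latent\|_2 + \|\widetilde\latent\|_2)$. I then separate the ``difference'' and ``magnitude'' contributions by Cauchy--Schwarz with respect to $\xi$,
\begin{equation*}
\E_\xi\left[ \|\latent - \widetilde\latent\|_2(\|\latent\|_2 + \|\widetilde\latent\|_2) \right] \leq \mathcal{W}_2(\gamma, \widetilde\gamma) \cdot \E_\xi\left[ (\|\latent\|_2 + \|\widetilde\latent\|_2)^2 \right]^{1/2} \enspace,
\end{equation*}
where the first factor is exactly $\mathcal{W}_2(\gamma, \widetilde\gamma)$ by the choice of optimal coupling.

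The remaining second-moment factor is controlled by Minkowski's inequality in $L^2(\xi)$: writing $\sqrt M = \E_\gamma[\|\latent\|_2^2]^{1/2}$ and using $\E_\xi[\|\widetilde\latent\|_2^2]^{1/2} \leq \sqrt M + \mathcal{W}_2(\gamma, \widetilde\gamma)$ (again triangle inequality for the transport), I obtain $\E_\xi[(\|\latent\|_2 + \|\widetilde\latent\|_2)^2]^{1/2} \leq 2\sqrt M + \mathcal{W}_2(\gamma, \widetilde\gamma)$. Combining the three steps yields
\begin{equation*}
\mathcal{W}_1(\gamma', \widetilde\gamma') \leq 2\sqrt M\, \mathcal{W}_2(\gamma, \widetilde\gamma) + \mathcal{W}_2(\gamma, \widetilde\gamma)^2 \enspace,
\end{equation*}
and bounding each power by the dominant one (using $\mathcal{W}_2 \leq \mathcal{W}_2^{1/2}$ and $\mathcal{W}_2^2 \leq \mathcal{W}_2^{1/2}$ in the operative regime $\mathcal{W}_2 \leq 1$) collapses this into the stated form $(2\sqrt M + 1)\max\{\mathcal{W}_2^{1/2}, \mathcal{W}_2\}$. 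The finiteness caveat $\mathcal{W}_2 < \infty \Rightarrow M < \infty$ is the same triangle inequality applied to $\latent = \widetilde\latent + (\latent - \widetilde\latent)$, which keeps every term well defined.

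I expect the main obstacle to be the magnitude factor rather than the algebra: the outer-product map is not globally Lipschitz, so the bound must absorb the growth of $\|\latent\|_2$, and the Minkowski step is what cleanly trades $\E_\xi[\|\widetilde\latent\|_2^2]$ for $M$ plus the transport cost $\mathcal{W}_2$. The one delicate point is that the quadratic term $\mathcal{W}_2^2$ does not fit under $\max\{\mathcal{W}_2^{1/2}, \mathcal{W}_2\}$ when $\mathcal{W}_2 > 1$; since the geometric-decay application drives $\mathcal{W}_2 \to 0$ as the buffer grows, restricting attention to $\mathcal{W}_2 \leq 1$ both yields the clean statement and covers every case relevant to Theorem~\ref{theorem:geometric_error_bound2}.
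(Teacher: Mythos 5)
Your proposal is correct and takes essentially the same route as the paper's proof: fix a coupling of $(\latent, \widetilde\latent)$, expand the outer-product difference, and apply Cauchy--Schwarz (the paper's three-term expansion $\latent w^T + w \latent^T + w w^T$ with $w := \widetilde\latent - \latent$ versus your two-term telescoping is immaterial), with both arguments landing on the same intermediate bound $2\sqrt{M}\,\mathcal{W}_2(\gamma,\widetilde\gamma) + \mathcal{W}_2(\gamma,\widetilde\gamma)^2$ before folding it into the stated maximum. Your explicit caveat that this final step requires $\mathcal{W}_2(\gamma,\widetilde\gamma) \leq 1$ is, if anything, more careful than the paper itself, whose proof silently rewrites $\max\left\{\inf_\xi \E[\|w\|_2^2]^{1/2}, \, \inf_\xi \E[\|w\|_2^2]\right\} = \max\left\{\mathcal{W}_2(\gamma,\widetilde\gamma), \mathcal{W}_2(\gamma,\widetilde\gamma)^2\right\}$ as $\max\left\{\mathcal{W}_2(\gamma,\widetilde\gamma)^{1/2}, \mathcal{W}_2(\gamma,\widetilde\gamma)\right\}$ --- an identification that likewise holds only in that regime.
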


\subsection{Geometric Wasserstein Decay}
\label{sec:bounds-geometric-decay}
We first review why contractive random maps induce Wasserstein bounds.
If two distributions $\gamma_t, \gamma_t'$ have identically distributed random maps $f_t, f_t'$, that is there exists a random function $f_t$ satisfying
\begin{equation}
    u \sim \gamma_t \text{ and }  u' \sim \gamma_t' \enspace \Rightarrow \enspace
    f_t(u) \sim \gamma_{t+1} \text{ and } f_t(u') \sim \gamma_{t+1}' \enspace,
\end{equation}
then we can bound the Wasserstein distance of $\gamma_{t+1}, \gamma_{t+1}'$ in terms of the Wasserstein distance of $\gamma_t, \gamma_t'$ given a bound on the random map's Lipschitz constant $\| f_t \|_{Lip} < L$
\begin{align}
\label{eq:wasserstein_random_map}
\mathcal{W}_p(\gamma_{t+1}, \gamma_{t+1}')^p &= \inf_{\xi_{t+1}} \int \| \latent_{t+1} - \latent_{t+1}'\|_2^p \, d\xi_{t+1}(\latent_{t+1}, \latent_{t+1}') \\
\nonumber
&\leq \inf_{\xi_t} \int \| f_t(\latent_t) - f_t(\latent_t') \|_2^p \, d\xi_t(\latent_t, \latent_t') df_t \\
\nonumber
&\leq \inf_{\xi_t} \int L^p \cdot \| \latent_t - \latent_t' \|_2^p \, d\xi_t(\latent_t, \latent_t') = L^p \cdot \mathcal{W}_p(\gamma_t, \gamma_t')^p\enspace.
\end{align}

Unfortunately for SSMs, Eq.~\eqref{eq:wasserstein_random_map} does not apply as
the random maps $f_t, b_t$ of $\gamma$ and $\widetilde{f}_t, \widetilde{b}_t$ of $\widetilde\gamma$ are \emph{not identically} distributed.
To see this,
we first review the conditional probability distributions used to define $f_t, b_t$.
The forward random map $f_t$ draws $\latent_{t+1} \, | \, \latent_t$ from the \emph{forward smoothing kernel}
\begin{equation}
\label{eq:forward_smoothing_kernel}
\mathcal{F}_t(\latent_{t+1} \, | \, \latent_{t}) := p(\latent_{t+1} \, | \, \latent_t, y_{>t}) = p(\latent_{t+1} \, | \, \latent_t) p(y_{t+1} \, | \, \latent_{t+1}) \beta_{t+1}(\latent_{t+1}) / \beta_t(\latent_{t})
\end{equation}
and the backward random map $b_t$ draws $\latent_{t-1} \, | \, \latent_t$ from the \emph{backward smoothing kernel}
\begin{equation}
\label{eq:backward_smoothing_kernel}
\mathcal{B}_t(\latent_{t-1} \, | \, \latent_{t}) := p(\latent_{t-1} \, | \, \latent_t, y_{\geq t}) = p(\latent_t \, | \, \latent_{t-1})p(y_t \, | \, \latent_t) \alpha_{t-1}(\latent_{t-1})/\alpha_t(\latent_t) \enspace.
\end{equation}
Because $\widetilde\gamma$ uses different forward and backward messages $\widetilde\alpha$, $\widetilde\beta$ in Eqs.~\eqref{eq:forward_smoothing_kernel} and \eqref{eq:backward_smoothing_kernel},
the kernels $\widetilde{\mathcal{F}}_t, \widetilde{\mathcal{B}}_t$ are not identical to $\mathcal{F}_t, \mathcal{B}_t$
(and the random maps are \emph{not} identically distributed).
This is unlike homogeneous Markov chains, where the kernels are identical at each time $t$ (and the random maps are identically distributed).

Instead of connecting $\gamma$ to $\widetilde\gamma$ directly,
we use the triangle inequality to connect them through an intermediate distribution $\widehat{\gamma} := p(\latent \, | \, y_{t \geq t_{-B}}, \theta)$
\begin{equation}
\label{eq:wasserstein_triangle}
\mathcal{W}_p(\gamma, \widetilde\gamma) \leq \mathcal{W}_p(\gamma, \widehat\gamma)  + \mathcal{W}_p(\widehat\gamma, \widetilde\gamma) \enspace.
\end{equation}
Introducing this particular intermediate distribution $\widehat{\gamma}$ is the key step for our Wasserstein bounds between $\gamma$ and $\widetilde\gamma$.
Because $\widehat{\gamma}$ conditions on all $y_t$ after $y_\BUFFSUBSEQ$, $\widehat{\gamma}$ and $\gamma$ have identical backward messages $\beta_t$ and therefore identically distributed forward random maps $f_t$.
Similarly, because $\widehat{\gamma}$ does not condition on $y_t$ before $y_\BUFFSUBSEQ$, $\widehat{\gamma}$ and $\widetilde{\gamma}$ have identical forward messages $\widetilde{\alpha_t}$ and identically distributed backward random maps $\widetilde{b}_t$.

Therefore, we can bound $\mathcal{W}_p(\gamma, \widehat\gamma)$ using $f_t$ and bound $\mathcal{W}_p(\widehat\gamma, \widetilde\gamma)$ using $\widetilde{b}_t$
with the contraction trick Eq.~\eqref{eq:wasserstein_random_map} giving us Lemma~\ref{lemma:geometric_wasserstein}.

\begin{lemma}
\label{lemma:geometric_wasserstein}
If there exists $L_f, L_b < 1$ such that for all $t \in \BUFFSUBSEQ$,
$\| f_t\|_{Lip} < L_f$ and $\| \widetilde{b}_t \|_{Lip} < L_b$, then for all $t \in \SUBSEQ$ we have
\begin{align}
\label{eq:geometric_wasserstein_forward}
\mathcal{W}_p(\gamma_{t-1:t}, \widehat\gamma_{t-1:t}) &\leq (1 + L_f^p)^{1/p} \cdot \mathcal{W}_p(\gamma_{t-1}, \widehat\gamma_{t-1}) \\
    \nonumber
    &\leq (1 + L_f^p)^{1/p} \cdot L_f^{t-1-t_{-B}} \cdot \mathcal{W}_p(\gamma_{t_{-B}}, \widehat\gamma_{t_{-B}}) \\
\label{eq:geometric_wasserstein_backward}
\mathcal{W}_p(\widehat\gamma_{t-1:t}, \widetilde\gamma_{t-1:t}) &\leq (1 + L_b^p)^{1/p} \cdot \mathcal{W}_p(\widehat\gamma_{t}, \widetilde\gamma_{t}) \\
    \nonumber
    &\leq (1 + L_b^p)^{1/p} \cdot L_b^{t_{S+B}-t} \cdot \mathcal{W}_p(\widehat\gamma_{t_{S+B}}, \widetilde\gamma_{t_{S+B}})
\end{align}
\end{lemma}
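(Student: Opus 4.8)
The plan is to prove the two forward inequalities \eqref{eq:geometric_wasserstein_forward}; the two backward inequalities \eqref{eq:geometric_wasserstein_backward} then follow by a verbatim argument with the forward map $f_t$, constant $L_f$, and left endpoint $t_{-B}$ replaced by the backward map $\widetilde{b}_t$, constant $L_b$, and right endpoint $t_{S+B}$, and with the pair $(\gamma,\widehat\gamma)$ replaced by $(\widehat\gamma,\widetilde\gamma)$. Each forward inequality I would establish as a two-step chain: the first step (top line) bounds the \emph{pairwise} Wasserstein distance at $(t-1,t)$ by the \emph{single-site} distance at $t-1$, and the second step (bottom line) carries the single-site distance back to the left endpoint of the buffer while picking up the geometric factor $L_f^{\,t-1-t_{-B}}$.

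For the pairwise-to-marginal step, I would build a coupling of $\gamma_{t-1:t}$ and $\widehat\gamma_{t-1:t}$ from an optimal coupling of the marginals $\gamma_{t-1}$ and $\widehat\gamma_{t-1}$. Recall that the forward random map satisfies $\latent_{t-1}\sim\gamma_{t-1}\Rightarrow(\latent_{t-1},f_{t-1}(\latent_{t-1}))\sim\gamma_{t-1:t}$, and that because $\gamma$ and $\widehat\gamma$ share the backward messages $\beta_t$, the kernel $\mathcal{F}_{t-1}$ in \eqref{eq:forward_smoothing_kernel}, and hence the map $f_{t-1}$, is \emph{identically distributed} under both. I would therefore draw $(\latent_{t-1},\latent_{t-1}')$ from the optimal coupling attaining $\mathcal{W}_p(\gamma_{t-1},\widehat\gamma_{t-1})$ and push both coordinates through a single common draw of $f_{t-1}$, producing $(\latent_{t-1},f_{t-1}(\latent_{t-1}))\sim\gamma_{t-1:t}$ and $(\latent_{t-1}',f_{t-1}(\latent_{t-1}'))\sim\widehat\gamma_{t-1:t}$, i.e. a valid coupling of the two pairwise laws. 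Adopting the additive product cost on the pairwise space together with the expected-Lipschitz hypothesis $\E_{f_{t-1}}\|f_{t-1}(\latent)-f_{t-1}(\latent')\|_2^p\leq L_f^p\,\|\latent-\latent'\|_2^p$, the cost of this coupling is at most $(1+L_f^p)\,\mathcal{W}_p(\gamma_{t-1},\widehat\gamma_{t-1})^p$, and taking $p$-th roots yields the prefactor $(1+L_f^p)^{1/p}$.

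For the geometric-decay step I would simply iterate the contraction estimate \eqref{eq:wasserstein_random_map}: since $f_s$ is identically distributed under $\gamma$ and $\widehat\gamma$ with Lipschitz constant below $L_f$ for every $s$, each forward map contracts the single-site distance, $\mathcal{W}_p(\gamma_{s+1},\widehat\gamma_{s+1})\leq L_f\,\mathcal{W}_p(\gamma_s,\widehat\gamma_s)$, and composing the $t-1-t_{-B}$ maps $f_{t_{-B}},\dots,f_{t-2}$ that carry the left-endpoint marginal to time $t-1$ gives $\mathcal{W}_p(\gamma_{t-1},\widehat\gamma_{t-1})\leq L_f^{\,t-1-t_{-B}}\,\mathcal{W}_p(\gamma_{t_{-B}},\widehat\gamma_{t_{-B}})$, which chained with the pairwise bound is the claim. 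I expect the main obstacle to be the structural bookkeeping that legitimizes the contraction trick rather than any hard estimate: one must verify that the maps really are identically distributed across the two laws being compared — $f_t$ for $(\gamma,\widehat\gamma)$ because they share $\beta_t$, and $\widetilde{b}_t$ for $(\widehat\gamma,\widetilde\gamma)$ because they share $\widetilde\alpha_t$ — which is exactly the reason the intermediate distribution $\widehat\gamma$ was introduced, and one must fix the product-space cost convention under which the random-map Lipschitz hypothesis yields the stated $(1+L^p)^{1/p}$ factor (this is cleanest at $p=1$, the case actually used downstream in Theorem~\ref{theorem:geometric_error_bound}).
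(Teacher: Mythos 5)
Your proposal is correct and follows essentially the same route as the paper's proof: couple the marginals optimally, push both coordinates through a common draw of the (identically distributed) random map to get the additive-cost bound $(1+L_f^p)^{1/p}$, then iterate the contraction estimate of Eq.~\eqref{eq:wasserstein_random_map} back to the buffer endpoint, with the backward case handled symmetrically. Your explicit attention to why the maps are identically distributed (shared $\beta_t$ for $(\gamma,\widehat\gamma)$, shared $\widetilde\alpha_t$ for $(\widehat\gamma,\widetilde\gamma)$) and to the product-cost convention for $p\in\{1,2\}$ only makes explicit what the paper leaves implicit.
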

We show sufficient conditions for the random maps to be contractions (i.e. $L_f, L_b < 1$) for specific models in Section~\ref{sec:models:HMM-bounds} (HMMs) and \ref{sec:models:LGSSM-bounds} (LGSSMs).

\subsection{Proof of Main Theorems}
\label{sec:bounds-main}
Putting together the results of the previous two subsections gives us our geometric error bounds: Theorem~\ref{theorem:geometric_error_bound} when the gradient terms are Lipschitz in $\latent$ and Theorem~\ref{theorem:geometric_error_bound2} when the gradient terms are Lipschitz in $\latent\latent^T$.
Both theorems require the random maps of the forward and backward smoothing kernels are contractions.
We first prove Theorem~\ref{theorem:geometric_error_bound}.

\begin{proof}[Proof of Theorem \ref{theorem:geometric_error_bound}]
Combining Lemmas~\ref{lemma:grad_error_to_wasserstein} and \ref{lemma:geometric_wasserstein} with some algebra
\begin{align*}
\| \bar{g}(\theta) - \tilde{g}(\theta)\|_2
    &\leq \frac{T}{S} \cdot L_U \cdot \sum_{t \in \SUBSEQ} \mathcal{W}_1(\gamma_{t-1:t}, \widetilde\gamma_{t-1:t})
    \\
    &\leq \frac{T}{S} \cdot L_U \cdot \sum_{t \in \SUBSEQ} \mathcal{W}_1(\gamma_{t-1:t}, \widehat\gamma_{t-1:t}) + \mathcal{W}_1(\widehat\gamma_{t-1:t}, \widetilde\gamma_{t-1:t})
    \\
    &\leq \frac{T}{S} \cdot L_U \cdot \sum_{t = 1}^S (1+L_f) L_f^{B+t-1} \epsilon_1  + (1+L_b) L_b^{B+S-t}\epsilon_1
    \\
    &\leq T \cdot L_U \cdot \frac{1+L}{1-L} \cdot \frac{1-L^S}{S} \cdot L^{B} \cdot 2\epsilon_1 \enspace,
\end{align*}
where $\max_{\BUFFSUBSEQ \subset \FULLSEQ}\{\mathcal{W}_1(\gamma_{t_{-B}}, \widehat{\gamma}_{t_{-B}}), \mathcal{W}_1(\widehat\gamma_{t_{S+B}}, \widetilde{\gamma}_{t_{S+B}})\} = \max_{\BUFFSUBSEQ \subset \FULLSEQ}\{\epsilon_{\rightarrow}, \epsilon_{\leftarrow}\} = \epsilon_1$.
\end{proof}

We now prove a similar result for when $\grad \log p(y, \latent_t \,|\, \latent_{t-1} \theta)$ is Lipschitz in $uu^T$.
\begin{theorem}
\label{theorem:geometric_error_bound2}
Let $\epsilon_2 = \max_{\BUFFSUBSEQ \subset \FULLSEQ}\{\mathcal{W}_2(\gamma_{t_{-B}}, \widehat{\gamma}_{t_{-B}}), \, \mathcal{W}_2(\widehat\gamma_{t_{S+B}}, \widetilde{\gamma}_{t_{S+B}})\}$.
If the gradients are Lipschitz in $\latent\latent^T$ with constant $L_U'$,
and there exists $L_f, L_b < 1$ for Lemma~\ref{lemma:geometric_wasserstein},
then with $L = \max\{L_f, L_b\}$ and $L_U = (2 \sqrt{\E_\gamma\|\latent\|^2_2} + 1) L_U'$
\begin{equation*}
\| \bar{g}(\theta) - \tilde{g}(\theta)\|_2 \leq T \cdot L_U \cdot \frac{\sqrt{1+L^2}}{1-L^{1/2}} \cdot \frac{1-L^{S/4}}{S/2} \cdot L^{B/2} \cdot \max_{r \in \{ 1/2, \, 1 \}} (2\epsilon_2)^r \,.
\end{equation*}
\end{theorem}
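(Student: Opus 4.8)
The plan is to mirror the proof of Theorem~\ref{theorem:geometric_error_bound}, inserting Lemma~\ref{lemma:second_wass_moment} at the precise point where the Lipschitz property of the gradient is invoked, since here $\grad \log p(y_t, \latent_t \,|\, \latent_{t-1}, \theta)$ is Lipschitz in $\latent\latent^T$ rather than in $\latent$. First I would rerun the derivation of Eq.~\eqref{eq:diff_gradient_estimates} and apply the $\latent\latent^T$-version of Lemma~\ref{lemma:grad_error_to_wasserstein}: writing $\gamma'_{t-1:t}$ and $\widetilde\gamma'_{t-1:t}$ for the pushforwards of $\gamma_{t-1:t}, \widetilde\gamma_{t-1:t}$ under the map $\latent_{t-1:t} \mapsto \latent_{t-1:t}\latent_{t-1:t}^T$, the Kantorovich--Rubinstein duality Eq.~\eqref{eq:kantorovich-rubinstein} together with the triangle inequality gives $\| \bar{g}(\theta) - \tilde{g}(\theta) \|_2 \le \frac{T}{S} L_U' \sum_{t \in \SUBSEQ} \mathcal{W}_1(\gamma'_{t-1:t}, \widetilde\gamma'_{t-1:t})$.

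Next I would convert each $\mathcal{W}_1$ of the outer-product laws back into a $\mathcal{W}_2$ of the latent laws using Lemma~\ref{lemma:second_wass_moment}, which yields $\mathcal{W}_1(\gamma'_{t-1:t}, \widetilde\gamma'_{t-1:t}) \le (2\sqrt M + 1)\max\{\mathcal{W}_2(\gamma_{t-1:t}, \widetilde\gamma_{t-1:t})^{1/2}, \mathcal{W}_2(\gamma_{t-1:t}, \widetilde\gamma_{t-1:t})\}$ with $M = \E_\gamma\|\latent\|_2^2$. Folding the constant $(2\sqrt M + 1)L_U'$ into $L_U$ exactly as prescribed in the statement, the task reduces to controlling $\sum_{t \in \SUBSEQ}\max\{\mathcal{W}_2^{1/2}, \mathcal{W}_2\}$.

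I would then bound the underlying $\mathcal{W}_2$ distances just as in Theorem~\ref{theorem:geometric_error_bound}: insert the intermediate distribution $\widehat\gamma$ via Eq.~\eqref{eq:wasserstein_triangle} and apply Lemma~\ref{lemma:geometric_wasserstein} with $p = 2$, giving $\mathcal{W}_2(\gamma_{t-1:t}, \widetilde\gamma_{t-1:t}) \le \sqrt{1+L^2}\,(L^{B+j-1} + L^{B+S-j})\,\epsilon_2$ for the $j$-th point of $\SUBSEQ$, where $L = \max\{L_f, L_b\}$ and both end-point distances are bounded by $\epsilon_2$. Substituting this into the $\max\{\cdot^{1/2}, \cdot\}$ and summing the resulting geometric series over $j = 1, \dots, S$ produces the stated coefficient $\frac{\sqrt{1+L^2}}{1-L^{1/2}} \cdot \frac{1-L^{S/4}}{S/2} \cdot L^{B/2}$ together with the factor $\max_{r \in \{1/2, 1\}}(2\epsilon_2)^r$.

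The main obstacle is the careful bookkeeping forced by the $\max\{\cdot^{1/2}, \cdot\}$ coming from Lemma~\ref{lemma:second_wass_moment}, which has no analogue in Theorem~\ref{theorem:geometric_error_bound}, where the geometric sum was immediate. Handling the square root requires subadditivity $(a+b)^{1/2} \le a^{1/2} + b^{1/2}$ to keep the forward and backward contributions separable, and this halves the effective decay rate, so the geometric series now has ratio $L^{1/2}$ — the source of the $L^{B/2}$ and $1/(1-L^{1/2})$ factors. Separately, I must cover both regimes of the max, namely whether $\mathcal{W}_2 \ge 1$ or $\mathcal{W}_2 < 1$, in a single uniform estimate, which is exactly what the outer $\max_{r \in \{1/2,1\}}(2\epsilon_2)^r$ accomplishes. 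The qualitative conclusion is unchanged from Theorem~\ref{theorem:geometric_error_bound}: the error still decays geometrically in the buffer size $B$, now at rate $L^{1/2}$ rather than $L$, so a buffer of size $O(\log \delta^{-1})$ continues to suffice.
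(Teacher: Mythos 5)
Your proposal reproduces the paper's own proof architecture almost exactly: the $\latent\latent^T$ analogue of Lemma~\ref{lemma:grad_error_to_wasserstein} via Kantorovich--Rubinstein, Lemma~\ref{lemma:second_wass_moment} to pass from $\mathcal{W}_1$ of the outer-product laws to $\mathcal{W}_2$ of the latent laws, the intermediate distribution $\widehat\gamma$, and Lemma~\ref{lemma:geometric_wasserstein} with $p=2$. The one place you deviate is the final summation, and there the deviation is a genuine gap: using subadditivity $(a+b)^{1/2}\le a^{1/2}+b^{1/2}$ decouples the forward and backward contributions, so each becomes a \emph{full-length} geometric series of ratio $L^{1/2}$ over $S$ terms, and the sum evaluates to $2L^{B/2}(1-L^{S/2})/(1-L^{1/2})$, not the stated $2L^{B/2}(1-L^{S/4})/(1-L^{1/2})$. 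Since $1-L^{S/2}=(1+L^{S/4})(1-L^{S/4})$, this is strictly larger than the claimed coefficient, and tracking the constants in the $r=1/2$ branch (where $\epsilon_2^{1/2}=(2\epsilon_2)^{1/2}/\sqrt{2}$) shows the resulting bound exceeds the stated one whenever $L^{S/4}>\sqrt{2}-1$, i.e.\ precisely in the strongly dependent / short-subsequence regime. So the assertion that this summation ``produces the stated coefficient'' is false as written; decoupling is harmless in Theorem~\ref{theorem:geometric_error_bound} (no square root there), but not here.

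The paper avoids this by never splitting the two terms: it first bounds $L^{B+t-1}+L^{B+S-t}\le 2L^{B+\min\{t-1,S-t\}}$, absorbs the factor $2$ into $(2\epsilon_2)^r$, and only then raises to the power $r$, using $L^{r(B+\min\{t-1,S-t\})}\le L^{(B+\min\{t-1,S-t\})/2}$ and $(\sqrt{1+L^2})^r\le\sqrt{1+L^2}$. The $\min\{t-1,S-t\}$ structure is exactly what buys the stated constant: summing $L^{\min\{t-1,S-t\}/2}$ over $t=1,\dots,S$ gives two geometric series of length roughly $S/2$ each, hence $2(1-L^{S/4})/(1-L^{1/2})$ (for odd $S$ one also invokes AM--GM, $L^{(m+1)/2}+L^{m/2}\ge 2L^{S/4}$). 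Your qualitative conclusion --- decay at rate $L^{B/2}$, hence a buffer of size $O(\log\delta^{-1})$ --- survives, and your bound is weaker only by a bounded factor, but to prove the theorem with its stated constant you should replace the subadditivity step with this max/min bookkeeping.
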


Similar to Theorem~\ref{theorem:geometric_error_bound},
Theorem~\ref{theorem:geometric_error_bound2} states that the squared error of the buffered gradient estimator decays geometrically if the complete-data loglikelihood is Lipschitz in $uu^T$ instead of $u$.
However, the price we pay is a square-root: the error decays $O(L^{B/2})$ instead of $O(L^{B})$.

\begin{proof}[Proof of Theorem \ref{theorem:geometric_error_bound2}]
Applying Lemmas~\ref{lemma:second_wass_moment} and \ref{lemma:geometric_wasserstein}, we have
\begin{align*}
\| \bar{g}(\theta) - \tilde{g}(\theta)\|_2
    &\leq \frac{T}{S} \cdot L_U \cdot \sum_{t \in \SUBSEQ} \max_{r \in \{1/2, \, 1\}} \left[\mathcal{W}_2(\gamma_{t-1:t}, \widehat\gamma_{t-1:t}) + \mathcal{W}_2(\widehat\gamma_{t-1:t}, \widetilde\gamma_{t-1:t})\right]^r \\
    &\leq \frac{T}{S} \cdot L_U \cdot \sum_{t = 1}^S \max_{r \in \{1/2, \, 1\}} \left[(L^{B +t-1} + L^{B+S-t}) \sqrt{1+L^2} \epsilon_2\right]^r \\
    &\leq \frac{T}{S} \cdot L_U \cdot \sum_{t = 1}^S L^{(B + \min\{t-1, S-t\})/2} \cdot \sqrt{1+L^2} \cdot  \max_{r \in \{1/2, \, 1\}} (2 \epsilon_2)^r \\
    &\leq \frac{T}{S} \cdot L_U \cdot 2\cdot\frac{1-L^{S/4}}{1-L^{1/2}} \cdot L^{B/2}  \cdot \sqrt{1+L^2} \cdot \max_{r \in \{ 1/2, \, 1 \}} (2 \epsilon_2)^r
\end{align*}
\end{proof}

Our error analysis (Theorems~\ref{theorem:geometric_error_bound} and \ref{theorem:geometric_error_bound2}) indicates that only a logarithmic buffer size is required to control the bias to a fixed error tolerance $\delta$.

\subsubsection{Relaxations of Assumptions}
We now briefly discuss relaxations of the assumptions on $\Pr(t \in \SUBSEQ)$ and $p_0$.

If the contiguous subsequences are not sampled from a strict partition (i.e. $\Pr(t \in \SUBSEQ) \neq S/T$ for all $t$),
then we can replace the factor of $T/S$ in Theorems~\ref{theorem:geometric_error_bound} and \ref{theorem:geometric_error_bound2} with $\max_t \Pr(t \in \SUBSEQ)^{-1}$. 
Additional details on different sampling methods for $\SUBSEQ$ can be found in the Supplement.

If the initial distribution for $u_{t_{-B-1}}$ of our buffered stochastic gradient, $p_0$, is not stationary, then our approximate posterior over the latent states $\tilde{\gamma}_t(u_t)$ is not equal to $p(u_t \, | \, y_\BUFFSUBSEQ, \theta)$.
However Theorems~\ref{theorem:geometric_error_bound} and \ref{theorem:geometric_error_bound2} will still apply;
the choice of initial distribution only affects the Wasserstein distance between $\gamma_t, \tilde{\gamma}_t$ and therefore the terms $\epsilon_1, \epsilon_2$ in the Theorems.
In fact, the optimal initial distribution is $p(u_{t_{-B}} | y_{\FULLSEQ \backslash \BUFFSUBSEQ})$, which minimizes the Wasserstein distance of $\gamma, \tilde{\gamma}$.

\section{Example Models}
\label{sec:models}

In this section, we provide examples of how to apply the generic framework of Section~\ref{sec:framework} and bounds of Section~\ref{sec:bounds} to common SSMs.

\subsection{Gaussian HMM}
\label{sec:models:HMM}
We consider discrete latent state HMMs with Gaussian emissions.
The complete data likelihood of a Gaussian HMM is as follows
\begin{equation}
\label{eq:gaussian_hmm_model}
    p(y, z \, | \, \theta) = \prod_{t=1}^T \Pi_{z_{t-1}, z_{t}} \cdot \mathcal{N}(y_t \, | \, \mu_{z_t} , \Sigma_{z_k}) \enspace,
\end{equation}
where
$y_t \in \R^m$ are the observations,
$\latent_t \equiv z_t \in \{1, \ldots, K\}$ are the discrete latent variables,
and
$\theta = \{\Pi, \mu, \Sigma\}$ are the parameters with
$\Pi_k \in \Delta^K$ (simplex over $K$ states), $\mu_k \in \R^{m}$, $\Sigma_k \in \mathbb{S}^{m}_+$ (positive definite matrices) for $k = 1,\ldots, K$.
In practice, we use the \emph{expanded mean} parameters of $\Pi$ instead of $\Pi$ (as in~\cite{patterson2013stochastic})
and the \emph{Cholesky decomposition} of $\Sigma^{-1}_k$ instead of $\Sigma_k$ to ensure positive definiteness.
As the latent states are discrete over a finite space,
the forward backward algorithm for an HMM can be done in closed-form; thus, pairwise latent marginals $\gamma_{t-1:t}(z_{t-1}, z_t)$, gradients $\grad U(\theta)$ and preconditioning terms $D(\theta)$ and $\Gamma(\theta)$ are straightforward to calculate.
Complete details are provided in 
the Supplement.

\subsubsection{Error Bound Coefficients}
\label{sec:models:HMM-bounds}
In the finite discrete variable case, conditions for bounding the Lipschitz coeffficient of the smoothing kernels $\mathcal{F}_t, \mathcal{B}_t$ (as needed for Section~\ref{sec:bounds-geometric-decay}) are equivalent to conditions for bounding their \emph{Dobrushin coefficients}~\cite{cappe2005inference, del2010forward}.
The Dobrushin coefficient for a transition kernel $\mathcal{Q}$ is
\begin{equation}
\label{eq:dobrushin}
\delta(\mathcal{Q}) = \sup_{z, z'} \frac{1}{2} \| \mathcal{Q}(z, \cdot) - \mathcal{Q}(z', \cdot) \|_{TV} = \frac{\| \mathcal{Q}(z, \cdot) - \mathcal{Q}(z', \cdot) \|_{TV}}{\| \delta_z - \delta_{z'}\|_{TV}} \enspace.
\end{equation}
The final term of Eq.~\eqref{eq:dobrushin} show the connection between Dobrushin coefficients and Lipschitz coefficients: it is the ratio of the distance of between kernels $\mathcal{Q}(z, \cdot), \mathcal{Q}(z', \cdot)$ with the distance between point masses at $z$ and $z'$.
Therefore for discrete latent states, $L_f = \max_t \delta(\mathcal{F}_t)$ and $L_b = \max_t \delta(\mathcal{B}_t)$.

In the discrete case, sufficient conditions for $L_f, L_b < 1$
are well known (See~\cite{cappe2005inference} Chapter 4.3).
If the transition matrix $\Pi$ satisfies the \textit{strong mixing condition},
that is, there exists constants $\sigma^{-}$ and $\sigma^{+}$ with $0 < \sigma^{-} \leq \sigma^{+}$ and a probability distribution $\kappa \in \Delta^{K}$ over $z$ such that
$\sigma^{-} \kappa(z') \leq \Pi_{z, z'} \leq \sigma^{+} \kappa(z')$ and $\E_\kappa[p(y\, |\, z)] < \infty$,
then the Dobrushin coefficients are bounded by $L = 1-\sigma^{-}/\sigma^{+}$.
Relaxations of this condition can be found in~\cite{cappe2005inference, del2010forward}.
Alternatively, we can obtain tighter bounds for HMMs via estimating the Lyapunov exponents for the underlying random dynamical systems defined by random maps $f_t$ and $b_t$~\cite{ye2017estimate, ma2017stochastic}.

Finally, the Lipschitz constant $L_U$ for Lemma~\ref{lemma:grad_error_to_wasserstein} is
\begin{equation}
L_U = \max_{t \in \SUBSEQ, z_t, z_t'} \| \grad \log p(y_t, z_t \, | \, z_{t-1}, \theta) - \grad \log p(y_t, z_t' \, | \, z_{t-1}', \theta) \| \enspace.
\end{equation}
This is easy to compute since at each iteration $y$ and $\theta=\theta^{(s)}$ are fixed.
Given these bounds on $L_U$ and $L$, we can use Theorem~\ref{theorem:geometric_error_bound} to select the buffer size $B$ to ensure approximate convergence to the stationary distribution.

\subsection{Autoregressive HMM}
\label{sec:models:ARHMM}
We now consider ARHMMs,
a generalization of the discrete state HMM
where each observation depends not only on the latent state,
but also on the last $p$ observations.
Specifically,
the discrete latent state $z_t$ determines which AR($p$) process models the dynamics of $y$ at time $t$.
The complete data likelihood of an ARHMM is as follows
\begin{equation}
\label{eq:arhmm_model}
    p(y, z \, | \, \theta) = \prod_{t=1}^T \Pi_{z_{t-1}, z_{t}} \cdot \mathcal{N}(y_t \, | \, A_{z_t} \overline{y_{t}}, Q_{z_k}) \enspace,
\end{equation}
where
$y_t \in \R^m$ are the observations,
$\overline{y_{t}} = y_{t-1:t-p}$ are the $p$-lagged observations,
$\latent_t \equiv z_t \in \{1, \ldots, K\}$ are the discrete latent variables, and
$\theta = \{\Pi, A, Q\}$ are the parameters with
$\Pi_k \in \Delta^K$, $A_k \in \R^{m \times mp}$,
$Q_k \in \mathbb{S}^{m}_+$ for $k = 1,\ldots, K$.
From Eq.~\eqref{eq:arhmm_model}, we see that the ARHMM is a time-dependent mixture of $K$ AR processes of order $p$.
The pairwise latent marginals, gradients, and preconditioning terms for an ARHMM are calculated similarly to the Gaussian HMM.
Further details are provided in 
the Supplement.
The theory and constants for the error bounds of Section~\ref{sec:bounds} are
identical to those presented for the Gaussian HMM.

\subsection{Linear Gaussian SSM}
\label{sec:models:LGSSM}
A linear Gaussian SSM (LGSSM), also called a linear dynamical system (LDS),
consists of a latent Gaussian (vector) autoregressive process over states $\latent_t \equiv x_t \in \R^n$ and conditionally Gaussian emissions $y_t \in \R^m$~\cite{bishop2006pattern, lutkepohl2005new}.
Specifically,
\begin{equation}
    p(y, x \, | \, \theta) = \prod_{t=1}^T \mathcal{N}(x_t \, | \, A x_{t-1}, Q) \cdot \mathcal{N}(y_t \, | \, C x_t, R) \enspace,
\end{equation}
where $A \in \R^{n \times n}$ is the latent state transition matrix,
$Q \in \mathbb{S}^{n}_+$ is the transition noise covariance,
$C \in \R^{m \times n}$ is the emission matrix,
and $R \in \mathbb{S}^m_+$ is the emission noise covariance.
Together $A, Q, C, R$ are the model parameters $\theta$.
The matrices $A$, $C$, and $Q$ are unidentifiable without additional restriction,
as applying an orthonormal transformation $M$ gives an equivalent representation $\tilde{A} = MAM^{-1}$, $\tilde{C}=CM^{-1}$, $\tilde{Q} = MQM^T$.
To enforce identifiability, we choose to restrict the first $\min(n,m)$ rows and columns of $C$ to the identity matrix.
In practice, we use the Cholesky decompositions $\psi_Q, \psi_R$ of $Q^{-1}, R^{-1}$ (respectively) instead of $Q, R$.
The recursions for the forward backward algorithm for LGSSMs is known as the Kalman smoother~\cite{cappe2005inference, bishop2006pattern, fox2009bayesian}.
Because the transition and emission processes are linear Gaussian, all forward messages, backward messages, and pairwise latent marginals $\gamma_{t-1:t}(x_{t-1}, x_{t})$ are Gaussian; therefore, the gradients and preconditioning matrix can be calculated analytically.
Further details are provided in 
the Supplement.

\subsubsection{Error Bound Coefficients}
\label{sec:models:LGSSM-bounds}
The random maps of an LGSSM are strict contractions under mild conditions (Lemmas~\ref{lemma:LGSSM_forward_maps}, \ref{lemma:LGSSM_backward_maps}) and
the gradients are Lipschitz in $xx^T$ (Lemma~\ref{lemma:LGSSM_Lipschitz}).
Therefore, Theorem~\ref{theorem:geometric_error_bound2} applies.

\begin{lemma}
\label{lemma:LGSSM_forward_maps}
The forward random maps of an LGSSM are Gaussian linear maps.
Specifically, $f_t(x_t) = F^{f}_t x_t + \zeta^{f}_t$,
where $\zeta^{f}_t$ is a Gaussian random intercept and $F^{f}_t$ is a matrix function of $\theta$ and $y_{>t}$.
As a linear map, the Lipschitz constant of $f_t$ is
\begin{equation}
\|f_t \|_{Lip} = \| F^{f}_t \|_2 \leq \| A (I_n + Q C^T R^{-1} C)^{-1} \|_2 = L_f \enspace.
\end{equation}
As $\| (I_n + Q C^T R^{-1} C)^{-1} \|_2 < 1$,
if $\| A\|_2 < 1$, then $\| f_t \|_{Lip} \leq L_f < 1$ for all $t$.
\end{lemma}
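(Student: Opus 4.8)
The plan is to exploit the fact that every conditional in an LGSSM is Gaussian, so that the forward smoothing kernel $\mathcal{F}_t$ of Eq.~\eqref{eq:forward_smoothing_kernel} is itself Gaussian with a mean that is \emph{affine} in $x_t$ and a covariance that does \emph{not} depend on $x_t$. First I would show the backward message $\beta_{t+1}(x_{t+1}) = p(y_{>t+1} \mid x_{t+1}, \theta)$ is proportional to a Gaussian in $x_{t+1}$, say with positive-semidefinite precision matrix $\Omega_{t+1}$; this follows by a short induction on the backward recursion Eq.~\eqref{eq:backward_message}, since $\beta_T \equiv 1$ (so $\Omega_T = 0$) and each step convolves a Gaussian transition with a Gaussian emission. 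Given this, the three factors in Eq.~\eqref{eq:forward_smoothing_kernel} --- the transition $\mathcal{N}(x_{t+1} \mid A x_t, Q)$, the emission $\mathcal{N}(y_{t+1} \mid C x_{t+1}, R)$, and $\beta_{t+1}$ --- are all Gaussian in $x_{t+1}$, so their normalized product is Gaussian. Reading off the mean gives $\mathcal{F}_t(x_{t+1}\mid x_t) = \mathcal{N}(x_{t+1} \mid F^f_t x_t + c_t, V_t)$, which is exactly the claimed form $f_t(x_t) = F^f_t x_t + \zeta^f_t$ with $\zeta^f_t \sim \mathcal{N}(c_t, V_t)$.

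For the Lipschitz constant, the key observation is that a single realization of the random map uses \emph{one} noise draw $\zeta^f_t$ for both inputs, so $f_t(x) - f_t(x') = F^f_t(x - x')$ and the intercept cancels. Hence, using the definition of a Lipschitz random map from the footnote of Theorem~\ref{theorem:geometric_error_bound}, $\E_{\zeta^f_t}\| f_t(x) - f_t(x')\|_2 = \| F^f_t(x-x')\|_2 \leq \| F^f_t \|_2 \, \| x - x'\|_2$, giving $\| f_t\|_{Lip} = \| F^f_t\|_2$.

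Next I would pin down $F^f_t$ by completing the square in $x_{t+1}$. Collecting the quadratic terms from the transition, emission, and backward message yields the posterior precision $\Lambda_t = Q^{-1} + C^T R^{-1} C + \Omega_{t+1}$, while the only $x_t$-dependent linear term is $x_{t+1}^T Q^{-1} A x_t$ from the transition. The conditional mean is therefore $\Lambda_t^{-1}(Q^{-1}A x_t + \text{const})$, so $F^f_t = \Lambda_t^{-1} Q^{-1} A = (I_n + Q C^T R^{-1} C + Q \Omega_{t+1})^{-1} A$, the last equality obtained by multiplying through by $Q$ inside the inverse. This exhibits $F^f_t$ as a matrix function of $\theta$ and $y_{>t}$ (through $\Omega_{t+1}$), as claimed.

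Finally, to get the uniform bound I would discard the extra positive-semidefinite term $Q\Omega_{t+1}$, bounding $\| F^f_t\|_2$ by its value at $\Omega_{t+1}=0$, namely $\| A (I_n + Q C^T R^{-1} C)^{-1}\|_2 = L_f$ (the placement of $A$ does not affect the subsequent submultiplicative bound, since $\| M\|_2 = \| M^T\|_2$), and then conclude $L_f < 1$ when $\| A\|_2 < 1$ via $L_f \leq \| A\|_2 \, \|(I_n + Q C^T R^{-1} C)^{-1}\|_2$ together with $\|(I_n + Q C^T R^{-1} C)^{-1}\|_2 < 1$. The intuition for the last factor is that $Q C^T R^{-1} C$ is similar to the symmetric positive-semidefinite matrix $Q^{1/2} C^T R^{-1} C Q^{1/2}$ (conjugate by $Q^{1/2}$), so all of its eigenvalues are nonnegative and those of $I_n + Q C^T R^{-1} C$ are at least one. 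I expect the main obstacle to be making these two operator-norm statements fully rigorous: because $I_n + Q C^T R^{-1} C$ is non-normal, singular values need not track eigenvalues, so neither the monotonicity (that $\| F^f_t\|_2$ is largest when $\Omega_{t+1}$ vanishes) nor the strict contraction follows from spectral-radius reasoning alone. The natural remedy is to work in the $Q^{-1/2}$-weighted coordinates $w = Q^{-1/2} x$, in which $f_t$ conjugates to the symmetric map $(I_n + Q^{1/2} C^T R^{-1} C Q^{1/2} + Q^{1/2}\Omega_{t+1} Q^{1/2})^{-1}$; there the monotonicity (operator-antitonicity of inversion) and the smallest-eigenvalue bound are immediate, and transferring the conclusion back to the Euclidean Wasserstein distance is the remaining point requiring care.
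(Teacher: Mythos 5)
Your proposal follows the same route as the paper's own proof, step for step: the paper likewise establishes that $p(y_{>t}\,|\,x_{t+1})$ has Gaussian form via an extended backward recursion (its precision $\Lambda_{t+1}$ is exactly your $C^T R^{-1} C + \Omega_{t+1}$, i.e.\ it folds the emission into the backward message where you keep the two terms separate), completes the square to get $\mathcal{F}_t(x_{t+1} \,|\, x_t) = \mathcal{N}(x_{t+1} \,|\, (Q^{-1}+\Lambda_{t+1})^{-1}(Q^{-1}A x_t + h_{t+1}),\, (Q^{-1}+\Lambda_{t+1})^{-1})$, reads off $F^f_t = (Q^{-1}+\Lambda_{t+1})^{-1}Q^{-1}A$, and bounds its norm by replacing $\Lambda_{t+1}$ with its positive semidefinite lower bound $C^T R^{-1} C$.

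The place where you depart from the paper is the caveat in your final paragraph, and you are right to depart: the paper does not resolve that obstacle, it silently assumes it away. Its proof passes directly from $\Lambda_{t+1} \succeq C^T R^{-1} C$ to $\|(Q^{-1}+\Lambda_{t+1})^{-1}Q^{-1}A\|_2 \leq \|(Q^{-1}+C^TR^{-1}C)^{-1}Q^{-1}A\|_2$, and the lemma asserts $\|(I_n + QC^TR^{-1}C)^{-1}\|_2 < 1$; neither holds in general. Operator antitonicity does give $(Q^{-1}+\Lambda_{t+1})^{-1} \preceq (Q^{-1}+C^TR^{-1}C)^{-1}$, but since squaring is not operator monotone this does not imply the corresponding spectral-norm inequality; and $I_n + QC^TR^{-1}C$ is non-normal, so its eigenvalues being at least $1$ does not control its smallest singular value. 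Concretely, with $Q = \diag(1, 10^{-2})$ and $C^TR^{-1}C = vv^T$, $v = (1,10)^T$, one computes $\|(I_2 + QC^TR^{-1}C)^{-1}\|_2 \approx 3.5$. Your $Q^{1/2}$-conjugation is the correct repair — in that frame inversion-antitonicity and the smallest-eigenvalue argument are valid — but, as you anticipate, it yields a contraction in the $Q^{-1}$-weighted norm: Lemma~\ref{lemma:geometric_wasserstein} must then be run with Wasserstein distances in that metric (changing $\epsilon_1$, $\epsilon_2$ and $L_U$ by at most factors of $\kappa(Q)^{1/2}$), and the hypothesis $\|A\|_2 < 1$ becomes $\|Q^{-1/2}AQ^{1/2}\|_2 < 1$, which coincides with the paper's condition precisely when $A$ and $Q$ commute (a commutativity assumption the paper already invokes for the tighter bound in Lemma~\ref{lemma:LGSSM_backward_maps}). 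One small slip: transposition gives $\|(I_n+QC^TR^{-1}C)^{-1}A\|_2 = \|A^T(I_n + C^TR^{-1}CQ)^{-1}\|_2$, not $\|A(I_n+QC^TR^{-1}C)^{-1}\|_2$, so the two placements of $A$ are genuinely different quantities; but, as you say, both are dominated by $\|A\|_2\,\|(I_n+QC^TR^{-1}C)^{-1}\|_2$, so nothing downstream changes. In short: same approach as the paper, and where you deviate you are being more careful than the source.
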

\begin{lemma}
\label{lemma:LGSSM_backward_maps}
The backward random maps of an LGSSM are Gaussian linear maps.
Specifically, $b_t(x_t) = F^b_t x_t + \zeta^b_t$,
where $\zeta^b_t$ is a Gaussian random intercept and $F^b_t$ is a matrix function of $\theta$ and $y_{<t}$.
As a linear map, the Lipschitz constant of $b_t$ is
\begin{equation}
\|b_t \|_{Lip} = \| F^b_t \|_2 \leq \|A (Q A^T Q^{-1} A + Q C^T R^{-1} C)^{-1} \|_2 = L_b \enspace.
\end{equation}
If $\| A\|_2 < \| (Q A^T Q^{-1} A  + Q C^T R^{-1} C)^{-1} \|_2$, then $\| f_t \|_{Lip} \leq L_f < 1$ for all $t$.
In addition, when the variance of the prior $p_0(x)$ is less than the steady state variance $V_\infty = Q + A V_\infty A^T$ and $A$ commutes with $Q$, we obtain a tighter bound
\begin{equation}
\|b_t \|_{Lip} = \| F^b_t \|_2 \leq \|A (I_n + Q C^T R^{-1} C)^{-1} \|_2 = L_b \enspace.
\end{equation}
In this case, if $\| A\|_2 < 1$, then $\| b_t \|_{Lip} \leq L_b < 1$ for all $t$.
\end{lemma}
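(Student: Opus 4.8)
The plan is to mirror the derivation of Lemma~\ref{lemma:LGSSM_forward_maps}, but for the backward smoothing kernel $\mathcal{B}_t$ of Eq.~\eqref{eq:backward_smoothing_kernel}, and then to spend most of the effort controlling the filtered covariance that appears in the resulting linear map. First I would establish that $b_t$ is a Gaussian linear map. Since $\mathcal{B}_t(x_{t-1} \mid x_t) \propto p(x_t \mid x_{t-1}) \, \alpha_{t-1}(x_{t-1})$ and both factors are Gaussian --- the transition $p(x_t \mid x_{t-1}) = \mathcal{N}(x_t; A x_{t-1}, Q)$ and the filtered message $\alpha_{t-1}(x_{t-1}) \propto \mathcal{N}(x_{t-1}; \hat{x}_{t-1}, P_{t-1})$ produced by the Kalman filter of Eq.~\eqref{eq:forward_message} --- completing the square in $x_{t-1}$ shows $\mathcal{B}_t$ is Gaussian in $x_{t-1}$ with mean affine in $x_t$ and covariance independent of $x_t$. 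Reading off the coefficient of $x_t$ gives $F^b_t = (A^T Q^{-1} A + P_{t-1}^{-1})^{-1} A^T Q^{-1}$, which by the matrix-inversion lemma equals the Rauch--Tung--Striebel smoother gain $P_{t-1} A^T (A P_{t-1} A^T + Q)^{-1}$. Because the additive intercept $\zeta^b_t$ does not depend on $x_t$, the difference $b_t(x_t) - b_t(x_t') = F^b_t (x_t - x_t')$ is deterministic, so $\| b_t \|_{Lip} = \| F^b_t \|_2$ exactly, giving the first claim.

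The second claim is a uniform-in-$t$ bound on $\| F^b_t \|_2$. The key observation is that $F^b_t = (A^T Q^{-1} A + P_{t-1}^{-1})^{-1} A^T Q^{-1}$ grows as the filtered covariance $P_{t-1}$ grows --- in the scalar case $F^b_t \to A^{-1}$ as $P_{t-1} \to \infty$ --- so unlike the forward map the backward map can be expansive, and I must bound $P_{t-1}$ from above rather than rely on conditioning on the future. I would use the information form of the Kalman recursion, $P_{t-1}^{-1} = (A P_{t-2} A^T + Q)^{-1} + C^T R^{-1} C$, which shows that every measurement update injects at least $C^T R^{-1} C$ of precision; substituting the resulting lower bound on $P_{t-1}^{-1}$ into $F^b_t$ and rearranging yields the stated $L_b = \| A (Q A^T Q^{-1} A + Q C^T R^{-1} C)^{-1} \|_2$, after which the sufficient condition for $L_b < 1$ is immediate from submultiplicativity of $\| \cdot \|_2$.

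For the tighter bound I would add the hypotheses that the prior variance is dominated by the steady-state variance $V_\infty = Q + A V_\infty A^T$ and that $A$ commutes with $Q$. Monotonicity of the Riccati map then propagates $P_0 \preceq V_\infty$ forward to $P_{t-1} \preceq V_\infty$ for all $t$, and the commutation of $A$ and $Q$ lets me simultaneously handle $A$, $Q$, and $V_\infty$ so that the smoother gain simplifies to the same algebraic form as the forward gain of Lemma~\ref{lemma:LGSSM_forward_maps}, giving $L_b = \| A (I_n + Q C^T R^{-1} C)^{-1} \|_2$ and hence $L_b < 1$ whenever $\| A \|_2 < 1$.

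The main obstacle is the second step: showing that $\| F^b_t \|_2$ is genuinely controlled by an upper bound on $P_{t-1}$. The naive Loewner-order monotonicity argument is not automatic, because $M \mapsto \| M^{-1} A^T Q^{-1} \|_2$ is built from $M^{-2}$, and $M^{-2}$ is not operator monotone; I would therefore argue through the smoother-gain form $P_{t-1} A^T (A P_{t-1} A^T + Q)^{-1}$ or bound $\| F^b_t \|_2^2 = \lambda_{\max}(\,\cdot\,)$ directly once the precision lower bound has been substituted. The steady-state Riccati analysis underlying the tighter bound --- verifying that the fixed-point iteration for $P_{t-1}$ is monotone and that the ordering $P_0 \preceq V_\infty$ is preserved --- is the other delicate piece, and is precisely where the commuting assumption on $A$ and $Q$ earns its keep.
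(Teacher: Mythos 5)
Your proposal follows essentially the same route as the paper's proof: you derive $b_t$ as a Gaussian linear map from $\mathcal{B}_t \propto p(x_t \mid x_{t-1})\,\alpha_{t-1}(x_{t-1})$ with $F^b_t = (A^T Q^{-1} A + \Lambda_{\alpha_{t-1}})^{-1} A^T Q^{-1}$, bound the filtered precision from below by $C^T R^{-1} C$ using the forward Kalman (information-form) recursion, and obtain the tighter constant by propagating the prior bound $P_0 \preceq V_\infty$ through the Riccati iteration and invoking commutativity of $A$ and $Q$ --- exactly the paper's steps. Your added caution about operator monotonicity (that substituting the precision lower bound into $\|F^b_t\|_2$ needs justification since $M \mapsto M^{-2}$ is not operator monotone) concerns a step the paper's own proof glosses over, so your plan is, if anything, more careful than the published argument.
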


Lemmas~\ref{lemma:LGSSM_forward_maps} and \ref{lemma:LGSSM_backward_maps} agree with intuition, when $\|A\|_2 \approx 0$ (no connection between $x_{t-1}$ and $x_t$) or $\| Q \|_2 \gg \| R \|_2$ (transition noise is much larger than emission noise), then $L_f, L_b \approx 0$ (observations can be treated independently). Conversely, when $\|A\|_2 \approx 1$ and $\| Q \|_2 \ll \| R\|_2$, then $L_f, L_b \approx 1$ and buffering is necessary.

\begin{lemma}
\label{lemma:LGSSM_Lipschitz}
As $x,y$ are jointly Gaussian in the LGSSM,
the gradient of the complete data loglikelihood is a quadratic form in $xx^T$
with matrices
\begin{align}
 \Omega = \{& I_n \otimes Q^{-1}, I_n \otimes Q^{-1} A, Q^{-1/2} \otimes I_n, Q^{-1/2} A \otimes I_n, Q^{-1/2} \otimes A, Q^{-1/2}A \otimes A, \\
 \nonumber
 & I_n \otimes R^{-1}, I_n \otimes R^{-1} C, R^{-1/2}C \otimes C \} \enspace,
\end{align}
where $Q^{-1/2} = \psi_Q$ and $R^{-1/2} = \psi_R$.
Therefore a bound for the Lipschitz constant is $L_U' = \max_{\omega \in \Omega} \|\omega\|_2$.
This bound grows in $\|A\|, \|C\|, \|Q\|^{-1}, \|R\|^{-1}$.
\end{lemma}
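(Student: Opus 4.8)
The plan is to differentiate the single-step complete-data loglikelihood $\log p(y_t, x_t \mid x_{t-1},\theta)$ with respect to each parameter block, observe that every resulting gradient is an affine-quadratic polynomial in $x_{t-1:t}$, and then vectorize to read off the coefficient matrices $\Omega$ together with the bound $L_U' = \max_{\omega\in\Omega}\|\omega\|_2$. I would first write the loglikelihood in the Cholesky parametrization $Q^{-1}=\psi_Q\psi_Q^\top$, $R^{-1}=\psi_R\psi_R^\top$,
\[
\log p(y_t, x_t \mid x_{t-1},\theta) = -\tfrac12\|\psi_Q^\top(x_t - Ax_{t-1})\|_2^2 -\tfrac12\|\psi_R^\top(y_t - Cx_t)\|_2^2 + \log|\det\psi_Q| + \log|\det\psi_R| + c,
\]
and compute the four gradient blocks. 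The transition blocks give $\nabla_A = Q^{-1}x_tx_{t-1}^\top - Q^{-1}Ax_{t-1}x_{t-1}^\top$ and, for the Cholesky factor, the $x$-dependent part $-(x_t-Ax_{t-1})(x_t-Ax_{t-1})^\top\psi_Q$; the emission blocks give $\nabla_C = R^{-1}y_tx_t^\top - R^{-1}Cx_tx_t^\top$ and the analogous $-(y_t-Cx_t)(y_t-Cx_t)^\top\psi_R$. Expanding the two outer products exhibits each gradient as a linear combination of the quadratic blocks $x_tx_t^\top$, $x_tx_{t-1}^\top$, $x_{t-1}x_{t-1}^\top$, of the linear and $y_t$-cross terms, and of constants; the log-determinant contributions are constant in $x$ and drop out of any difference $\nabla(x)-\nabla(x')$.

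Next I would vectorize using $\vect(MXN)=(N^\top\otimes M)\vect(X)$ to identify the matrix multiplying each monomial. The $A$-gradient yields $I_n\otimes Q^{-1}$ and $I_n\otimes Q^{-1}A$; the four monomials from $(x_t-Ax_{t-1})(x_t-Ax_{t-1})^\top\psi_Q$ yield $Q^{-1/2}\otimes I_n$, $Q^{-1/2}A\otimes I_n$, $Q^{-1/2}\otimes A$, $Q^{-1/2}A\otimes A$; the $C$-gradient yields $I_n\otimes R^{-1}$ and $I_n\otimes R^{-1}C$; and the quadratic monomial of the $\psi_R$-gradient yields $R^{-1/2}C\otimes C$. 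These are exactly the nine matrices in $\Omega$. Since each monomial appears with a fixed coefficient matrix and the map from monomials to gradient is linear, and a linear map has Lipschitz constant equal to its operator norm, the triangle inequality over the finitely many terms gives the claimed Lipschitz bound $\max_{\omega\in\Omega}\|\omega\|_2$.

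The main obstacle is the linear and cross terms: a purely linear term such as $R^{-1}y_tx_t^\top$ is \emph{not} Lipschitz in $xx^\top$ alone, since $xx^\top$ does not determine the sign of $x$. I would resolve this by augmenting the latent vector with the known observation and a constant, $\tilde u = (x_{t-1}, x_t, y_t, 1)$, so that every linear and cross term becomes a genuine entry of the augmented outer product $\tilde u\tilde u^\top$; for instance $R^{-1}y_tx_t^\top$ then carries the coefficient $I_n\otimes R^{-1}$ already listed in $\Omega$, and Lipschitz continuity in $\tilde u\tilde u^\top$ is well-behaved because closeness of $\tilde u\tilde u^\top$ forces closeness of its off-diagonal blocks. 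With this interpretation the gradient is exactly linear in $\tilde u\tilde u^\top$, so Theorem~\ref{theorem:geometric_error_bound2} applies, and the final remark that $L_U'$ grows in $\|A\|$, $\|C\|$, $\|Q\|^{-1}$, $\|R\|^{-1}$ follows by inspecting the Kronecker factors, each of which contains a transition/emission matrix or an inverse-covariance (equivalently $\psi$) factor.
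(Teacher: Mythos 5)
Your proposal is correct and takes essentially the same route as the paper's proof: compute the four gradient blocks $\grad_A$, $\grad_{\psi_Q}$, $\grad_C$, $\grad_{\psi_R}$ of the complete-data loglikelihood and read off the coefficient matrices $\Omega$ by vectorization. You actually go further than the paper, whose proof stops after writing the gradients and asserts the quadratic-form structure ``is clear'': your augmentation $\tilde u = (x_{t-1}, x_t, y_t, 1)$, which turns the linear and $y_t$-cross terms (e.g.\ $R^{-1}y_t x_t^T$) into honest entries of $\tilde u\tilde u^T$, fills a gap the paper leaves implicit, and your only blemish---claiming the triangle inequality yields the maximum rather than a sum of the $\|\omega\|_2$---is a constant-factor imprecision present in the paper's statement as well.
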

The proofs can be found in the Supplement.

\subsection{Switching Linear Dynamical System (SLDS)}
\label{sec:models:SLDS}

Switching linear dynamical systems (SLDSs) are an example of a state space model with both discrete and continuous latent variables.
The form of SLDS models that we consider is
\begin{equation}
    \label{eq:slds_complete_likelihood}
    p(y, x, z \, | \, \theta) = \prod_{t=1}^T \Pi_{z_{t-1}, z_{t}} \cdot \mathcal{N}(x_t \, | \, A_{z_t} x_{t-1}, Q_{z_t}) \cdot \mathcal{N}(y_t \, | \, C x_t, R) \enspace,
\end{equation}
where $y_t \in \R^m$ are the observations,
$\latent_t \equiv (x_t, z_t) \in \R^n \times \{1, \ldots, K\}$ are the mixed-type latent state sequence,
and $\theta = \{\Pi, A, Q, C, R\}$ the model parameters with
$\Pi_k \in \Delta^K$, $A_k \in \R^{n \times n}$, $Q_k \in \mathbb{S}^{n}_+$ for $k = 1,\ldots, K$, $C \in \R^{m \times n}$ and $R \in \mathbb{S}^m_+$.
The SLDS of Eq.~\eqref{eq:slds_complete_likelihood} can be viewed either as a latent AR(1)-HMM with conditional Gaussian emissions or
as hidden Markov switches of a LGSSM.
As an extension of the ARHMM, the latent continuous state sequence $x_t$ can \emph{smooth} noisy observations.
As an extension of the LGSSM, the latent discrete state sequence $z_t$ allows modeling of more complex dynamics by \emph{switching} between different states (or regimes).

\begin{figure}[t]
\centering
\begin{minipage}[c]{.5\textwidth}
    \centering
    \includegraphics[width=0.99\linewidth]{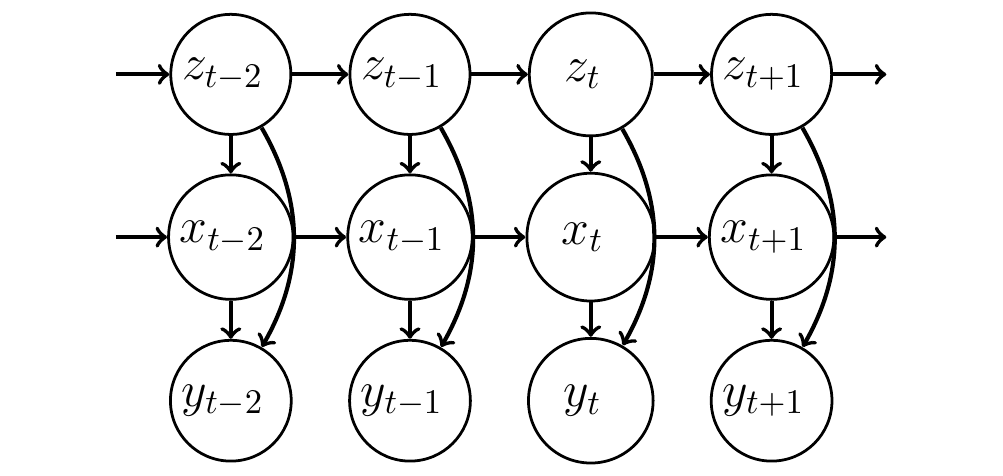}
\end{minipage}
\caption{Graphical Model of a SLDS.}
\label{fig:slds_graphical_model}
\end{figure}

\subsubsection{Gradient Estimators}
\label{sec:models:SLDS:gradients}
Unlike previous models, the forward-backward algorithm for the latent variables $(x, z)$ in an SLDS does not have a closed form.
Specifically, the transition kernel for $x$ is a Gaussian mixture,
so the forward and backward messages of $x$ are Gaussian mixtures with an exponentially increasing number of components (e.g. $\alpha_t$ has $K^t$ components).
Because the forward-backward algorithm is intractable for SLDSs,
we rely on sampling $(x,z)$ and forming a Monte Carlo estimate of the expectation in Fisher's identity Eq.~\eqref{eq:efficient_potential_estimate}.
We consider various options of this Monte Carlo estimate below.
To sample $(x,z)$, we use a blocked Gibbs scheme as in~\cite{fox2011bayesian}, detailed in the Supplement.

Given a collection of $N$ samples from blocked Gibbs $\{x^{(r)}, z^{(r)}\} \sim x, z \,|\, y, \theta$,
we construct three different estimators for the marginal loglikelihood.
The first estimator, replaces the expectation in Eq.~\eqref{eq:efficient_potential_estimate} with a Monte Carlo average
\begin{equation}
\label{eq:slds_naive}
\E_{x, z | y, \theta}[\grad \log p(y, x, z \, | \, \theta)] \approx \frac{1}{N}\sum_{r = 1}^{N} \grad \log p(y, x^{(r)}, z^{(r)} \, | \, \theta) \enspace.
\end{equation}
We construct two additional estimators by analytically integrating out either one of the two latent variables.
These estimators are the \textit{Rao-Blackwellization} of the naive Monte Carlo estimate~\cite{cappe2005inference}.
Integrating out either $x$ or $z$, gives us
\begin{align}
\label{eq:slds_z_marginal}
\E_{x, z | y, \theta}[\grad \log p(y, x, z \, | \, \theta)]
 &= \frac{1}{N}\sum_{r = 1}^{N} \E_{x | y, z^{(r)}, \theta}[\grad \log p(y, x, z^{(r)} \, | \, \theta)] \\
\label{eq:slds_x_marginal}
\E_{x, z | y, \theta}[\grad \log p(y, x, z \, | \, \theta)]
 &= \frac{1}{N}\sum_{r = 1}^{N} \E_{z | y, x^{(r)}, \theta}[\grad \log p(y, x^{(r)}, z \, | \, \theta)] \enspace.
\end{align}
Because Eq.~\eqref{eq:slds_z_marginal} integrates out $x$, it has lower variance for the gradient terms involving $x$ (i.e. $A$, $Q$ $R$).
Similarly, because Eq.~\eqref{eq:slds_x_marginal} integrates out $z$, it has lower variance for the gradient terms involving $z$ (i.e. $\Pi$).

Selecting one of the above Monte Carlo estimates of $\grad U(\theta)$,
we can deploy the same buffered subsampling estimator Eq.~\eqref{eq:efficient_potential_estimate}, obtaining Algorithm~\ref{alg:noisygradient-slds}.
Algorithm~\ref{alg:noisygradient-slds} replaces the forward-backward subroutine in Algorithm~\ref{alg:noisygradient-analytic} with blocked Gibbs sampling over $\BUFFSUBSEQ$.
Although this is more computationally costly than the exact forward-backward algorithms of the previous sections, it still provides memory saving and runtime speed ups compared to running a full blocked Gibbs sampler over $\FULLSEQ$.
The explicit forms of Eqs.~\eqref{eq:slds_naive}-\eqref{eq:slds_x_marginal},
precondition matrix $D(\theta)$, and correction term $\Gamma(\theta)$ for SLDS used in Alg.~\ref{alg:sgrld} are a combination of those for ARHMMs and LGSSMs.
Complete details are provided in the Supplement.

\begin{algorithm}[ht]
\caption{\texttt{NoisyGradient} using blocked Gibbs (SLDS)}
\label{alg:noisygradient-slds}
\begin{algorithmic}
\STATE{\textbf{input:} data $y$, parameters $\theta$, subsequence length $S$, error tolerance $\epsilon$, }
\STATE{$B$ = \texttt{BufferLength}$(\theta, S, \epsilon)$}
\COMMENT{From Theory or Adaptive}
\STATE{$\SUBSEQ, \BUFFSUBSEQ$ = \texttt{GetBufferedSubsequence}$(y, S, B)$}
\STATE{$z^{(0)}_\BUFFSUBSEQ$ = \texttt{InitLatent}$(\BUFFSUBSEQ, \theta)$}
\COMMENT{With `burn-in'}
\FOR{$r = 1, 2, \ldots, N$}
\STATE{sample $x_{\BUFFSUBSEQ}^{(r)} \sim x_{\BUFFSUBSEQ} \, | \, y_\BUFFSUBSEQ, z_\BUFFSUBSEQ^{(r-1)}$}
\COMMENT{Blocked Gibbs}
\STATE{sample $z_{\BUFFSUBSEQ}^{(r)} \sim z_{\BUFFSUBSEQ} \, | \, y_\BUFFSUBSEQ, x_\BUFFSUBSEQ^{(r)}$}
\ENDFOR
\STATE{ calculate $\widetilde{U}(\theta)$ using a Monte Carlo estimate}
\COMMENT{Eq.~\eqref{eq:slds_naive}, \eqref{eq:slds_z_marginal}, or \eqref{eq:slds_x_marginal}}
\STATE{\textbf{return} $\grad\widetilde{U}(\theta)$}
\end{algorithmic}
\end{algorithm}

\subsubsection{Error Bounds}
There are two primary challenges for the error analysis of the SLDS:
(i) the forward and backward smoothing kernels for the SLDS are mixtures and
(ii) the error from the finite-step blocked Gibbs sampler needs to be quantified.
Conditions for contraction in the forward and backward smoothing random maps of switching models may follow from the conditions in~\cite{cloez2015exponential}.
Combining the convergence rate of the blocked Gibbs sampler with the error bound is an area we leave for future work.
Our experiments in Section~\ref{sec:experiments-SLDS} provide empirical evidence of the potential benefits of the algorithm.
\section{Experiments}
\label{sec:experiments}

We evaluate the performance of our proposed SGRLD algorithm (Section~\ref{sec:framework}) using both synthetic and real data.
We organize our experiments by the corresponding models of Section~\ref{sec:models}.
Our evaluation focuses on the following three topics:
(1) the computational speed-up of SGMCMC over batch MCMC,
(2) the effectiveness of buffering in correcting bias,
and
(3) the effectiveness of the complete-data Fisher information preconditioning of SGRLD over SGLD.

For batch MCMC, we consider block-Gibbs sampling (Gibbs) and unadjusted Langevin Monte-Carlo -- both with preconditioning (RLD) and without precondition (LD).
Note that LD and RLD are SGLD and SGRLD with $S = T$.

To assess the performance of our samplers, we measure the marginal loglikelihood of samples $\theta^{(s)}$ at different runtimes on a heldout test sequence.
In synthetic data, where the true parameter $\theta^*$ is known, we also measure the mean-squared error (MSE) of the sample average $\hat\theta^{(s)} = \sum_{i \leq s} \theta^{(i)}/s$ to $\theta^*$.
To assess the quality of our MCMC samples at approximating the posterior $\Pr(\theta \, | \, y)$, we measure the kernel Stein discrepancy (KSD) of each chain after burn-in given equal computation time~\cite{gorham2017measuring, liu2016kernelized},
rather than effective sample size (ESS)~\cite{brooks2011handbook, gelman2013bayesian}, as KSD accounts for bias in the samples.
As with all gradient-based methods,
our SGMCMC methods require a hyper-parameter search over the fixed step-size tuning parameter $h$.
We present results for the best step-size as assessed via heldout loglikelihood on a validation set.
As the potential $U(\theta)$ for SSMs is non-convex, initialization is important.
For the HMM and ARHMM, we initialize the parameters $\Pi, A, Q$ using $z$ given from $K$-means clustering of the observations $y$ (or $y_{t-p:t}$).
For the LGSSM, we initialize the parameters from the prior.
For the mixed-type SLDS, we first sample $R$ from the prior and initialize $\Pi, A, Q$ using $z$ from $K$-means.
Finally, in our experiments, we use flat and non-informative priors for $\theta$. 
For complete details see the Supplement.

\subsection{Gaussian HMM \& ARHMM}
\label{sec:experiments-HMM}
\subsubsection{Synthetic ARHMM}
\label{sec:experiments-ARHMM-synth}
We first consider synthetic data generated from a $2$-state ARHMM in two dimensions $m=2$.
The true model parameters $\theta^*$ are
\begin{equation*}
\Pi = \begin{bmatrix}
    0.1 & 0.9 \\
    0.9 & 0.1
    \end{bmatrix}
\enspace, \enspace
Q_1 = Q_2 = 0.1 \cdot \begin{bmatrix}
    1 & 0 \\
    0 & 1
    \end{bmatrix}
\end{equation*}
\begin{equation*}
A_1 = 0.9 \cdot \begin{bmatrix}
    \cos(-\vartheta) & -\sin(-\vartheta) \\
    \sin(-\vartheta) & \cos(-\vartheta)
    \end{bmatrix}
\enspace, \enspace
A_2 = 0.9 \cdot \begin{bmatrix}
    \cos(\vartheta) & -\sin(\vartheta) \\ \sin(\vartheta) & \cos(\vartheta)
    \end{bmatrix}
\enspace .
\end{equation*}
The model's two states are alternating rotations of $y \in \R^2$ with angle $\vartheta = \pi/4$ and the latent state sequence has a high transition rate $\Pr(z_t \neq z_{t-1}) = 0.9$.
From this model we generate time series of length $T = 10^4$ and $10^6$.

\begin{figure}[!htb]
    \centering
    \begin{minipage}[t]{.45\textwidth}
    \centering
        \includegraphics[width=\textwidth]{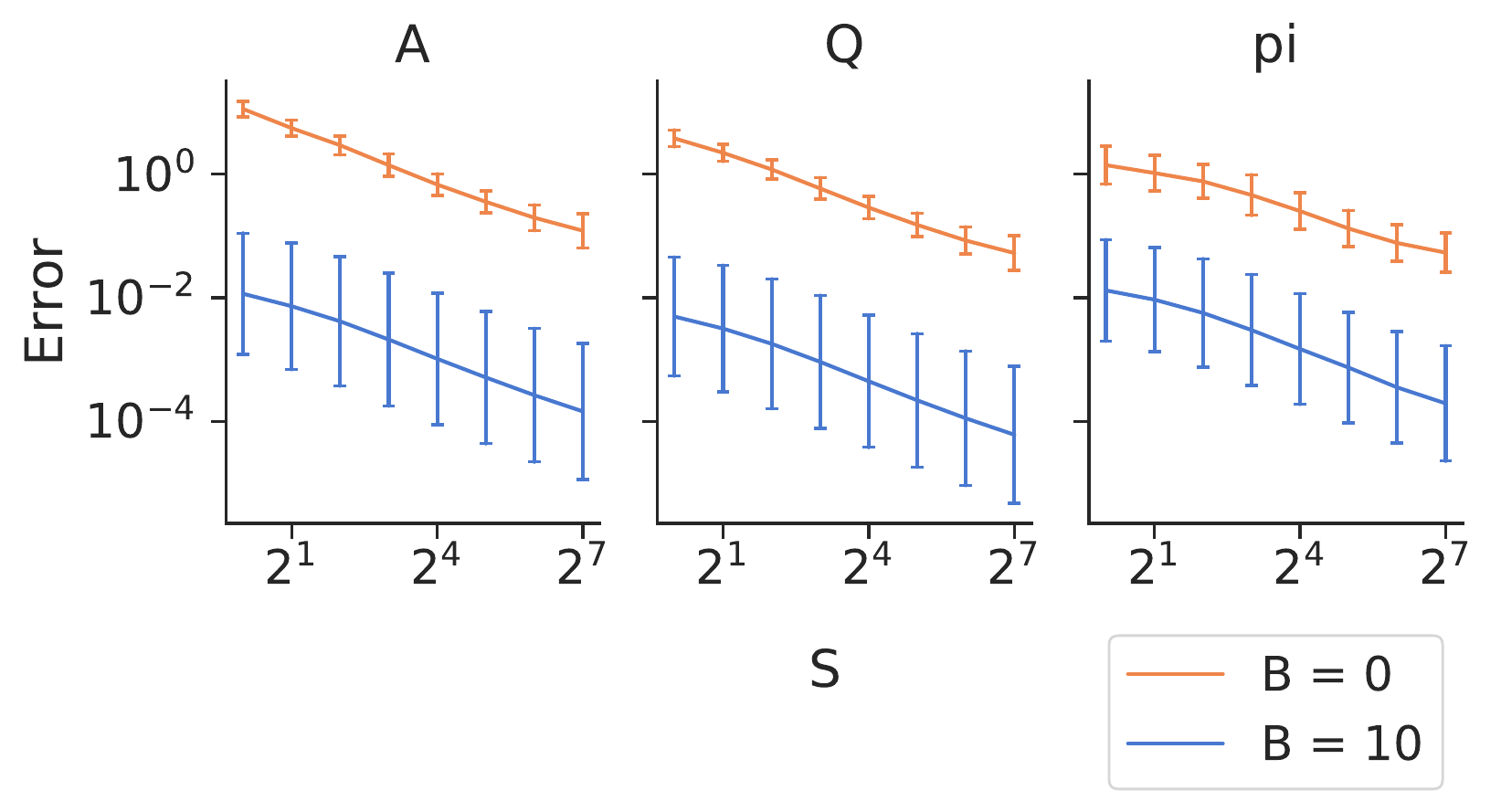}
    \end{minipage}
    \hspace{0.5cm}
    \begin{minipage}[t]{.45\textwidth}
    \centering
        \includegraphics[width=\textwidth]{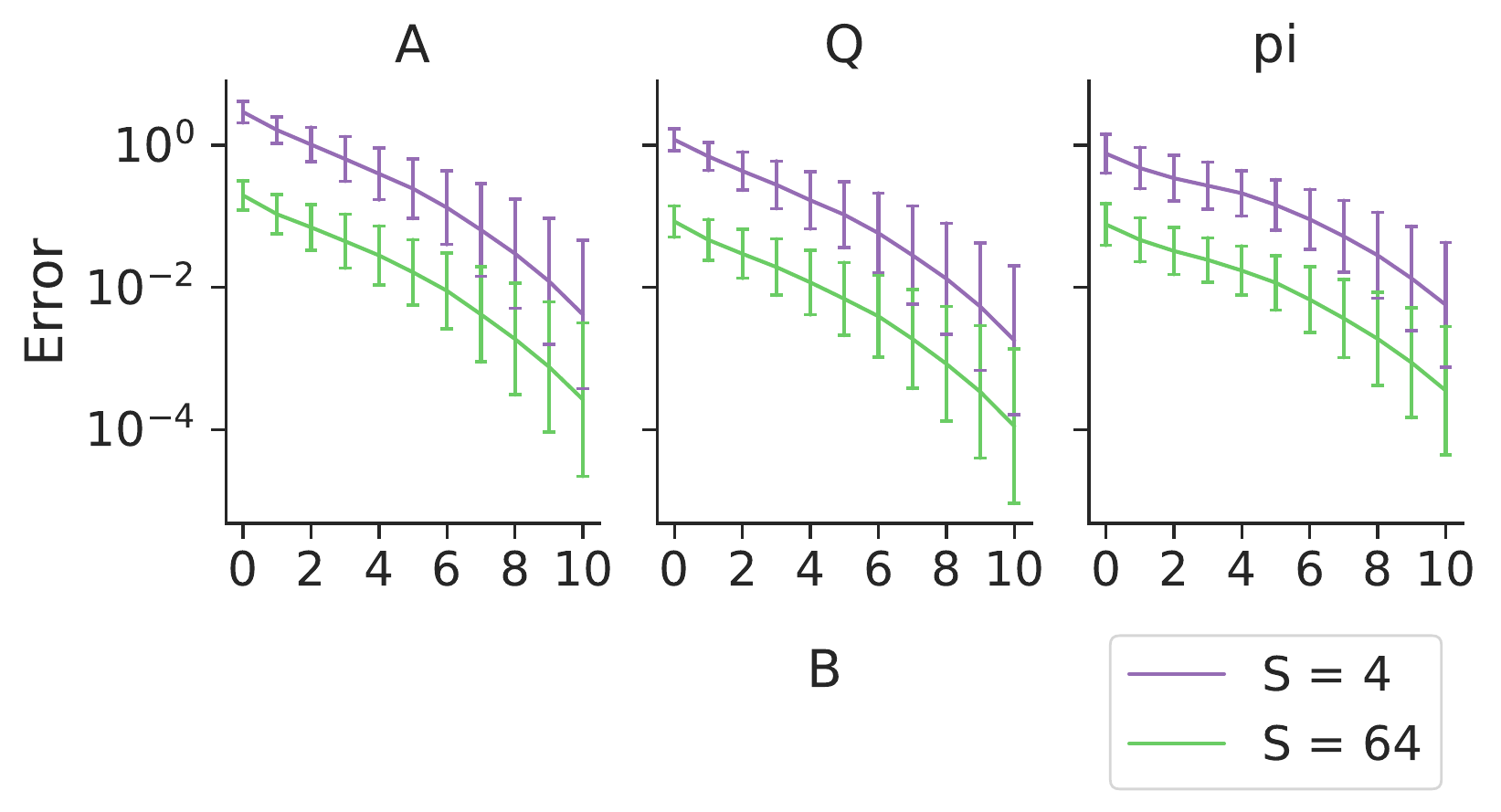}
    \end{minipage}
    \caption{Stochastic gradient error $\E_\SUBSEQ\|\bar{g}(\theta) - \tilde{g}(\theta)\|_2$. (Left) varying subsequence length $S$ for no-buffer $B = 0$ and buffer $B=10$. (Right) varying buffer size $B$ for $S = 4$ and $S=64$ subsequence lengths. Error bars are SD over $100$ datasets.
}
\label{fig:arhmm_grad_error}
\end{figure}

Figure~\ref{fig:arhmm_grad_error} are plots of the stochastic gradient error $\E_\SUBSEQ \|\bar{g}(\theta) - \tilde{g}(\theta)\|_2$ between the unbiased and buffered estimates evaluated at the true model parameters $\theta=\theta^*$.
From Figure~\ref{fig:arhmm_grad_error} (left), we see that the error decays $O(1/S)$ and that the error in estimates without buffering $B=0$ (orange) are orders of magnitude larger than the estimates with moderate buffering $B=10$ (blue).
From Figure~\ref{fig:arhmm_grad_error} (right), we see that the error decays geometrically in buffer size $O(L^B)$.

\begin{figure}[!htb]
\centering
    \begin{minipage}[c]{.1\textwidth}
    \centering
    \hbox{\rotatebox{90}{\hspace{1em} $T = 10^4$}}
    \end{minipage}
    \hspace{0.2em}
    \begin{minipage}[c]{.4\textwidth}
    \centering
        \includegraphics[width=\textwidth]{./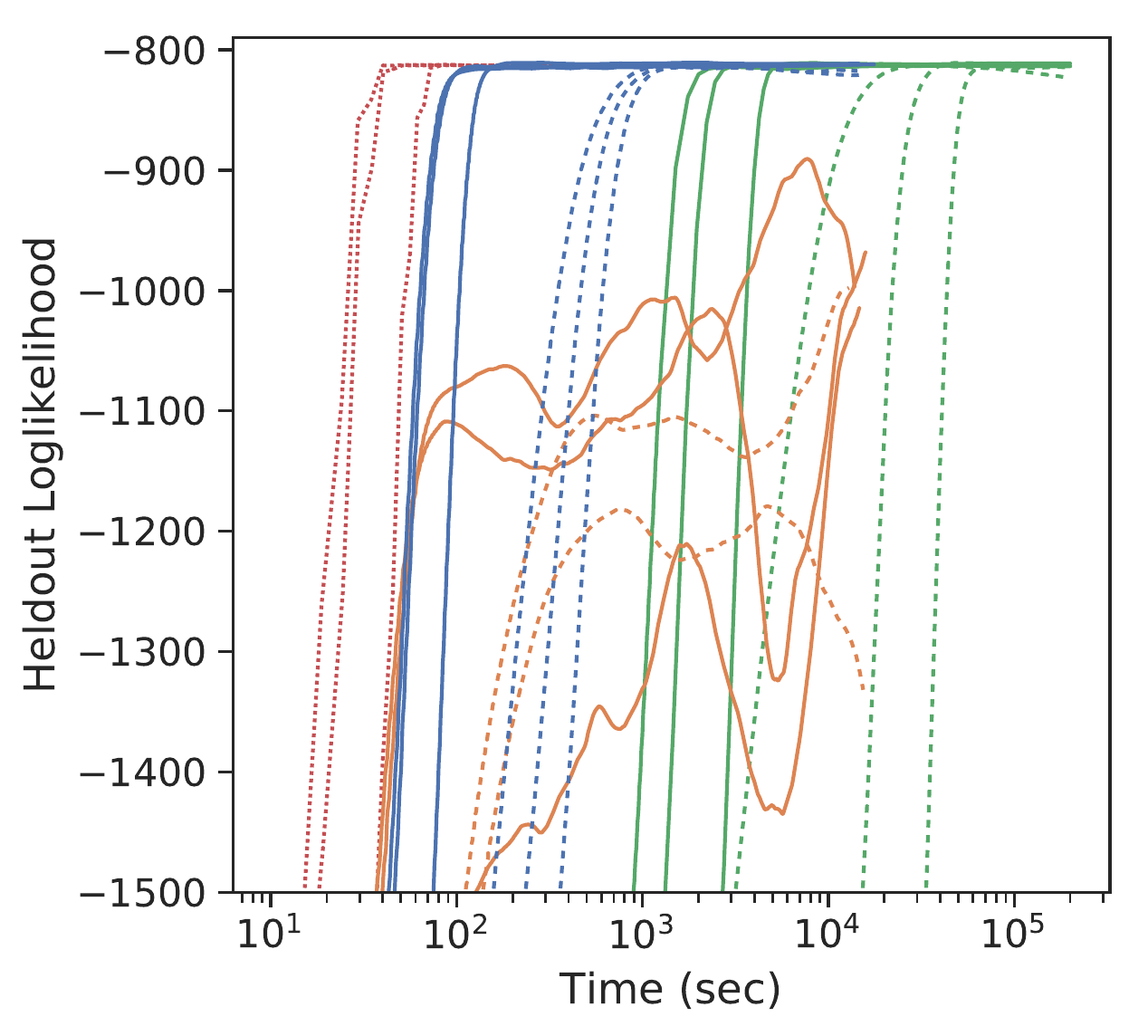}
    \end{minipage}
    \begin{minipage}[c]{.4\textwidth}
    \centering
        \includegraphics[width=\textwidth]{./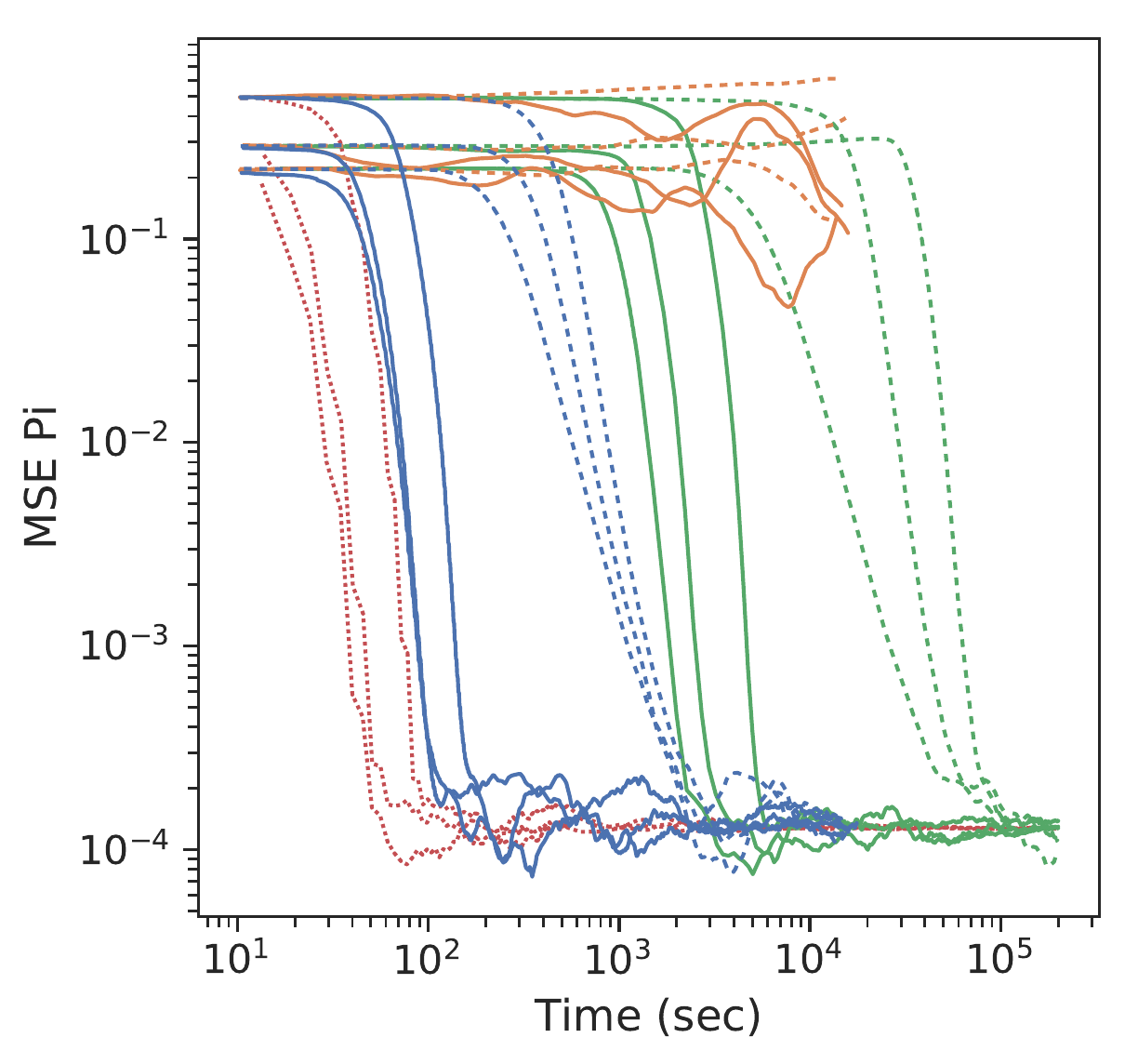}
    \end{minipage}

    \begin{minipage}[c]{.1\textwidth}
    \centering
    \hbox{\rotatebox{90}{\hspace{1em} $T = 10^6$}}
    \end{minipage}
    \hspace{0.2em}
    \begin{minipage}[c]{.4\textwidth}
    \centering
        \includegraphics[width=\textwidth]{./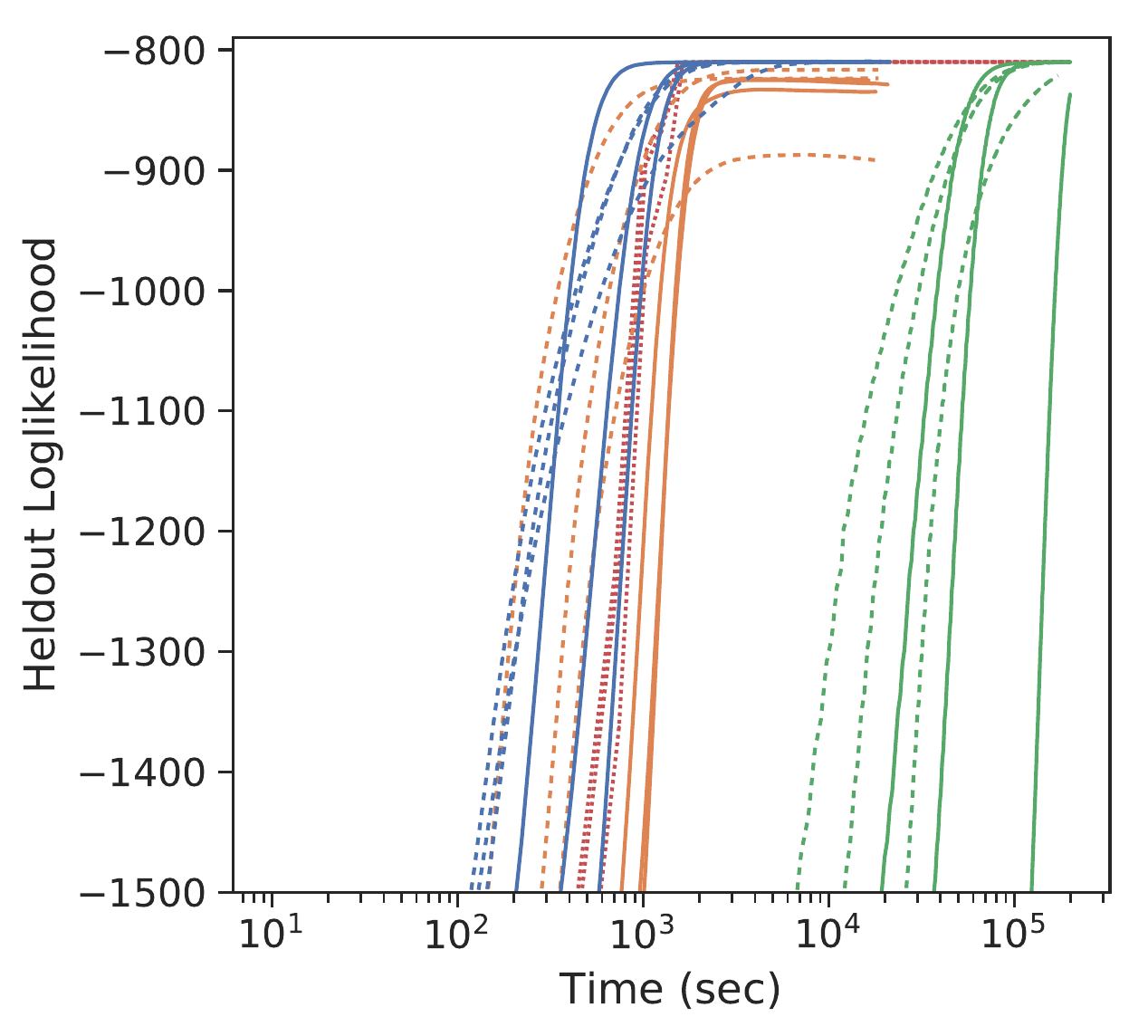}
    \end{minipage}
    \begin{minipage}[c]{.4\textwidth}
    \centering
        \includegraphics[width=\textwidth]{./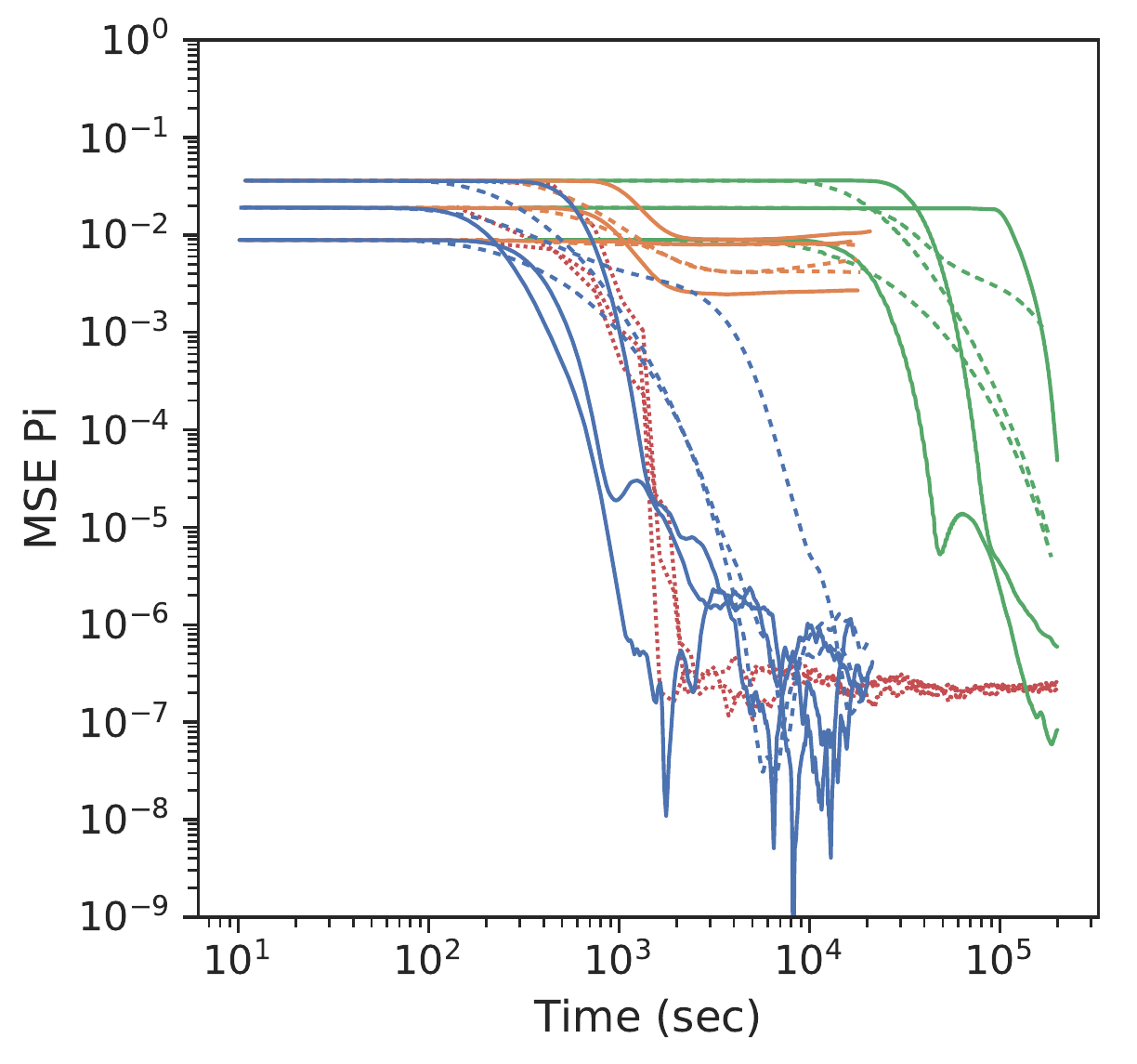}
    \end{minipage}

    \caption{Metrics vs Runtime on ARHMM data with $T = 10^4$ (top), $T=10^6$ (bottom), for different methods:
    \textcolor{BrickRed}{(Gibbs)}, \textcolor{ForestGreen}{(Full)},  \textcolor{Orange}{(No Buffer)} and \textcolor{Blue}{(Buffer)} SGMCMC.
    For SGMCMC methods, solid (\fullline) and dashed (\dashedline) lines indicate SGRLD and SGLD respectively.
    The different metrics are: (left) heldout loglikelihoood and (right) transition matrix estimation error $MSE(\hat\Pi^{(s)}, \Pi^*)$.
}
    \label{fig:arhmm_synth_metrics}
\end{figure}

\begin{figure}[!htb]
    \centering
    \begin{minipage}[c]{.9\textwidth}
    \centering
        \includegraphics[width=\textwidth]{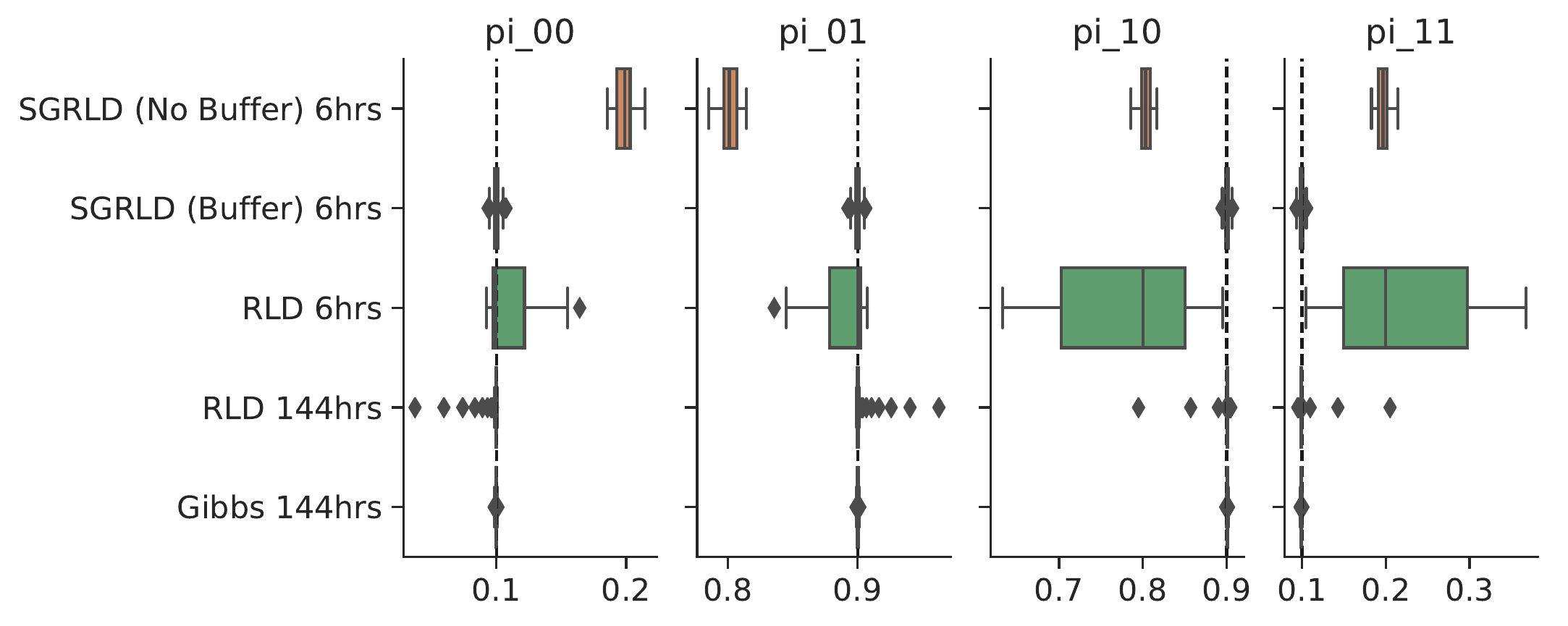}
        \includegraphics[width=\textwidth]{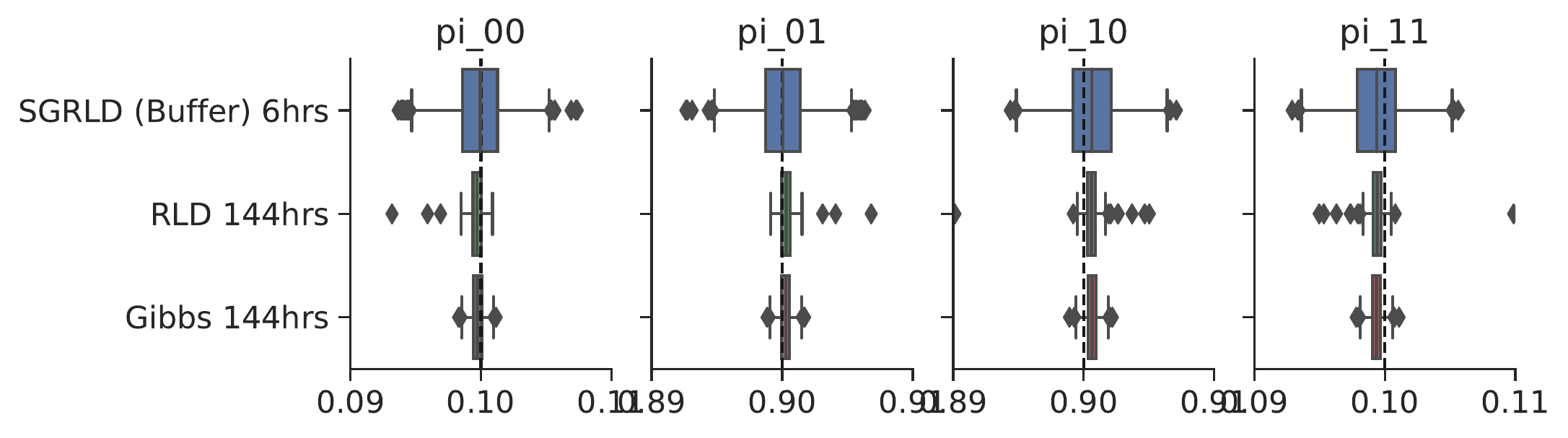}
    \end{minipage}
    \caption{Boxplot of MCMC samples for ARHMM data $T = 10^6$. (Top) comparison of all samplers, (bottom) zoom-in for top three. The half of each chain is discarded as burn-in. SGRLD with buffering in 6 hrs is comparable to RLD or Gibbs in 144 hrs.}
\label{fig:arhmm_boxplot}
\end{figure}

\begin{table}[ht]
\centering
\caption{$\log_{10}$(KSD) by variable of ARHMM samplers at 6 hrs. Mean and (SD) over runs in Figure~\ref{fig:arhmm_synth_metrics}.}
\label{table:arhmm_ksd}
\begin{tabular}{rrlll}
  \hline
 & Sampler & $\pi$ & $A$ & $\Sigma$ \\
  \hline
 \multirow{7}{*}{\rotatebox[origin=c]{90}{$T = 10^4$}}
\abovestrut{1em}
& SGLD (No Buffer)  & 3.15 (0.46) & 2.47 (0.51) &  2.33 (0.30) \\
& SGLD (Buffer)     & 0.99 (0.13) & 1.60 (0.20) &  1.80 (0.13) \\
& LD                & 1.77 (0.72) & 1.86 (0.32) &  2.12 (0.36) \\
\cline{2-5}
\abovestrut{1em}
& SGRLD (No Buffer) & 3.15 (0.39) & 2.02 (0.24) &  1.91 (0.24) \\
& SGRLD (Buffer)    & 0.89 (0.04) & 1.53 (0.10) &  1.60 (0.30) \\
& RLD               & 0.67 (0.27) & 2.02 (0.14) &  1.60 (0.18) \\
\cline{2-5}
\abovestrut{1em}
& Gibbs             & 0.36 (0.07) & 1.30 (0.20) &  0.61 (0.13) \\
 \hline
 \hline
 \multirow{7}{*}{\rotatebox[origin=c]{90}{$T = 10^6$}}
\abovestrut{1em}
& SGLD (No Buffer)  & 4.73 (0.07) & 4.07 (0.22) &  3.67 (0.25) \\
& SGLD (Buffer)     & 2.62 (0.06) & 3.30 (0.20) &  2.77 (0.31) \\
& LD                & 3.59 (0.22) & 4.73 (0.33) &  4.78 (0.34) \\
\cline{2-5}
\abovestrut{1em}
& SGRLD (No Buffer) & 4.75 (0.15) & 4.02 (0.06) &  3.61 (0.12) \\
& SGRLD (Buffer)    & 2.27 (0.08) & 3.38 (0.08) &  2.89 (0.09) \\
& RLD               & 3.31 (0.05) & 4.22 (0.12) &  3.56 (0.07) \\
\cline{2-5}
\abovestrut{1em}
& Gibbs             & 3.17 (0.30) & 4.18 (0.07) &  3.30 (0.07) \\
\hline
\end{tabular}
\end{table}

In Figures~\ref{fig:arhmm_synth_metrics} and~\ref{fig:arhmm_boxplot},
we compare subsequence-based MCMC methods: SGLD (no-buffer and buffer) and SGRLD (no-buffer and buffer), with full-sequence MCMC methods: LD, RLD, and Gibbs.
We fit our samplers on one training sequence and evaluate performance on one test sequence.
We consider two training sequences of lengths $T = 10^4$ and $T=10^6$ and evaluate on the same test sequence of length $T = 10^4$.
For the SGMCMC methods we use a subsequence size of $S = 2$ and a buffer size of $B=0$ (no-buffer) or $B=2$ (buffer).
We ran the subsequence methods for 6 hours and full-sequence methods for 144 hours.

From Figure~\ref{fig:arhmm_synth_metrics}, we see that
our buffered SGMCMC (blue) helps convergence and mixing orders of magnitude faster than the full-sequence gradient MCMC (green).
We also see that buffering is necessary to properly estimate $\Pi$ as the no-buffer SGMCMC methods (orange) do not properly learn $\Pi$.
We also see that preconditioning helps convergence and mixing as SGRLD (solid) outperforms SGLD (dashed).
Although Gibbs outperforms SGMCMC for $T=10^4$,
Gibbs performs worse for $T=10^6$, as each iteration requires a full pass over the data set.

Figure~\ref{fig:arhmm_boxplot} are boxplots comparing the marginal distribution for the different methods on the synthetic ARHMM data $T = 10^6$.
From Figure~\ref{fig:arhmm_boxplot}, we see that SGRLD with buffering in 6 hours is comparable to RLD or Gibbs in 144 hours;
however, SGRLD without buffering is biased and RLD in 6 hours has not had enough time to mix.

Table~\ref{table:arhmm_ksd} displays the KSD of the samples to the posterior after discarding half the samples as burn-in. The standard deviation is over MCMC chains with different initializations.
Although RLD and Gibbs perform well for $T = 10^4$, both perform worse for larger $T = 10^6$ due to the increased time between samples.
We also see that the non-buffered methods do poorly for all $T$ due to sampling from the incorrect distribution.
Although SGLD (buffer) and SGRLD (buffer) perform comparably after burn-in, Figure~\ref{fig:arhmm_synth_metrics} suggests SGRLD converges more rapidly.

In the Supplement,
we present a synthetic data experiment for the Gaussian HMM,
and find similar results.

\subsubsection{Ion Channel Recordings}
\label{sec:experiments-HMM-ion}
We investigate the behavior of SGMCMC samplers on ion channel recording data.
In particular, we consider a 1MHz recording of a
single alamethicin channel~\cite{Rosenstein:2013}.
This data was previously investigated using a Bayesian
nonparametric HMM in~\cite{Palla:2014} and \cite{Tripuraneni:2015}.
In that work, the authors downsample the data by a factor of
$100$ and only used $10,000$ and $2,000$ observations due to the challenge of scaling computations to the full sequence.
We present the results on the data without downsampling ($10$ million observations),
where Gibbs sampling runs into memory issues.
Figure~\ref{fig:ion_channel} presents our results,
after applying a log-transform and normalizing the observations.
We train on the first 90\% and evaluate on the last 10\%.
For our SGMCMC methods we use a subsequence size of $S = 10$ and a buffer size of $B=0$ (no-buffer) or $B=10$ (buffer).
In addition to heldout loglikelihood, we also evaluate on 10-step ahead predictive loglikelihood $\sum_{t} \log \Pr(y_{t+10} \, | \, \theta, y_{\leq t})$, which is more sensitive to $\Pi$.
We see that SGRLD quickly converges compared to SGLD.
Although the buffered methods take longer to compute ($S+2B = 30$ vs $S=10$), we see that buffering is necessary to perform well.
In the Supplement, we present results comparing SGMCMC methods with Gibbs sampling on a downsampled version.

\begin{figure}[!htb]
    \centering
    \begin{minipage}[c]{.33\textwidth}
    \centering
        \includegraphics[width=\textwidth]{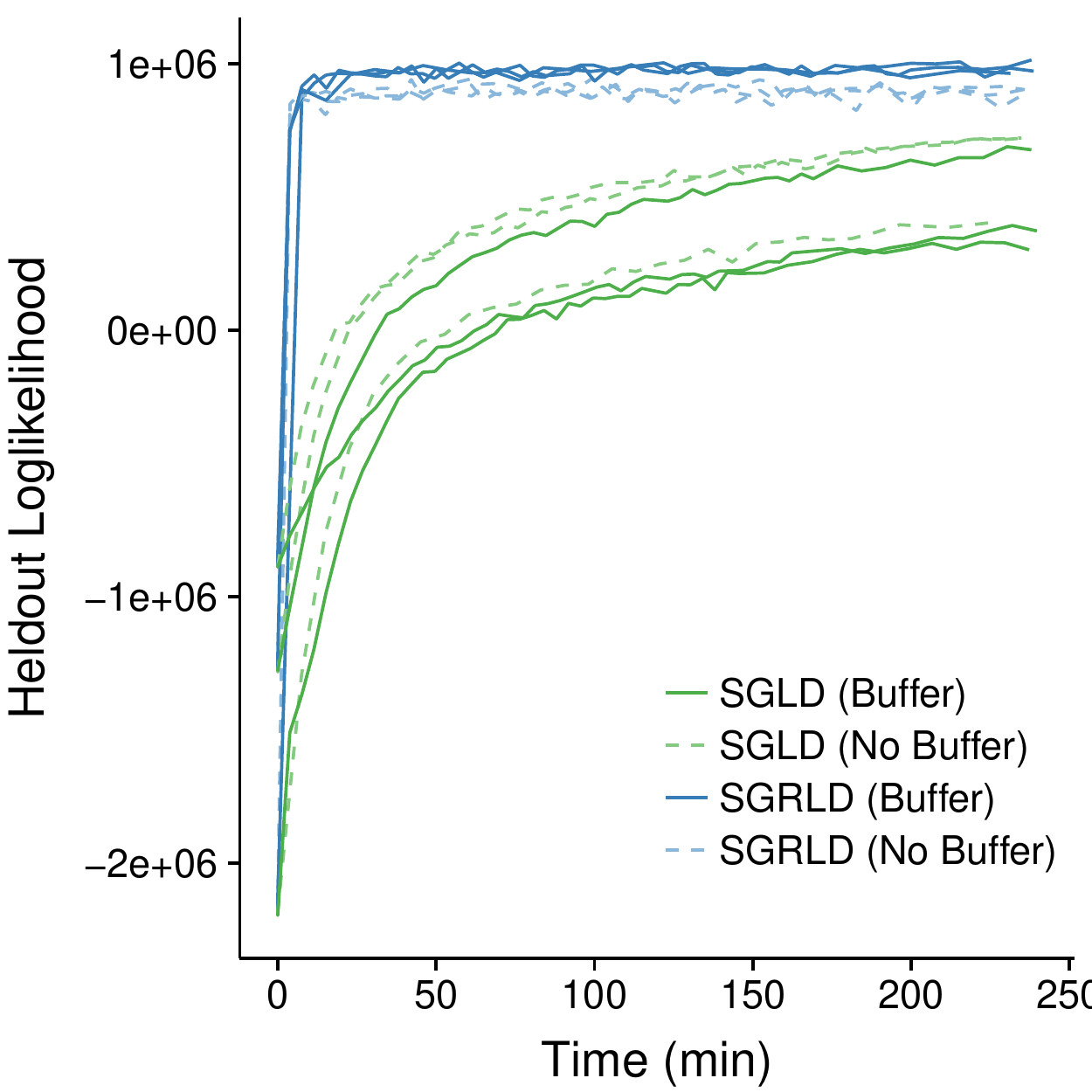}
    \end{minipage}
    \begin{minipage}[c]{.33\textwidth}
    \centering
        \includegraphics[width=\textwidth]{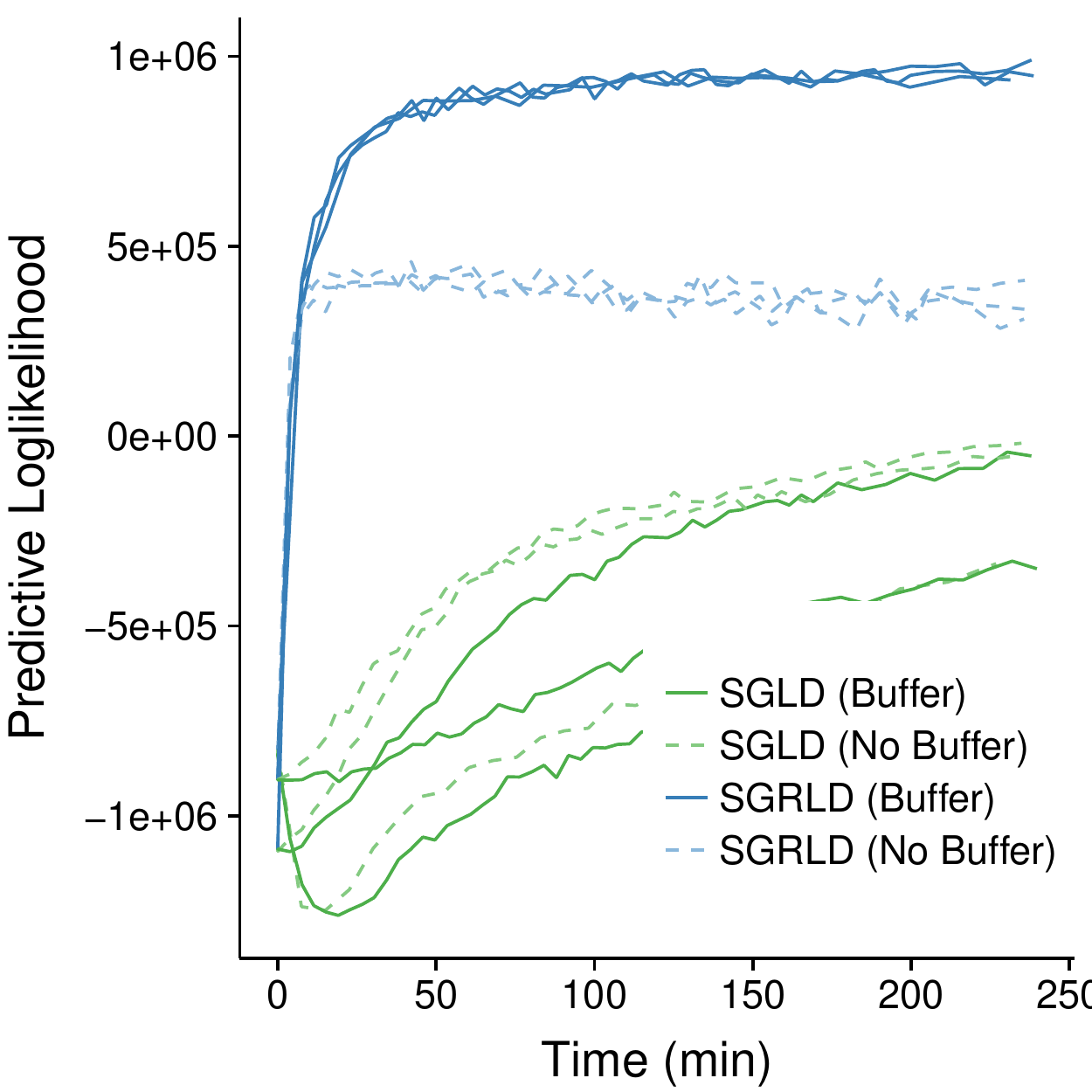}
    \end{minipage}
    \begin{minipage}[c]{.31\textwidth}
    \centering
    \includegraphics[width=\textwidth]{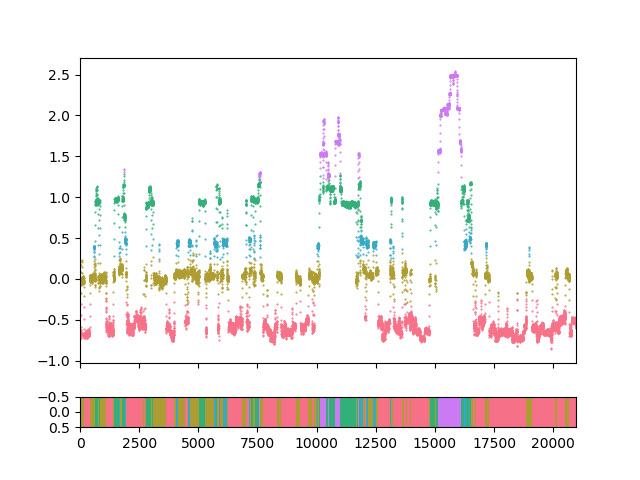}
    \end{minipage}
    \caption{Ion Channel Recordings: (Left) heldout loglikelihood vs runtime. (Center) 10-step predictive loglikelihood $\sum_{t} \log \Pr(y_{t+10} \, | \, \theta, y_{\leq t})$ vs runtime. (Right) segmentation by SGRLD (Buffer).
}
\label{fig:ion_channel}
\end{figure}

\subsubsection{Canine Seizure iEEG}
\label{sec:experiments-ARHMM-seizure}
\begin{figure}[!htb]
    \centering
    \begin{minipage}[t]{.29\textwidth}
    \centering
        \includegraphics[width=\textwidth]{./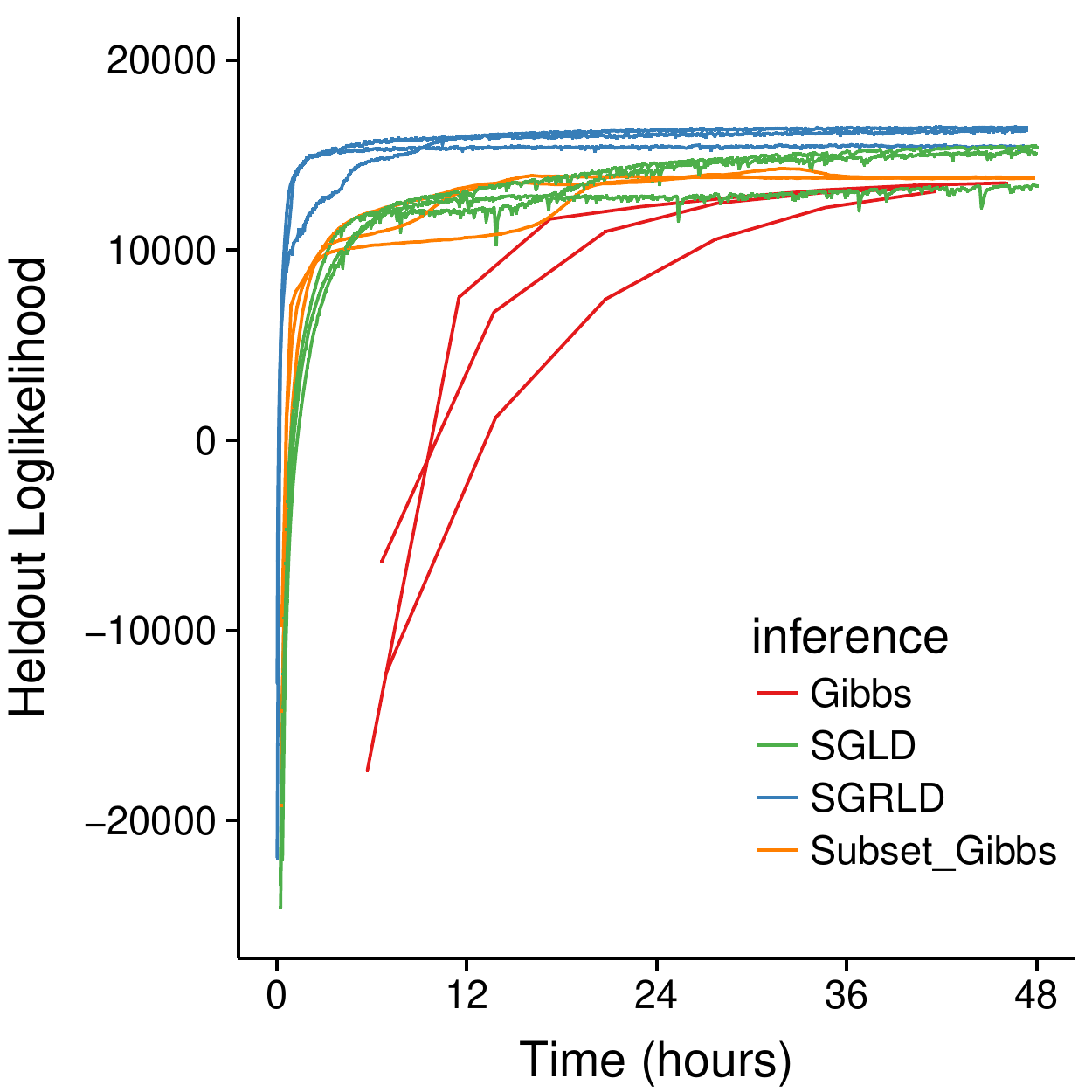}
    \end{minipage}
    \begin{minipage}[t]{.29\textwidth}
    \centering
        \includegraphics[width=\textwidth]{./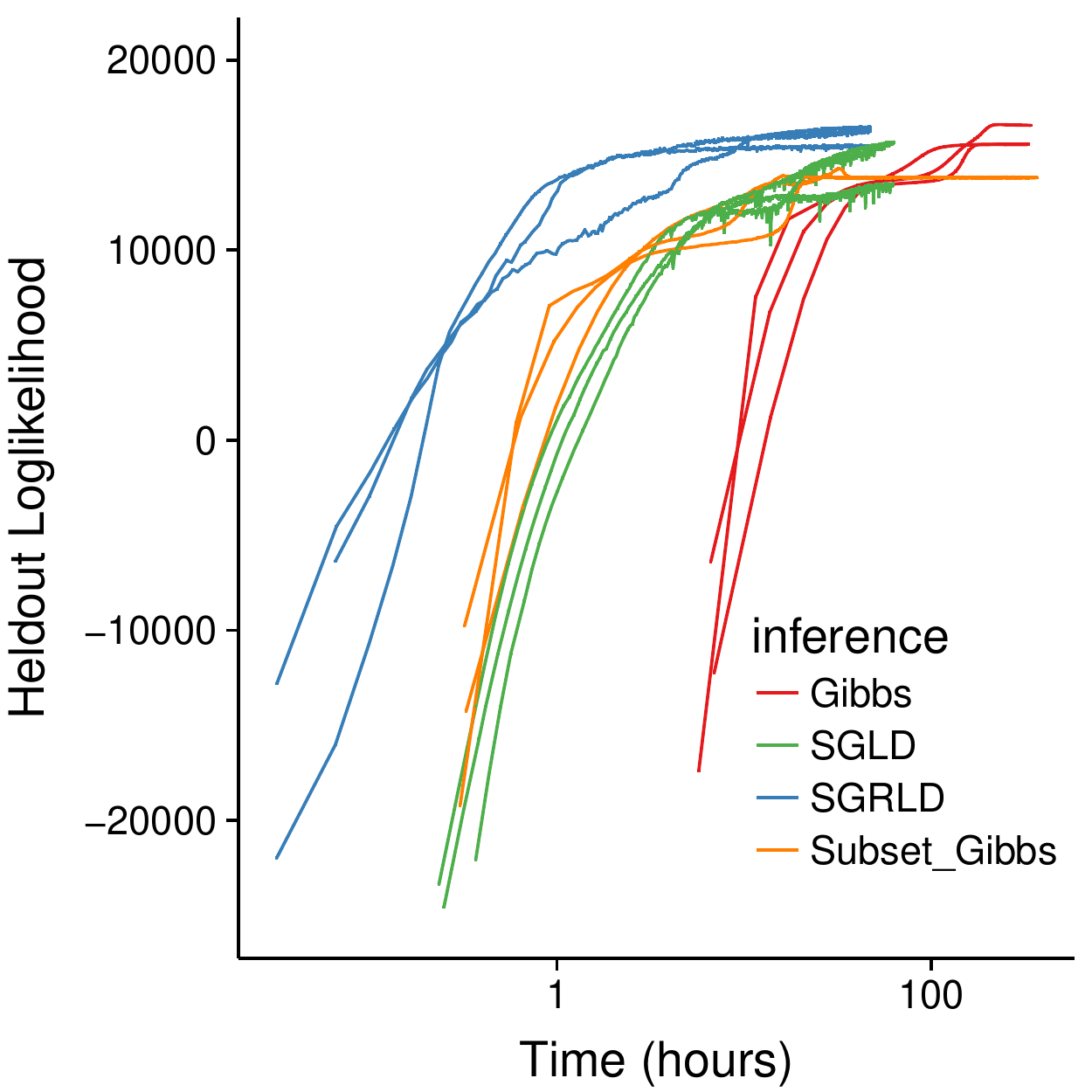}
    \end{minipage}
    \begin{minipage}[t]{.4\textwidth}
    \centering
    \includegraphics[width=\textwidth]{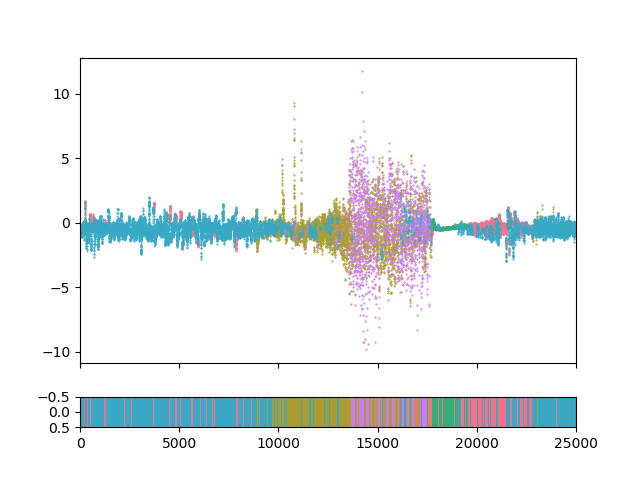}
    \end{minipage}
    \caption{ARHMM for Canine Seizure Data: (left) heldout loglikelihood vs time, (center) heldout loglikelihood vs time on log-scale (right) example segmentation of a test seizure channel by SLDS fit with SGRLD.
    The MCMC methods compared are \textcolor{BrickRed}{Gibbs}, \textcolor{YellowOrange}{Subset Gibbs}, \textcolor{ForestGreen}{SGLD}, and \textcolor{Blue}{SGLRD}.
    }
    \label{fig:canine}
\end{figure}

We now consider applying SGMCMC samplers to intracranial EEG (iEEG) data.
In particular, we consider data from a study on canines with epilepsy available at \texttt{ieeg.org} \cite{davis2016mining}.
We focus on one canine, which over the course of 45.1 days was continuously monitored at 200Hz over 16 channels and recorded 90 seizures.
This data was analyzed in prior work that compared a baseline ARHMM to nonparametric extensions using Gibbs sampling~\cite{wulsin2013bayesian}.
Following~\cite{wulsin2013bayesian}, we process the data into 4 minute windows around each seizure to focus on the seizure dynamics resulting in 90 time series of 48,000 points in $\R^{16}$.
We use an ARHMM with $K=5$ latent states and $p = 5$ lags treating each channel independently.
We perform an 80-20 train-test split over 90 seizures, running inference on the training set and evaluating log-likelihood on the heldout test set.
We compare SGLD and SGRLD samplers with $S = 100$ and $B=10$ with the baseline Gibbs sampler on the full data set.
Because of the large data size,
we also consider a \emph{subset} Gibbs sampler that only uses $10\%$ of the training set seizures.

In Figure~\ref{fig:canine}, we see that SGRLD converges much more rapidly than the other methods.
As each iteration of the Gibbs sampler takes ~6 hours,
it takes a couple weeks for the Gibbs sampler to converge to the solution SGRLD converges to in a few hours.
Although the subset Gibbs sampler is 10x faster than Gibbs,
it does not converge to the full data posterior and its generalization error to the heldout test set is poorer than the other methods.
From this experiment we see that SGMCMC methods provide order of magnitude improvements (compared to subsetting the data).

\subsection{LGSSM and SLDS}
\label{sec:experiments-SLDS}
We first validate the LGSSM (SLDS with $K=1$) on synthetic data.
We then consider the SLDS sampler on a synthetic dataset and two real datasets:
the seizure data of Section~\ref{sec:experiments-ARHMM-seizure} and a weather dataset.

\subsubsection{Synthetic LGSSM}
\label{sec:experiments-LGSSM-synth}
We consider synthetic data from a LGSSM with observations and latent state dimension $m = n = 2$.
In particular, we consider, a rotating state sequence with noisy observations.
The true model parameter $\theta^*$ are
\begin{equation*}
A = 0.7 \cdot \begin{bmatrix}
    \cos(\vartheta) & -\sin(\vartheta) \\
    \sin(\vartheta) & \cos(\vartheta)
    \end{bmatrix}
\enspace, \enspace
Q = 0.1 \cdot \begin{bmatrix}
    1 & 0 \\
    0 & 1
    \end{bmatrix}
\enspace, \enspace
C = \begin{bmatrix}
    1 & 0 \\
    0 & 1
    \end{bmatrix}
\enspace, \enspace
R =\begin{bmatrix}
    1 & 0 \\
    0 & 1
    \end{bmatrix}
\enspace,
\end{equation*}
where $\vartheta=\pi/4$.
Because the transition error $Q$ is smaller than the emission error $R$, 
inclusion of previous and future observations is necessary to accurately infer the continuous latent state $x_t$.

\begin{figure}[!htb]
\centering
    \begin{minipage}[t]{.45\textwidth}
    \centering
        \includegraphics[width=\textwidth]{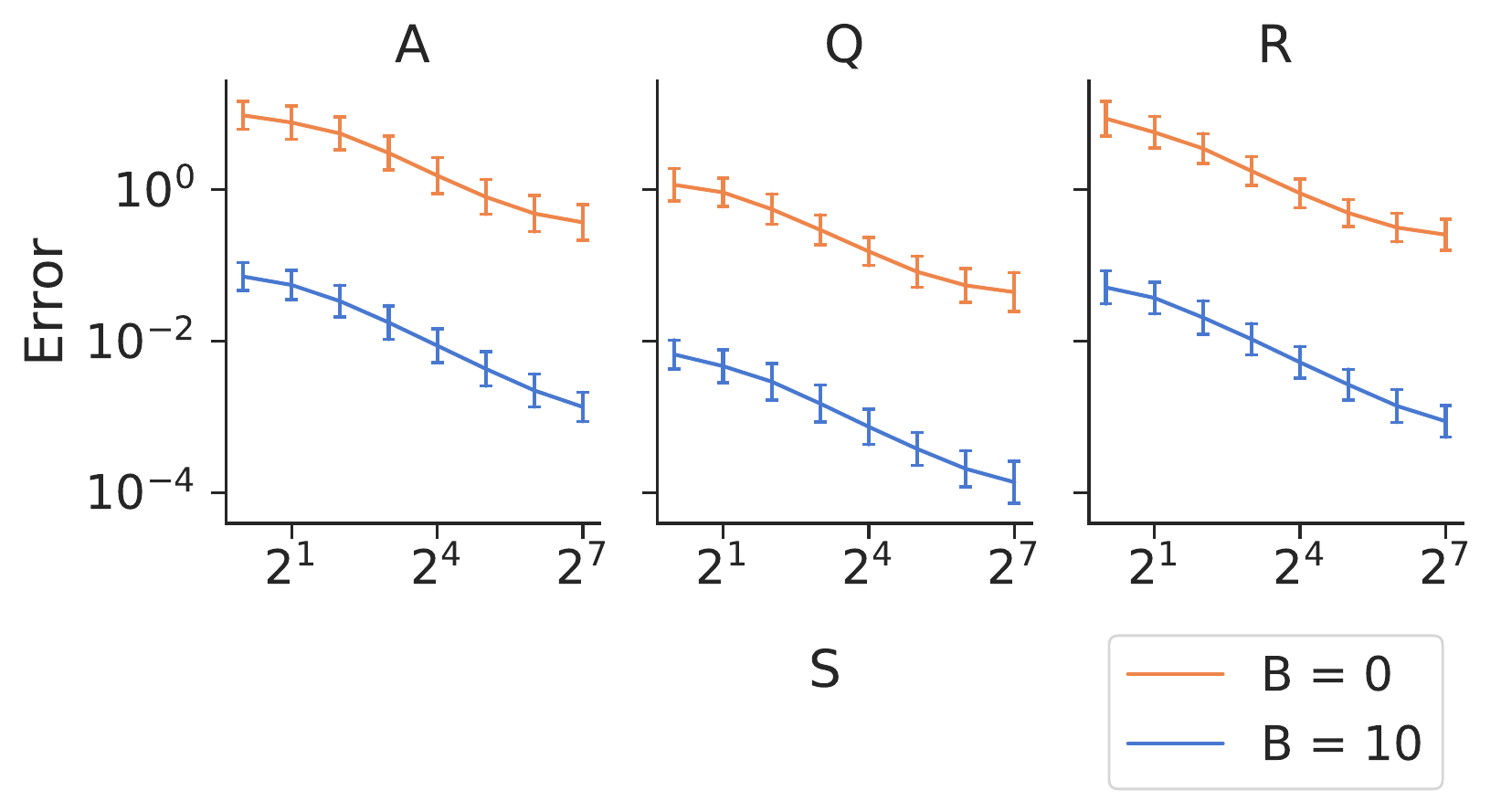}
    \end{minipage}
    \hspace{0.5cm}
    \begin{minipage}[t]{.45\textwidth}
    \centering
        \includegraphics[width=\textwidth]{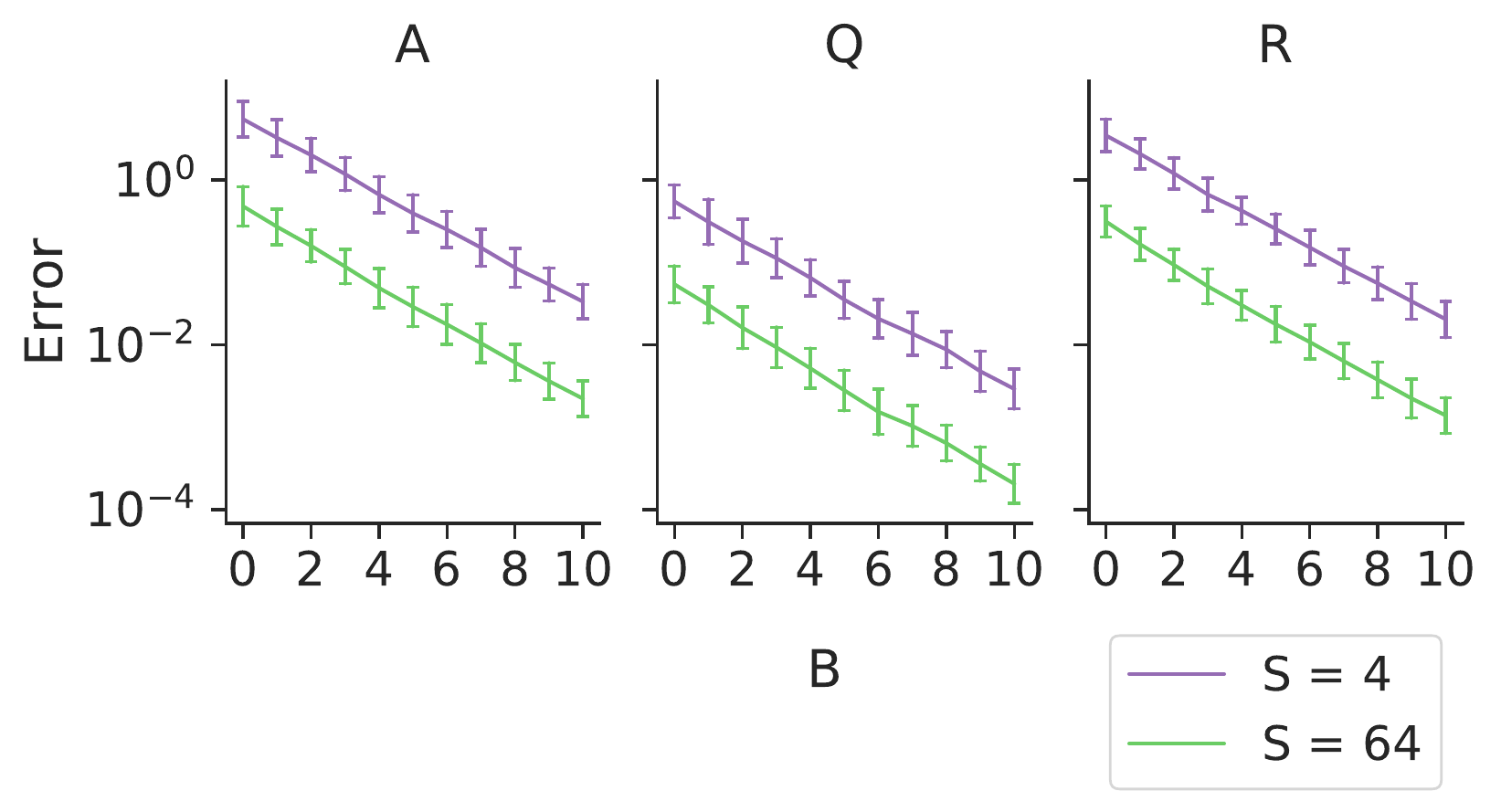}
    \end{minipage}
    \caption{Stochastic gradient error $\E_\SUBSEQ\|\bar{g}(\theta) - \tilde{g}(\theta)\|_2$. (Left) varying subsequence length $S$ for no-buffer $B = 0$ and buffer $B=10$. (Right) varying buffer size $B$ for $S = 4$ and $S=64$ subsequence lengths. Error bars are SD over $100$ datasets.}
\label{fig:lds_grad_error}
\end{figure}

Figure~\ref{fig:lds_grad_error} are plots of the stochastic gradient error $\E_\SUBSEQ \|\bar{g}(\theta) - \tilde{g}(\theta)\|_2$ between the unbiased and buffered estimates evaluated at the true model parameters $\theta=\theta^*$.
Similar to the ARPHMM, we see that the error decays $O(1/S)$ and that moderate buffering (e.g. $B=10$) deceases the error by orders of magnitude in Figure~\ref{fig:lds_grad_error} (left).
And we see that the error decays geometrically in buffer size $O(L^B)$ in Figure~\ref{fig:lds_grad_error} (right).

\begin{figure}[!htb]
\centering
    \begin{minipage}[c]{.1\textwidth}
    \centering
    \hbox{\rotatebox{90}{\hspace{1em} $T = 10^6$}}
    \end{minipage}
    \hspace{0.2em}
    \begin{minipage}[c]{.4\textwidth}
    \centering
        \includegraphics[width=\textwidth]{./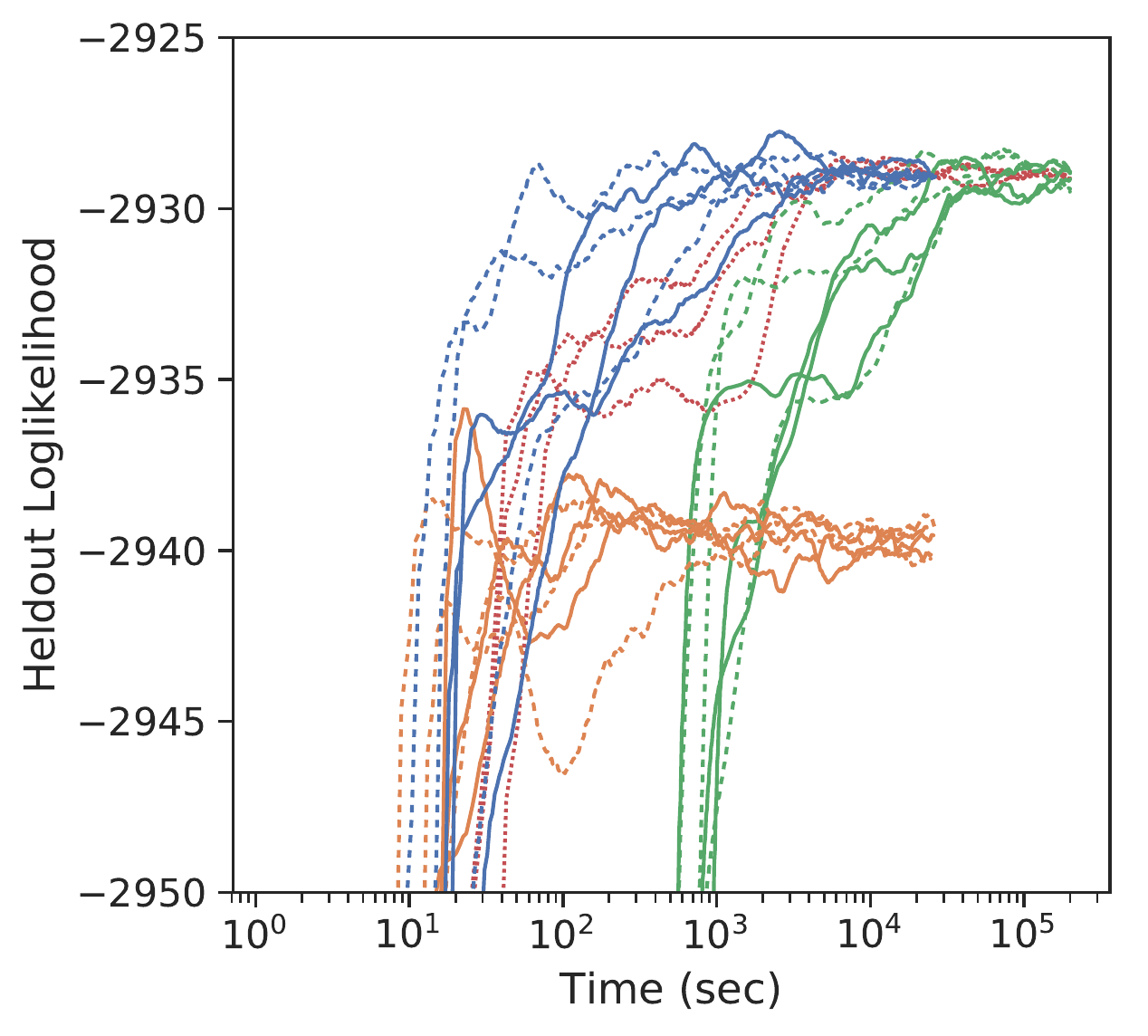}
    \end{minipage}
    \begin{minipage}[c]{.4\textwidth}
    \centering
        \includegraphics[width=\textwidth]{./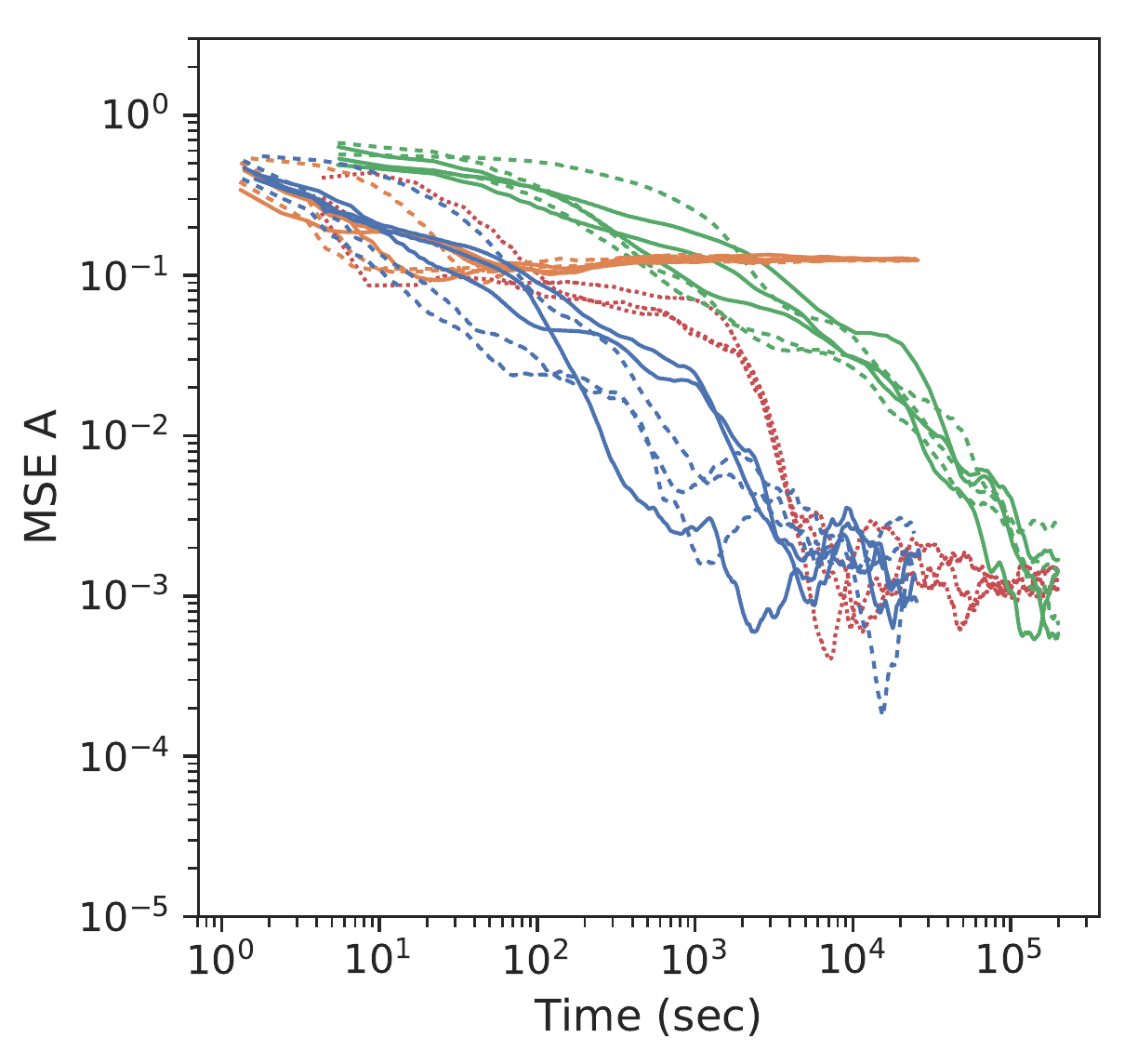}
    \end{minipage}

    \begin{minipage}[c]{.1\textwidth}
    \centering
    \hbox{\rotatebox{90}{\hspace{1em} $T = 10^6$}}
    \end{minipage}
    \hspace{0.2em}
    \begin{minipage}[c]{.4\textwidth}
    \centering
        \includegraphics[width=\textwidth]{./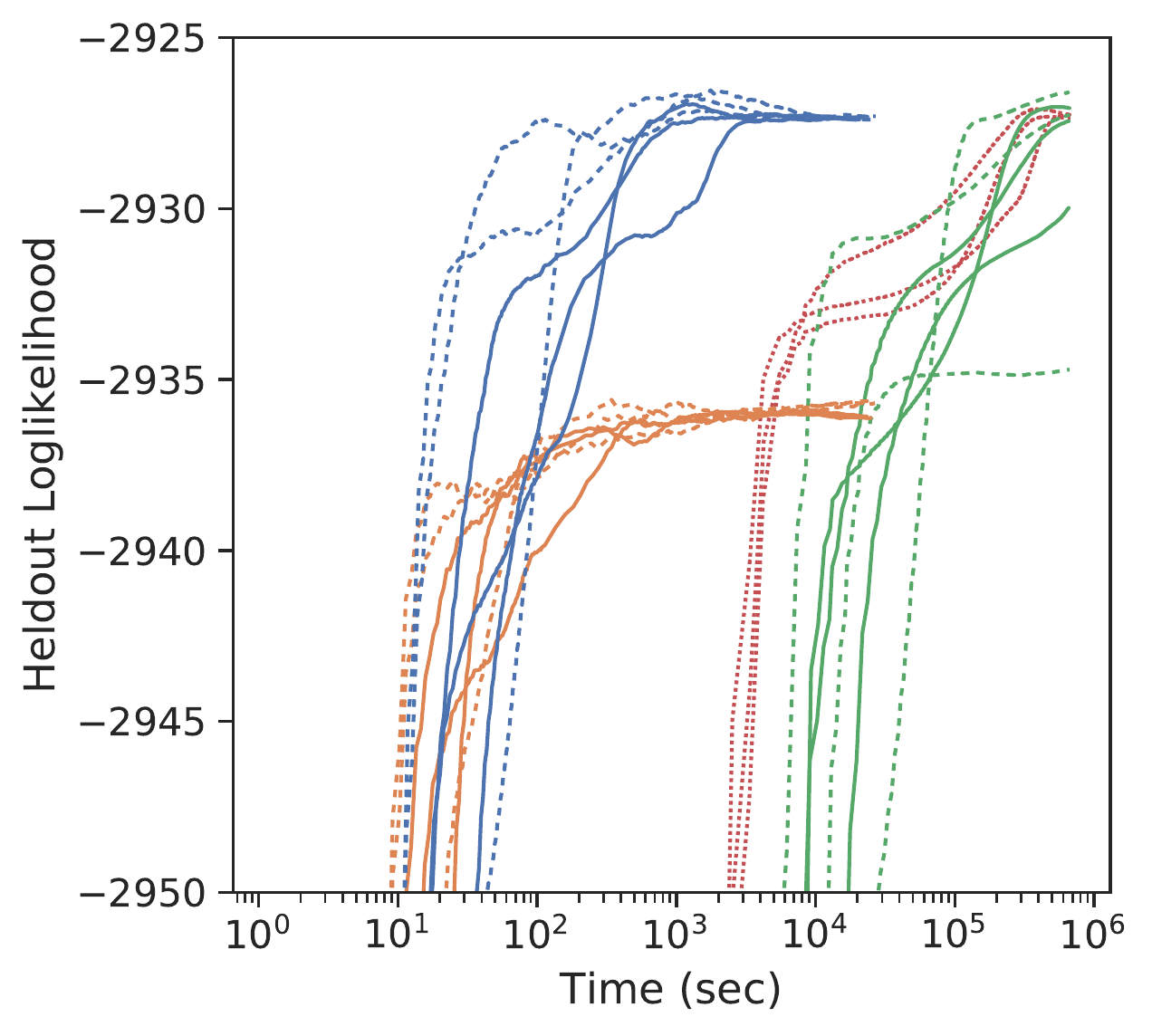}
    \end{minipage}
    \begin{minipage}[c]{.4\textwidth}
    \centering
        \includegraphics[width=\textwidth]{./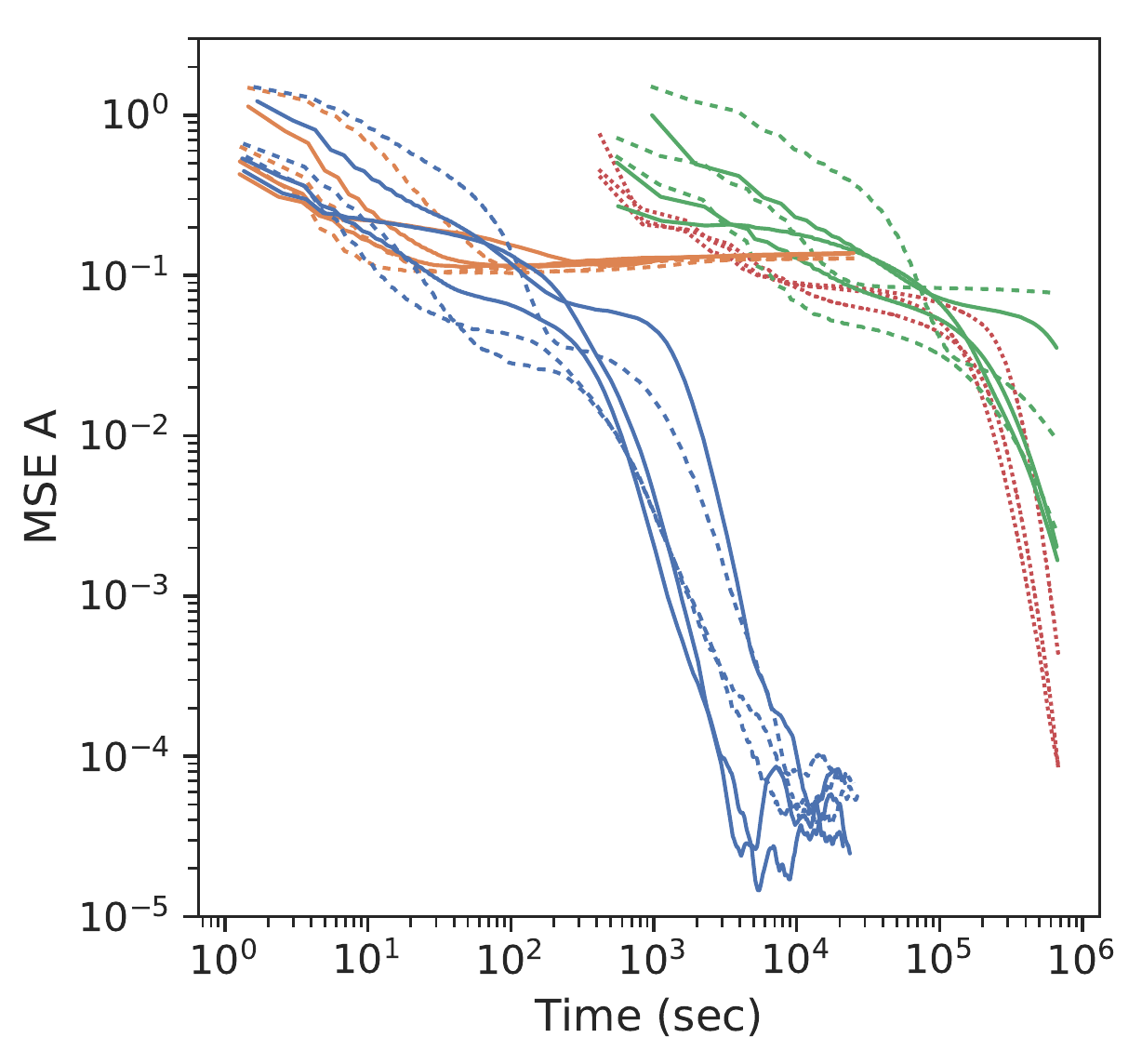}
    \end{minipage}
    \caption{Metrics vs Runtime on LGSSM with $T = 10^4$ (top), $T = 10^6$ (bottom) for different methods:
    \textcolor{BrickRed}{(Gibbs)}, \textcolor{ForestGreen}{(Full)},  \textcolor{Orange}{(No Buffer)} and \textcolor{Blue}{(Buffer)} SGMCMC.
    For SGMCMC methods, solid (\fullline) and dashed (\dashedline) lines indicate SGRLD and SGLD respectively.
    The different metrics are: (left) heldout loglikelihoood and (right) transition matrix estimation error $MSE(\hat{A}^{(s)}, A^*)$.
}
\label{fig:lds_synth}
\end{figure}

\begin{figure}[!htb]
    \centering
    \begin{minipage}[c]{.9\textwidth}
    \centering
        \includegraphics[width=\textwidth]{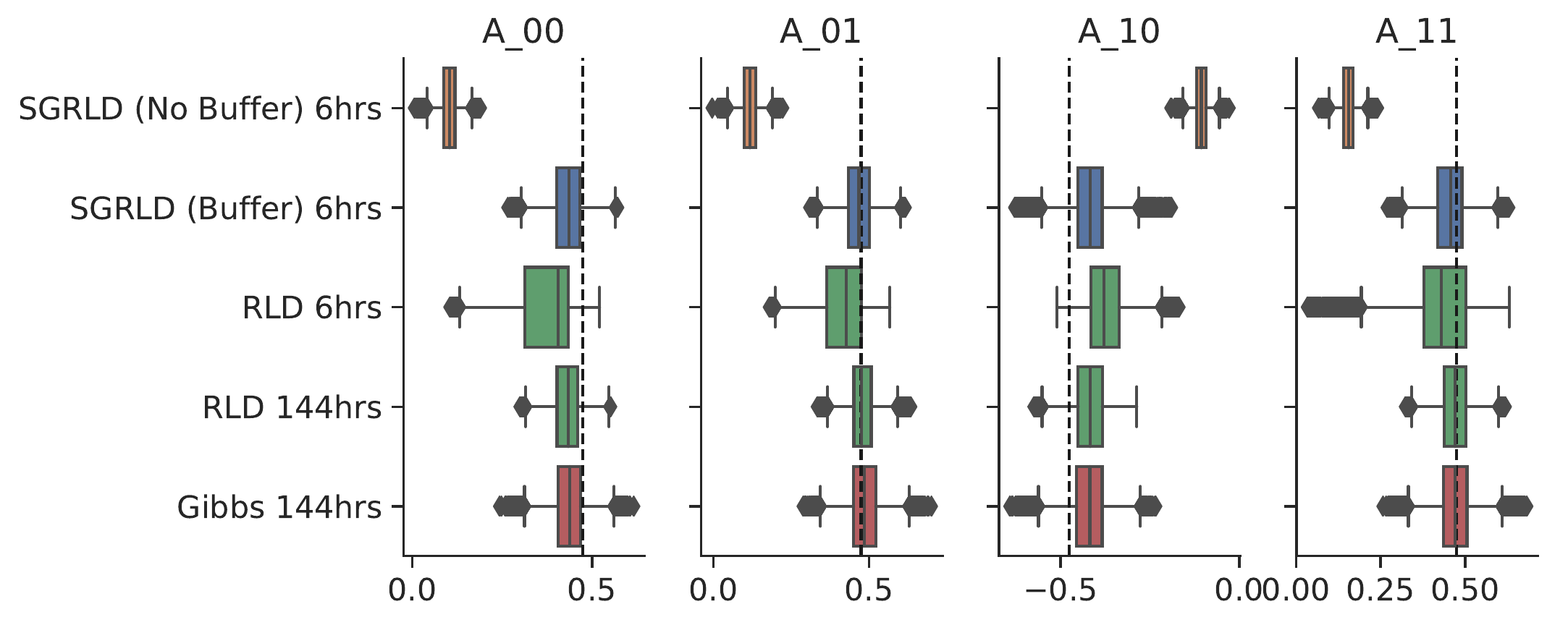}
    \end{minipage}

    \caption{Boxplot of MCMC samples of transition matrix $A$ for LGSSM data $T = 10^4$. SGRLD with buffering in 6 hours is comparable to RLD or Gibbs in 144 hours. SGRLD without buffering is biased and RLD in 6 hours has not fully mixed.}
\label{fig:lds_boxplot}
\end{figure}

In Figures~\ref{fig:lds_synth} and \ref{fig:lds_boxplot},
we compare SGLD (no-buffer and buffer), SGRLD (no-buffer and buffer), LD, RLD,
and a blocked Gibb sampler.
We fit our samplers on one training sequence and evaluate performance on one test sequence.
We consider two training sequences of lengths $T = 10^4$ and $T=10^6$ and evaluate on the same test sequence of length $T = 10^4$.
For the SGMCMC methods, we use a subsequence size of $S = 20$ with $B=0$ (no buffer) and $B=10$ (buffer).
We see that even with a large subsequence size, buffering is crucial for accurate inference as
SGMCMC methods without buffering converge to a different stationary distribution than the posterior.

In Table~\ref{tab:lds_ksd}, we evaluate the KSD of the different MCMC methods. We see that SGMCMC with buffering slightly outperforms the full sequence methods for $T = 10^4$ and significantly outperforms the full sequence methods for $T = 10^6$, while SGMCMC without buffering performs poorly due to bias.

\begin{table}[ht]
\caption{$\log_{10}$(KSD) by variable of LGSSM samplers at 6 hrs. Mean and (SD) over runs in Figure~\ref{fig:lds_synth}.}
\label{tab:lds_ksd}
\centering
\begin{tabular}{rrlll}
  \hline
\abovestrut{1em}
 & Sampler & $A$ & $Q$ & $R$ \\
  \hline
 \multirow{7}{*}{\rotatebox[origin=c]{90}{$T = 10^4$}}
\abovestrut{1em}
& SGLD (No Buffer)  &  2.39 (0.01) &  1.73 (0.03) &  1.48 (0.03) \\
& SGLD (Buffer)     &  0.88 (0.11) &  0.41 (0.11) &  0.86 (0.08) \\
& LD                &  0.99 (0.13) &  1.12 (0.19) &  1.10 (0.17) \\
\cline{2-5}
\abovestrut{1em}
& SGRLD (No Buffer) &  2.38 (0.01) &  1.70 (0.02) &  1.43 (0.02) \\
& SGRLD (Buffer)    &  0.85 (0.08) &  0.18 (0.12) &  0.77 (0.14) \\
& RLD               &  0.99 (0.12) &  0.90 (0.19) &  1.10 (0.17) \\
\cline{2-5}
\abovestrut{1em}
& Gibbs             &  0.74 (0.20) &  0.33 (0.18) &  1.06 (0.27) \\
 \hline
 \hline
 \multirow{7}{*}{\rotatebox[origin=c]{90}{$T = 10^6$}}
\abovestrut{1em}
& SGLD (No Buffer)  &  4.32 (0.01) &  3.79 (0.02) &  3.50 (0.02) \\
& SGLD (Buffer)     &  2.30 (0.19) &  1.61 (0.18) &  2.84 (0.03) \\
& LD                &  4.26 (0.35) &  4.00 (0.39) &  4.14 (0.19) \\
\cline{2-5}
\abovestrut{1em}
& SGRLD (No Buffer) &  4.27 (0.01) &  3.77 (0.02) &  3.23 (0.03) \\
& SGRLD (Buffer)    &  2.17 (0.33) &  1.64 (0.21) &  3.03 (0.12) \\
& RLD               &  4.34 (0.23) &  3.76 (0.25) &  4.03 (0.23) \\
\cline{2-5}
\abovestrut{1em}
& Gibbs             &  3.46 (0.28) &  3.52 (0.14) &  3.50 (0.28) \\
 \hline
\end{tabular}
\end{table}

\subsubsection{Synthetic SLDS}
We now consider synthetic data from a model we can view 
as switching extension of the LGSSM in Section~\ref{sec:experiments-LGSSM-synth}
or as a noisy  version of the ARHMM in the Supplement.
The true model parameters $\theta^*$ are
\begin{equation*}
\Pi = \begin{bmatrix}
    0.9 & 0.1 \\
    0.1 & 0.9
    \end{bmatrix}
\enspace, \enspace
Q_1 = Q_2 = 0.1 \cdot \begin{bmatrix}
    1 & 0 \\
    0 & 1
    \end{bmatrix}
\enspace, \enspace
C = \begin{bmatrix}
    1 & 0 \\
    0 & 1
    \end{bmatrix}
\enspace, \enspace
R = 0.1 \cdot \begin{bmatrix}
    1 & 0 \\
    0 & 1
    \end{bmatrix}
\enspace,
\end{equation*}
\begin{equation*}
A_1 = 0.9 \cdot \begin{bmatrix}
    \cos(-\vartheta) & -\sin(-\vartheta) \\
    \sin(-\vartheta) & \cos(-\vartheta)
    \end{bmatrix}
\enspace, \enspace
A_2 = 0.9 \cdot \begin{bmatrix}
    \cos(\vartheta) & -\sin(\vartheta) \\
    \sin(\vartheta) & \cos(\vartheta)
    \end{bmatrix}\enspace,
\end{equation*}
where again $\vartheta=\pi/4$.
We generate sequences of length $T = 10^4$ and $10^6$.

\begin{figure}[!htb]
    \centering
    \begin{minipage}[t]{.48\textwidth}
    \centering
        \includegraphics[width=\textwidth]{./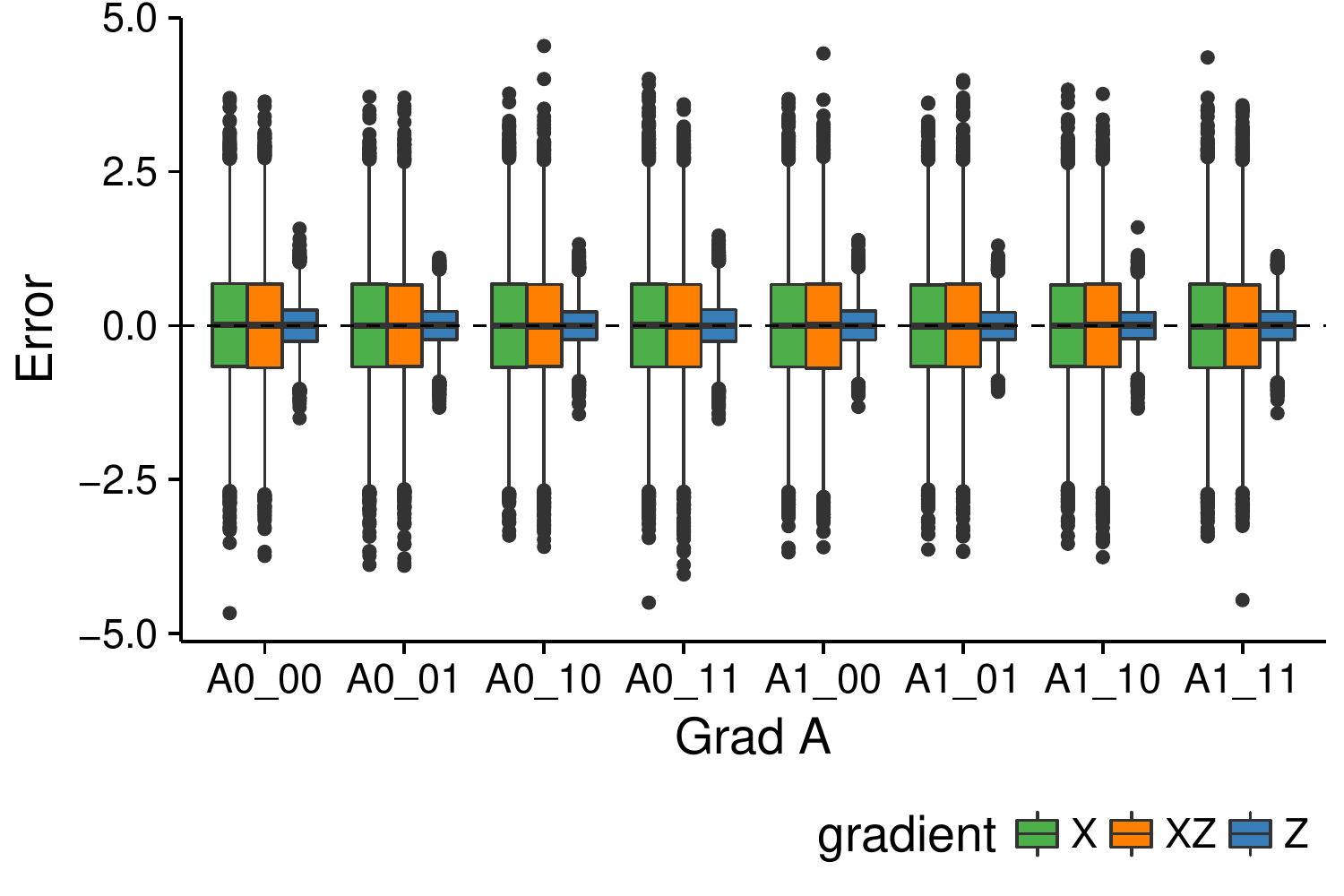}
    \end{minipage}
    \hspace{0.1cm}
    \begin{minipage}[t]{.48\textwidth}
    \centering
        \includegraphics[width=\textwidth]{./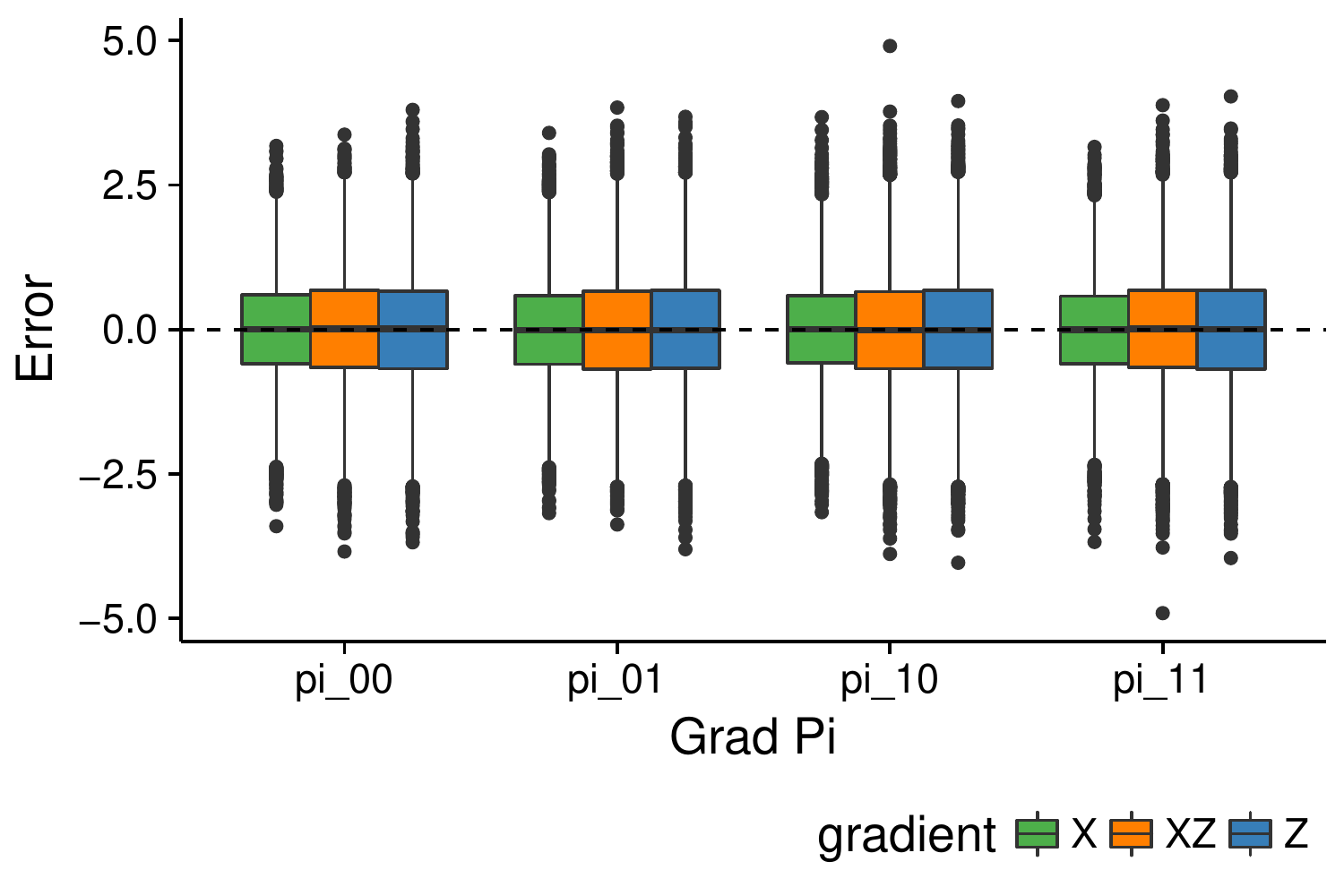}
    \end{minipage}

    \caption{SLDS gradient error for the different estimators Eqs.~\eqref{eq:slds_naive}-\eqref{eq:slds_x_marginal}. (Left) Boxplots of $\tilde{g}(\theta)_A - g(\theta)_A$. (Right) Boxplots of $\tilde{g}(\theta)_\Pi - g(\theta)_\Pi$. 
}
    \label{fig:slds_gradients}
\end{figure}

\begin{figure}[!htb]
    \centering
    \begin{minipage}[t]{.48\textwidth}
    \centering
        \includegraphics[width=\textwidth]{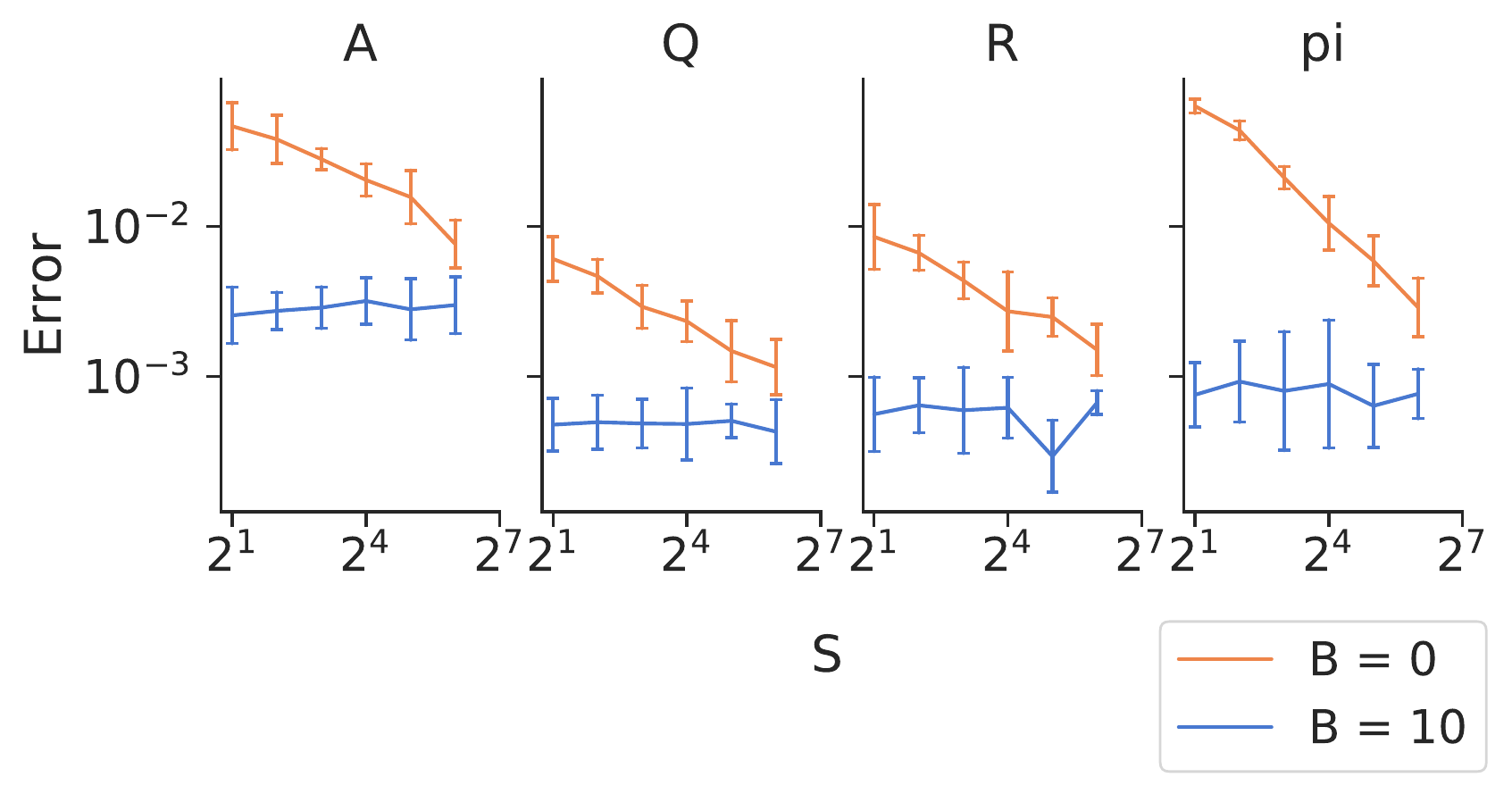}
    \end{minipage}
    \hspace{0.1cm}
    \begin{minipage}[t]{.48\textwidth}
    \centering
        \includegraphics[width=\textwidth]{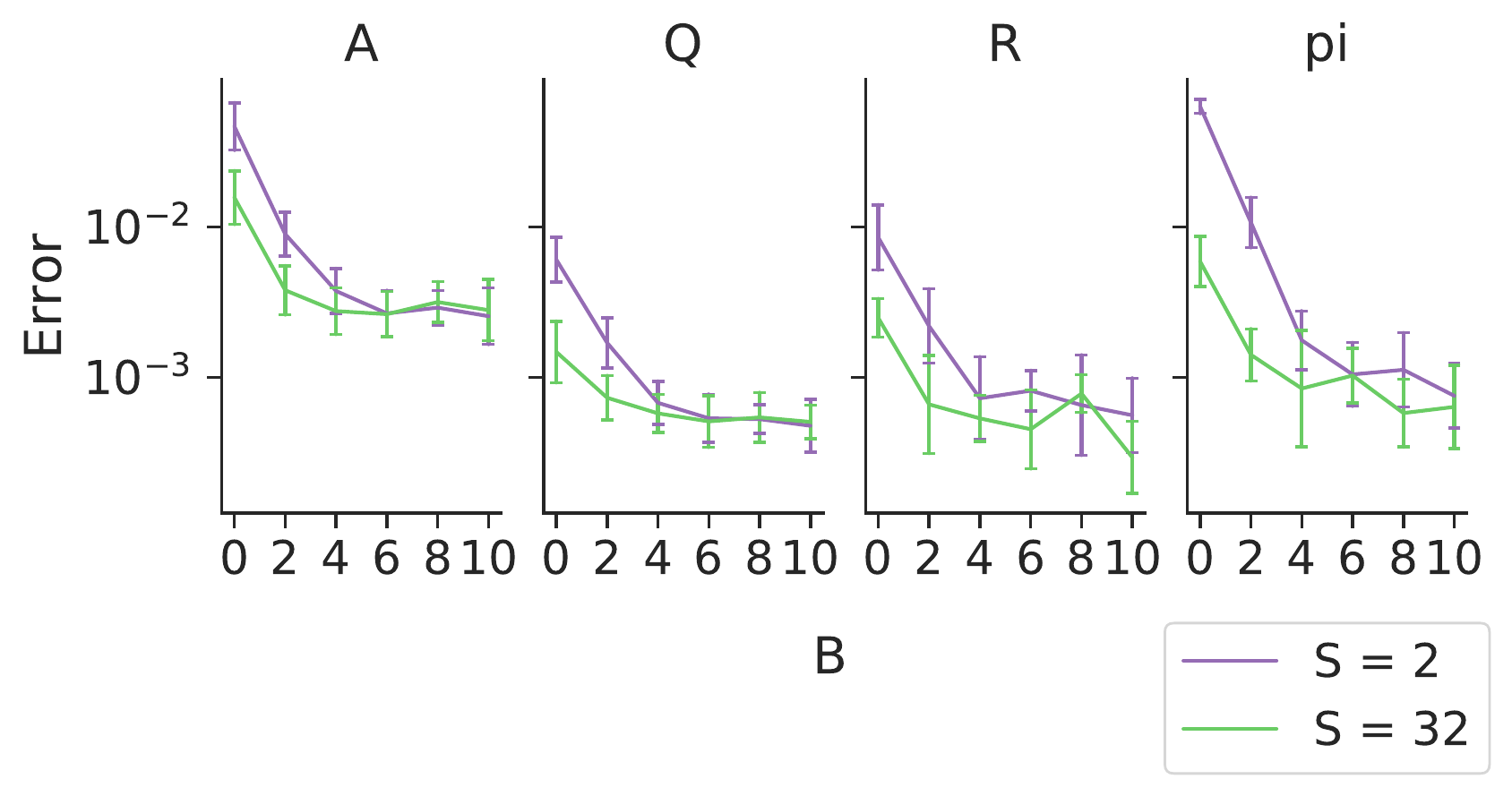}
    \end{minipage}

    \caption{Stochastic gradient error $\E_\SUBSEQ\|\bar{g}(\theta) - \tilde{g}(\theta)\|_2$ for \texttt{z Gradient}. (Left) error varying subsequence length $S$ for no-buffer $B=0$ and buffer $B=4$. (Right) error varying buffer size $B$ for small $S = 2$ and long $S=32$ subsequences. Error bars are SD over $100$ datasets. 
}
    \label{fig:slds_stochastic_gradients}
\end{figure}

\begin{figure}[!htb]
    \centering

    \begin{minipage}[c]{.05\textwidth}
    \centering
    \hbox{\rotatebox{90}{\hspace{1em} $T = 10^4$}}
    \end{minipage}
    \begin{minipage}[c]{.3\textwidth}
    \centering
        \includegraphics[width=\textwidth]{./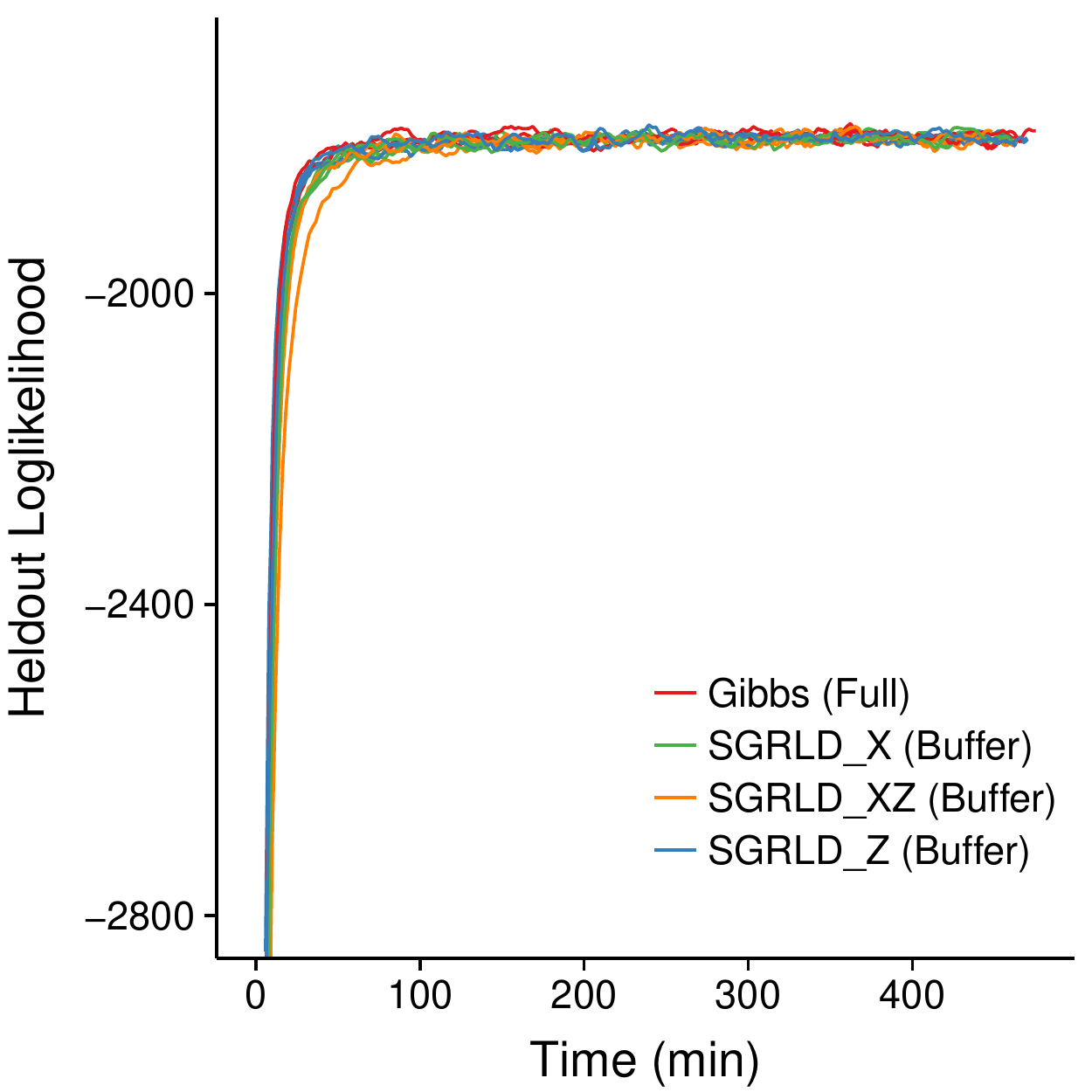}
    \end{minipage}
    \begin{minipage}[c]{.3\textwidth}
    \centering
        \includegraphics[width=\textwidth]{./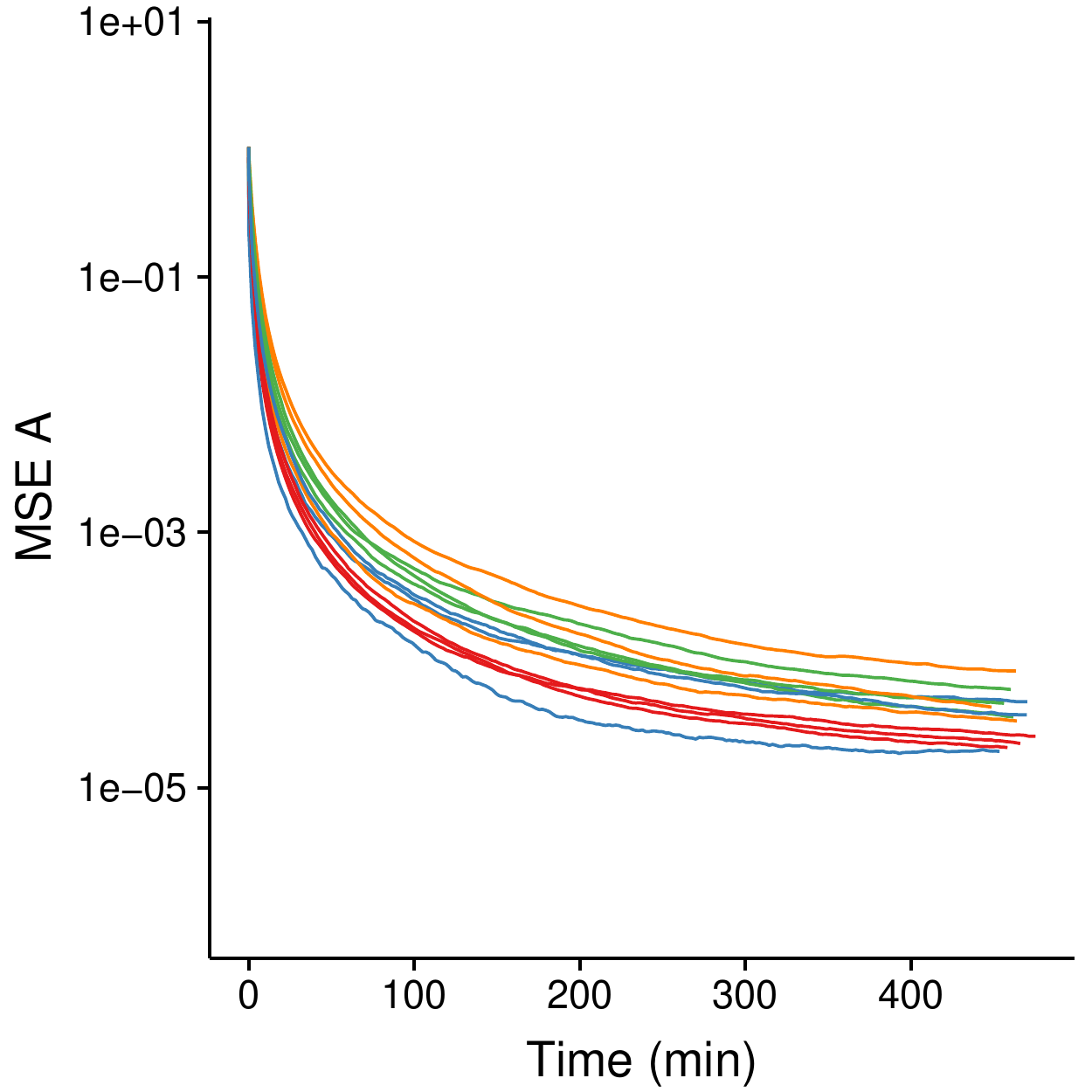}
    \end{minipage}
    \begin{minipage}[c]{.3\textwidth}
    \centering
        \includegraphics[width=\textwidth]{./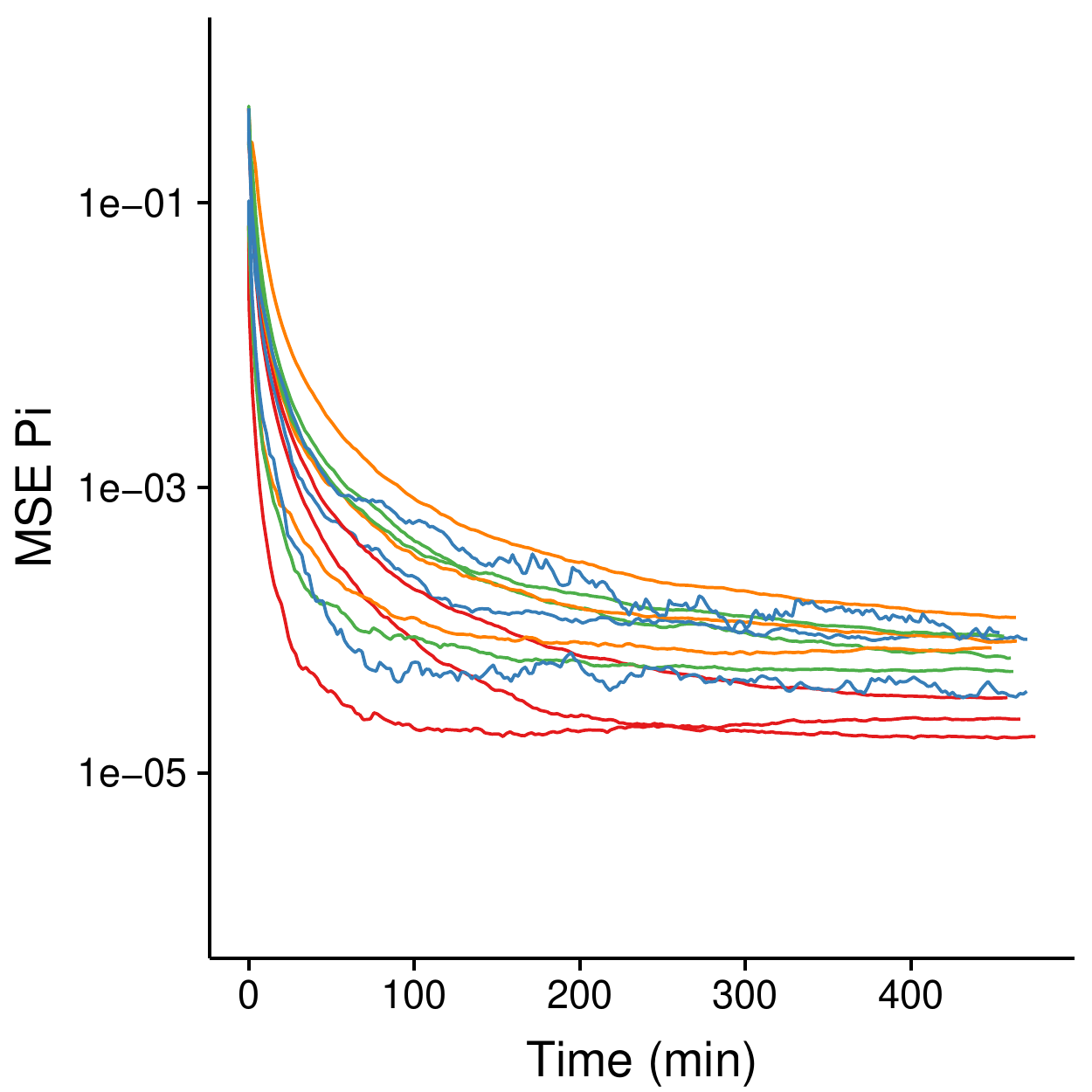}
    \end{minipage}

    \begin{minipage}[c]{.05\textwidth}
    \centering
    \hbox{\rotatebox{90}{\hspace{1em} $T = 10^6$}}
    \end{minipage}
    \begin{minipage}[c]{.3\textwidth}
    \centering
        \includegraphics[width=\textwidth]{./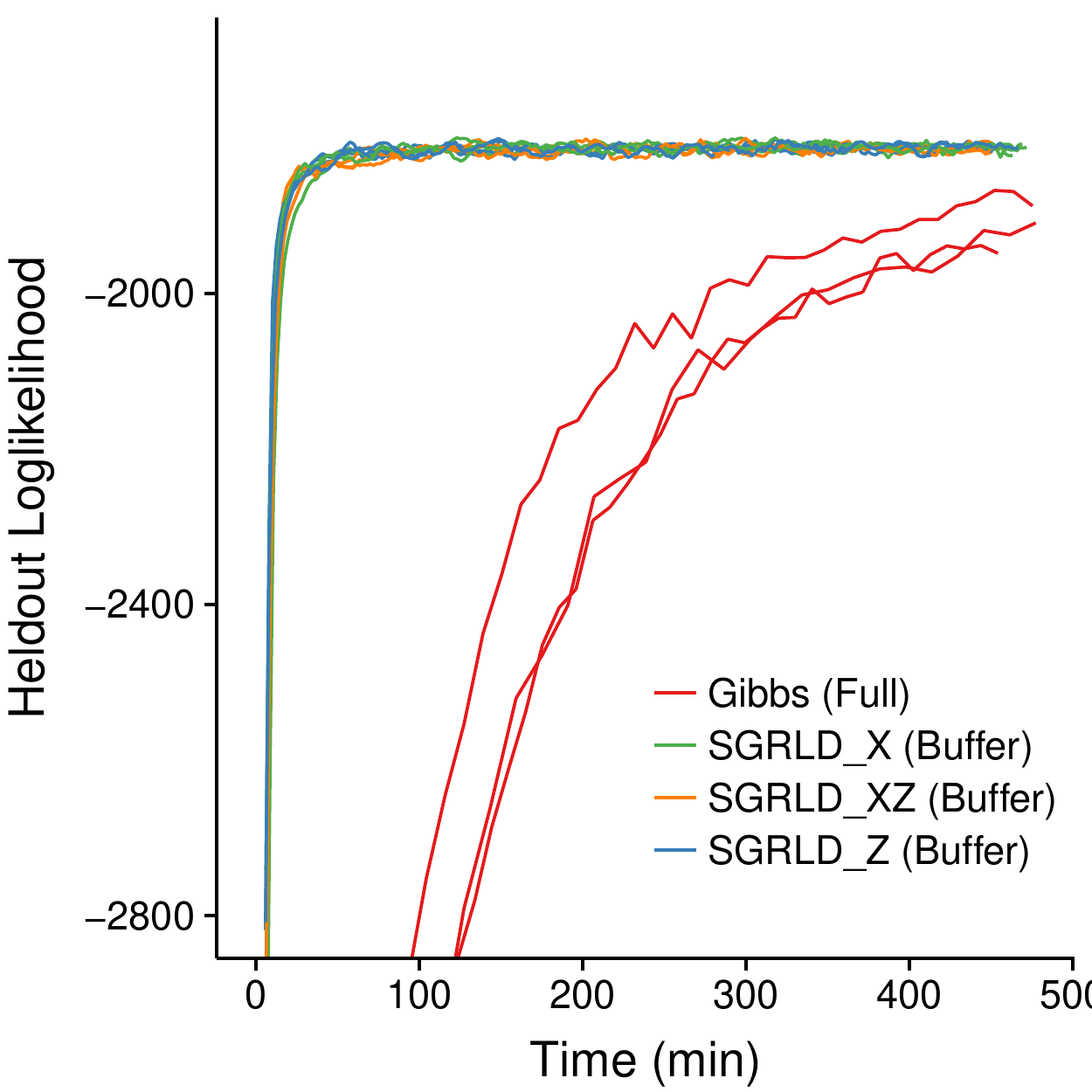}
    \end{minipage}
    \begin{minipage}[c]{.3\textwidth}
    \centering
        \includegraphics[width=\textwidth]{./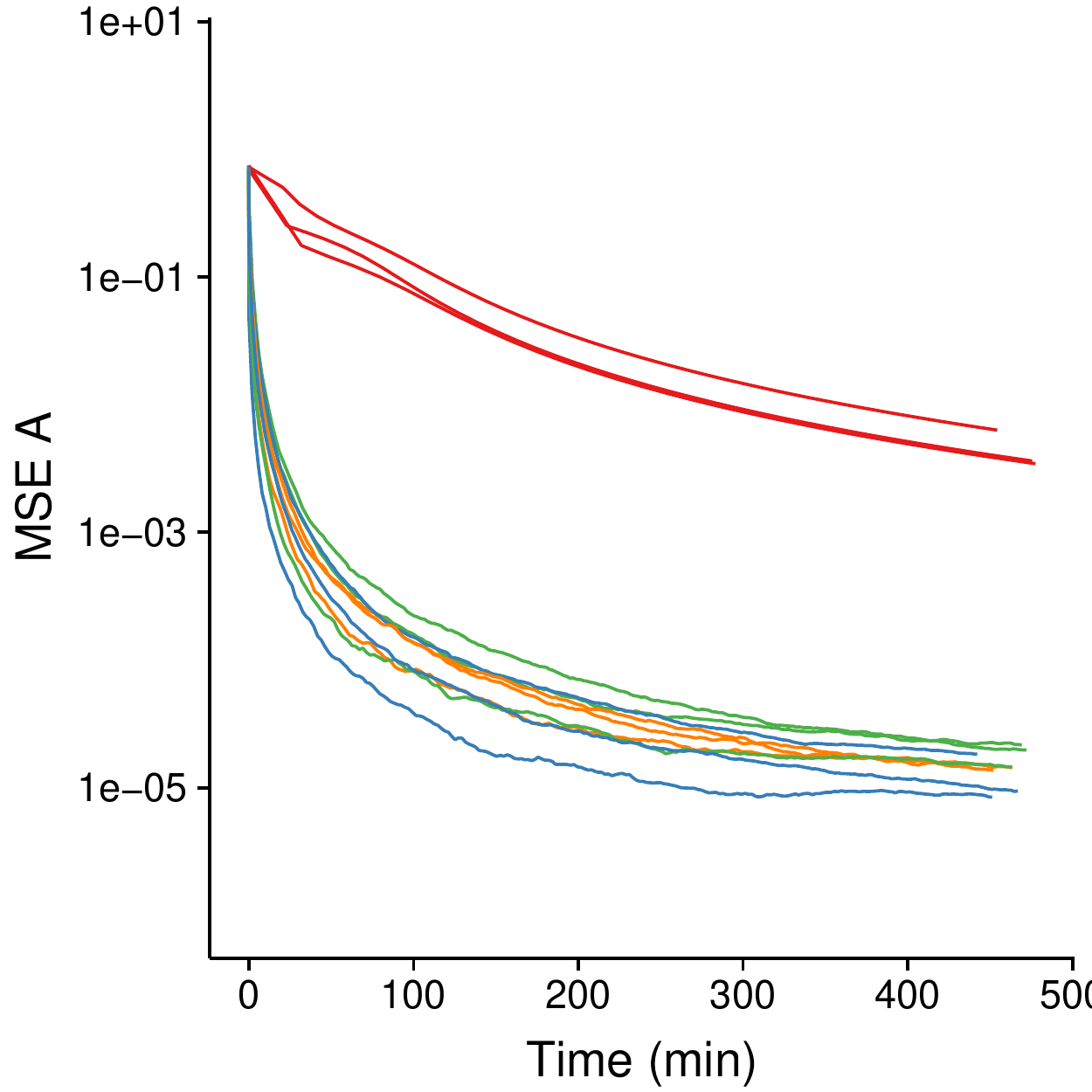}
    \end{minipage}
    \begin{minipage}[c]{.3\textwidth}
    \centering
        \includegraphics[width=\textwidth]{./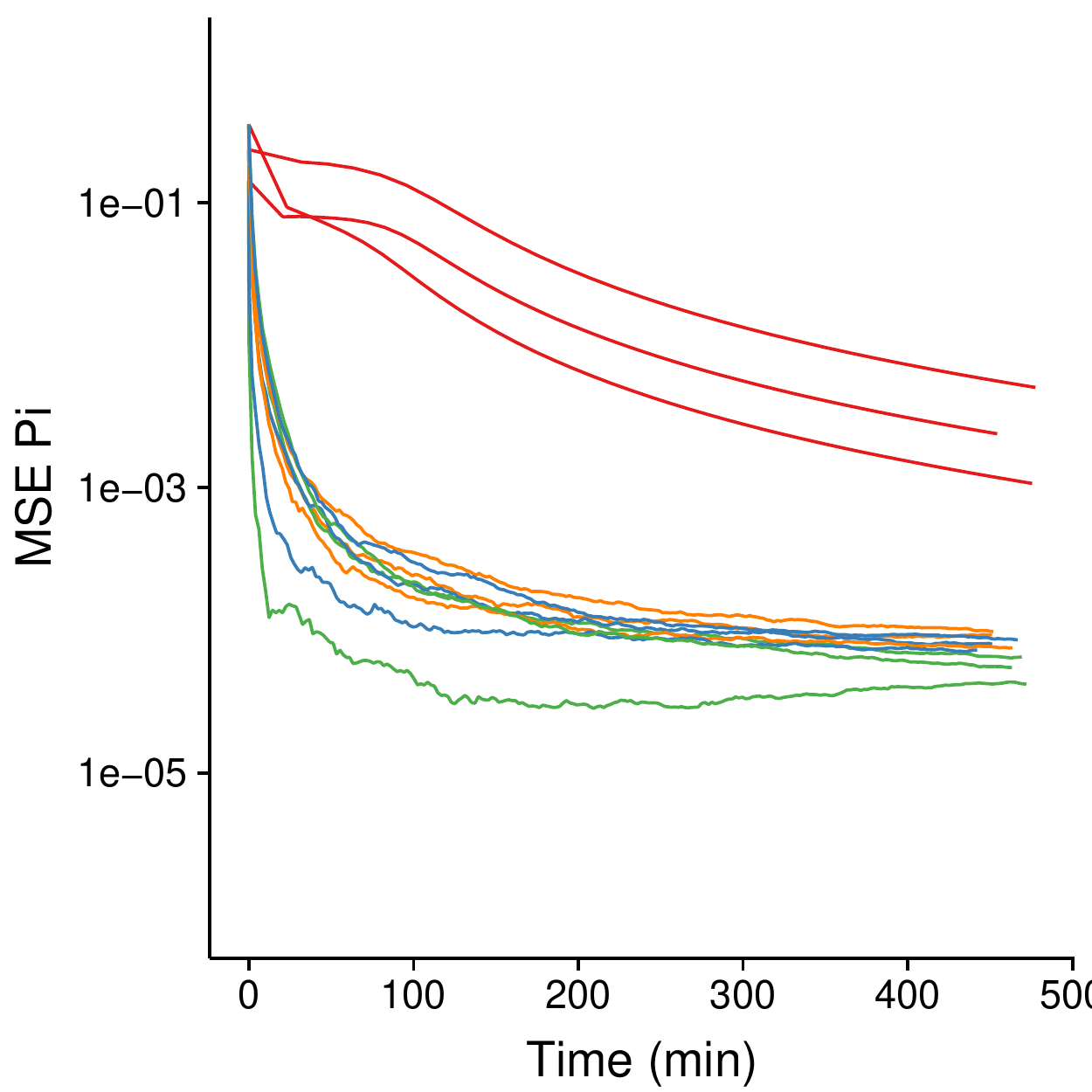}
    \end{minipage}
    \caption{Metrics vs Runtime on SLDS data for different inference methods:
    \textcolor{BrickRed}{Gibbs}, \textcolor{ForestGreen}{SGRLD X}, \textcolor{YellowOrange}{SGRLD XZ}, and \textcolor{Blue}{SGLRD Z}.
    (Top) $T = 10^4$ (Bottom) $T = 10^6$.
    The metrics are: (left) heldout loglikelihood, (center) estimation error $MSE(\hat{A}^{(s)}, A^*)$, (right) estimation error $MSE(\hat\Pi^{(s)}, \Pi^*)$. 
    }
    \label{fig:slds_lowtrans_metrics}
\end{figure}

We first compare the variance of the three difference Monte-Carlo gradient estimators for SLDS: using $(x,z)$ samples (\texttt{xz Gradient}) as in Eq.~\eqref{eq:slds_naive}, only using $z$ samples (\texttt{z Gradient}) as in Eq.~\eqref{eq:slds_z_marginal}, and only using $x$ samples (\texttt{x Gradient}) as in Eq.~\eqref{eq:slds_x_marginal}.
Figure~\ref{fig:slds_gradients} presents boxplots of $\tilde{g}(\theta) - g(\theta)$ for the three different estimators at $\theta = \theta^*$. 
From Figure~\ref{fig:slds_gradients} (left), we see that \texttt{z Gradient} (blue) has much lower variance than the other two estimators for the gradient of $A$.
This also holds for the gradients of $Q$ and $R$ (see Supplement).
From Figure~\ref{fig:slds_gradients} (right), we see that all three estimators have similar variance for the gradient of $\Pi$ (with \texttt{x Gradient} (green) slightly better than the other two).
This agrees with intuition described in Section~\ref{sec:models:SLDS:gradients}.
Because \texttt{z Gradient} has lower variance than the other two estimators, we can use larger step-sizes, leading to faster convergence and mixing.

Figure~\ref{fig:slds_stochastic_gradients} are plots of the stochastic gradient error $\E_\SUBSEQ \|\bar{g}(\theta) - \tilde{g}(\theta)\|_2$ between the unbiased and buffered estimates (for \texttt{z Gradient}) evaluated at the true model parameters $\theta=\theta^*$.
For short buffered subsequences (e.g. small $S$ and $B$), the error decays as expected $O(L^B/S)$; however, for longer buffered subsequences the error is dominated by the Monte Carlo error in the number of Gibbs steps used in sampling $z$ for calculating $\tilde{g}$ in Eq.~\ref{eq:slds_z_marginal} .

In Figure~\ref{fig:slds_lowtrans_metrics},
we compare SGRLD (with buffer) using each of the gradient estimators Eqs.~\eqref{eq:slds_naive}-\eqref{eq:slds_x_marginal}, and a blocked Gibb sampler.
We run our samplers on one training sequence and evaluate performance on another test sequence.
For all SGRLD samplers, we used subsequence size of $S = 10$ and $B=10$.
As the marginal loglikelihood is not available in closed form for SLDSs,
we instead use a Monte Carlo approximation of the EM lower bound $\log \Pr(y \, | \, \theta) \geq \E_{x, z | y,\theta}[ \log \Pr(y, x, z \, | \, \theta) ]$
where the expectation is approximated with samples of $x,z$ drawn using blocked Gibbs for each fixed $\theta$.
From Figure~\ref{fig:slds_lowtrans_metrics}, we see that SGRLD methods perform similarly to Gibbs for $T = 10^4$, but vastly outperform Gibbs for $T = 10^6$.

\subsubsection{Canine Seizure iEEG}
Recall the data from Section~\ref{sec:experiments-ARHMM-seizure}.
For our SLDS analysis,
we set the continuous latent variable dimension to $n = 1$.
The number of latent states remains $K = 5$.
We again compare SGLD and SGRLD samplers with $S = 100$ and $B=10$ to Gibbs samplers on both the full data set and a $10\%$ subset of seizures.
In Figure~\ref{fig:canine-slds},
we see again that the SGRLD sampler converges much more rapidly than the other methods.
In comparison to Figure~\ref{fig:canine},
we also see that the SLDS is a better model for this data than the ARHMM (as measured by heldout likelihood). 
Qualitatively, the SLDS segmentations of seizures (Figure~\ref{fig:canine-slds} (right)) is more contiguous than the ARHMM segmentation (Figure~\ref{fig:canine} (right)).
\begin{figure}[!htb]
    \centering
    \begin{minipage}[c]{.29\textwidth}
    \centering
        \includegraphics[width=\textwidth]{./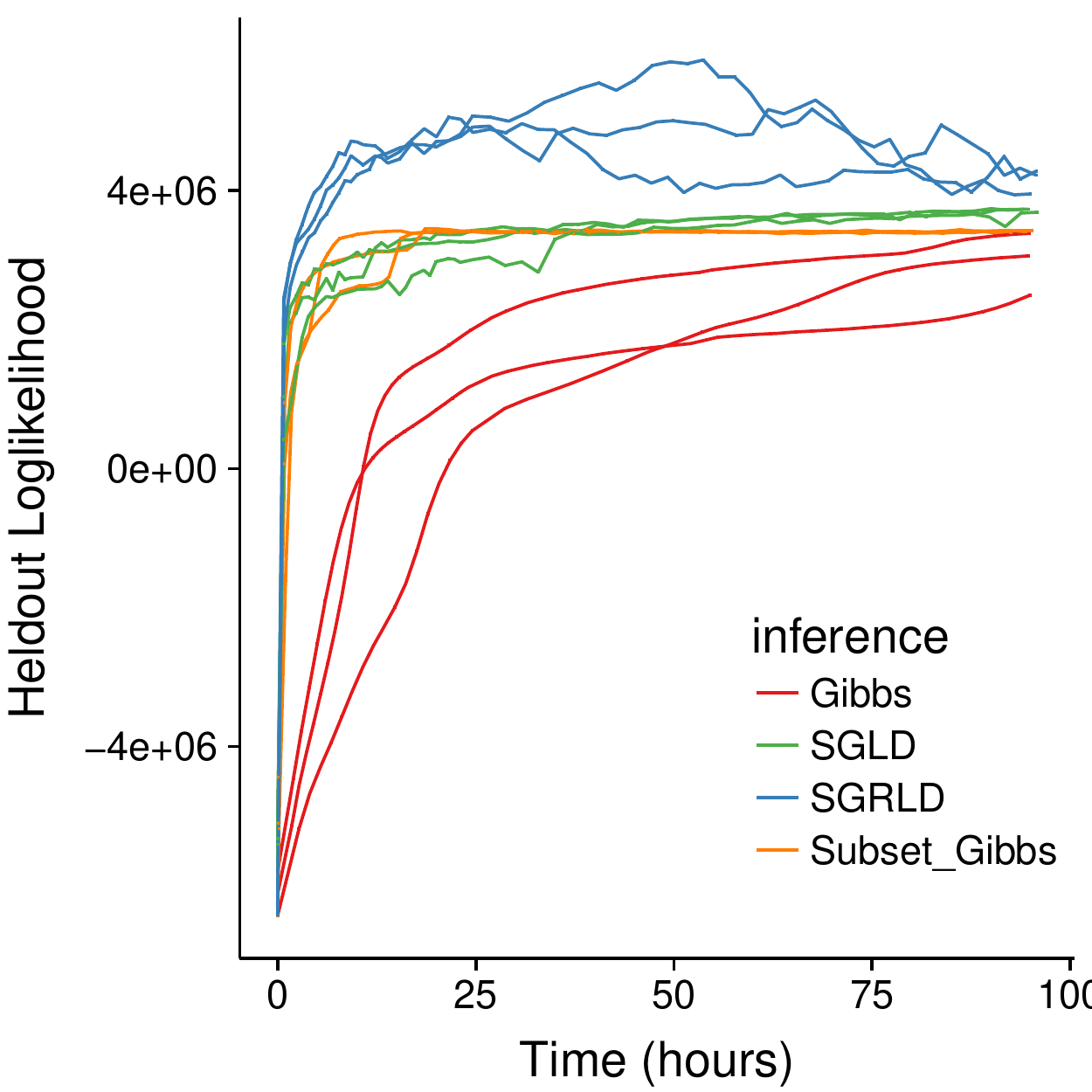}
    \end{minipage}
    \centering
    \begin{minipage}[c]{.29\textwidth}
    \centering
        \includegraphics[width=\textwidth]{./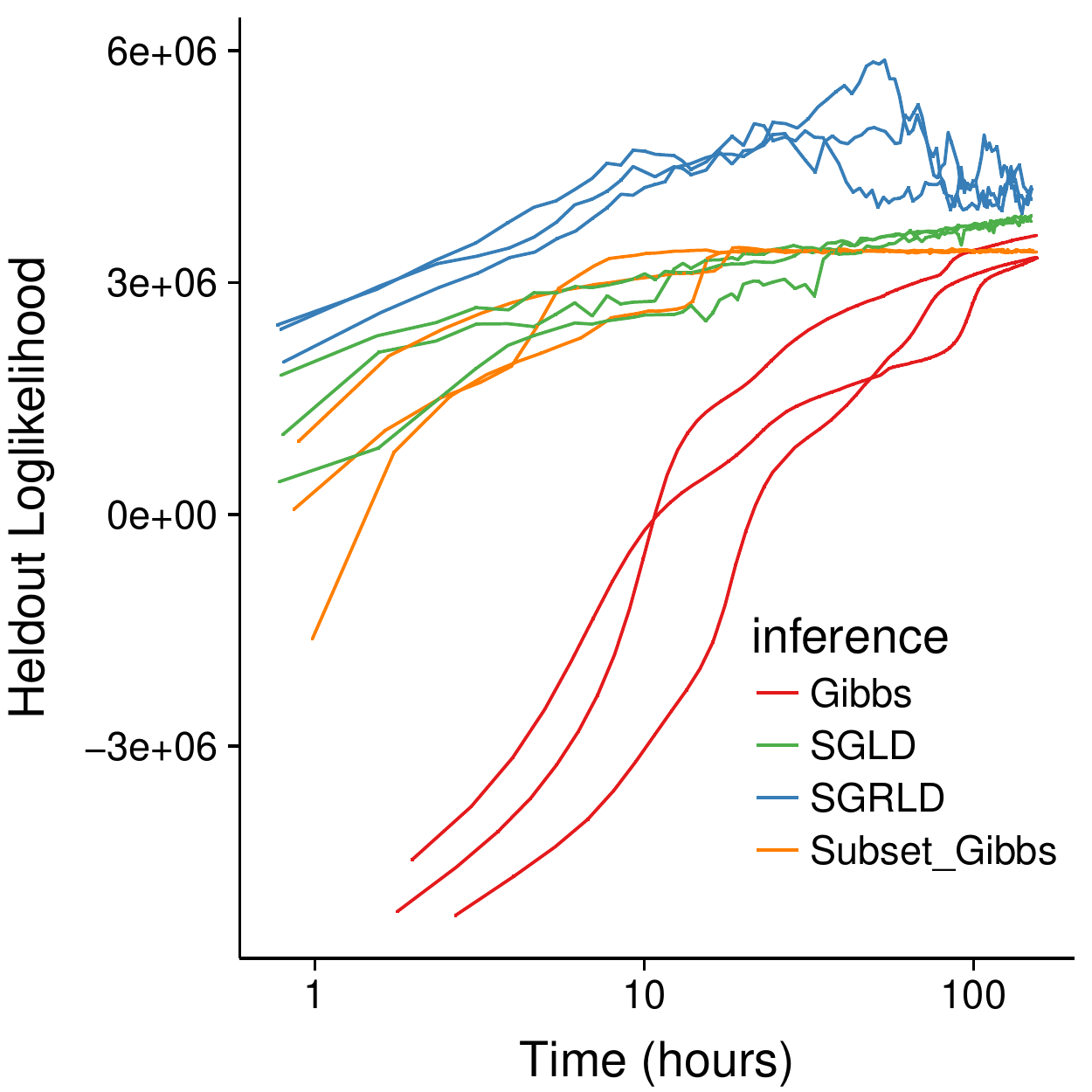}
    \end{minipage}
    \begin{minipage}[c]{.4\textwidth}
    \centering
    \includegraphics[width=\textwidth]{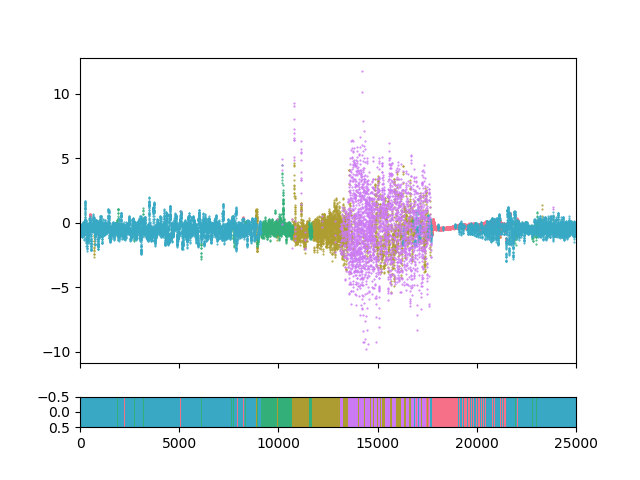}
    \end{minipage}
    \caption{SLDS Canine Seizure Data: (left) heldout loglikelihood vs time, (center) heldout loglikelihood vs time on log-scale (right) example segmentation by ARHMM fit with SGRLD.
    The MCMC methods compared are \textcolor{BrickRed}{Gibbs}, \textcolor{YellowOrange}{Subset Gibbs}, \textcolor{ForestGreen}{SGLD}, and \textcolor{Blue}{SGLRD}.
    }
    \label{fig:canine-slds}
\end{figure}

\subsubsection{Historical Cities Weather Data}
We apply SGMCMC to historical city weather data from Kaggle~\cite{kaggleweather}.
The data consists of hourly temperature, pressure and humidity measurements $(m=3)$ for 20 US cities over 5 years with $T=44,000$ hourly observations per city.
We fit SLDS models with $n=3$ and $K=4$ to both the hourly and daily average observations, treating the cities independently.
For both sets of observations, we perform an 80-20 train-test split over 20 cities, running inference on the training set (16 cities) and evaluating loglikelihood on the test set (4 cities).

\begin{figure}[!htb]
    \centering
    \begin{minipage}[b]{.28\textwidth}
    \centering
        \includegraphics[width=\textwidth]{./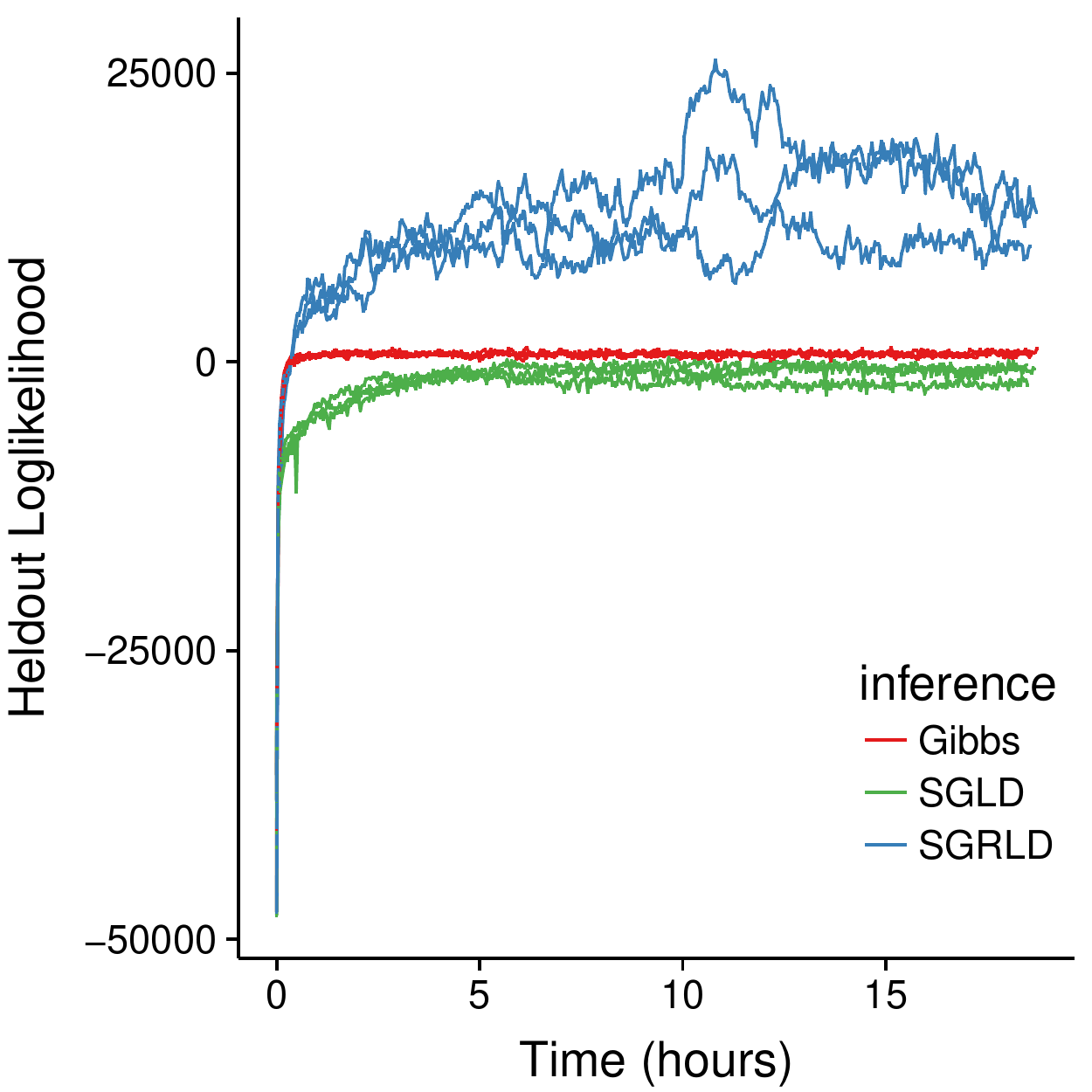}
    \end{minipage}
    \hspace{0.2em}
    \begin{minipage}[t]{.34\textwidth}
    \centering
        \includegraphics[width=\textwidth]{./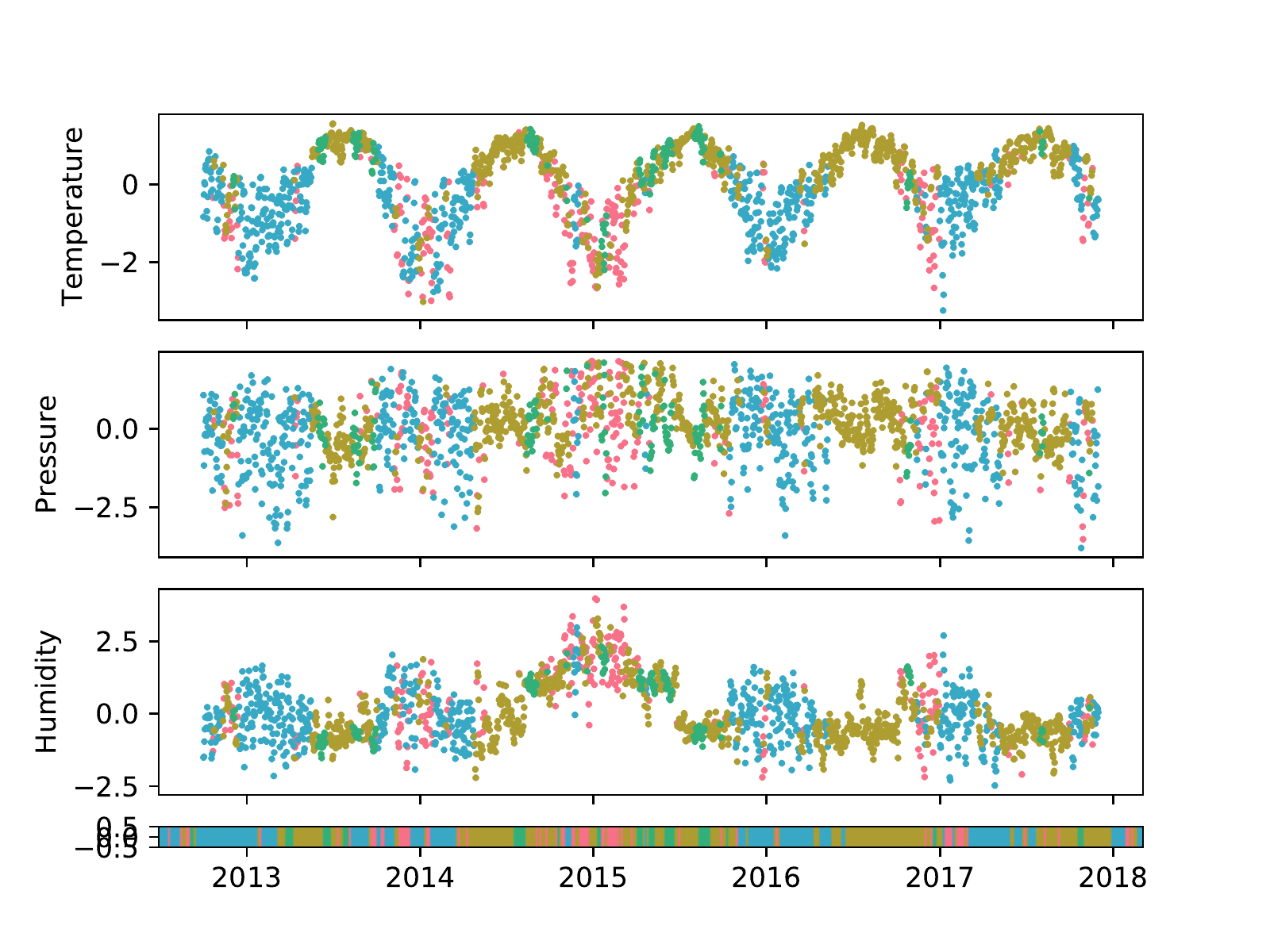}
    \end{minipage}
    \begin{minipage}[t]{.34\textwidth}
    \centering
        \includegraphics[width=\textwidth]{./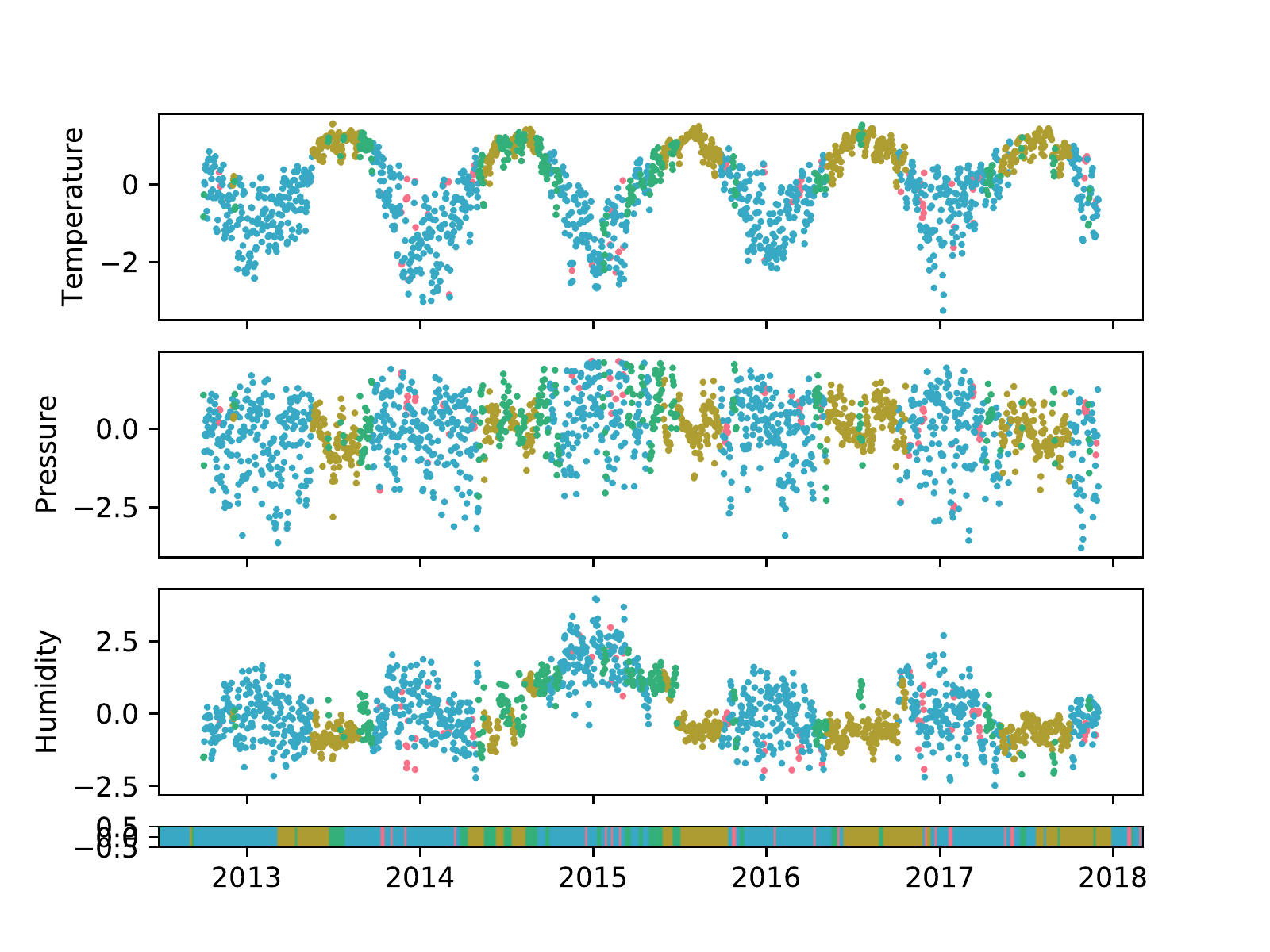}
    \end{minipage}

    \begin{minipage}[b]{.28\textwidth}
    \centering
        \includegraphics[width=\textwidth]{./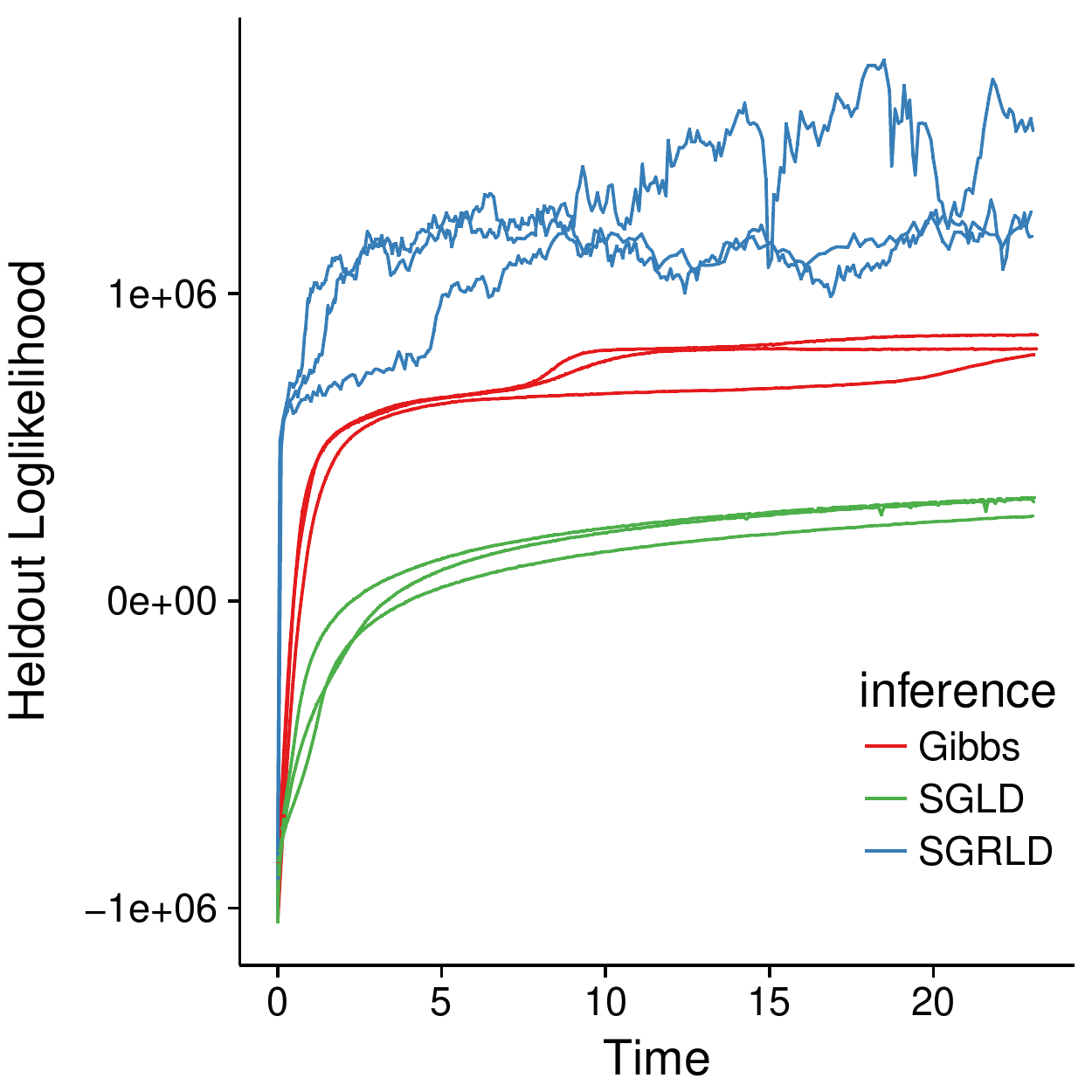}
    \end{minipage}
    \hspace{0.2em}
    \begin{minipage}[t]{.34\textwidth}
    \centering
        \includegraphics[width=\textwidth]{./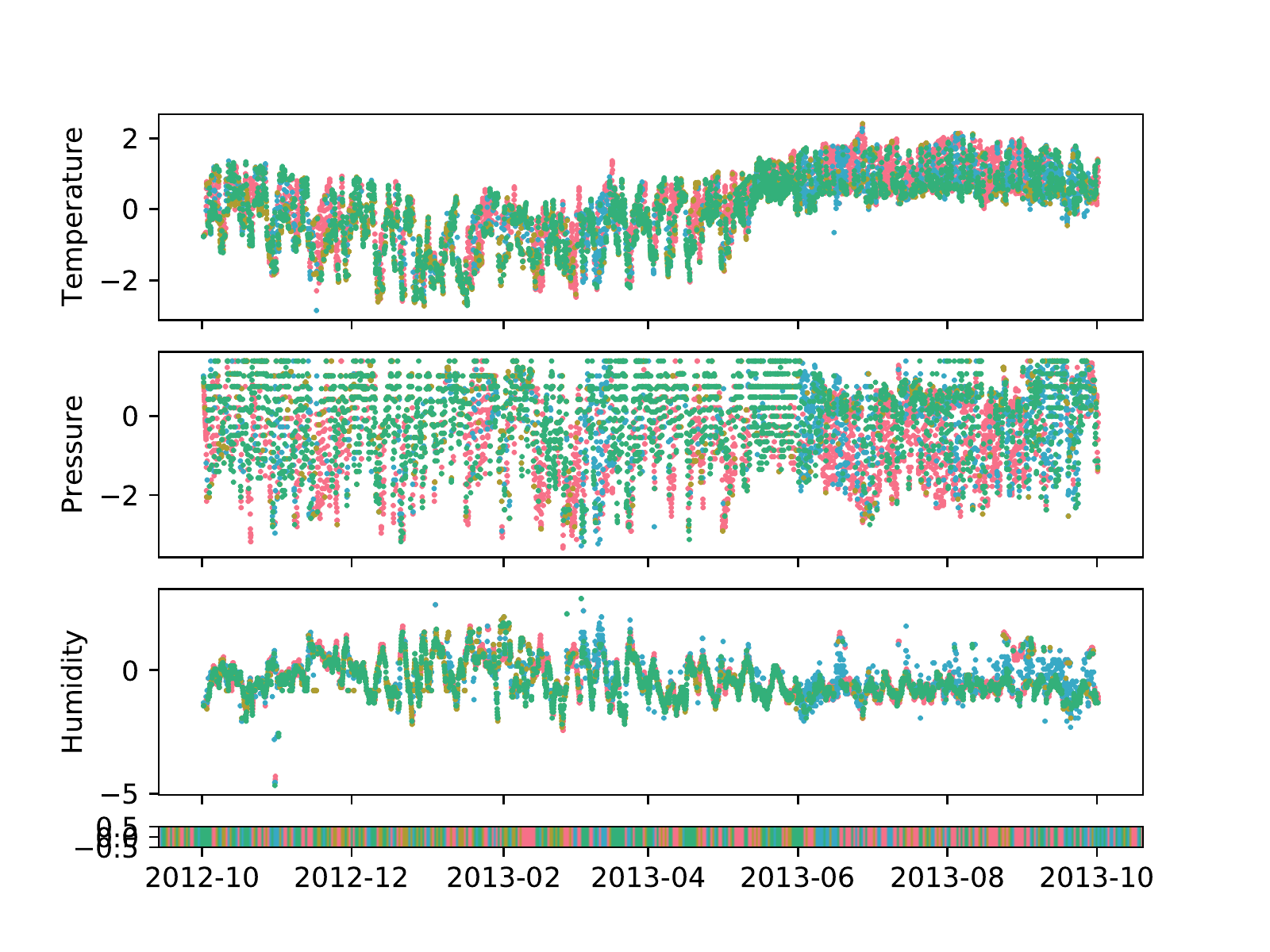}
    \end{minipage}
    \begin{minipage}[t]{.34\textwidth}
    \centering
        \includegraphics[width=\textwidth]{./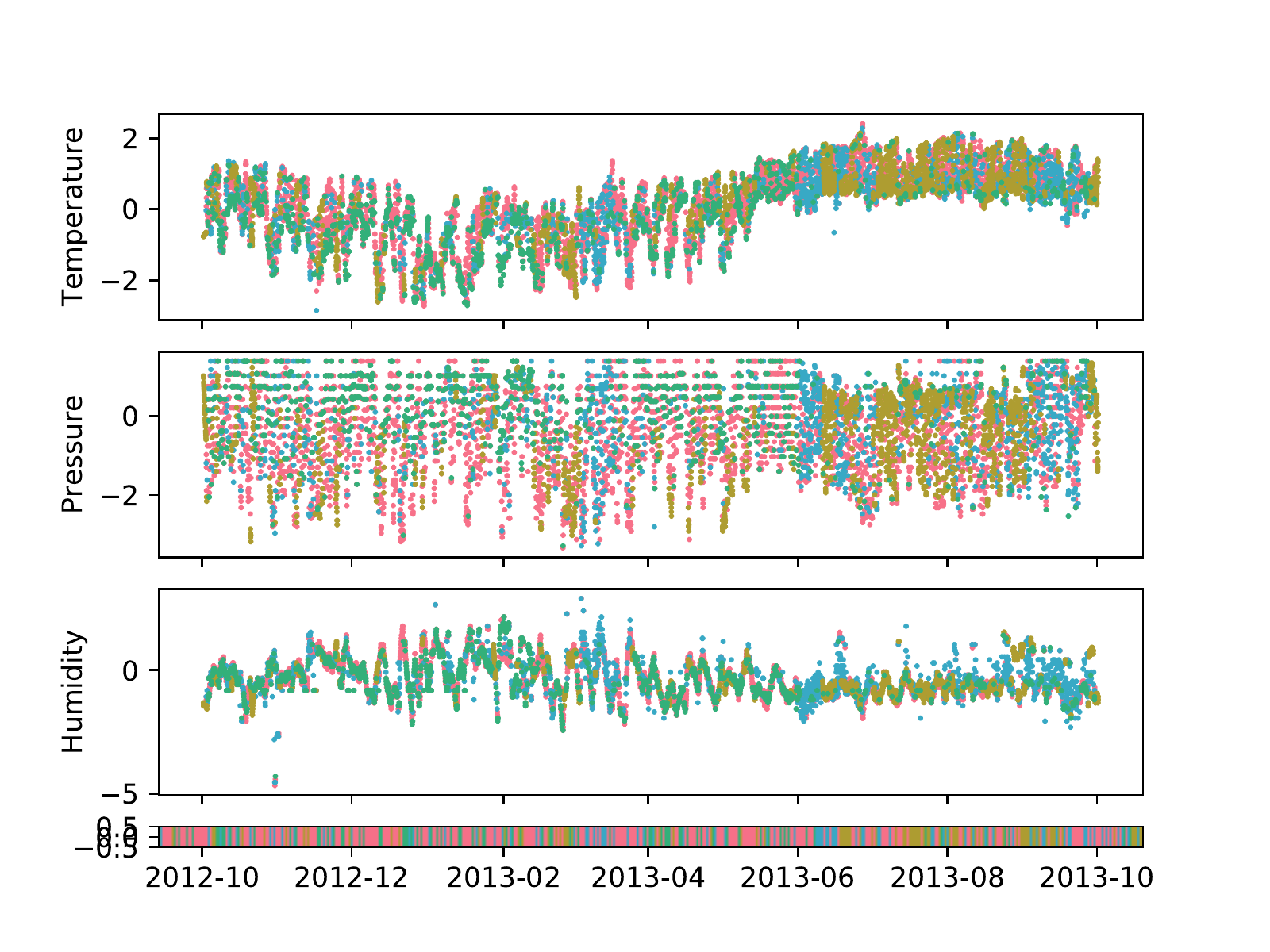}
    \end{minipage}
    \caption{SLDS Weather Data. (Top) daily aggregated data, (bottom) hourly data.
   (Left) heldout loglikelihood vs runtime, (center) Gibbs Houston fit, (right) SGRLD Houston fit.}
    \label{fig:slds-weather}
\end{figure}
Figure~\ref{fig:slds-weather} (top-left) shows the heldout loglikelihood vs the runtime for the different samplers on the daily data.
From this plot, we see that SGRLD clearly outperforms Gibbs.
Although Gibbs converges quickly on the daily data, it gets stuck in local optima.
In particular, the Gibbs runs converge to a suboptimal parametrization that mixes over three states, while SGRLD converges to a two state (summer-winter) solution (with the remaining states for sudden shifts or jumps).
For example, Figure~\ref{fig:slds-weather} (top-center and right) are fits of the daily model to the Houston time series for both Gibbs and SGRLD respectively.
Figure~\ref{fig:slds-weather} (bottom-left) shows the heldout loglikelihood vs the runtime of the different samplers for the hourly data.
SGRLD again outperforms Gibbs and, for the hourly data,
the Gibbs sampler is significantly slower than the SGMCMC samplers.
\section{Conclusions}
\label{sec:conclusions}
In this work,
we developed stochastic gradient MCMC samplers for state space models of sequential data.
Our key contribution is a \emph{buffered} gradient estimator $\tilde{g}(\theta)$ for general discrete-time SSMs based on Fisher's identity.
We developed bounds for the error of this buffered gradient estimator and showed that the error decays geometrically in the buffer size under mild conditions.
Using this estimator and bound, we developed SGRLD samplers for discrete (Gaussian HMM, ARHMM), continuous (LGSSM), and mixed-type (SLDS) state space models.
In our experiments, we find that our methods can provide orders of magnitude run-time speed ups compared to Gibbs sampling,
control bias with modest buffer size, and converge and mix more rapidly using preconditioning.
In particular, our SGRLD method only uses subsequences at each iteration and
is able to take advantage of geometric structure using the complete-data Fisher information matrix.

There are many interesting directions for future work.
This buffered gradient estimator for sequential data could be applied to other stochastic gradient methods such as
maximum likelihood estimation or variational inference~\cite{archer2015black, krishnan2017structured}.
The approach could also be extended to non-linear continuous SSMs (e.g. stochastic volatility models) replacing message passing with particle filtering~\cite{andrieu2010particle, cappe2005inference, doucet2009tutorial, olsson2008sequential}.
The buffered gradient estimator could likewise be applied to diffusions with control variates~\cite{baker2017control,chatterji2018theory} or with augmented dynamics, such as using momentum (SGHMC)~\cite{chen2014stochastic} or temperature (SGNHT)~\cite{ding2014bayesian}.
In terms of analysis, the standard SGLD error analysis could be extended to analyze the optimal trade-off between buffer size and subsequence length.

\section*{Acknowledgments}
We would like to thank Drausin Wulsin, Jack Baker, Chris Nemeth and other members of the Dynamode lab at UW for their helpful discussions.
This work was supported in part by ONR Grant N00014-15-1-2380 and NSF CAREER Award IIS-1350133.
Nicholas J. Foti was supported by a Washington Research Foundation Innovation Postdoctoral Fellowship in Neuroengineering and Data Science.

\nocite{*}
\bibliographystyle{abbrvnat}
\bibliography{references.bib}

\begin{thebibliography}{81}
\providecommand{\natexlab}[1]{#1}
\providecommand{\url}[1]{\texttt{#1}}
\expandafter\ifx\csname urlstyle\endcsname\relax
  \providecommand{\doi}[1]{doi: #1}\else
  \providecommand{\doi}{doi: \begingroup \urlstyle{rm}\Url}\fi

\bibitem[Ahn et~al.(2012)Ahn, Korattikara, and Welling]{ahn2012bayesian}
S.~Ahn, A.~Korattikara, and M.~Welling.
\newblock {Bayesian} posterior sampling via stochastic gradient {Fisher}
  scoring.
\newblock In \emph{International Conference on Machine Learning}, pages
  1771--1778, 2012.

\bibitem[Andrieu et~al.(2010)Andrieu, Doucet, and
  Holenstein]{andrieu2010particle}
C.~Andrieu, A.~Doucet, and R.~Holenstein.
\newblock Particle {Markov} chain {Monte} {Carlo} methods.
\newblock \emph{Journal of the Royal Statistical Society: Series B, Statistical
  Methodology}, 72\penalty0 (3):\penalty0 269--342, 2010.

\bibitem[Archer et~al.(2015)Archer, Park, Buesing, Cunningham, and
  Paninski]{archer2015black}
E.~Archer, I.~M. Park, L.~Buesing, J.~Cunningham, and L.~Paninski.
\newblock Black box variational inference for state space models.
\newblock \emph{arXiv preprint arXiv:1511.07367}, 2015.

\bibitem[Baker et~al.(2017)Baker, Fearnhead, Fox, and Nemeth]{baker2017control}
J.~Baker, P.~Fearnhead, E.~B. Fox, and C.~Nemeth.
\newblock Control variates for stochastic gradient {MCMC}.
\newblock \emph{Statistics and Computing}, pages 1--17, 2017.

\bibitem[Baker et~al.(2018)Baker, Fearnhead, Fox, and Nemeth]{baker2018large}
J.~Baker, P.~Fearnhead, E.~Fox, and C.~Nemeth.
\newblock Large-scale stochastic sampling from the probability simplex.
\newblock In \emph{Advances in Neural Information Processing Systems}, pages
  6722--6732, 2018.

\bibitem[Beal et~al.(2003)]{beal2003variational}
M.~J. Beal et~al.
\newblock \emph{Variational Algorithms for Approximate {Bayesian} Inference}.
\newblock university of London London, 2003.

\bibitem[Beniaguev()]{kaggleweather}
D.~Beniaguev.
\newblock Historical hourly weather data 2012-2017.
\newblock
  \url{https://www.kaggle.com/selfishgene/historical-hourly-weather-data}.

\bibitem[Bishop(2006)]{bishop2006pattern}
C.~M. Bishop.
\newblock Pattern recognition.
\newblock \emph{Machine Learning}, 2006.

\bibitem[Briers et~al.(2010)Briers, Doucet, and Maskell]{briers2010smoothing}
M.~Briers, A.~Doucet, and S.~Maskell.
\newblock Smoothing algorithms for state-space models.
\newblock \emph{Annals of the Institute of Statistical Mathematics},
  62\penalty0 (1):\penalty0 61, 2010.

\bibitem[Brooks et~al.(2011)Brooks, Gelman, Jones, and
  Meng]{brooks2011handbook}
S.~Brooks, A.~Gelman, G.~Jones, and X.-L. Meng.
\newblock \emph{Handbook of {Markov} Chain {Monte} {Carlo}}.
\newblock CRC press, 2011.

\bibitem[Capp\'{e} et~al.(2005)Capp\'{e}, Moulines, and
  Ryd\'{e}n]{cappe2005inference}
O.~Capp\'{e}, E.~Moulines, and T.~Ryd\'{e}n.
\newblock \emph{Inference in Hidden {Markov} Models}.
\newblock Springer, 2005.

\bibitem[Carter and Kohn(1994)]{carter1994gibbs}
C.~K. Carter and R.~Kohn.
\newblock On {Gibbs} sampling for state space models.
\newblock \emph{Biometrika}, 81\penalty0 (3):\penalty0 541--553, 1994.

\bibitem[Chan et~al.(2016)Chan, Heng, and Jasra]{chan2016theory}
H.~P. Chan, C.-W. Heng, and A.~Jasra.
\newblock Theory of segmented particle filters.
\newblock \emph{Advances in Applied Probability}, 48\penalty0 (1):\penalty0
  69--87, 2016.

\bibitem[Chatterji et~al.(2018)Chatterji, Flammarion, Ma, Bartlett, and
  Jordan]{chatterji2018theory}
N.~S. Chatterji, N.~Flammarion, Y.-A. Ma, P.~L. Bartlett, and M.~I. Jordan.
\newblock On the theory of variance reduction for stochastic gradient {Monte}
  {Carlo}.
\newblock In \emph{International Conference on Machine Learning}, pages
  764--773, 2018.

\bibitem[Chen et~al.(2015)Chen, Ding, and Carin]{chen2015convergence}
C.~Chen, N.~Ding, and L.~Carin.
\newblock On the convergence of stochastic gradient {MCMC} algorithms with
  high-order integrators.
\newblock In \emph{Advances in Neural Information Processing Systems}, pages
  2278--2286, 2015.

\bibitem[Chen et~al.(2014)Chen, Fox, and Guestrin]{chen2014stochastic}
T.~Chen, E.~Fox, and C.~Guestrin.
\newblock Stochastic gradient {Hamiltonian} {Monte} {Carlo}.
\newblock In \emph{International Conference on Machine Learning}, pages
  1683--1691, 2014.

\bibitem[Cloez et~al.(2015)Cloez, Hairer, et~al.]{cloez2015exponential}
B.~Cloez, M.~Hairer, et~al.
\newblock Exponential ergodicity for {Markov} processes with random switching.
\newblock \emph{Bernoulli}, 21\penalty0 (1):\penalty0 505--536, 2015.

\bibitem[Dalalyan and Karagulyan(2019)]{dalalyan2017user}
A.~S. Dalalyan and A.~G. Karagulyan.
\newblock User-friendly guarantees for the {Langevin} {Monte} {Carlo} with
  inaccurate gradient.
\newblock \emph{Stochastic Processes and their Applications}, 2019.

\bibitem[Davis et~al.(2016)Davis, Ung, Wulsin, Wagenaar, Fox, Patterson, Vite,
  Worrell, and Litt]{davis2016mining}
K.~A. Davis, H.~Ung, D.~Wulsin, J.~Wagenaar, E.~Fox, N.~Patterson, C.~Vite,
  G.~Worrell, and B.~Litt.
\newblock Mining continuous intracranial {EEG} in focal canine epilepsy:
  Relating interictal bursts to seizure onsets.
\newblock \emph{Epilepsia}, 57\penalty0 (1):\penalty0 89--98, 2016.

\bibitem[Del~Moral et~al.(2010)Del~Moral, Doucet, and Singh]{del2010forward}
P.~Del~Moral, A.~Doucet, and S.~Singh.
\newblock Forward smoothing using sequential {Monte} {Carlo}.
\newblock \emph{arXiv preprint arXiv:1012.5390}, 2010.

\bibitem[Del~Moral et~al.(2017)Del~Moral, Jasra, and Zhou]{del2017biased}
P.~Del~Moral, A.~Jasra, and Y.~Zhou.
\newblock Biased online parameter inference for state-space models.
\newblock \emph{Methodology and Computing in Applied Probability}, 19\penalty0
  (3):\penalty0 727--749, 2017.

\bibitem[Diaconis and Freedman(1999)]{diaconis1999iterated}
P.~Diaconis and D.~Freedman.
\newblock Iterated random functions.
\newblock \emph{SIAM Review}, 41\penalty0 (1):\penalty0 45--76, 1999.

\bibitem[Ding et~al.(2014)Ding, Fang, Babbush, Chen, Skeel, and
  Neven]{ding2014bayesian}
N.~Ding, Y.~Fang, R.~Babbush, C.~Chen, R.~D. Skeel, and H.~Neven.
\newblock {Bayesian} sampling using stochastic gradient thermostats.
\newblock In \emph{Advances in Neural Information Processing Systems}, pages
  3203--3211, 2014.

\bibitem[Douc et~al.(2009)Douc, Moulines, Ritov, et~al.]{douc2009forgetting}
R.~Douc, E.~Moulines, Y.~Ritov, et~al.
\newblock Forgetting of the initial condition for the filter in general
  state-space hidden {Markov} chain: a coupling approach.
\newblock \emph{Electronic Journal of Probability}, 14:\penalty0 27--49, 2009.

\bibitem[Doucet and Johansen(2009)]{doucet2009tutorial}
A.~Doucet and A.~M. Johansen.
\newblock A tutorial on particle filtering and smoothing: Fifteen years later.
\newblock \emph{Handbook of Nonlinear Filtering}, 12\penalty0
  (656-704):\penalty0 3, 2009.

\bibitem[Dubey et~al.(2016)Dubey, J.~Reddi, Williamson, Poczos, Smola, and
  Xing]{Dubey2016vr}
K.~A. Dubey, S.~J.~Reddi, S.~A. Williamson, B.~Poczos, A.~J. Smola, and E.~P.
  Xing.
\newblock Variance reduction in stochastic gradient {Langevin} dynamics.
\newblock In \emph{Advances in Neural Information Processing Systems}, pages
  1154--1162. 2016.

\bibitem[Durbin and Koopman(2012)]{durbin2012time}
J.~Durbin and S.~J. Koopman.
\newblock \emph{Time Series Analysis by State Space Methods}, volume~38.
\newblock Oxford University Press, 2012.

\bibitem[Durmus and Moulines(2015)]{durmus2015quantitative}
A.~Durmus and {\'E}.~Moulines.
\newblock Quantitative bounds of convergence for geometrically ergodic {Markov}
  chain in the {Wasserstein} distance with application to the {Metropolis}
  adjusted {Langevin} algorithm.
\newblock \emph{Statistics and Computing}, 25\penalty0 (1):\penalty0 5--19,
  2015.

\bibitem[Eddy(1998)]{eddy1998profile}
S.~R. Eddy.
\newblock Profile hidden {Markov} models.
\newblock \emph{Bioinformatics (Oxford, England)}, 14\penalty0 (9):\penalty0
  755--763, 1998.

\bibitem[Elliott et~al.(2008)Elliott, Aggoun, and Moore]{elliott2008hidden}
R.~J. Elliott, L.~Aggoun, and J.~B. Moore.
\newblock \emph{Hidden {Markov} Models: Estimation and Control}, volume~29.
\newblock Springer Science \& Business Media, 2008.

\bibitem[Foti et~al.(2014)Foti, Xu, Laird, and Fox]{foti2014stochastic}
N.~Foti, J.~Xu, D.~Laird, and E.~Fox.
\newblock Stochastic variational inference for hidden {Markov} models.
\newblock In \emph{Advances in Neural Information Processing Systems}, pages
  3599--3607, 2014.

\bibitem[Fox et~al.(2011)Fox, Sudderth, Jordan, and Willsky]{fox2011bayesian}
E.~Fox, E.~B. Sudderth, M.~I. Jordan, and A.~S. Willsky.
\newblock {Bayesian} nonparametric inference of switching dynamic linear
  models.
\newblock 2011.

\bibitem[Fox(2009)]{fox2009bayesian}
E.~B. Fox.
\newblock \emph{{Bayesian} Nonparametric Learning of Complex Dynamical
  Phenomena}.
\newblock PhD thesis, Massachusetts Institute of Technology, 2009.

\bibitem[Gelman et~al.(2013)Gelman, Carlin, Rubin, Vehtari, Dunson, and
  Stern]{gelman2013bayesian}
A.~Gelman, J.~B. Carlin, D.~B. Rubin, A.~Vehtari, D.~B. Dunson, and H.~S.
  Stern.
\newblock \emph{{Bayesian} Data Analysis}.
\newblock CRC Press, 2013.

\bibitem[Girolami and Calderhead(2011)]{girolami2011riemann}
M.~Girolami and B.~Calderhead.
\newblock {R}iemann manifold {Langevin} and {Hamiltonian} {Monte} {Carlo}
  methods.
\newblock \emph{Journal of the Royal Statistical Society: Series B, Statistical
  Methodology}, 73\penalty0 (2):\penalty0 123--214, 2011.

\bibitem[Gonzalez et~al.(2009)Gonzalez, Low, and
  Guestrin]{gonzalez2009residual}
J.~Gonzalez, Y.~Low, and C.~Guestrin.
\newblock Residual splash for optimally parallelizing belief propagation.
\newblock In \emph{Artificial Intelligence and Statistics}, pages 177--184,
  2009.

\bibitem[Goodhart and O'Hara(1997)]{goodhart1997high}
C.~A. Goodhart and M.~O'Hara.
\newblock High frequency data in financial markets: Issues and applications.
\newblock \emph{Journal of Empirical Finance}, 4\penalty0 (2-3):\penalty0
  73--114, 1997.

\bibitem[Gorham and Mackey(2017)]{gorham2017measuring}
J.~Gorham and L.~Mackey.
\newblock Measuring sample quality with kernels.
\newblock In \emph{International Conference on Machine Learning}, pages
  1292--1301, 2017.

\bibitem[Hamilton(1994)]{hamilton1994time}
J.~D. Hamilton.
\newblock \emph{Time Series Analysis}, volume~2.
\newblock Princeton university press Princeton, NJ, 1994.

\bibitem[Johndrow and Mattingly(2017)]{johndrow2017error}
J.~E. Johndrow and J.~C. Mattingly.
\newblock Error bounds for approximations of {Markov} chains used in {Bayesian}
  sampling.
\newblock \emph{arXiv preprint arXiv:1711.05382}, 2017.

\bibitem[Johndrow et~al.(2017)Johndrow, Mattingly, Mukherjee, and
  Dunson]{johndrow2017optimal}
J.~E. Johndrow, J.~C. Mattingly, S.~Mukherjee, and D.~B. Dunson.
\newblock Optimal approximating {Markov} chains for {Bayesian} inference.
\newblock \emph{arXiv preprint arXiv:1508.03387}, 2017.

\bibitem[Johnson and Willsky(2014)]{johnson2014stochastic}
M.~Johnson and A.~Willsky.
\newblock Stochastic variational inference for {Bayesian} time series models.
\newblock In \emph{International Conference on Machine Learning}, pages
  1854--1862, 2014.

\bibitem[Johnson and Willsky(2013)]{johnson2013bayesian}
M.~J. Johnson and A.~S. Willsky.
\newblock {Bayesian} nonparametric hidden semi-{Markov} models.
\newblock \emph{Journal of Machine Learning Research}, 14\penalty0
  (Feb):\penalty0 673--701, 2013.

\bibitem[Kim et~al.(1999)Kim, Nelson, et~al.]{kim1999state}
C.-J. Kim, C.~R. Nelson, et~al.
\newblock State-space models with regime switching: Classical and
  {Gibbs}-sampling approaches with applications.
\newblock \emph{MIT Press Books}, 1, 1999.

\bibitem[Krishnan et~al.(2017)Krishnan, Shalit, and
  Sontag]{krishnan2017structured}
R.~G. Krishnan, U.~Shalit, and D.~Sontag.
\newblock Structured inference networks for nonlinear state space models.
\newblock In \emph{AAAI}, pages 2101--2109, 2017.

\bibitem[Le~Gland and Mevel(2000)]{le2000exponential}
F.~Le~Gland and L.~Mevel.
\newblock Exponential forgetting and geometric ergodicity in hidden {Markov}
  models.
\newblock \emph{Mathematics of Control, Signals and Systems}, 13\penalty0
  (1):\penalty0 63--93, 2000.

\bibitem[Li et~al.(2016)Li, Chen, Carlson, and Carin]{li2016preconditioned}
C.~Li, C.~Chen, D.~E. Carlson, and L.~Carin.
\newblock Preconditioned stochastic gradient {Langevin} dynamics for deep
  neural networks.
\newblock In \emph{AAAI}, 2016.

\bibitem[Linderman et~al.(2017)Linderman, Johnson, Miller, Adams, Blei, and
  Paninski]{linderman2017bayesian}
S.~Linderman, M.~Johnson, A.~Miller, R.~Adams, D.~Blei, and L.~Paninski.
\newblock {Bayesian} learning and inference in recurrent switching linear
  dynamical systems.
\newblock In \emph{Artificial Intelligence and Statistics}, pages 914--922,
  2017.

\bibitem[Liu et~al.(2016)Liu, Lee, and Jordan]{liu2016kernelized}
Q.~Liu, J.~Lee, and M.~Jordan.
\newblock A kernelized {Stein} discrepancy for goodness-of-fit tests.
\newblock In \emph{International Conference on Machine Learning}, pages
  276--284, 2016.

\bibitem[L{\"u}tkepohl(2005)]{lutkepohl2005new}
H.~L{\"u}tkepohl.
\newblock \emph{New Introduction to Multiple Time Series Analysis}.
\newblock Springer Science \& Business Media, 2005.

\bibitem[Ma et~al.(2015)Ma, Chen, and Fox]{ma2015complete}
Y.-A. Ma, T.~Chen, and E.~B. Fox.
\newblock A complete recipe for stochastic gradient {MCMC}.
\newblock In \emph{Advances in Neural Information Processing Systems}, pages
  2917--2925, 2015.

\bibitem[Ma et~al.(2017)Ma, Foti, and Fox]{ma2017stochastic}
Y.-A. Ma, N.~J. Foti, and E.~B. Fox.
\newblock Stochastic gradient {MCMC} methods for hidden {Markov} models.
\newblock In \emph{International Conference on Machine Learning}, pages
  2265--2274, 2017.

\bibitem[Madras et~al.(2010)Madras, Sezer, et~al.]{madras2010quantitative}
N.~Madras, D.~Sezer, et~al.
\newblock Quantitative bounds for {Markov} chain convergence: {Wasserstein} and
  total variation distances.
\newblock \emph{Bernoulli}, 16\penalty0 (3):\penalty0 882--908, 2010.

\bibitem[Nagapetyan et~al.(2017)Nagapetyan, Duncan, Hasenclever, Vollmer,
  Szpruch, and Zygalakis]{nagapetyan2017true}
T.~Nagapetyan, A.~B. Duncan, L.~Hasenclever, S.~J. Vollmer, L.~Szpruch, and
  K.~Zygalakis.
\newblock The true cost of stochastic gradient {Langevin} dynamics.
\newblock \emph{arXiv preprint arXiv:1706.02692}, 2017.

\bibitem[Olsson et~al.(2008)Olsson, Capp{\'e}, Douc, Moulines,
  et~al.]{olsson2008sequential}
J.~Olsson, O.~Capp{\'e}, R.~Douc, E.~Moulines, et~al.
\newblock Sequential {Monte} {Carlo} smoothing with application to parameter
  estimation in nonlinear state space models.
\newblock \emph{Bernoulli}, 14\penalty0 (1):\penalty0 155--179, 2008.

\bibitem[Palla et~al.(2014)Palla, Knowles, and Ghahramani]{Palla:2014}
K.~Palla, D.~A. Knowles, and Z.~Ghahramani.
\newblock A reversible infinite {HMM} using normalised random measures.
\newblock In \emph{International Conference on Machine Learning}, 2014.

\bibitem[Patterson and Teh(2013)]{patterson2013stochastic}
S.~Patterson and Y.~W. Teh.
\newblock Stochastic gradient {Riemannian} {Langevin} dynamics on the
  probability simplex.
\newblock In \emph{Advances in Neural Information Processing Systems}, pages
  3102--3110, 2013.

\bibitem[Rabiner(1989)]{rabiner1989tutorial}
L.~R. Rabiner.
\newblock A tutorial on hidden {Markov} models and selected applications in
  speech recognition.
\newblock \emph{Proceedings of the IEEE}, 77\penalty0 (2):\penalty0 257--286,
  1989.

\bibitem[Raginsky et~al.(2017)Raginsky, Rakhlin, and
  Telgarsky]{raginsky2017non}
M.~Raginsky, A.~Rakhlin, and M.~Telgarsky.
\newblock Non-convex learning via stochastic gradient {Langevin} dynamics: a
  nonasymptotic analysis.
\newblock In \emph{Conference on Learning Theory}, pages 1674--1703, 2017.

\bibitem[Roberts and Rosenthal(1998)]{roberts1998optimal}
G.~O. Roberts and J.~S. Rosenthal.
\newblock Optimal scaling of discrete approximations to {Langevin} diffusions.
\newblock \emph{Journal of the Royal Statistical Society: Series B, Statistical
  Methodology}, 60\penalty0 (1):\penalty0 255--268, 1998.

\bibitem[Roberts et~al.(1996)Roberts, Tweedie, et~al.]{roberts1996exponential}
G.~O. Roberts, R.~L. Tweedie, et~al.
\newblock Exponential convergence of {Langevin} distributions and their
  discrete approximations.
\newblock \emph{Bernoulli}, 2\penalty0 (4):\penalty0 341--363, 1996.

\bibitem[Rosenstein et~al.(2013)Rosenstein, Ramakrishnan, Roseman, and
  L]{Rosenstein:2013}
J.~K. Rosenstein, S.~Ramakrishnan, J.~Roseman, and S.~K. L.
\newblock Single ion channel recordings with {CMOS}-anchored lipid membranes.
\newblock \emph{Nano Letters}, 13\penalty0 (6):\penalty0 2682--2686, 2013.

\bibitem[Rudolf and Schweizer(2018)]{rudolf2015perturbation}
D.~Rudolf and N.~Schweizer.
\newblock Perturbation theory for {Markov} chains via {Wasserstein} distance.
\newblock \emph{Bernoulli}, 24\penalty0 (4A):\penalty0 2610--2639, 2018.

\bibitem[Scott(2002)]{scott2002bayesian}
S.~L. Scott.
\newblock {Bayesian} methods for hidden {Markov} models: Recursive computing in
  the 21st century.
\newblock \emph{Journal of the American Statistical Association}, 97\penalty0
  (457):\penalty0 337--351, 2002.

\bibitem[Simsekli et~al.(2016)Simsekli, Badeau, Cemgil, and
  Richard]{simsekli2016stochastic}
U.~Simsekli, R.~Badeau, T.~Cemgil, and G.~Richard.
\newblock Stochastic quasi-{Newton} {Langevin} {Monte} {Carlo}.
\newblock In \emph{International Conference on Machine Learning}, pages
  642--651, 2016.

\bibitem[Sudderth et~al.(2010)Sudderth, Ihler, Isard, Freeman, and
  Willsky]{sudderth2010nonparametric}
E.~B. Sudderth, A.~T. Ihler, M.~Isard, W.~T. Freeman, and A.~S. Willsky.
\newblock Nonparametric belief propagation.
\newblock \emph{Communications of the ACM}, 53\penalty0 (10):\penalty0 95--103,
  2010.

\bibitem[Teh et~al.(2016)Teh, Thiery, and Vollmer]{teh2016consistency}
Y.~W. Teh, A.~H. Thiery, and S.~J. Vollmer.
\newblock Consistency and fluctuations for stochastic gradient {Langevin}
  dynamics.
\newblock \emph{Journal of Machine Learning Research}, 17\penalty0
  (1):\penalty0 193--225, 2016.

\bibitem[Tong and Van~Handel(2012)]{tong2012ergodicity}
X.~T. Tong and R.~Van~Handel.
\newblock Ergodicity and stability of the conditional distributions of
  nondegenerate {Markov} chains.
\newblock \emph{The Annals of Applied Probability}, pages 1495--1540, 2012.

\bibitem[Tripuraneni et~al.(2015)Tripuraneni, Gu, Ge, and
  Ghahramani]{Tripuraneni:2015}
N.~Tripuraneni, S.~Gu, H.~Ge, and Z.~Ghahramani.
\newblock Particle {Gibbs} for infinite hidden {Markov} models.
\newblock In \emph{Advances in Neural Information Processing Systems}, pages
  2386--2394, 2015.

\bibitem[University~of Pennsylvania()]{ieegwebsite}
M.~C. University~of Pennsylvania.
\newblock {IEEG.org}.

\bibitem[Van~Handel et~al.(2009)]{van2009stability}
R.~Van~Handel et~al.
\newblock The stability of conditional {Markov} processes and {Markov} chains
  in random environments.
\newblock \emph{The Annals of Probability}, 37\penalty0 (5):\penalty0
  1876--1925, 2009.

\bibitem[Villani(2008)]{villani2008optimal}
C.~Villani.
\newblock \emph{Optimal Transport: Old and New}, volume 338.
\newblock Springer Science \& Business Media, 2008.

\bibitem[Vinh et~al.(2010)Vinh, Epps, and Bailey]{vinh2010information}
N.~X. Vinh, J.~Epps, and J.~Bailey.
\newblock Information theoretic measures for clusterings comparison: Variants,
  properties, normalization and correction for chance.
\newblock \emph{Journal of Machine Learning Research}, 11\penalty0
  (Oct):\penalty0 2837--2854, 2010.

\bibitem[Welling and Teh(2011)]{welling2011bayesian}
M.~Welling and Y.~W. Teh.
\newblock {Bayesian} learning via stochastic gradient {Langevin} dynamics.
\newblock In \emph{International Conference on Machine Learning}, pages
  681--688, 2011.

\bibitem[Whiteley(2013)]{whiteley2013stability}
N.~Whiteley.
\newblock Stability properties of some particle filters.
\newblock \emph{The Annals of Applied Probability}, 23\penalty0 (6):\penalty0
  2500--2537, 2013.

\bibitem[Wulsin(2013)]{wulsin2013bayesian}
D.~F. Wulsin.
\newblock \emph{{Bayesian} Nonparametric Modeling of Epileptic Events}.
\newblock University of Pennsylvania, 2013.

\bibitem[Xifara et~al.(2014)Xifara, Sherlock, Livingstone, Byrne, and
  Girolami]{xifara2014langevin}
T.~Xifara, C.~Sherlock, S.~Livingstone, S.~Byrne, and M.~Girolami.
\newblock {Langevin} diffusions and the {Metropolis}-adjusted {Langevin}
  algorithm.
\newblock \emph{Statistics \& Probability Letters}, 91:\penalty0 14--19, 2014.

\bibitem[Xu et~al.(2018)Xu, Chen, and Gu]{xu2017global}
P.~Xu, J.~Chen, and Q.~Gu.
\newblock Global convergence of {Langevin} dynamics based algorithms for
  nonconvex optimization.
\newblock In \emph{Advances in Neural Information Processing Systems}, pages
  3122--3133, 2018.

\bibitem[Ye et~al.(2017)Ye, Ma, and Qian]{ye2017estimate}
F.~X.-F. Ye, Y.-a. Ma, and H.~Qian.
\newblock Estimate exponential memory decay in hidden {Markov} model and its
  applications.
\newblock \emph{arXiv preprint arXiv:1710.06078}, 2017.

\bibitem[Yu(2010)]{yu2010hidden}
S.-Z. Yu.
\newblock Hidden semi-{Markov} models.
\newblock \emph{Artificial Intelligence}, 174\penalty0 (2):\penalty0 215--243,
  2010.

\bibitem[Zeng and Wu(2013)]{zeng2013state}
Y.~Zeng and S.~Wu.
\newblock \emph{State-Space Models: Applications in Economics and Finance},
  volume~1.
\newblock Springer, 2013.

\end{thebibliography}

\clearpage
\appendix
\renewcommand{\theequation}{\Alph{section}.\arabic{equation}}
\begin{center}
\LARGE Supplement for SGMCMC for State Space Models
\end{center}
\vspace{1em}

This supplement is organized as follows.
In Section~\ref{supp-sec:lemma_proofs}, we provide the proofs of Lemmas for Section~\ref{sec:bounds}.
In Section~\ref{supp-sec:models}, we provide additional details for how to calculate the forward backward messages, gradients, and preconditioning terms for the models in Section~\ref{sec:models}.
In particular, in ~\ref{supp-sec:LGSSM:lemma_proofs}, we provide the proofs of the error bound lemmas from Section~\ref{sec:models:LGSSM-bounds}.
Finally, in Section~\ref{supp-sec:experiments}, we provide additional details and figures of experiments.

\section{Proof of Lemmas in Section~\ref{sec:bounds}}
\label{supp-sec:lemma_proofs}
We now provide proofs to the Lemmas in section~\ref{sec:bounds}.

We first present a proof of Lemma~\ref{lemma:grad_error_to_wasserstein} that relates the error in the difference of expectations in Eq.~\eqref{eq:diff_gradient_estimates} to Wasserstein distance.
\begin{proof}[Proof of Lemma~\ref{lemma:grad_error_to_wasserstein}]
Let $g_t(\latent_{t-1:t}) = \grad \log p(y_t, \latent_t \, | \, \latent_{t-1}, \theta)$.

Recall $\| g_t(\latent_{t-1:t}) \|_{Lip} \leq L_U$ for all $t$ by assumption.
Then, by the Kantorovich-Rubinstein duality formula Eq.~\eqref{eq:kantorovich-rubinstein}, we have
\begin{equation}
\left\| \E_{\gamma_{t-1:t}}\left[g_t(\latent_{t-1:t})\right] -  \E_{\widetilde\gamma_{t-1:t}}\left[g_t(\latent_{t-1:t})\right] \right\|_2  \leq L_U \cdot \mathcal{W}_1(\gamma_{t-1:t}, \widetilde\gamma_{t-1:t}) \enspace.
\end{equation}
Therefore,
\begin{align}
\| \bar{g}(\theta) - \tilde{g}(\theta) \|_2 &\leq 
\left\| \frac{T}{S} \sum_{t \in \SUBSEQ} \E_{\gamma_{t-1:t}}\left[g_t(\latent_{t-1:t})\right] -  \E_{\widetilde\gamma_{t-1:t}}\left[\grad U_t(\latent_{t-1:t})\right] \right\|_2 \\
&\leq \frac{T}{S} \sum_{t \in \SUBSEQ} \left\| \E_{\gamma_{t-1:t}}\left[g_t(\latent_{t-1:t})\right] -  \E_{\widetilde\gamma_{t-1:t}}\left[\grad U_t(\latent_{t-1:t})\right] \right\|_2 \\
 &\leq\frac{T}{S} \cdot L_U \cdot \sum_{t \in \SUBSEQ} \mathcal{W}_1(\gamma_{t-1:t}, \widetilde\gamma_{t-1:t}) \enspace.
\end{align}
\end{proof}

We now present the proof of Lemma~\ref{lemma:second_wass_moment} that relates the $1$-Wasserstein distance between distributons $(\gamma', \widetilde\gamma')$ of $\latent\latent^T$ to the $2$-Wasserstein distance between $(\gamma,\widetilde\gamma)$ over $\latent$.

\begin{proof}[Proof of Lemma~\ref{lemma:second_wass_moment}]
Let $\xi$ be a joint distribution over $\latent$ and $\widetilde\latent$ with marginals $\gamma$ and $\widetilde\gamma$.
Let $w := \widetilde\latent - \latent$, which implies $\widetilde\latent = \latent + w$.

Then we have
\begin{align}
\E \| \widetilde\latent\widetilde\latent^T - \latent\latent^T \|_F &= \E \| \latent w^T + w \latent^T + w w^T \|_F \\
    &\leq \E \| \latent w^T \|_F + \E \|w \latent^T\|_F + \E\|w w^T \|_F \\
    &= 2\E | \latent^T w | + \E[\|w\|^2] \\
    &\leq 2\sqrt{\E[\|\latent\|^2] \E[\|w\|^2]} + \E[\|w\|^2] \\
    &\leq (2\sqrt{M} + 1) \max \, \{\E[\|w\|^2]^{1/2}, \E[\|w\|^2] \} \\
    &= (2\sqrt{M} + 1) \max \, \{\E[\|\widetilde\latent - \latent\|^2]^{1/2}, \E[\|\widetilde\latent - \latent\|^2] \}
\end{align}
where we observe $\|xx^T\|_F = \| x x^T\|_2 = \|x\|_2^2 = x^T x$ and we use Cauchy-Schwartz.

Taking the infimum over all $\xi$ gives the result
\begin{align}
\mathcal{W}_1(\gamma', \widetilde\gamma') &= \inf_\xi  \E\| \widetilde\latent\widetilde\latent^T - \latent\latent^T \|_F \\
    &\leq \inf_\xi \left[(2\sqrt{M} + 1) \max \, \{\E[\|\widetilde\latent - \latent\|^2]^{1/2}, \, \E[\|\widetilde\latent - \latent\|^2] \} \right] \\
    &= (2\sqrt{M} + 1) \max \, \{ \inf_\xi \E[\|\widetilde\latent - \latent\|^2]^{1/2}, \, \inf_\xi \E[\|\widetilde\latent - \latent\|^2] \} \\
    &= (2\sqrt{M} + 1) \cdot \max_{r \in {1, 1/2}} \mathcal{W}_2(\gamma, \widetilde\gamma)^{r}
\enspace.
\end{align}
\end{proof}

We now prove Lemma~\ref{lemma:geometric_wasserstein} that bounds $\mathcal{W}_p(\gamma_{t-1:t}, \widetilde\gamma_{t-1:t})$ in terms of buffer size, if the forward and backward random maps $f_t, b_t$ are Lipschitz.

\begin{proof}[Proof of Lemma~\ref{lemma:geometric_wasserstein}]
We will first prove Eq.~\eqref{eq:geometric_wasserstein_forward}.
Recall $f_t$ is Lipschitz with constant $L_f < 1$ for all $t \in \BUFFSUBSEQ$.

Let $\xi_{t:t+1}$ be a joint distribution over $\latent_{t:t+1}$ and $\widehat\latent_{t:t+1}$ with marginals $\gamma_{t:t+1}$ and $\widehat\gamma_{t:t+1}$.
Let $\xi_{t}$ be a joint distribution over $\latent_{t}$ and $\widehat\latent_{t}$ with marginals $\gamma_{t}$ and $\widehat\gamma_{t}$.
Then for all $t \in \SUBSEQ$, we have
\begin{align}
\mathcal{W}_p(\gamma_{t:t+1}, \widehat\gamma_{t:t+1})^p &\leq \inf_{\xi_{t:t+1}} \int \| \latent_{t} - \hat\latent_{t}\|_2^p + \| \latent_{t+1} - \hat\latent_{t+1}\|_2^p  \, d\xi_{t:t+1}(\latent_{t:t+1}, \hat\latent_{t:t+1}) \\
&\leq \inf_{\xi_t} \int \| \latent_{t} - \hat\latent_{t}\|_2^p + \| f_t(\latent_t) - f_t(\hat\latent_t) \|_2^p \, d\xi_t(\latent_t, \hat\latent_t) df_t \\
&\leq \inf_{\xi_t} \int \| \latent_{t} - \hat\latent_{t}\|_2^p + L_f^p \cdot \| \latent_t -\hat\latent_t \|_2^p \, d\xi_t(\latent_t, \hat\latent_t) \\
&\leq (1 + L_f^p) \cdot \mathcal{W}_p(\gamma_{t}, \widehat\gamma_{t})^p
\end{align}

Repeatedly applying Eq.~\eqref{eq:wasserstein_random_map} completes the proof for Eq.~\eqref{eq:geometric_wasserstein_forward}
\begin{align}
\mathcal{W}_p(\gamma_{t-1:t}, \widehat\gamma_{t-1:t}) &\leq (1 + L_f^p)^{1/p} \cdot \mathcal{W}_p(\gamma_{t-1}, \widehat\gamma_{t-1}) \\
    &\leq (1 + L_f^p)^{1/p} \cdot L_f \cdot \mathcal{W}_p(\gamma_{t-2}, \widehat\gamma_{t-2}) \\
    &\leq (1 + L_f^p)^{1/p} \cdot L_f^2 \cdot \mathcal{W}_p(\gamma_{t-3}, \widehat\gamma_{t-3}) \\
    &\leq \ldots \\
    &\leq (1 + L_f^p)^{1/p} \cdot L_f^{B+t-1} \cdot \mathcal{W}_p(\gamma_{-B}, \widehat\gamma_{-B})
\enspace.
\end{align}

The proof of Eq.~\eqref{eq:geometric_wasserstein_backward} is identical.
\end{proof}
\section{Additional Model Details}
\label{supp-sec:models}
\subsection{Gaussian HMM}
\label{supp-sec:hmm}
See Sections~\ref{sec:models:HMM} 
for notation.

\subsubsection{Forward Backward}
The forward and backward recursions (Eqs.~\eqref{eq:forward_message} and \eqref{eq:backward_message}) for an HMM are
\begin{align}
\label{eq:hmm_forward_message}
\alpha_t & := p(z_t, y_{\leq t}) = \alpha_{t-1} \cdot \Pi \cdot P_t \\
\label{eq:hmm_backward_message}
\beta_t & := p(y_{>t} \, | \, z_t) = \Pi \cdot P_{t+1} \beta_{t+1} \enspace,
\end{align}
where $\alpha_{-T} = \mathbf{1}/K$, $\beta_T = \mathbf{1}$,
and
\begin{equation}
P_t := \diag\{\mathcal{N}(y_t \, | \, \mu_k, \Sigma_k)\}_{k=1}^K
\enspace.
\end{equation}
Given the messages $\alpha_t, \beta_t$,
the marginal and pairwise posteriors of the latent states are computed as
\begin{align}
\gamma_t(z_t) & := p(z_t \, | \, y) \propto \alpha_t \odot \beta_t \\
\gamma_{t:t-1}(z_{t-1}, z_t) & := p(z_{t-1}, z_t \, | \, y) \propto
\diag(\alpha_{t-1}) \cdot \Pi \cdot P_t \cdot \diag(\beta_t) \enspace.
\label{eq:hmm_pairwise_marginal}
\end{align}

\subsubsection{Gradient Estimator}
As stated in Sec.~\ref{sec:models:HMM},
we use the `expanded mean' parameters of $\Pi$ instead of $\Pi$ (as in~\cite{patterson2013stochastic})
and the Cholesky decomposition of $\Sigma^{-1}_k$ instead of $\Sigma_k$ to ensure positive definiteness.
The expanded mean parametrization is $\phi \in \R^{K \times K}_+$ where $\Pi_{k,\cdot} = \phi_{k,\cdot} / \sum_{k'} \phi_{k, k'}$.
The Cholesky decomposition of the precision $\Sigma^{-1}_k$ is $\psi_{\Sigma_k}$ such that $\psi_{\Sigma_k} \psi_{\Sigma_k}^T = \Sigma_k^{-1}$.

The gradient of the marginal loglikelihood takes the form
\begin{align}
\label{eq:gaussian_hmm_marginal_loglike_start}
\grad_{\phi_k} \log p(y \, | \, \theta) &= \sum_{t \in \mathcal{T}} \E_{z_t, z_{t-1} | y}[\mathbb{I}(z_{t-1}=k) \cdot \phi_k^{-1} \odot (\vec{e}_{z_t} - \Pi_{k})]
\\
\grad_\mu \log p(y | \theta) &=
\sum_{t=1}^T \E_{z_t | y} \left[ \Sigma_{z_t}^{-1}(y_t - \mu_{z_t}) \right]
\\
\grad_{\psi_\Sigma} \log p(y \, | \, \theta)&= \sum_{t \in \mathcal{T}} \E_{z_t | y}\left[
\left(\Sigma_{z_t} - (y_t - \mu_{z_t})(y_t - \mu_{z_t})^T\right) {\psi_\Sigma}_{z_t}
\right] \enspace.
\label{eq:gaussian_hmm_marginal_loglike_end}
\end{align}
As $z$ is discrete and these expectations only involve pairwise elements of $z$, they can be tractably computed as weighted average using $\gamma(z_{t}, z_{t-1})$ from forward backward.

\subsubsection{Preconditioning}
\label{supp-sec:hmm_precond}
For the Gaussian HMM, the complete-data Fisher information matrix is block diagonal.
With some algebra,
the Fisher information matrix, precondition matrices, and correction term are
\begin{align}
\label{eq:gaussian_hmm_precondition_start}
\mathcal{I}_{\phi_k} &= (\diag(\Pi_k) - 11^T) \cdot (1^T\phi_k)^{-2} \, \Rightarrow \, D(\theta)_{\phi_k} = \diag(\phi_k) \enspace \text{and} \enspace \Gamma(\theta)_{\phi_k} = 1
\\
\mathcal{I}_{\mu_k} &= \Sigma_k^{-1}
\, \Rightarrow \, D(\theta)_{\mu_k} = \Sigma_k \enspace \text{and} \enspace \Gamma(\theta)_{\mu_k} = 0
\\
\mathcal{I}_{\psi_{\Sigma_k}} &= 2 (I_m \otimes \Sigma_k) \, \Rightarrow \, D(\theta)_{\psi_{\Sigma_k}} = \frac{1}{2} (I_m \otimes \Sigma_k^{-1}) \enspace \text{and} \enspace \Gamma(\theta)_{\psi_{\Sigma_k}} = \psi_{\Sigma_k}
\label{eq:gaussian_hmm_precondition_end}
\end{align}
For $\phi_k$, we use $D(\theta)_{\phi_k} = \diag(\phi_k)$ and $\Gamma(\theta)_{\phi_k} = 1$, following past work~\cite{patterson2013stochastic, ma2017stochastic}.
However, we observed that $\phi_k$ will be absorbed at $0$, whenever $\phi_k$ approaches to closely to $0$.
To fix this we recommend adding a small identity matrix $\nu_\phi I_K$ (for some $\nu_\phi > 0$) to $D(\theta)_\phi$.
An alternative solution is to use a stochastic Cox-Ingersoll-Ross process to sample $\pi$ instead~\cite{baker2018large}.

\subsection{ARHMM}
\label{supp-sec:arhmm}
See Section~\ref{sec:models:ARHMM} for notation.
\subsubsection{Forward Backward}
The forward backward recursions for the ARHMM are identical to the Gaussian HMM Eqs.~\eqref{eq:hmm_forward_message}-\eqref{eq:hmm_pairwise_marginal}, where $P_t$ is now
\begin{equation}
P_t := \diag\{\mathcal{N}(y_t \, | \, A_k \overline{y_t}, Q_k)\}_{k=1}^K
\enspace.
\end{equation}

\subsubsection{Gradient Estimator}
The gradient of the marginal loglikelihood is similar to the Gaussian HMM Eqs.~\eqref{eq:gaussian_hmm_marginal_loglike_start}-\eqref{eq:gaussian_hmm_marginal_loglike_end}
with $\mu_k$ replaced with $A_k \overline{y}_t$
\begin{align}
\label{eq:arhmm_marginal_loglike_start}
\grad_{\phi_k} \log p(y \, | \, \theta) &= \sum_{t \in \mathcal{T}} \E_{z_t, z_{t-1} | y}[\mathbb{I}(z_{t-1}=k) \cdot \phi_k^{-1} \odot (\vec{e}_{z_t} - \Pi_{k})]
\\
\grad_A \log p(y | \theta) &=
\sum_{t=1}^T \E_{z_t | y} \left[ Q_{z_t}^{-1}(y_t - A_{z_t} \overline{y}_t) \overline{y}_t^T \right]
\\
\grad_{\psi_Q} \log p(y \, | \, \theta)&= \sum_{t \in \mathcal{T}} \E_{z_t | y}\left[
\left(Q_{z_t} - (y_t - A_{z_t} \overline{y}_t)(y_t - A_{z_t} \overline{y}_t)^T\right) {\psi_Q}_{z_t}
\right] \enspace.
\label{eq:arhmm_marginal_loglike_end}
\end{align}

\subsubsection{Preconditioning}
The preconditioning terms for the ARHMM is similar to the Gaussian HMM
\begin{align}
\label{eq:arhmm_precondition_start}
\mathcal{I}_{\phi_k} &= (\diag(\Pi_k) - 11^T) \cdot (1^T\phi_k)^{-2} \, \Rightarrow \, D(\theta)_{\phi_k} = \diag(\phi_k) \enspace \text{and} \enspace \Gamma(\theta)_{\phi_k} = 1
\\
\mathcal{I}_{A_k} &= \E_{y, z |\theta}[\overline{y}_t \overline{y}_t^{T}] \otimes Q_k^{-1}
\, \Rightarrow \, D(\theta)_{A_k} = I_m \otimes Q_k \enspace \text{and} \enspace \Gamma(\theta)_{A_k} = 0
\\
\mathcal{I}_{\psi_{Q_k}} &= 2 (I_m \otimes Q_k) \, \Rightarrow \, D(\theta)_{\psi_{Q_k}} = \frac{1}{2} (I_m \otimes Q_k^{-1}) \enspace \text{and} \enspace \Gamma(\theta)_{\psi_{Q_k}} = \psi_{Q_k}
\label{eq:arhmm_precondition_end}
\end{align}
The expectation $\E[\overline{y}_t \overline{y}_t^T]$ does not have a closed form as the expectation is over $z$ is a combinatorial sum.
Therefore, we choose to replace $\E[\overline{y}_t \overline{y}_t^T]$ with the identity matrix $I_m$ in our preconditioning matrix $D(\theta)_A$.

\subsection{LGSSM}
\label{supp-sec:LGSSM}
See Section~\ref{sec:models:LGSSM} for notation.

\subsubsection{Forward Backward}
The recursions for the forward backward algorithm for LGSSMs is known as the Kalman smoother~\cite{cappe2005inference, bishop2006pattern, fox2009bayesian}.
Because the transition and emission processes are linear Gaussian, all forward messages, backward messages, and pairwise latent marginals $\gamma(x_t, x_{t-1})$ are Gaussian.

\begin{align}
\label{eq:lds_kalman_forward}
\alpha_t &:= p(x_t, y_{\leq t}) = \mathcal{N}(x_t \, | \, \mu_{\alpha_t} = \Lambda_{\alpha_t}^{-1} h_{\alpha_t}, \Sigma_{\alpha_t} = \Lambda_{\alpha_t}^{-1}) \\
\label{eq:lds_kalman_backward}
\beta_t &:= p(y_{>t} \, | \, x_t) \propto \mathcal{N}(x_t \, | \, \mu_{\beta_t} = \Lambda_{\beta_t}^{-1} h_{\beta_t}, \Sigma_{\beta_t} = \Lambda_{\beta_t}^{-1}) \enspace,
\end{align}
where $h_{\alpha_t}, \Lambda_{\alpha_t}$ are the Gaussian natural parameters of $\alpha$ that satisfy the recursion 
\begin{align}
\label{eq:lds_forward_lambda}
\Lambda_{\alpha_t} &= C^T R^{-1} C + (Q + A\Lambda_{\alpha_{t-1}}^{-1} A^T)^{-1} \\
\label{eq:lds_forward_h}
h_{\alpha_t} &= C^T R^{-1} y_{t} + (Q + A\Lambda_{\alpha_{t-1}}^{-1}A^T)^{-1} A \Lambda_{\alpha_{t-1}}^{-1} h_{\alpha_{t-1}} \enspace,
\end{align}
and $h_{\beta_t}, \Lambda_{\beta_t}$ are the Gaussian natural parameters of $\beta$ that satisfy the recursion
\begin{align}
\label{eq:lds_backward_lambda}
\Lambda_{\beta_t} &= A^TQ^{-1} A - A^T Q^{-1}(Q^{1} + C^T R^{-1} C + \Lambda_{\beta_{t+1}})^{-1} Q^{-1} A \\
\label{eq:lds_backward_h}
h_{\beta_t} &= A^T Q^{-1}(Q^{-1} + C^T R^{-1} C + \Lambda_{\beta_{t+1}})^{-1}(C^T R^{-1}y_{t+1} + h_{\beta_{t+1}}) \enspace.
\end{align}

Given the messages $\alpha_t, \beta_t$ the marginal and pairwise posteriors of the latent states $x_t$ and $(x_{t-1}, x_t)$ are computed as
\begin{align}
    \gamma_t(x_t) := p(x_t \,|\, y) 
        &\propto \alpha_t(x_t) \beta_t(x_t) \\
        \nonumber
        &\propto \mathcal{N}(x_t \, | \, \mu = \Sigma (h_{\alpha_t} + h_{\beta_t}), \Sigma = (\Lambda_{\alpha_t} + \Lambda_{\beta_t})^{-1}) 
\end{align}
\begin{align}
\label{eq:lds_pairwise_marginal}
\gamma_{t-1, t}(x_{t-1}, x_t) := &p(x_{t-1}, x_t \,|\, y) 
    \propto \alpha_{t-1}(x_{t-1} p(y_t, x_t \, | \, x_{t-1}) \beta_{t}(x_t) \\
    \nonumber
    \propto\mathcal{N}\Big(
\begin{bmatrix} x_{t-1}  \\ x_t \end{bmatrix} \, \Big| \,
&\mu = \Sigma \cdot \begin{bmatrix} h_{\alpha_{t-1}}  \\ C^T R_{-1} y_t + h_{\beta_t}\end{bmatrix}, \\
\nonumber
&\Sigma = \begin{bmatrix} \Lambda_{\alpha_{t-1}} + A^T Q^{-1} A &  A^T Q^{-1} \\
               Q^{-1} A  &  C^TR^{-1} C + Q^{-1} + \Lambda_{\beta_t} 
\end{bmatrix}^{-1} \Big) \enspace.
\end{align}

\subsubsection{Gradient Estimator}
We compute the gradient of marginal loglikelihood via Fisher's identity
\begin{align}
\label{eq:lds_marginal_loglike_start}
\grad_A \log p(y | \theta) &=
\sum_{t=1}^T \E_{x | y} \left[ Q^{-1}(x_t - A x_{t-1}) x_{t-1}^T \right]
\\
\grad_{\psi_Q} \log p(y | \theta) &=
\sum_{t=1}^T \E_{x | y} \left[(Q - (x_t - A x_{t-1})(x_t - A x_{t-1})^T) \psi_Q \right]
\\
\grad_C \log p(y | \theta) &=
\sum_{t=1}^T \E_{x | y} \left[ R^{-1}(y_t - C x_t) x_t^T \right]
\\
\grad_{\psi_R} \log p(y | \theta) &=
\sum_{t=1}^T \E_{x | y} \left[(R - (y_t - C x_t)(y_t - C x_t)^T) \psi_R \right]
\label{eq:lds_marginal_loglike_end}
\end{align}
Because each gradient is linear with respect to first and second order terms (e.g. $x_t$, $x_t x_t^T$ and $x_t x_{t-1}^T$),
their expectation of each of these terms is easily computable given $\gamma(x_t, x_{t-1})$.

Let $\gamma_{t,t-1}(x_t, x_{t-1})$ be the Gaussian pairwise marginal posterior from forward backward (see Eq.~\eqref{eq:lds_pairwise_marginal})
\begin{equation}
\gamma_{t-1, t}(x_{t-1}, x_t) = \mathcal{N}\left(
\begin{bmatrix} x_{t-1}  \\ x_t \end{bmatrix} \, \big| \,
\mu = \begin{bmatrix} \mu_{t-1}  \\ \mu_t \end{bmatrix},
\Sigma = \begin{bmatrix} \Sigma_{t-1,t-1} &  \Sigma_{t-1,t} \\
                \Sigma_{t,t-1} &  \Sigma_{t,t}
\end{bmatrix} \right) \enspace.
\end{equation}
Let $M = \Sigma + \mu\mu^T$ be the second moment of $\gamma_{t-1,t}$,
that is $M_{t,t'} := \E[x_t, x_{t'}^T]$.

Then the expectations in the summations of Eqs.~\eqref{eq:lds_marginal_loglike_start}-\eqref{eq:lds_marginal_loglike_end} are
\begin{align}
\E_{x|y} \left[ (x_t - A x_{t-1}) x_{t-1}^T \right] &= M_{t,t-1} - A M_{t-1,t-1}  \\
\E_{x|y} \left[ (x_t - A x_{t-1})(x_t - A x_{t-1})^T \right] &= M_{t,t} - A M_{t-1,t} - M_{t, t-1}A^T + A M_{t-1, t-1} A^T \\
\E_{x|y} \left[ (y_t - C x_t) x_t^T \right] &= y_t \mu_t^T - C M_{t,t} \\
\E_{x | y} \left[ (y_t - C x_t)(y_t - C x_t)^T \right] &= y_t y_t^T - C \mu_t y_t^T - y_t \mu_t^T C^T + C M_{t,t} C^T \enspace.
\end{align}

\subsubsection{Preconditioning}
For the LGSSM, the complete data Fisher information matrix is block diagonal.
With some algebra,
the Fisher information matrix, precondition matrices, and correction term are
\begin{align}
\label{eq:lds_precondition_start}
\mathcal{I}_A = \E[x_t x_t]^T  \otimes Q^{-1} \enspace &\Rightarrow \enspace D_A = I_n \otimes Q \enspace \text{and} \enspace \Gamma(\theta)_{A} = 0
\\
\mathcal{I}_{\psi_Q} = 2 (I_n \otimes Q)  \enspace &\Rightarrow \enspace D_{\psi_Q} = \frac{1}{2} (I_n \otimes Q^{-1}) \enspace \text{and} \enspace \Gamma(\theta)_{\psi_Q} = \psi_Q
\\
\mathcal{I}_C = \E[x_t x_t]^T \otimes R^{-1} \enspace &\Rightarrow \enspace D_C = I_n \otimes R \enspace \text{and} \enspace \Gamma(\theta)_{C} = 0
\\
\mathcal{I}_{\psi_R} = 2 (I_m \otimes R)  \enspace &\Rightarrow \enspace D_{\psi_R} = \frac{1}{2} (I_m \otimes R^{-1}) \enspace \text{and} \enspace \Gamma(\theta)_{\psi_R} = \psi_R
\enspace,
\label{eq:lds_precondition_end}
\end{align}
where $\E[x_t x_t]^T = \sum_{s = 0}^\infty A^s Q (A^s)^T$ for the LGSSM,
In our experiments we chose to replace $\E[x_t x_t]^T$ with the identity matrix $I_n$ to match the ARHMM setup.

\subsubsection{Proof of Lemmas in Section~\ref{sec:models:LGSSM-bounds}}
\label{supp-sec:LGSSM:lemma_proofs}
We now provide proofs to the Lemmas in section~\ref{sec:models:LGSSM-bounds}.
Note that these bound hold pointwise for $\theta$ for all random maps conditioned on any observed sequence $Y_{1:T}$.

We first present a proof of Lemma~\ref{lemma:LGSSM_forward_maps} that shows the forward random maps $f_t$ are contractions if $\| A \| < 1$.

\begin{proof}[Proof of Lemma~\ref{lemma:LGSSM_forward_maps}]
For an LGSSM, the forward smoothing kernel $\mathcal{F}_t$ takes the form
\begin{equation}
    \mathcal{F}_t(x_{t+1}, x_{t} | y) \propto \underbrace{\beta_{t+1}(x_{t+1})p(y_{t+1}\, | \, x_{t+1})}_{p(y_{>t}\,|\, x_{t+1})} p(x_{t+1} \, | \, x_{t})
\end{equation}
where $\beta_{t+1}$ is the backward message at time $t+1$ given by Eq.~\eqref{eq:lds_kalman_backward}.
The recursive formula for $\beta_t$ can be extended to
$p(y_{>t} \, | \, x_{t+1}) \propto \mathcal{N}(x_{t+1} \, | \, \Lambda_{t+1}^{-1}h_{t+1}, \Lambda_{t+1}^{-1})$ with
\begin{align}
\label{eq:lds_backward_extended_lambda}
\Lambda_t &= C^T R^{-1} C + A^T (Q + \Lambda_{t+1}^{-1})^{-1} A \\
h_{t} &= C^T R^{-1} y_{t} + A^T (Q + \Lambda_{t+1}^{-1})^{-1} \Lambda_{t+1} h_{t+1} \enspace.
\end{align}

With this parametrization, the forward smoothing kernel takes the form
\begin{equation}
\mathcal{F}_t(x_{t+1} | x_{t} ) = \mathcal{N}(x_{t+1} \, | \, (Q^{-1} + \Lambda_{t+1})^{-1} (Q^{-1}A x_t + h_{t+1}), (Q^{-1} + \Lambda_{t+1})^{-1}) \enspace.
\end{equation}
Therefore our random map $f_t$ is
\begin{equation}
f_t(x_t) = \underbrace{(Q^{-1} + \Lambda_{t+1})^{-1} Q^{-1}A}_{F^f_t} x_t + \underbrace{(Q^{-1} + \Lambda_{t+1})^{-1} h_{t+1} +  (Q^{-1} + \Lambda_{t+1})^{-1/2} \nu_t}_{\zeta^f_t} \enspace,
\end{equation}
where $\nu_t \sim \mathcal{N}(0, I)$ makes $f_t$ a random map.

The Lipschitz constant for $f_t$ with respect to $x_t$ is $\|F_t^f\| = \| (Q^{-1} + \Lambda_{t+1})^{-1} Q^{-1} A \|$.
From Eq.~\eqref{eq:lds_backward_extended_lambda}, $\Lambda_\theta = C^T R^{-1} C$ is a lower bound on $\Lambda_{t}$ and is tight when $\Lambda_{t+1} = 0$ (at the very beginning of the recursion).

Therefore we have a uniform bound on the Lipschitz constants of $f_t$ :
\begin{equation}
\|F_t^f\| = \| (Q^{-1} + \Lambda_{t+1})^{-1} Q^{-1} A \| < \| (Q^{-1} + \Lambda_\theta)^{-1} Q^{-1} A \| = L_f \enspace.
\end{equation}
\end{proof}

We now present a proof of Lemma~\ref{lemma:LGSSM_backward_maps} that similarly shows the backward random maps $b_t$ are contractions.
We first prove the bound for general prior $p_0(x)$ and then present the special case when the prior variance is less than the steady state variance $V_\infty = (Q + A V_\infty A^T) = \sum_{k=0}^\infty A^k Q (A^T)^k$ and $A$ and $Q$ commute.

\begin{proof}[Proof of Lemma~\ref{lemma:LGSSM_backward_maps}]
The backward smoothing kernel $\mathcal{B}_t$ takes the form
\begin{equation*}
    \mathcal{B}_t(x_{t-1}, x_{t} | y) \propto p(x_t \, | \, x_{t-1}) \alpha_{t-1}(x_{t-1})
\end{equation*}
where $\alpha_{t-1}$ is the forward message at time $t-1$.
Recall from Eq.~\eqref{eq:lds_kalman_forward} the forward messages are
$\alpha_t(x_{t}) \propto \mathcal{N}(x_{t} \, | \, \Lambda_{\alpha_t}^{-1}h_{\alpha_t}, \Lambda_{\alpha_t}^{-1})$.
With this parametrization, the backward smoothing kernel takes the form
\begin{equation*}
\mathcal{B}_t(x_{t-1}, x_{t} | y) = \mathcal{N}(x_{t-1} \, | \, (A^T Q^{-1}A + \Lambda_{\alpha_{t-1}})^{-1} (A^T Q^{-1} x_t + h_{\alpha_{t-1}}), (A^T Q^{-1} A + \Lambda_{\alpha_{t-1}})^{-1}).
\end{equation*}
Our backward random map $b_t$ is thus
\begin{equation*}
b_t(x_t) = (A^T Q^{-1} A + \Lambda_{\alpha_{t-1}})^{-1}(A^T Q^{-1} x_t + h_{\alpha_{t-1}}) +  (A^T Q^{-1}A + \Lambda_{\alpha_{t-1}})^{-1/2} \nu_t \enspace,
\end{equation*}
where $\nu_t \sim \mathcal{N}(0, I)$ with 
\begin{align}
F_t^b &= (A^T Q^{-1} A + \Lambda_{\alpha_{t-1}})^{-1}A^T Q^{-1} \\  
\zeta^b_t &= (A^T Q^{-1} A + \Lambda_{\alpha_{t-1}})^{-1}h_{\alpha_{t-1}} + (A^T Q^{-1}A + \Lambda_{\alpha_{t-1}})^{-1/2} \nu_t \enspace.
\end{align}

The Lipschitz constant for $b_t$ with respect to $x_t$ is
\begin{equation*}
\|F_t^b\| = \| (A^T Q^{-1}A + \Lambda_{\alpha_{t-1}})^{-1} A^TQ^{-1} \| \enspace.
\end{equation*}
From Eq.~\eqref{eq:lds_forward_lambda},
$\Lambda_\theta = C^T R^{-1} C$ is a lower bound on $\Lambda_{\alpha_t}$ and is tight when $\Lambda_{\alpha_{t-1}} = 0$ (at the very beginning of the recursion).
Therefore we have a uniform bound on the Lipschitz constants of $b_t$ for Lemma~\ref{lemma:geometric_wasserstein}:
\begin{equation}
\label{eq:lds_backward_lipbound}
\|F_t^b\| = \| (A^T Q^{-1}A + \Lambda_{\alpha_{t-1}})^{-1} A^TQ^{-1} \| \leq \| (A^T Q^{-1}A + \Lambda_{\theta})^{-1} A^TQ^{-1} \| = L_b \enspace,
\end{equation}
where 
\begin{equation*}
L_b =\|(A^T Q^{-1} A + C^T R^{-1} C\|^{-1}_2 \| A \|_2 \| Q \|_2 = \|A (Q A^T Q^{-1} A + Q C^T R^{-1} C)^{-1} \|_2 \enspace.
\end{equation*}

If the prior variance is less than the steady state variance $V_\infty$,
then $\Lambda_\theta = C^T R^{-1} C + V_\infty^{-1}$ is a larger lower bound on $\Lambda_{\alpha_t}$ as by induction $\Lambda_{\alpha_1} = C^T R^{-1} C + V_\infty^{-1} = \Lambda_\theta$ and from Eq.~\eqref{eq:lds_forward_lambda}
\begin{equation*}
\Lambda_{\alpha_t}^{-1} \leq V_\infty  \, \Rightarrow \, \Lambda_{\alpha_{t+1}} = C^T R^{-1} C + (Q + A \Lambda^{-1}_{\alpha_t} A^T)^{-1} \geq C^T R^{-1} C + (Q + A V_\infty A^T)^{-1} = \Lambda_\theta \enspace.
\end{equation*}

If $A$ and $Q$ commute, then $V_\infty = (Q^{-1} - A^T Q^{-1} A)^{-1}$ as
\begin{equation*}
\underbrace{(Q^{-1} - A^T Q^{-1} A)^{-1}}_{V_\infty} = Q + Q A^T(Q - A Q A^T)^{-1} A Q = \underbrace{Q + A(Q^{-1} - A^T Q^{-1}A)^{-1} A^T}_{Q + A V_\infty A^T} \enspace.
\end{equation*}

Therefore plugging lower bound for $\Lambda_{\alpha_t} \leq C^T R^{-1} C + Q^{-1} - A^T Q^{-1} A$ into Eq.~\eqref{eq:lds_backward_lipbound} gives
we obtain
\begin{equation*}
L_b =\|(Q^{-1} + C^T R^{-1} C\|^{-1}_2 \| A \|_2 \| Q \|_2 = \|A (I_n + Q C^T R^{-1} C)^{-1} \|_2 \enspace.
\end{equation*}
\end{proof}

Finally, we prove Lemma~\ref{lemma:LGSSM_Lipschitz} which bounds the Lipschitz constant for the complete data loglikelihood terms.
\begin{proof}[Proof of Lemma~\ref{lemma:LGSSM_Lipschitz}]
The gradient of the complete data loglikelihood for $\theta = (A, Q, C, R)$ is
\begin{align}
\label{eq:lgssm_complete_grad_begin}
\grad_A \log p(y, x_t \, |\, x_{t-1}, \theta) &=
 Q^{-1}(x_t - A x_{t-1}) x_{t-1}^T
\\
\grad_{\psi_Q} \log p(y, x_t \, |\, x_{t-1}, \theta) &=
(Q - (x_t - A x_{t-1})(x_t - A x_{t-1})^T)\psi_Q
\\
\grad_C \log p(y, x_t \, |\, x_{t-1}, \theta) &=
R^{-1}(y_t - C x_t) x_t^T
\\
\grad_{\psi_R} \log p(y, x_t \, |\, x_{t-1}, \theta) &=
(R - (y_t - C x_t)(y_t - C y_t)^T)\psi_R
\enspace.
\label{eq:lgssm_complete_grad_end}
\end{align}
From Eqs.~\eqref{eq:lgssm_complete_grad_begin}-\eqref{eq:lgssm_complete_grad_end} it is clear that the complete data loglikelihood are quadratic form in $xx^T$ with matrices given by $\Omega$ in Lemma~\ref{lemma:LGSSM_Lipschitz}.
\end{proof}

\subsection{SLDS}
See Section~\ref{sec:models:SLDS} for notation.
As the SLDS does not have a closed form forward-backward algorithm, we instead present the details for the blocked Gibbs sampling scheme (conditional distributions and Initialization) used in Algorithm~\ref{alg:noisygradient-slds}.
\subsubsection{Blocked Gibbs Conditional Distributions}
The conditional posterior distribution of $x$ given $y$ and $z$ follows a time-varying LGSSM.
To sample $x$, we can use the time-varying Kalman filter~\cite{hamilton1994time}.
We first calculate the forward messages $\alpha_t(x_t)$ using the Kalman filter recursion Eq.~\eqref{eq:lds_kalman_forward} with $A_t = A_{z_t}$, $C_t = C$, $Q_t = Q_{z_t}$, and $R_t = R$.
Given $\alpha_t(x_t) \propto \mathcal{N}(x_t \, | \, \mu_{\alpha_t}, \Sigma_{\alpha_t})$,
we sample $x$ using the backward sampler (starting from $t=T$ and descending)
\begin{equation}
\label{eq:slds_x_conditional_gibbs}
x_t \, |\, x_{t-1} \sim \begin{cases}
\mathcal{N}\left( x_T \, | \,
    \mu = \mu_{\alpha_T} \, , \,
    \Sigma = \Sigma_{\alpha_T}\right)
    \text{ if } t = T \enspace, \text{ otherwise }\\
\mathcal{N}\left( x_t \, \Big| \,
    \mu = \Sigma (\Sigma_{\alpha_t}^{-1} \mu_{\alpha_t} + A_{z_{t+1}}^T Q^{-1}_{z_{t+1}} x_{t+1}) \, , \,
    \Sigma = (\Sigma_{\alpha_t}^{-1} + A_{z_{t+1}}^T Q^{-1}_{z_{t+1}} A_{z_{t+1}})^{-1}\right) \\
\end{cases}
\end{equation}

The conditional posterior distribution of $z$ given $y$ and $x$ follows the ARHMM.
To sample $z$, we apply a similar sampler for the ARHMM.
We first calculate the backward messages $\beta_t(z_t)$ using the ARHMM forward messages Eq.~\eqref{eq:hmm_backward_message}, replacing $y$ with $x$.
Given $\alpha_t(z_t)$, we then sample $z$ sequentially in ascending order
using the forward sampler
\begin{equation}
\label{eq:slds_z_conditional_gibbs}
p(z_t = k \, | \, z_{t-1}, x, y) \propto
p(x_t, y_t \, | \, x_{t-1}, z_t = k, \theta) \odot \Pi_{z_{t-1}, k} \odot \beta_t(k)\enspace.
\end{equation}

Finally, the conditional posterior distribution of $z_t$ given $y$ and $z_{\backslash t}$ can be calculated using the forward backward algorithm to marginalize $x$.
Specifically,
\begin{equation}
\label{eq:slds_zt_conditional_gibbs}
p(z_t = k \, | \, z_{\backslash t}, y) \propto \Pi_{z_{t-1}, k} \Pi_{k, z_t} \cdot \int \alpha_{t-1}(x_{t-1}) p(y_t, x_t \, | \, x_{t-1}, z_t = k) \beta_{t}(x_t) \, dx_t dx_{t-1} \enspace,
\end{equation}
where $\alpha_{t-1}, \beta_t$ are calculated using Eqs.~\eqref{eq:lds_kalman_forward}-\eqref{eq:lds_kalman_backward} with $A_{t'} = A_{z_{t'}}, Q_{t'} = Q_{z_{t'}}$ for all $t' \in \BUFFSUBSEQ \backslash \{t\}$.

Note that Eq~\eqref{eq:slds_zt_conditional_gibbs} requires $O(|\BUFFSUBSEQ|)$ time per time point $z_t$; therefore one pass over $z_\BUFFSUBSEQ$ requires $O(|\BUFFSUBSEQ|^2)$.

\subsubsection{Initialization of Blocked Gibbs Sampler}
To sample $z$ from the filtered process,
we recursively sample from the conditional distribution $z_t \, | \, y_t, z_{t-1}$
\begin{equation}
p(z_t = k \, | \, z_{t-1}, y_t) \propto \Pi_{z_{t-1}, k} \cdot \int \alpha_{t-1}(x_{t-1}) p(y_{t}, x_t \, | \, x_{t-1}, z_t = k) \, dx_t dx_{t-1} \enspace,
\end{equation}
where $\alpha_{t-1}$ is calculated using Eq.~\eqref{eq:lds_kalman_forward} with $A_{t'} = A_{z_{t'}}, Q_{t'} = Q_{z_{t'}}$ for all $t' < t$.
Because we do not condition on $y_{>t}$ when $z_t$ is sampled,
we emphasize that this distribution is not the posterior $z \, | \, y$ (it is the \emph{filtered} distribution, not the \emph{smoothed} distribution).
However, it provides a better initialization point than sampling $z$ from the prior.

Alternatively, when $\dim(x) = n \leq \dim(y) = m$, we can initialize $z^{(0)}$ by sampling $z \, | \, x', y, \theta$ using Eq.~\eqref{eq:slds_z_conditional_gibbs} with $x' = y$.

\subsubsection{Gradient Estimator}
For the SLDS, the gradients are similarly a combination of those for the ARHMM Eqs.~\eqref{eq:arhmm_marginal_loglike_start}-\eqref{eq:arhmm_marginal_loglike_end} and the LGSSM Eqs.~\eqref{eq:lds_marginal_loglike_end}-\eqref{eq:lds_marginal_loglike_end}.

\subsubsection{Preconditioning}
For the SLDS, the precondition matrices are similarly a combination of those for the ARHMM Eqs.~\eqref{eq:arhmm_precondition_start}-\eqref{eq:arhmm_precondition_end} and the LGSSM Eqs.~\eqref{eq:lds_precondition_start}-\eqref{eq:lds_precondition_end}.
\begin{align}
\label{eq:slds_precondition_start}
D(\theta)_{\phi_k} = \diag(\phi_k) \enspace &\text{and} \enspace \Gamma(\theta)_{\phi_k} = 1
\\
D(\theta)_{A_k} = I_m \otimes Q_k \enspace &\text{and} \enspace \Gamma(\theta)_{A_k} = 0
\\
D(\theta)_{\psi_{Q_k}} = \frac{1}{2} I_n \otimes Q_k^{-1} \enspace &\text{and} \enspace \Gamma(\theta)_{\psi_{Q_k}} = \psi_{Q_k}
\\
D(\theta)_C = I_n \otimes R \enspace &\text{and} \enspace \Gamma(\theta)_{Q} = 0 
\\
D(\theta)_{\psi_{R_k}} = \frac{1}{2} I_m \otimes R_k^{-1} \enspace &\text{and} \enspace \Gamma(\theta)_{\psi_{R_k}} = \psi_{R_k}
\label{eq:slds_precondition_end} \enspace.
\end{align}

\section{Additional Experiment Details}
\label{supp-sec:experiments}

\subsection{Experiment Hyperparameters}
\subsubsection{Priors}
In our experiments, we use the following (conjugate) priors for $\theta$.

For the discrete latent state sequence transition matrix $\Pi$,
we use a flat-Dirichlet prior
\begin{equation}
\Pr(\Pi_k) \propto \prod_{k'} \Pi_{k,k'}^{\alpha_{k,k'}-1} \enspace, \text{ where } \alpha_{k,k'} = 1 \enspace.
\end{equation}

For the continuous transition matrix $A$,
we use a matrix normal prior
\begin{equation}
\Pr(A) \propto \exp\left(-\tr\left[V^{-1} (A-M)^T U^{-1}(A-M)\right]/2\right) \enspace,
\end{equation}
with mean $M=0$, diagonal column covariance $V = 10^{2} \cdot I_n$, and row variance $U = Q$.

For the noise covariances $Q$ and $R$,
we use flat Wishart priors over $Q^{-1}$ and $R^{-1}$
\begin{equation}
\Pr(Q^{-1}) \propto |Q|^{(n+1-\nu)/2} e^{-\tr(\Psi Q^{-1})/2} \enspace, \enspace
\Pr(R^{-1}) \propto |R|^{(m+1-\nu)/2} e^{-\tr(\Psi R^{-1})/2} \enspace,
\end{equation}
where $\Psi = \nu \cdot I$ and $\nu = n+1$ or $m+1$.

\subsubsection{Sampling Subsequences}
\label{supp-sec:sample_subsequences}
In our experiments, we sample subsequences $\mathcal{S} = \{t_1, \ldots, t_{S}\} \subset \mathcal{T} = \{1, \ldots, T\}$ uniformly from all $T-S+1$ possible contiguous subsequences. That is, $\Pr(t_1 = t) = 1/(T-S+1)$ for $t \in \{1, \ldots, T-S+1\}$and $\Pr(t \in \mathcal{S})$ is given by
\begin{equation}
\label{eq:naive_partition}
\Pr(t \in \mathcal{S}) = \frac{\min \{t, T-t+1, S, T-S+1\}}{T-S+1} \enspace.
\end{equation}

An alternative method for sampling subsequences is to sample $\mathcal{S}$ from separate partitions of $\mathcal{T}$.
That is if $T/S = L$ is a whole number, then $\Pr(t_1 = t) = 1/L$ for $t \in \{1+kL \, | \, k = 0, 1, \ldots, L-1 \}$ and
\begin{equation}
\label{eq:strict_partition}
\Pr(t \in \mathcal{S}) = \frac{1}{L} = \frac{S}{T} \enspace.
\end{equation}

We found both methods work well in practice, but found empirically that the former has reduced variance in the stochastic gradient estimates $\hat{g}(\theta)$; therefore we use the former in our experiments.

\subsubsection{List of Hyperparameters}
\begin{itemize}
\setlength\itemsep{0.5em}
\item Synthetic Gaussian HMM $T = 10^4$
\begin{itemize}
\item Prior: $\Pi_k$ are Dirichlet, $\mu$ is Normal, and $Q^{-1}$ are Wishart. 
\item Initialization: using K-means on $y_t$
\item Stepsizes: 
\begin{center}
\begin{tabular}{ccc|ccc}
\multicolumn{3}{c|}{\underline{SGLD}} & \multicolumn{3}{c}{\underline{SGRLD}} \\
No-Buffer & Buffer & Full & No-Buffer & Buffer & Full \\
\hline
0.001 & 0.001& 0.1 & 0.001 & 0.001 & 1.0 \\
\end{tabular}
\end{center}

\end{itemize}

\item Synthetic Gaussian HMM $T = 10^6$
\begin{itemize}
\item Prior: $\Pi_k$ are Dirichlet, $\mu$ is Normal, and $Q^{-1}$ are Wishart. 
\item Initialization: using K-means on $y_t$
\item Stepsizes: 
\begin{center}
\begin{tabular}{ccc|ccc}
\multicolumn{3}{c|}{\underline{SGLD}} & \multicolumn{3}{c}{\underline{SGRLD}} \\
No-Buffer & Buffer & Full & No-Buffer & Buffer & Full \\
\hline
0.001 & 0.001& 0.1 & 0.001 & 0.01 & 0.1 \\
\end{tabular}
\end{center}
\end{itemize}

\item Ion Channel (Full) HMM
\begin{itemize}
\item Prior: $\Pi_k$ are Dirichlet, $\mu$ is Normal, and $Q^{-1}$ are Wishart. 
\item Initialization: using K-means on $y_t$
\item Stepsizes: 
\begin{center}
\begin{tabular}{cc|cc}
\multicolumn{2}{c|}{\underline{SGLD}} & \multicolumn{2}{c}{\underline{SGRLD}} \\
No-Buffer & Buffer & No-Buffer & Buffer \\
\hline
0.0001 & 0.0001 & 0.01 & 0.01 \\
\end{tabular}
\end{center}

\end{itemize}

\item Ion Channel (Subset) HMM
\begin{itemize}
\item Prior: $\Pi_k$ are Dirichlet, $\mu$ is Normal, and $Q^{-1}$ are Wishart. 
\item Initialization: using K-means on $y_t$
\item Stepsizes: 
\begin{center}
\begin{tabular}{cc|cc}
\multicolumn{2}{c|}{\underline{SGLD}} & \multicolumn{2}{c}{\underline{SGRLD}} \\
No-Buffer & Buffer & No-Buffer & Buffer \\
\hline
0.001 & 0.001 & 0.001 & 0.001 \\
\end{tabular}
\end{center}

\end{itemize}

\item Synthetic ARHMM $T = 10^4$
\begin{itemize}
\item Prior: $\Pi_k$ are Dirichlet, $A$ is matrix Normal, and $Q^{-1}$ are Wishart. 
\item Initialization: using K-means on $[y_t, y_{t-1}]$
\item Stepsizes: 
\begin{center}
\begin{tabular}{ccc|ccc}
\multicolumn{3}{c|}{\underline{SGLD}} & \multicolumn{3}{c}{\underline{SGRLD}} \\
No-Buffer & Buffer & Full & No-Buffer & Buffer & Full \\
\hline
0.0001 & 0.0001& 0.01 & 0.001 & 0.001 & 0.1 \\
\end{tabular}
\end{center}

\end{itemize}

\item Synthetic ARHMM $T = 10^6$
\begin{itemize}
\item Prior: $\Pi_k$ are Dirichlet, $A$ is matrix Normal, and $Q^{-1}$ are Wishart. 
\item Initialization: using K-means on $[y_t, y_{t-1}]$
\item Stepsizes: 
\begin{center}
\begin{tabular}{cccccc}
\multicolumn{3}{c|}{\underline{SGLD}} & \multicolumn{3}{c}{\underline{SGRLD}} \\
No-Buffer & Buffer & Full & No-Buffer & Buffer & Full \\
\hline
0.0001 & 0.0001& 0.1 & 0.0001 & 0.0001 & 0.1 \\
\end{tabular}
\end{center}
\end{itemize}

\item Canine Seizure ARHMM
\begin{itemize}
\item Prior: $\Pi_k$ are Dirichlet, $A$ is matrix Normal, and $Q^{-1}$ are Wishart. 
\item Initialization: using K-means on $[y_t, y_{t-1}]$
\item Stepsizes: SGLD $=0.01$, SGRLD $=0.1$.
\end{itemize}

\item Synthetic LGSSM $T = 10^4$
\begin{itemize}
\item Prior: $A$ is matrix Normal and $Q^{-1}, R^{-1}$ are Wishart. 
\item Initialization: From prior with $\nu = 4, \Psi=4\cdot I_2$ for the Wishart priors.
\item Stepsizes: 
\begin{center}
\begin{tabular}{ccc|ccc}
\multicolumn{3}{c|}{\underline{SGLD}} & \multicolumn{3}{c}{\underline{SGRLD}} \\
No-Buffer & Buffer & Full & No-Buffer & Buffer & Full \\
\hline
0.01 & 0.01& 0.1 & 0.01 & 0.01 & 0.1 \\
\end{tabular}
\end{center}
\end{itemize}

\item Synthetic LGSSM $T = 10^6$
\begin{itemize}
\item Prior: $A$ is matrix Normal and $Q^{-1}, R^{-1}$ are Wishart. 
\item Initialization: From prior with $\nu = 4, \Psi=4\cdot I_2$ for the Wishart priors.
\item Stepsizes: 
\begin{center}
\begin{tabular}{ccc|ccc}
\multicolumn{3}{c|}{\underline{SGLD}} & \multicolumn{3}{c}{\underline{SGRLD}} \\
No-Buffer & Buffer & Full & No-Buffer & Buffer & Full \\
\hline
0.01 & 0.01& 1.0 & 0.01 & 0.01 & 1.0 \\
\end{tabular}
\end{center}

\end{itemize}

\item Synthetic SLDS $T = 10^4$
\begin{itemize}
\item Prior: $\Pi_k$ is Dirichlet, $A_k$ is matrix Normal and $Q_k^{-1}, R^{-1}$ are Wishart. 
\item Initialization: $R$ from Wishart Prior, $\Pi, A, Q$ from $K$-means as in ARHMM.
\item Stepsizes: SGRLD X $=0.5$, SGRLD Z $=0.1$, SGRLD XZ $=0.1$.
\end{itemize}

\item Synthetic SLDS $T = 10^6$
\begin{itemize}
\item Prior: $\Pi_k$ is Dirichlet, $A_k$ is matrix Normal and $Q_k^{-1}, R^{-1}$ are Wishart. 
\item Initialization: $R$ from Wishart Prior, $\Pi, A, Q$ from $K$-means as in ARHMM.
\item Stepsizes: SGRLD X $=0.5$, SGRLD Z $=0.1$, SGRLD XZ $=0.1$.
\end{itemize}

\item Canine Seizure SLDS
\begin{itemize}
\item Prior: $\Pi_k$ is Dirichlet, $A_k$ is matrix Normal and $Q_k^{-1}, R^{-1}$ are Wishart. 
\item Initialization: $R$ from Wishart Prior, $\Pi, A, Q$ from $K$-means as in ARHMM.
\item Stepsizes: SGRLD $=0.1$, SGLD $=0.1$.
\end{itemize}

\item Daily Weather SLDS
\begin{itemize}
\item Prior: $\Pi_k$ is Dirichlet, $A_k$ is matrix Normal and $Q_k^{-1}, R^{-1}$ are Wishart. 
\item Initialization: $R$ from Wishart Prior, $\Pi, A, Q$ from $K$-means as in ARHMM.
\item Stepsizes: SGRLD $=0.1$, SGLD $=0.1$.
\end{itemize}

\item Hourly Weather SLDS
\begin{itemize}
\item Prior: $\Pi_k$ is Dirichlet, $A_k$ is matrix Normal and $Q_k^{-1}, R^{-1}$ are Wishart. 
\item Initialization: $R$ from Wishart Prior, $\Pi, A, Q$ from $K$-means as in ARHMM.
\item Stepsizes: SGRLD $=0.1$, SGLD $=0.01$.
\end{itemize}

\end{itemize}

\subsection{Additional Metric Details}
To assess the `mixing' rate of our MCMC samplers, we measure each sampled chain's kernel Stein divergence (KSD) to the posterior~\cite{liu2016kernelized,gorham2017measuring}.
Given a chain of sampled $\{\theta^{(i)}\}_{1}^N$ (after burnin and thinning),
let $q(\theta)$ be the empirical distribution of the samples, that is
\begin{equation}
q(\theta) = \frac{1}{N}\sum_{i = 1}^{N} \delta_{\theta = \theta^{(i)}} \enspace.
\end{equation}
Then the KSD between $q(\theta)$ and the posterior distribution $p(\theta)$ is
\begin{align}
KSD(q,p) &= \sum_{d = 1}^{\text{dim}(\theta)} \sqrt{\sum_{i,i' = 1}^n \frac{k_0^d(\theta_i, \theta_{i'})}{n^2}} \enspace, \text{ where } \enspace \\
k_0^d(\theta_i, \theta_{i'}) &= \grad_{\theta_d} \log p(\theta_i) k(\theta_i, \theta_{i'}) \grad_{\theta_d} \log p(\theta_{i'}) + \grad \log p(\theta_{i'}) \grad_x k(\theta_i, \theta_{i'}) \\
\nonumber
&\quad + \grad \log p(\theta_i) \grad_y k(\theta_i, \theta_{i'}) + \grad_x \grad_y k(\theta_i, \theta_{i'})
\end{align}
and $k(\cdot, \cdot)$ is a valid kernel function.
Following~\cite{gorham2017measuring}, we use the inverse multiquadratic kernel (IMQ) $k(x,y) = (1+\|x-y\|_2^2)^{-0.5}$ in our experiments.
As full gradient evaluations $\grad \log p(\theta)$ are computationally intractable for our long time series, we replace them with stochastic estimates based on Eq.~\eqref{eq:efficient_potential_estimate} using $S = 10^4$ and $B=100$ when $T > 10^4$.

To measure the recovery of discrete latent state variables $z_t$ when the true latent states are known (e.g. in synthetic experiments),
we use normalized mutual information (NMI).
NMI is an information theoretic measure of similarity between discrete assignments~\cite{vinh2010information}.
\begin{equation}
\text{NMI}(Z_i, Z_*) = \frac{I(Z_i, Z_*)}{\sqrt{H(Z_i)H(Z_*)}} \enspace, \text{ with } Z_i = (z^{(i)}_1, \ldots, z^{(i)}_T) \enspace,
\end{equation}
 where $I(X,Y)$ is mutual information and $H(X)$ is entropy.
NMI is maximized at 1 when the assignments are equal up to a permutation and minimized at 0 when the assignments share no information.
This serves as `clustering' or segmentation metric for measuring the coherence between our model's inferred latent states and the true latent states.

To measure the recovery of continuous latent state variables $x_t$ when the true latent states are known,
we use root mean-squared error (RMSE) $\text{RMSE}(x, x') = \sum_t \| x_t - x_t'\|_2$.

\subsection{Synthetic Gaussian HMM}
\label{supp-sec:experiments-synthHMM}
Following~\cite{foti2014stochastic,ma2017stochastic}, we generate data from a Gaussian HMM
with $K = 8$ latent states (see Figure~\ref{fig:gausshmm_grad_error} (left))
This \textit{reversed cycles} (RC) dataset strongly transitions between two cycles over three states, each in opposite directions.

\begin{figure}[!htb]
    \centering
    \begin{minipage}[t]{.25\textwidth}
    \centering
    \includegraphics[width=\textwidth]{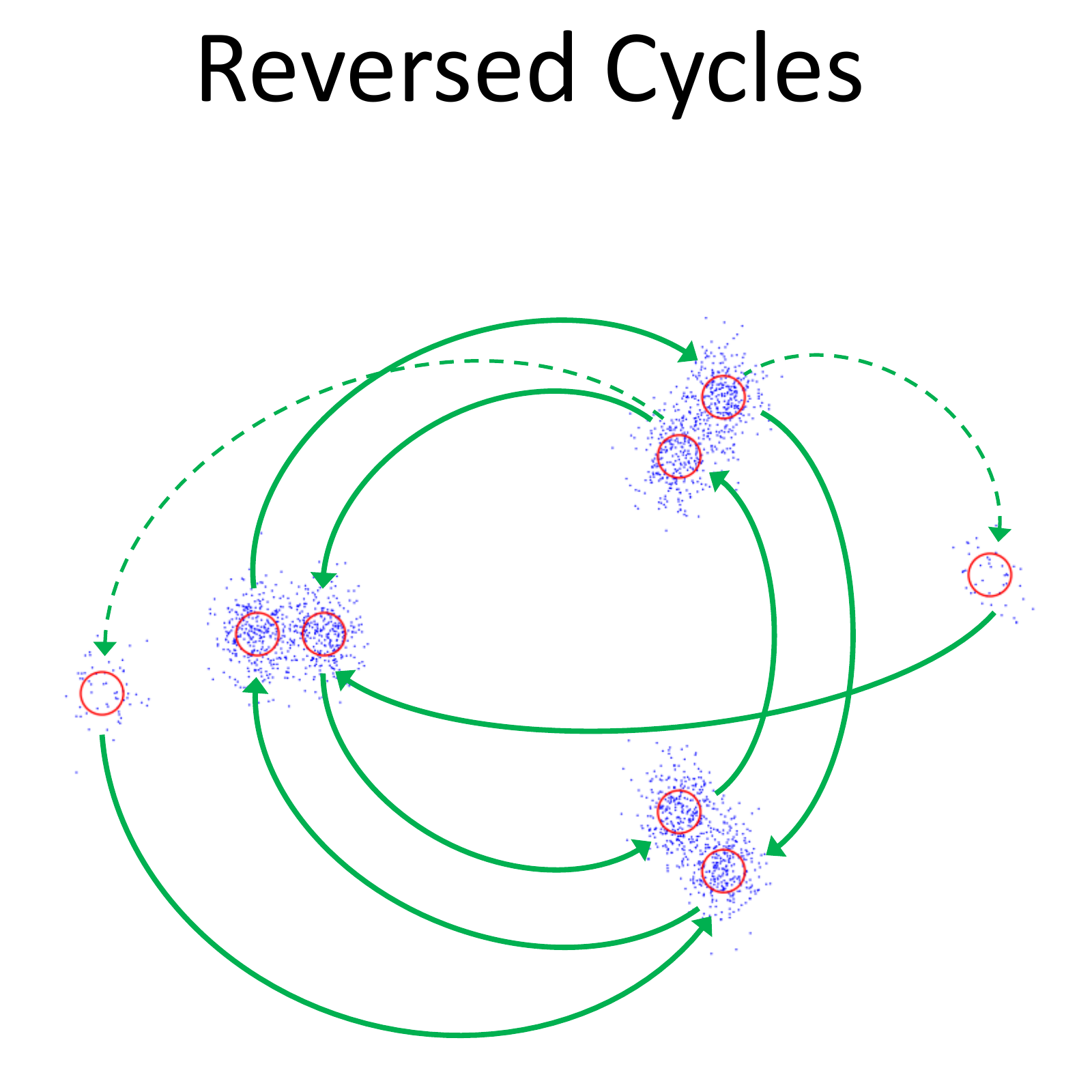}
    \end{minipage}
    \begin{minipage}[t]{.35\textwidth}
    \centering
        \includegraphics[width=\textwidth]{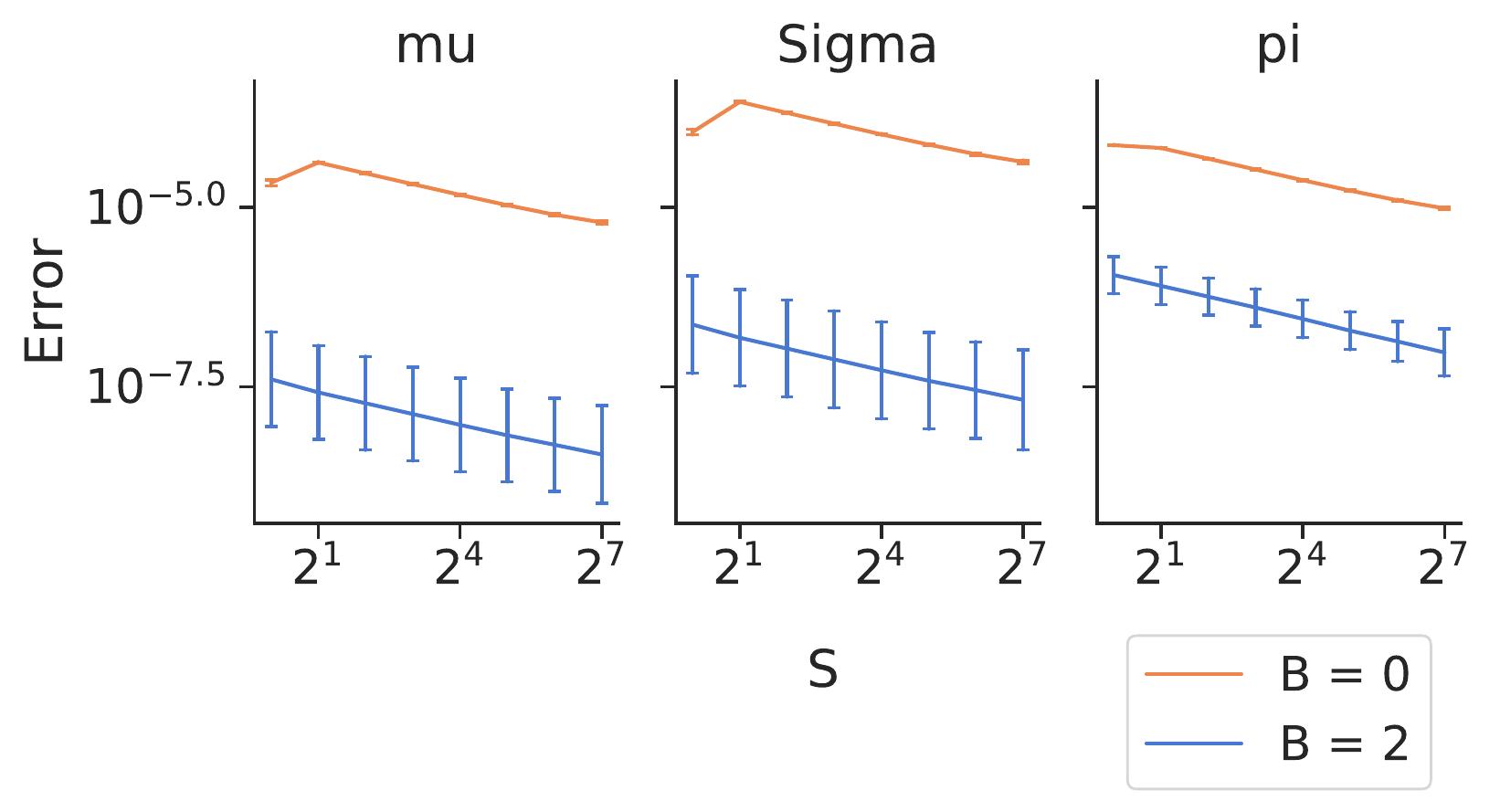}
    \end{minipage}
    \begin{minipage}[t]{.35\textwidth}
    \centering
        \includegraphics[width=\textwidth]{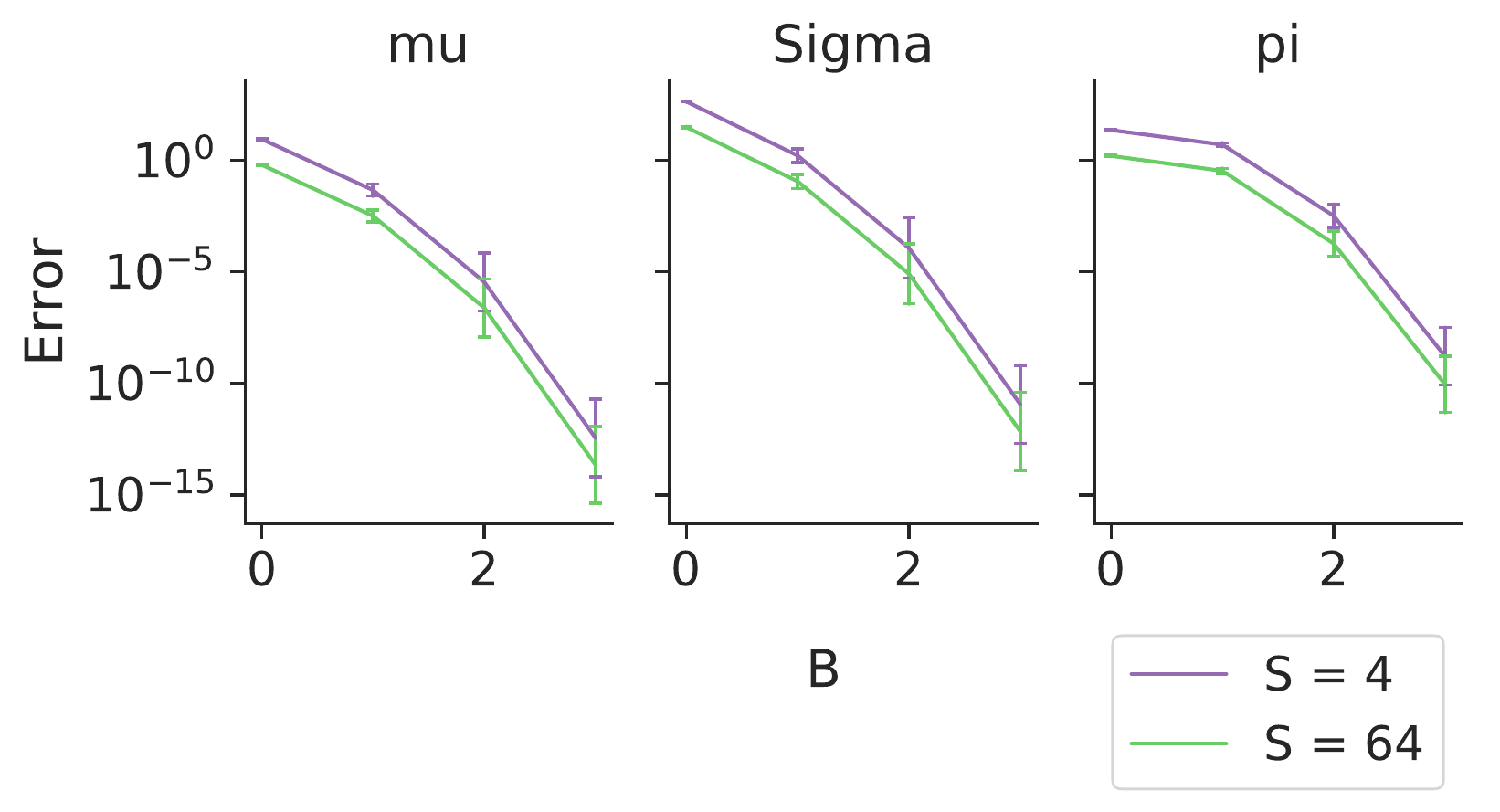}
    \end{minipage}
    \caption{(Left) Sample dataset; arrows indicate Markov transitions. Stochastic gradient error $\E_\SUBSEQ\|\bar{g}(\theta) - \tilde{g}(\theta)\|_2$: (center) varying subsequence size $S$ for no-buffer $B=0$ and buffer $B=5$, (right) varying buffer size $B$ for $S = 2$ and $S=50$. Error bars are SD over $100$ randomly generated datasets.
}
    \label{fig:gausshmm_grad_error}
\end{figure}
Figure~\ref{fig:gausshmm_grad_error} (right-pair) are plots of the stochastic gradient error $\E_\SUBSEQ\|\bar{g}(\theta) - \tilde{g}(\theta)\|_2$ between the unbiased and buffered estimates evaluated at the true model parameters $\theta=\theta^*$.
Similar to the ARPHMM and LGSSM, we see that the error decays $O(1/S)$ and that buffering deceases the error by orders of magnitude in Figure~\ref{fig:gausshmm_grad_error} (center).
In Figure~\ref{fig:gausshmm_grad_error} we see that the error decays geometrically in buffer size $O(L^B)$.
For this RC dataset, the geometric decay rate $L$ is very small; thus small buffers (e.g. $B=2$) reduce the error drastically.

From Figure~\ref{fig:gausshmm_grad_error} (center), we see that the stochastic gradients are heavily biased without buffering (orange) for small subsequence lengths, as they fail to capture the structured transitions between states. However this bias disappears with buffering (blue).
From Figure~\ref{fig:gausshmm_grad_error} (right), we see that the stochastic gradient decays quickly with increasing buffer size $B$ for small subsequence $S = 2$ (purple).
The bias in the stochastic gradients of observations parameters $(\mu, \Sigma)$ is less extreme than for transition matrix $\Pi$ which is associated with the latent states;
we include their error plots in the Supplement.

\begin{figure}[!htb]
\centering
    \begin{minipage}[c]{.1\textwidth}
    \centering
    \hbox{\rotatebox{90}{\hspace{1em} $T = 10^4$}}
    \end{minipage}
    \hspace{0.2em}
    \begin{minipage}[c]{.4\textwidth}
    \centering
        \includegraphics[width=\textwidth]{./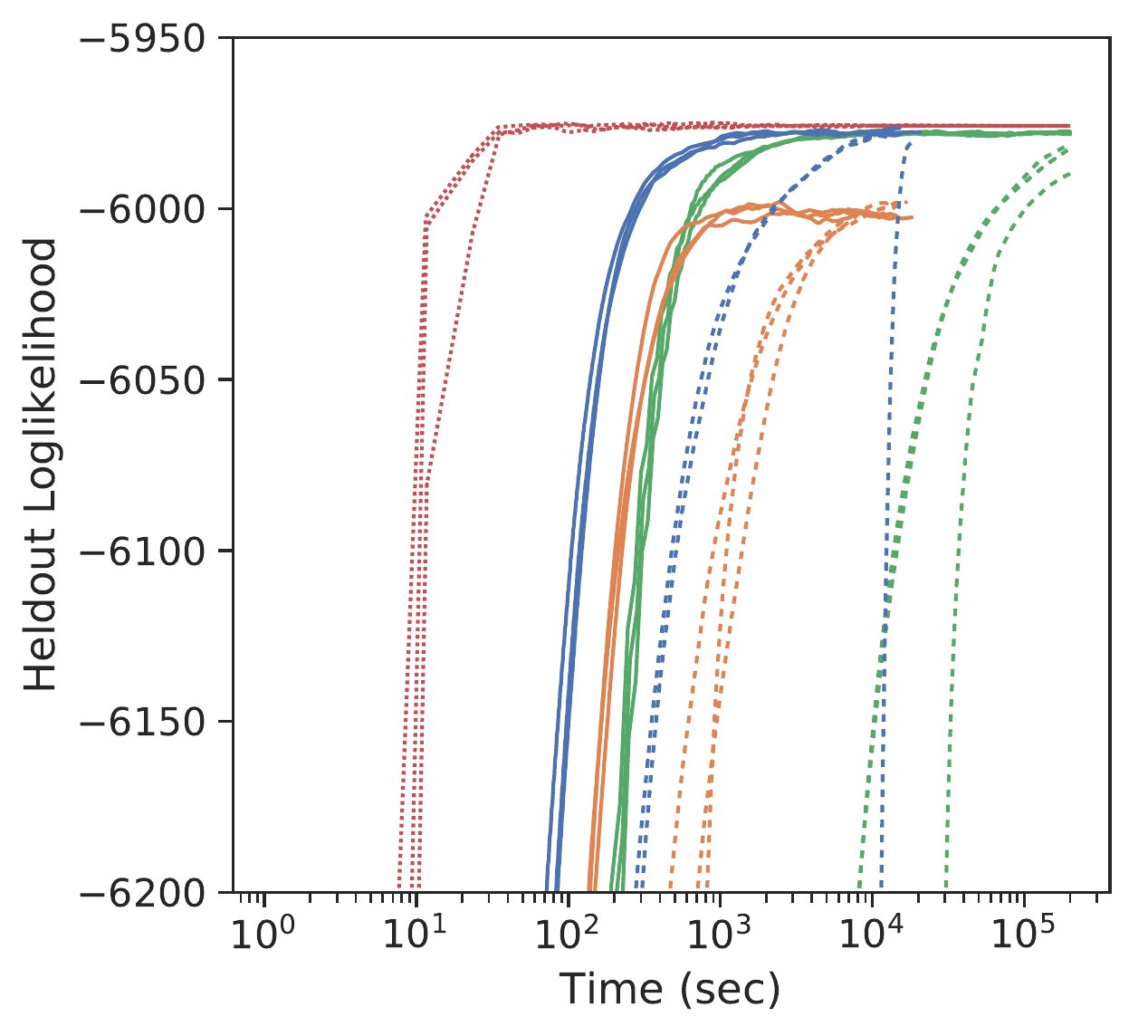}
    \end{minipage}
    \begin{minipage}[c]{.4\textwidth}
    \centering
        \includegraphics[width=\textwidth]{./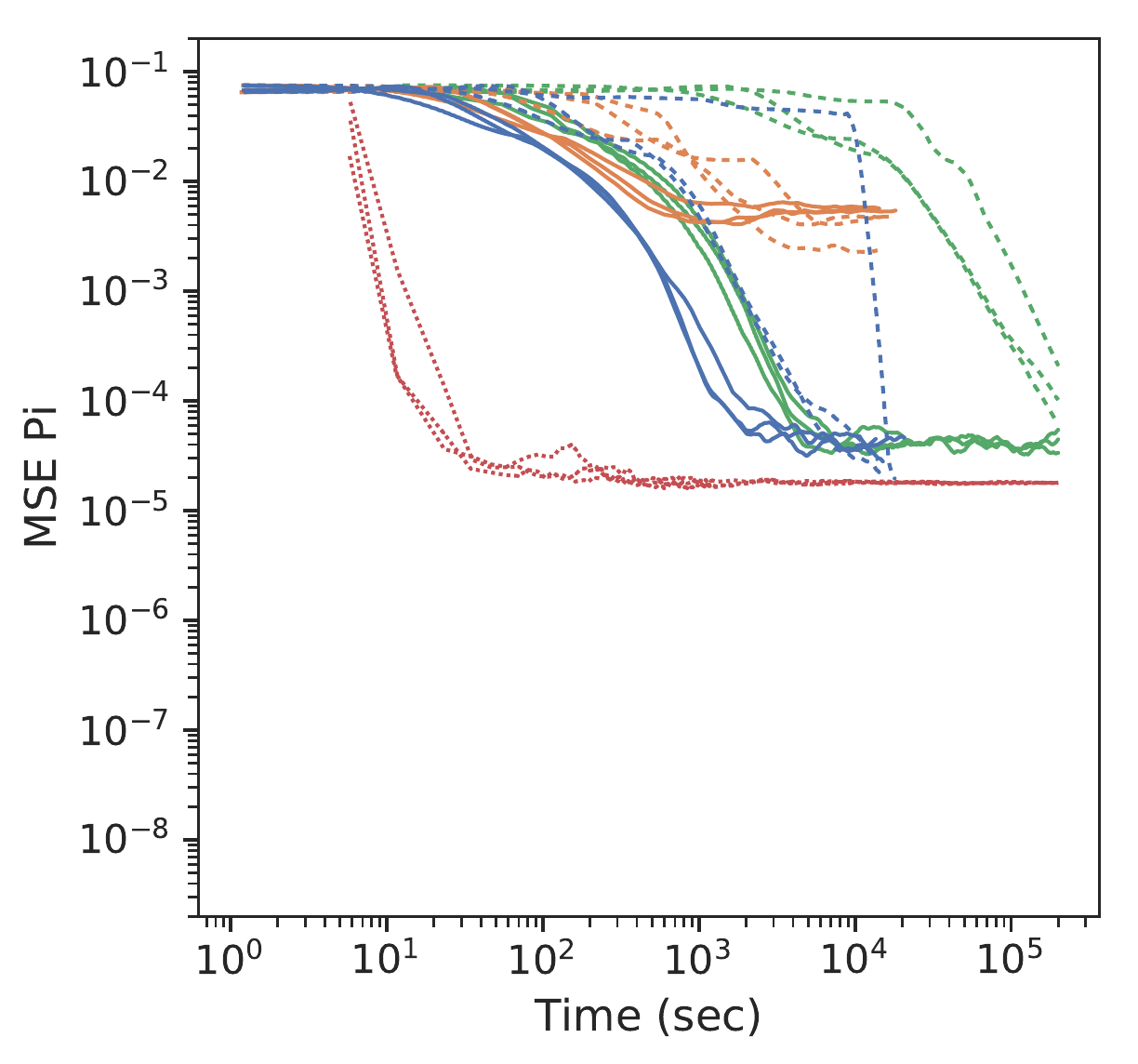}
    \end{minipage}

    \begin{minipage}[c]{.1\textwidth}
    \centering
    \hbox{\rotatebox{90}{\hspace{1em} $T = 10^6$}}
    \end{minipage}
    \hspace{0.2em}
    \begin{minipage}[c]{.4\textwidth}
    \centering
        \includegraphics[width=\textwidth]{./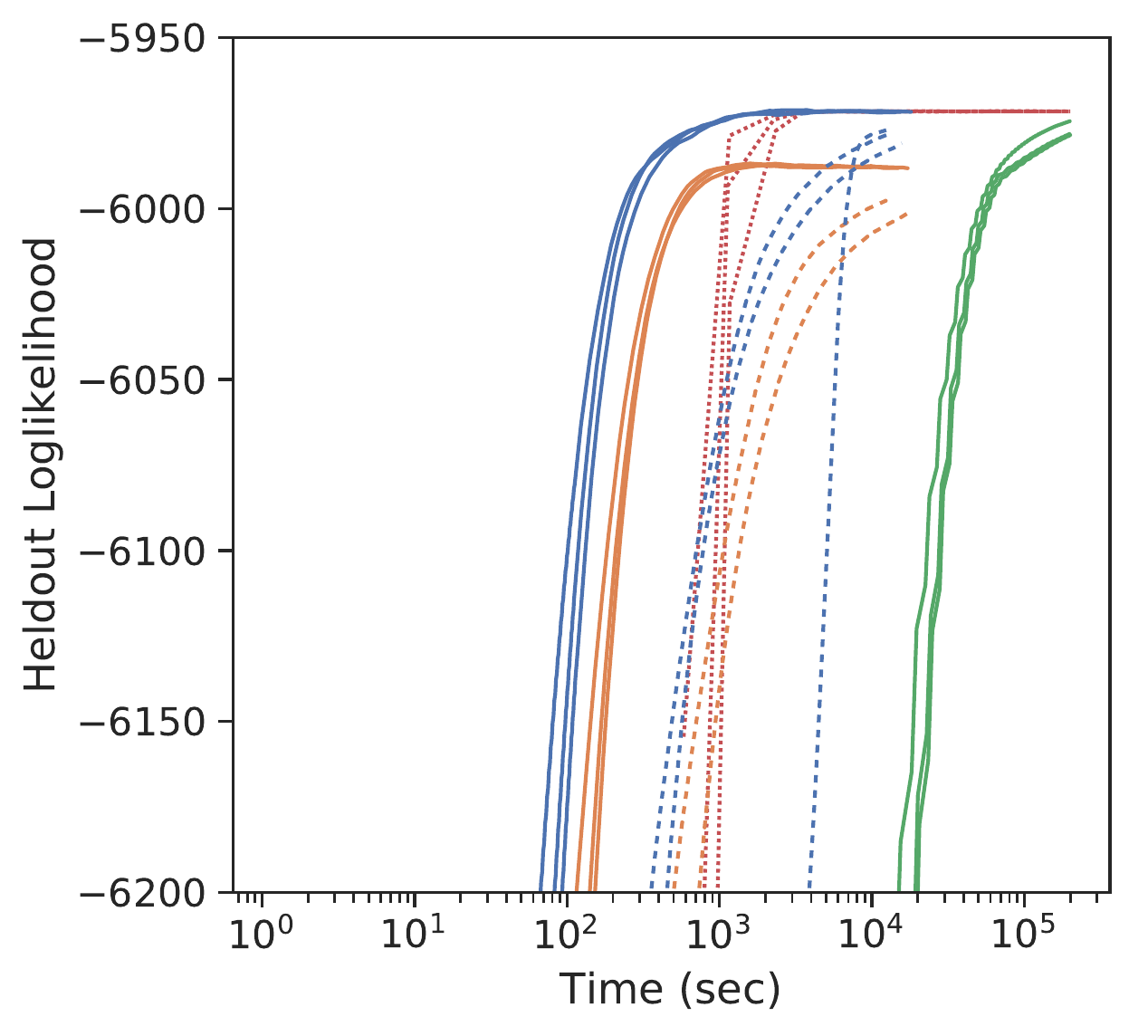}
    \end{minipage}
    \begin{minipage}[c]{.4\textwidth}
    \centering
        \includegraphics[width=\textwidth]{./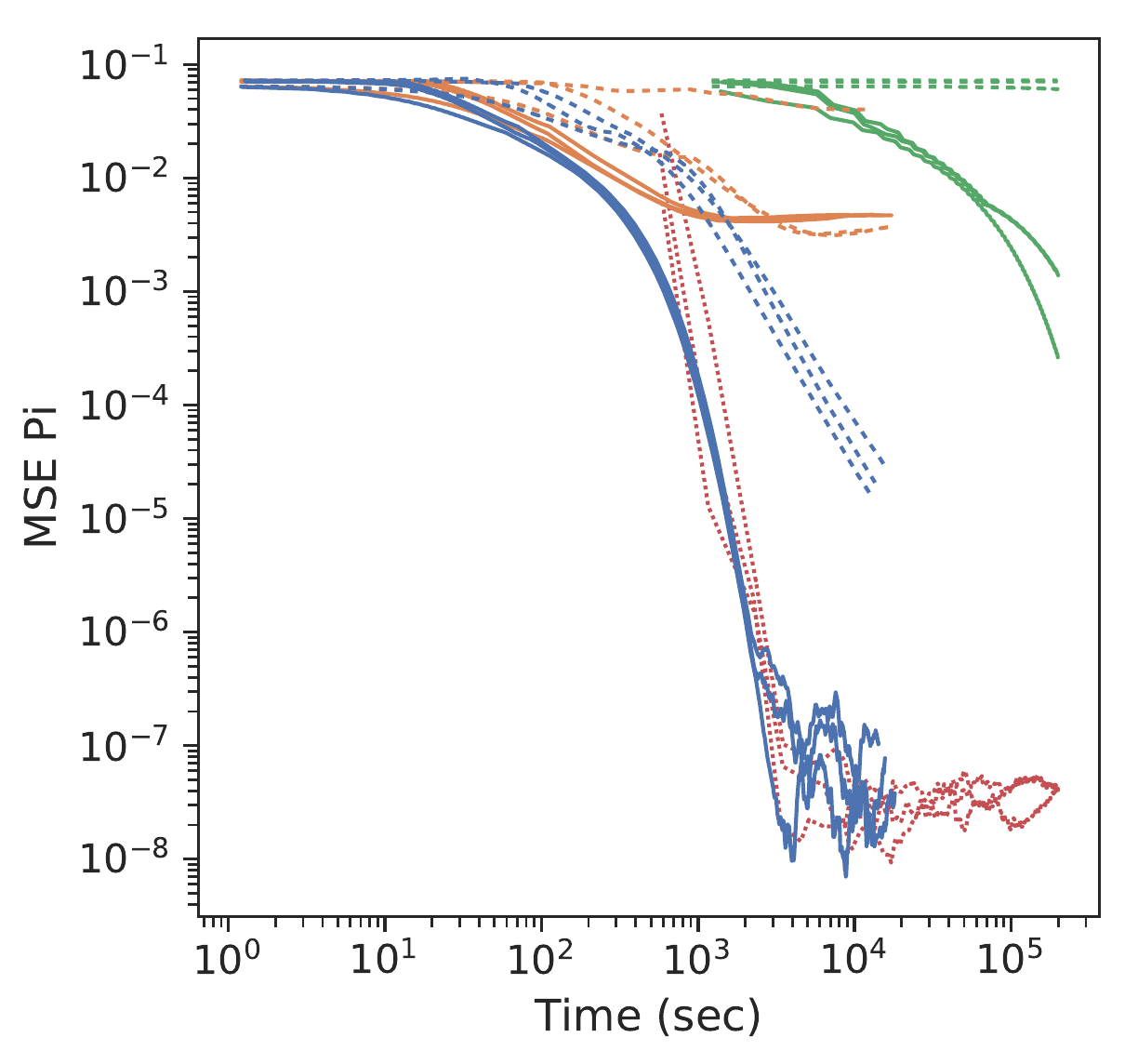}
    \end{minipage}
    \caption{Metrics vs Runtime on RC data with $T = 10^4$ (top), $T=10^6$ (bottom), for different methods:
    \textcolor{BrickRed}{(Gibbs)}, \textcolor{ForestGreen}{(Full)},  \textcolor{Orange}{(No Buffer)} and \textcolor{Blue}{(Buffer)} SGMCMC.
    For SGMCMC methods, solid (\fullline) and dashed (\dashedline) lines indicate SGRLD and SGLD respectively.
    The different metrics are: (left) heldout loglikelihoood and (right) transition matrix estimation error $MSE(\hat\Pi^{(s)}, \Pi^*)$.
.
}
    \label{fig:gausshmm_rc_metrics}
\end{figure}
\begin{figure}[!htb]
    \begin{minipage}[c]{.9\textwidth}
    \centering
        \includegraphics[width=\textwidth]{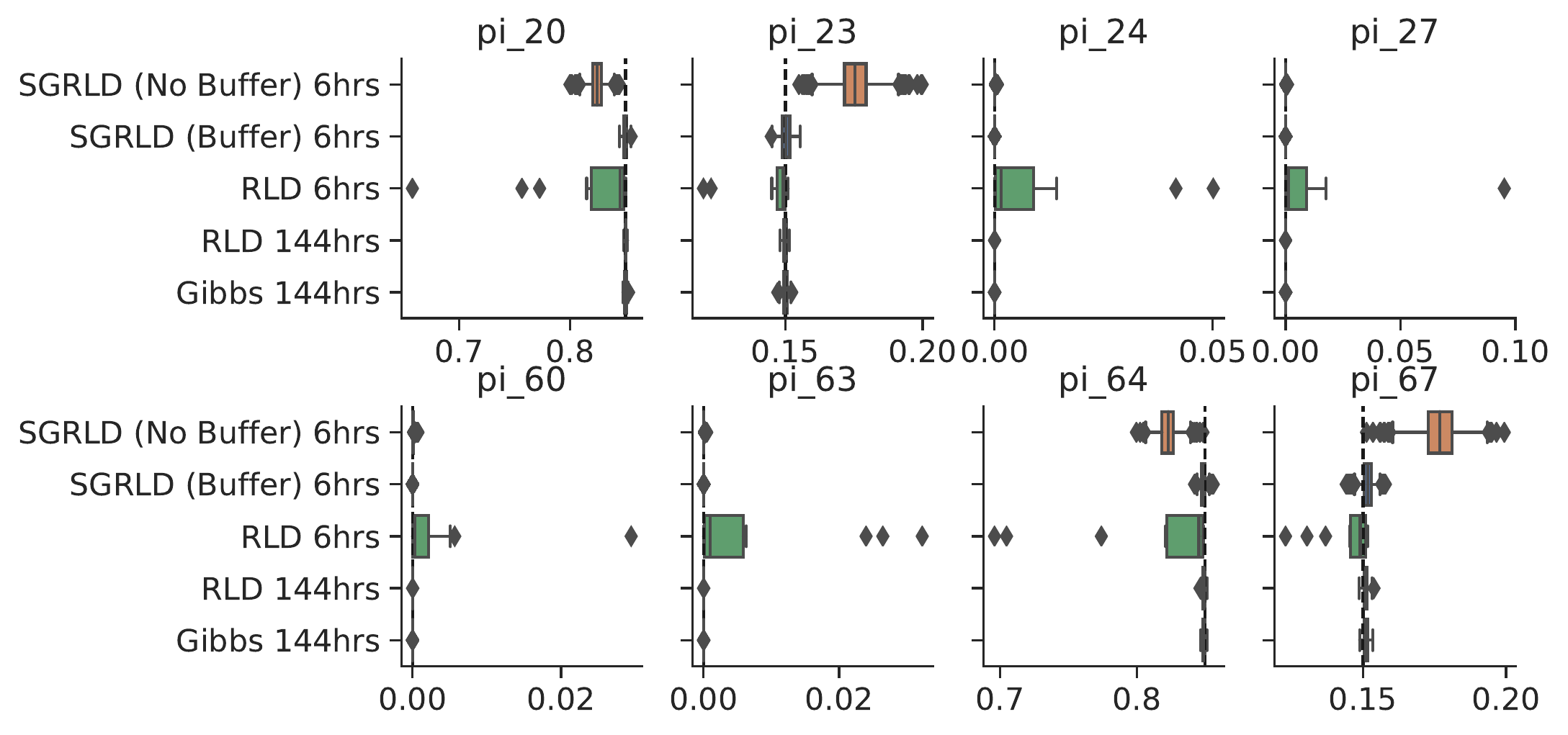}
    \end{minipage}
    \caption{Boxplot of MCMC samples for RC data $T = 10^6$ for select values of $\Pi$.
}
\label{fig:gausshmm_rc_boxplot}
\end{figure}
In Figures~\ref{fig:gausshmm_rc_metrics} and \ref{fig:gausshmm_rc_boxplot},
we compare SGLD (no-buffer and buffer), SGRLD (no-buffer and buffer), and Gibbs.
We run our samplers on one training sequence and evaluate performance on another test sequence.
We consider sequence lengths of $T = 10^4$ and $T=10^6$.
For the SGMCMC methods,
we use a subsequence size of $S = 2$ and a buffer size of $B=0$ (no-buffer) or $B=2$ (buffer).
From Figure~\ref{fig:gausshmm_rc_metrics} we again see that preconditioning helps convergence and mixing as SGRLD outperforms SGLD and from figure~\ref{fig:gausshmm_rc_boxplot} that buffering is necessary to properly estimate $\Pi$.

Note that for the $T = 10^4$ case, we observe that SGRLD underestimates the variance of $\Pi$ (Figure~\ref{fig:gausshmm_rc_metrics} bottom-left).
This is due to the preconditioner $D(\theta)_{\Pi} = \Pi$, creating absorbing states in the discretized dynamics (see comment in Section~\ref{supp-sec:hmm_precond}).

\begin{table}[ht]
\centering
\caption{$\log_{10}$(KSD) by variable of RC samplers. Mean and (SD) over runs in Figure~\ref{fig:gausshmm_rc_metrics}.}
\label{table:gausshmm_rc_ksd}
\begin{tabular}{rrlll}
  \hline
\abovestrut{1em}
 & Sampler & $\pi$ & $\mu$ & $\Sigma$ \\
  \hline
 \multirow{7}{*}{\rotatebox[origin=c]{90}{$|\FULLSEQ| = 10^4$}}
\abovestrut{1em}
& SGLD (No Buffer)  &  1.95 (0.05) &  1.12 (0.06) &  2.46 (0.05) \\
& SGLD (Buffer)     &  1.33 (0.15) &  1.16 (0.17) &  1.99 (0.10) \\
& LD       &  1.99 (0.07) &  1.50 (0.39) &  2.10 (0.72) \\
\cline{2-5}
\abovestrut{1em}
& SGRLD (No Buffer) &  1.69 (0.01) &  0.77 (0.03) &  2.49 (0.03) \\
& SGRLD (Buffer)    &  0.81 (0.01) &  0.53 (0.01) &  2.09 (0.05) \\
& RLD      &  0.85 (0.03) &  0.54 (0.06) &  2.09 (0.06) \\
\cline{2-5}
\abovestrut{1em}
& Gibbs             &  0.77 (0.01) &  0.38 (0.06) &  1.74 (0.07) \\
 \hline
 \hline
 \multirow{7}{*}{\rotatebox[origin=c]{90}{$|\FULLSEQ| = 10^6$}}
\abovestrut{1em}
& SGLD (No Buffer)  &  4.25 (0.41) &  2.93 (0.52) &  4.63 (0.44) \\
& SGLD (Buffer)     &  3.34 (0.12) &  2.84 (0.47) &  3.94 (0.04) \\
& LD       &  5.42 (0.03) &  4.07 (0.35) &  5.30 (0.41) \\
\cline{2-5}
\abovestrut{1em}
& SGRLD (No Buffer) &  3.67 (0.01) &  2.59 (0.05) &  3.99 (0.09) \\
& SGRLD (Buffer)    &  2.07 (0.04) &  2.38 (0.08) &  3.78 (0.09) \\
& RLD     &  3.91 (0.05) &  3.17 (0.11) &  4.76 (0.03) \\
\cline{2-5}
\abovestrut{1em}
& Gibbs             &  3.11 (0.05) &  3.10 (0.07) &  4.65 (0.05) \\
 \hline
\end{tabular}
\end{table}

Table~\ref{table:gausshmm_rc_ksd} shows the KSD of different sampling methods for different components of $\theta$.
Although full sequence methods performs well for small $T$, they perform worse for larger $T$ due to increase time between iterations.
We also see that buffered SGRLD outperforms the other SGMCMC methods on $\Pi$, as the non-buffered methods are sampling from the incorrect distribution and SGLD suffers from extreme autocorrelation.

\subsection{Downsampled Ion Channel Recordings}
We now consider a downsampled version of the ion channel recording data presented in Section~\ref{sec:experiments-HMM-ion}.
In particular, we consider downsampling the data by a factor of $50$ (as in~\cite{ma2017stochastic}), resulting in $|\FULLSEQ| = 209,634$ observations.
We again train on the first 90\% and evaluate on the last 10\% after applying a log-transform and normalizing the observations to use Gaussian emissions.
For our SGMCMC methods we again use a subsequence size of $S = 10$ and a buffer size of $B=0$ (no-buffer) or $B=10$ (buffer).
Figure~\ref{fig:down_sampled_ion_channel} presents our results including comparisons to Gibbs sampling (red).
For this (shorter) downsampled data, Gibbs sampling outperforms the SGMCMC methods.
We see that the performance of the SGMCMC method is similar to the full sample case (compare to Figure~\ref{fig:ion_channel}) and that SGRLD with buffering quickly reaches the same mode as Gibbs.

\begin{figure}[!htb]
    \centering
    \begin{minipage}[c]{.33\textwidth}
    \centering
        \includegraphics[width=\textwidth]{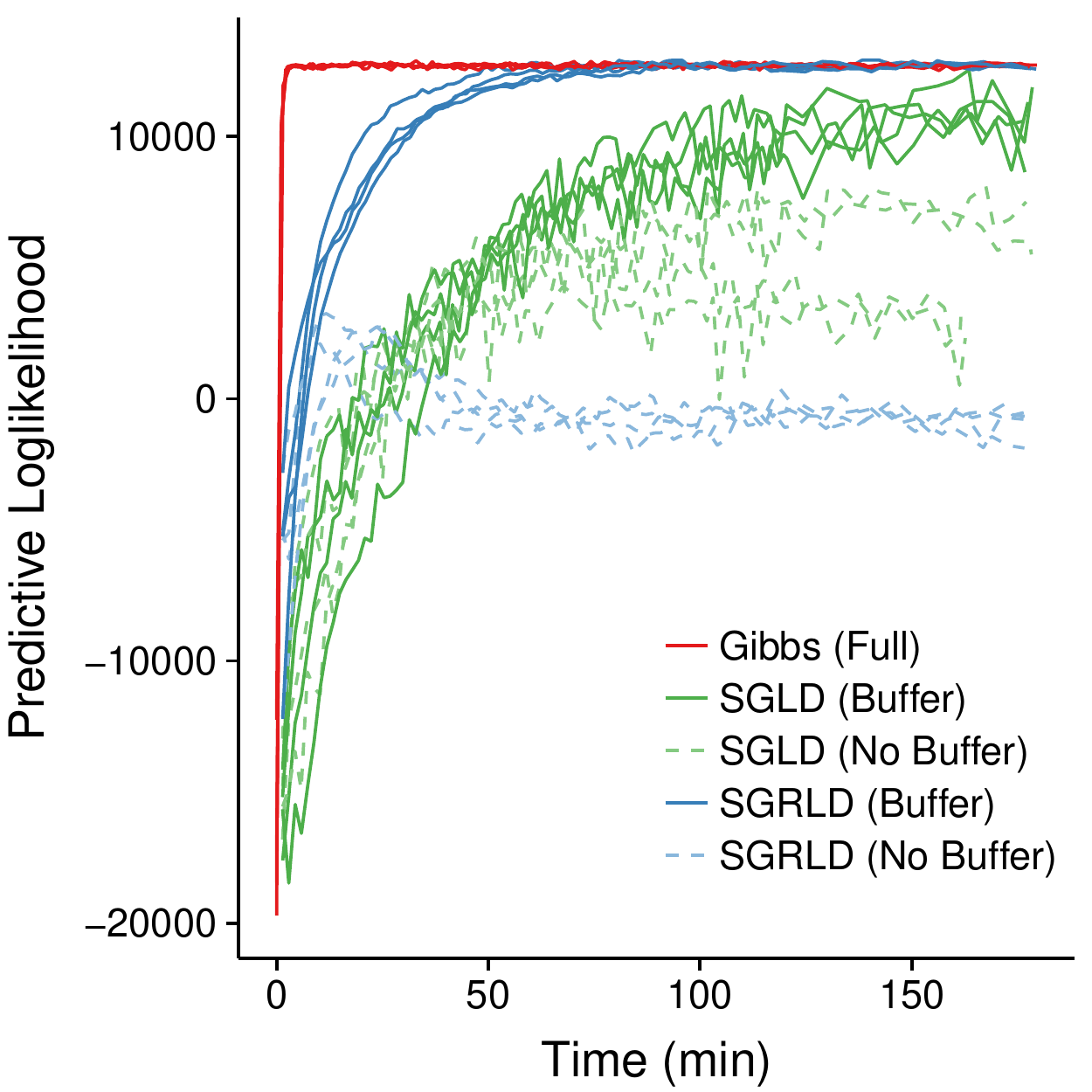}
    \end{minipage}
    \begin{minipage}[c]{.31\textwidth}
    \centering
    \includegraphics[width=\textwidth]{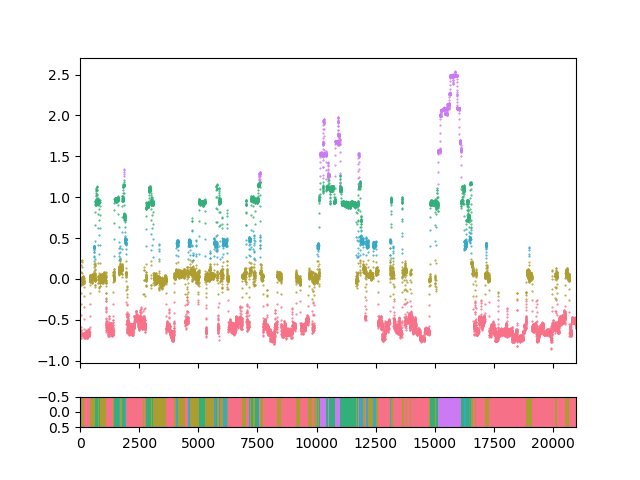}
    \end{minipage}
    \begin{minipage}[c]{.31\textwidth}
    \centering
    \includegraphics[width=\textwidth]{figures/sgrld_gausshmm_ion-channel_fit.png}
    \end{minipage}
    \caption{Ion Channel Recordings: (Left) predictive loglikelihood vs runtime. (Center) example segmentation using Gibbs (Right) example segmentation using SGRLD.
}
\label{fig:down_sampled_ion_channel}
\end{figure}

\subsection{Additional Synthetic Experiment Plots}
We now present additional plots for the synthetic data experiments.
These plots show the MSE for `other' components of $\theta$ to the true parameters of $\theta^*$ as well as other measures of fit such as predictive loglikelihood or recovery of the latent state sequence (NMI or RMSE).
\subsubsection{Gaussian HMM}
The parametrization of the RC data set is as follows:
\begin{equation}
\Pi = \begin{bmatrix}
.01 & .99 & 0 & 0 & 0 & 0 & 0 & 0 \\
0 & .01 & .99 & 0 & 0 & 0 & 0 & 0  \\
.85 & 0 & 0 & .15 & 0 & 0 & 0 & 0 \\
0  & 0 & 0 & 0 & 1 & 0 & 0 & 0 \\
0 & 0 & 0 & 0 & .01 & .99 & 0 & 0 \\
0 & 0 & 0 & 0 & 0 & .01 & .99 & 0 \\
0 & 0 & 0 & 0 & .85 & 0 & 0 & .15\\
1 & 0 & 0 & 0 & 0 & 0 & 0 & 0
\end{bmatrix} 
\end{equation}
with
\begin{equation}
\mu_{1:8} = \left\{ (-50,0); (30,-30); (30,30); (-100,-10); (40,-40); (-65,0); (40, 40); (100,10)  \right\},
\end{equation}
and $\Sigma_k = 20*I_2$ for all states $k = 1:K$.
Figure~\ref{fig:gausshmm_rc_extra_metrics} are plots of additional metrics for the Gaussian HMM experiment on the RC data set.
We see a bigger difference between the buffered and non-buffered methods in predictive loglikelihood as it is more sensitive to $\Pi$.
For RC data, there is less difference between the buffered and non-buffered methods for estimating $A$ and $Q$ (Figure~\ref{fig:gausshmm_rc_extra_metrics} (bottom)).

\begin{figure}[!htb]
\centering
    \begin{minipage}[c]{.05\textwidth}
    \centering
    \hbox{\rotatebox{90}{\hspace{1em} $T = 10^4$}}
    \end{minipage}
    \begin{minipage}[c]{.3\textwidth}
    \centering
        \includegraphics[width=\textwidth]{./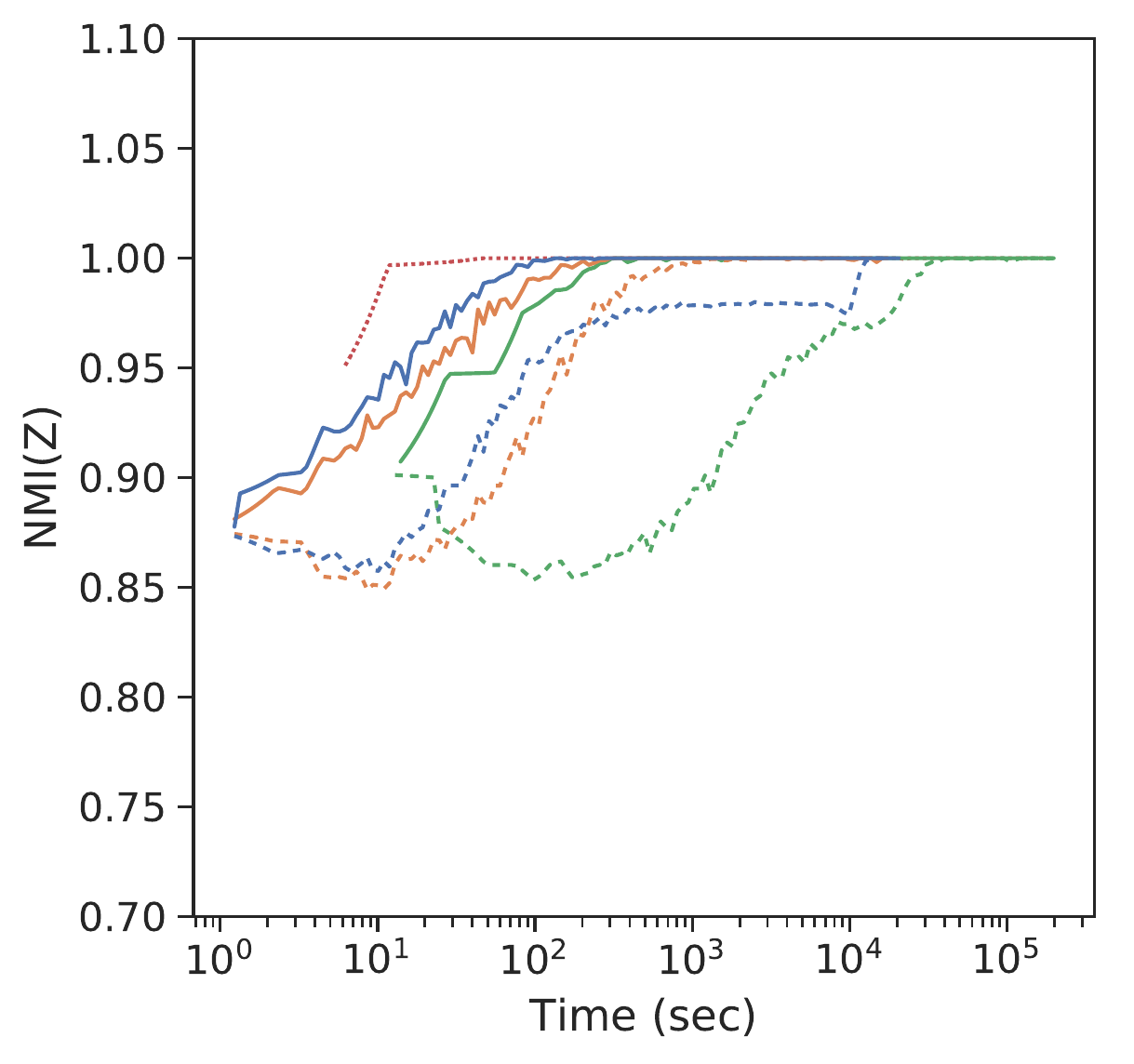}
    \end{minipage}
    \begin{minipage}[c]{.3\textwidth}
    \centering
        \includegraphics[width=\textwidth]{./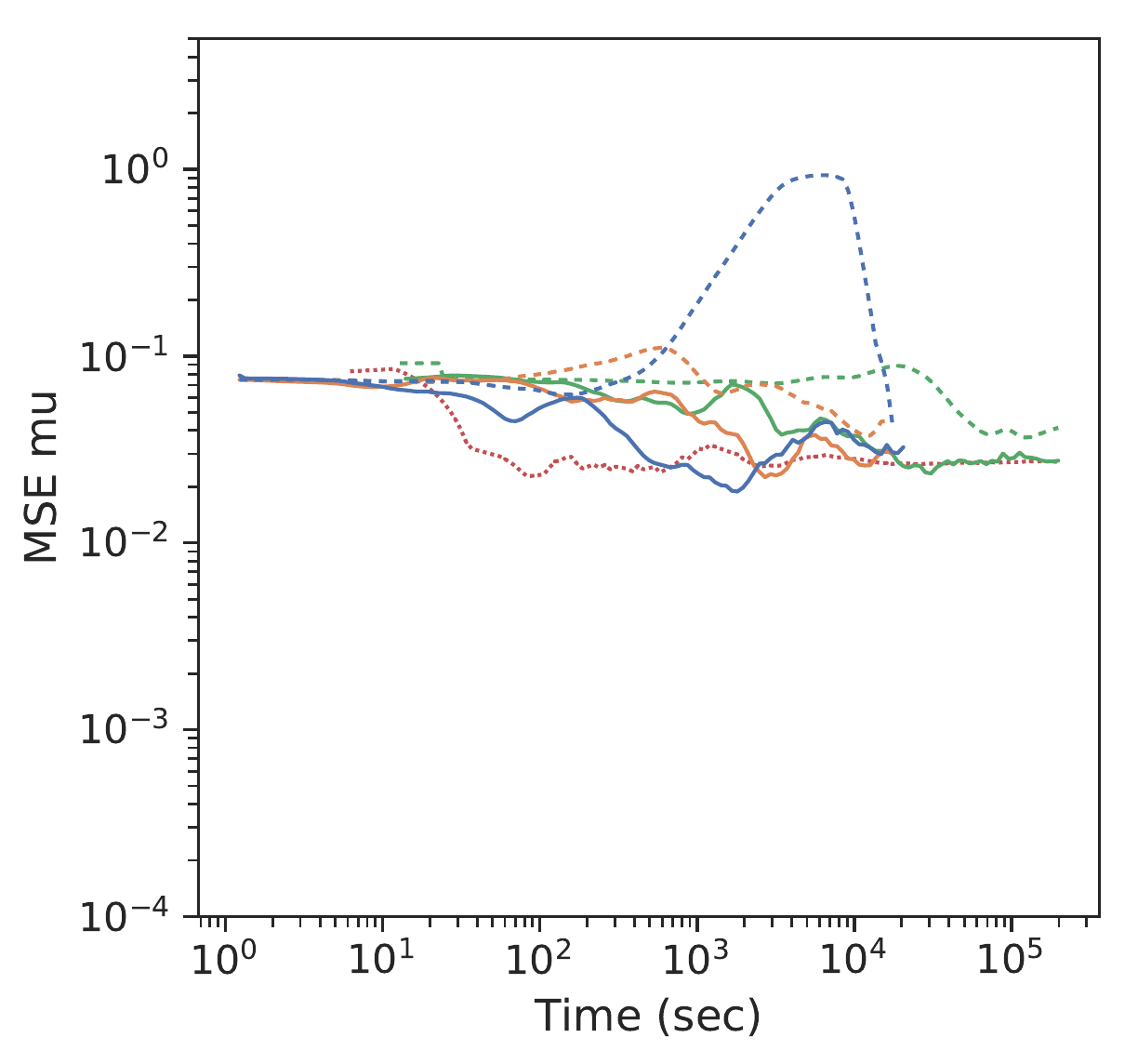}
    \end{minipage}
    \begin{minipage}[c]{.3\textwidth}
    \centering
        \includegraphics[width=\textwidth]{./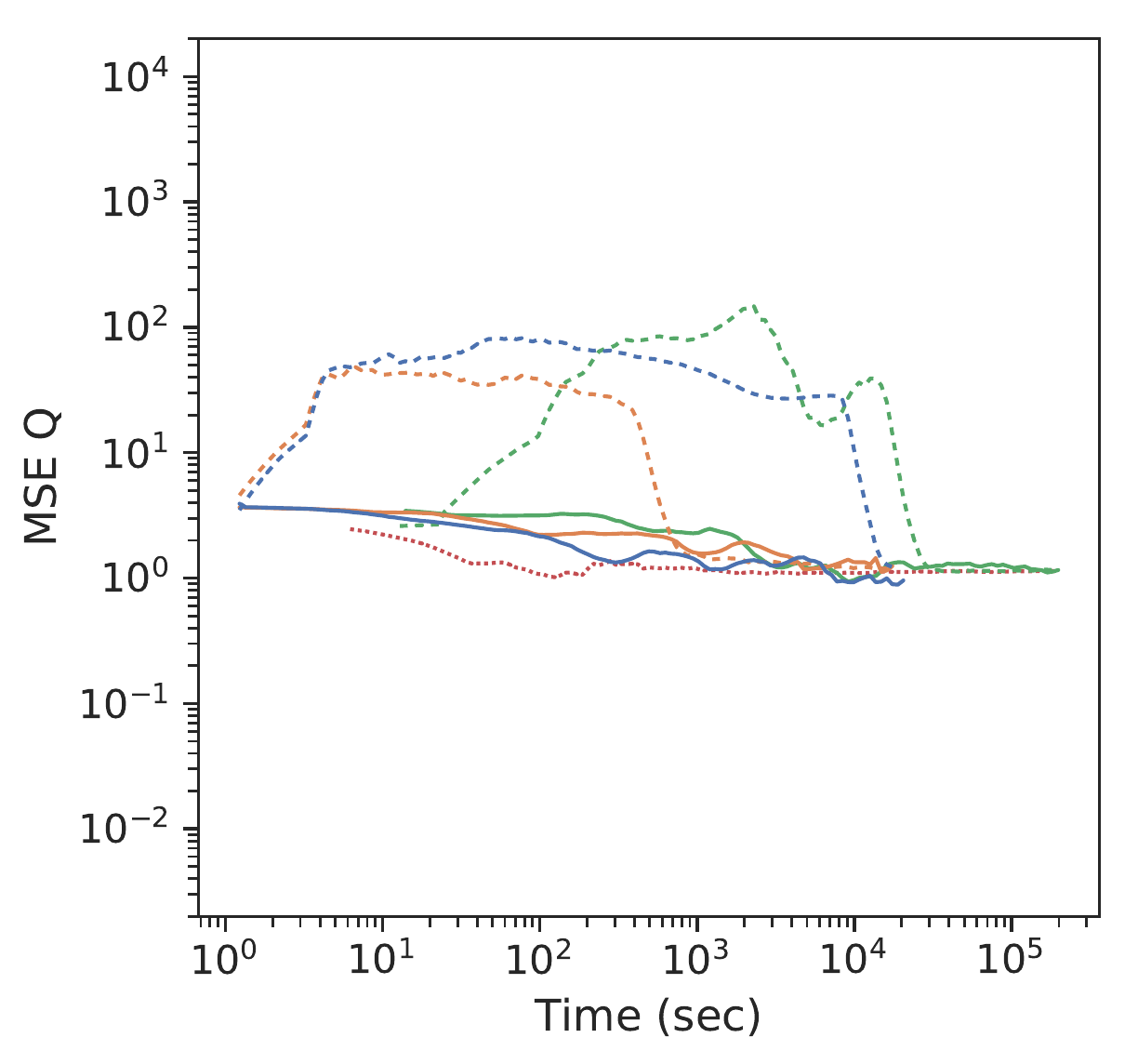}
    \end{minipage}

    \begin{minipage}[c]{.05\textwidth}
    \centering
    \hbox{\rotatebox{90}{\hspace{1em} $T = 10^6$}}
    \end{minipage}
    \begin{minipage}[c]{.3\textwidth}
        \includegraphics[width=\textwidth]{./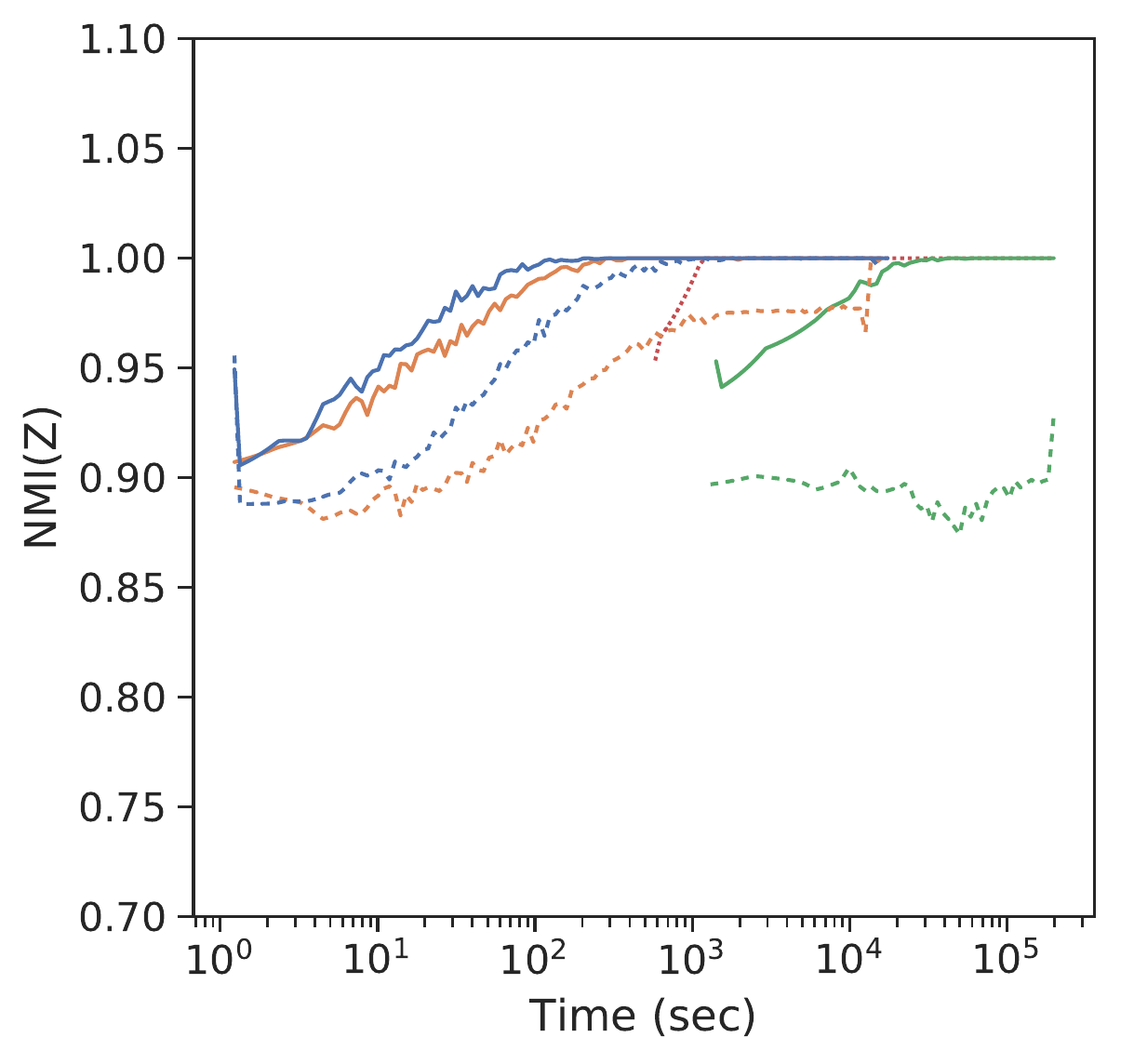}
    \end{minipage}
    \begin{minipage}[c]{.3\textwidth}
    \centering
        \includegraphics[width=\textwidth]{./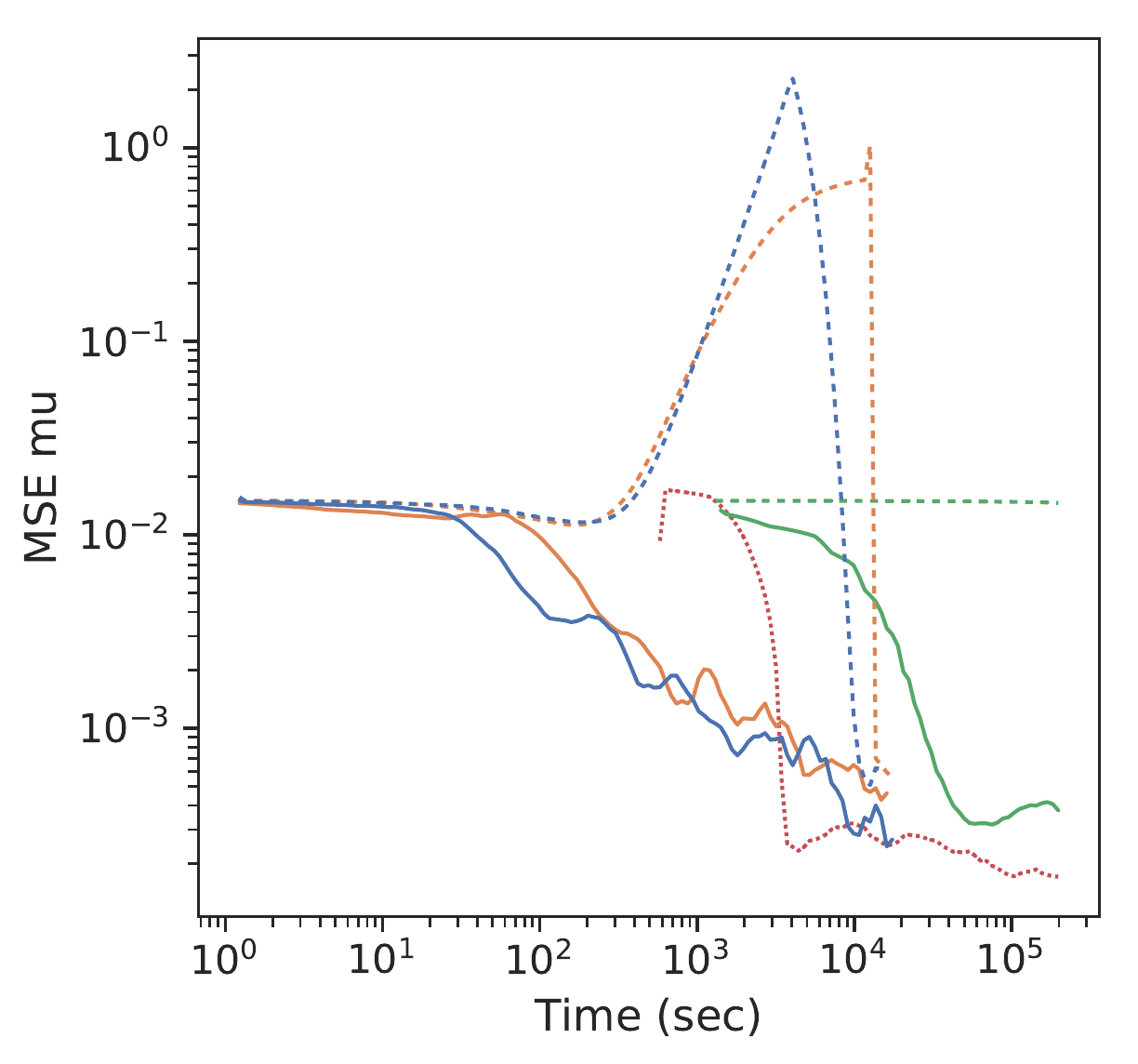}
    \end{minipage}
    \begin{minipage}[c]{.3\textwidth}
    \centering
        \includegraphics[width=\textwidth]{./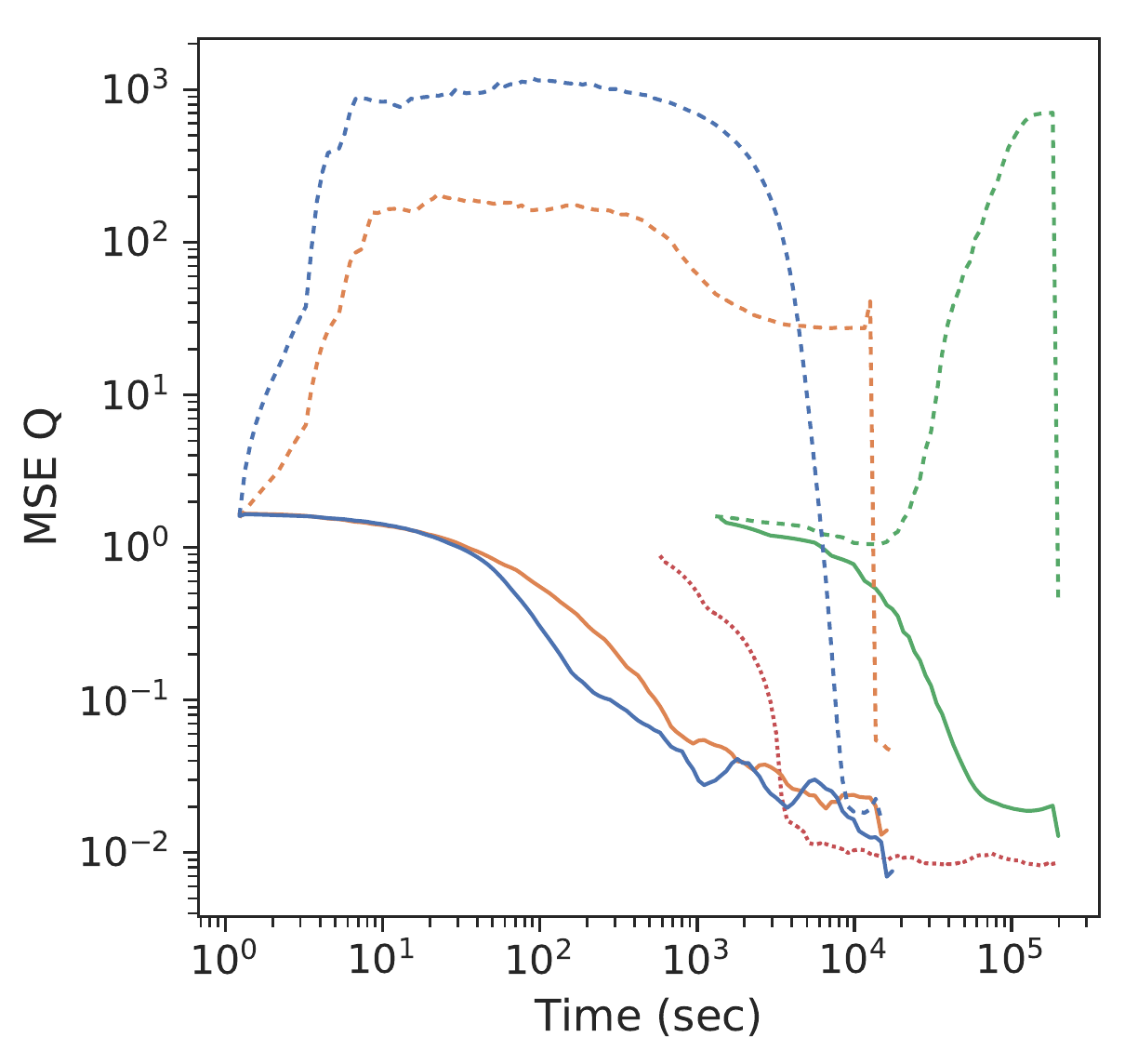}
    \end{minipage}

    \caption{Additional Metrics vs Runtime on RC data with $T = 10^4$ (top), $T=10^6$ (bottom), for different methods:
    \textcolor{BrickRed}{(Gibbs)}, \textcolor{ForestGreen}{(Full)},  \textcolor{Orange}{(No Buffer)} and \textcolor{Blue}{(Buffer)} SGMCMC.
    For SGMCMC methods, solid (\fullline) and dashed (\dashedline) lines indicate SGRLD and SGLD respectively.
    The different metrics are: (left) NMI, (center) estimation error $MSE(\hat{\mu}^{(s)}, \mu^*)$ (right) estimation error $MSE(\hat{Q}^{(s)}, Q^*)$ .
}
    \label{fig:gausshmm_rc_extra_metrics}
    \vspace{-0.5em}
\end{figure}

\subsubsection{ARHMM}
Figure~\ref{fig:arhmm_synth_extra_metrics} are plots of additional metrics for the ARHMM.
\begin{figure}[!htb]
\vspace{-1.2em}
\centering
    \begin{minipage}[c]{.05\textwidth}
    \centering
    \hbox{\rotatebox{90}{\hspace{1em} $T = 10^4$}}
    \end{minipage}
    \begin{minipage}[c]{.3\textwidth}
    \centering
        \includegraphics[width=\textwidth]{./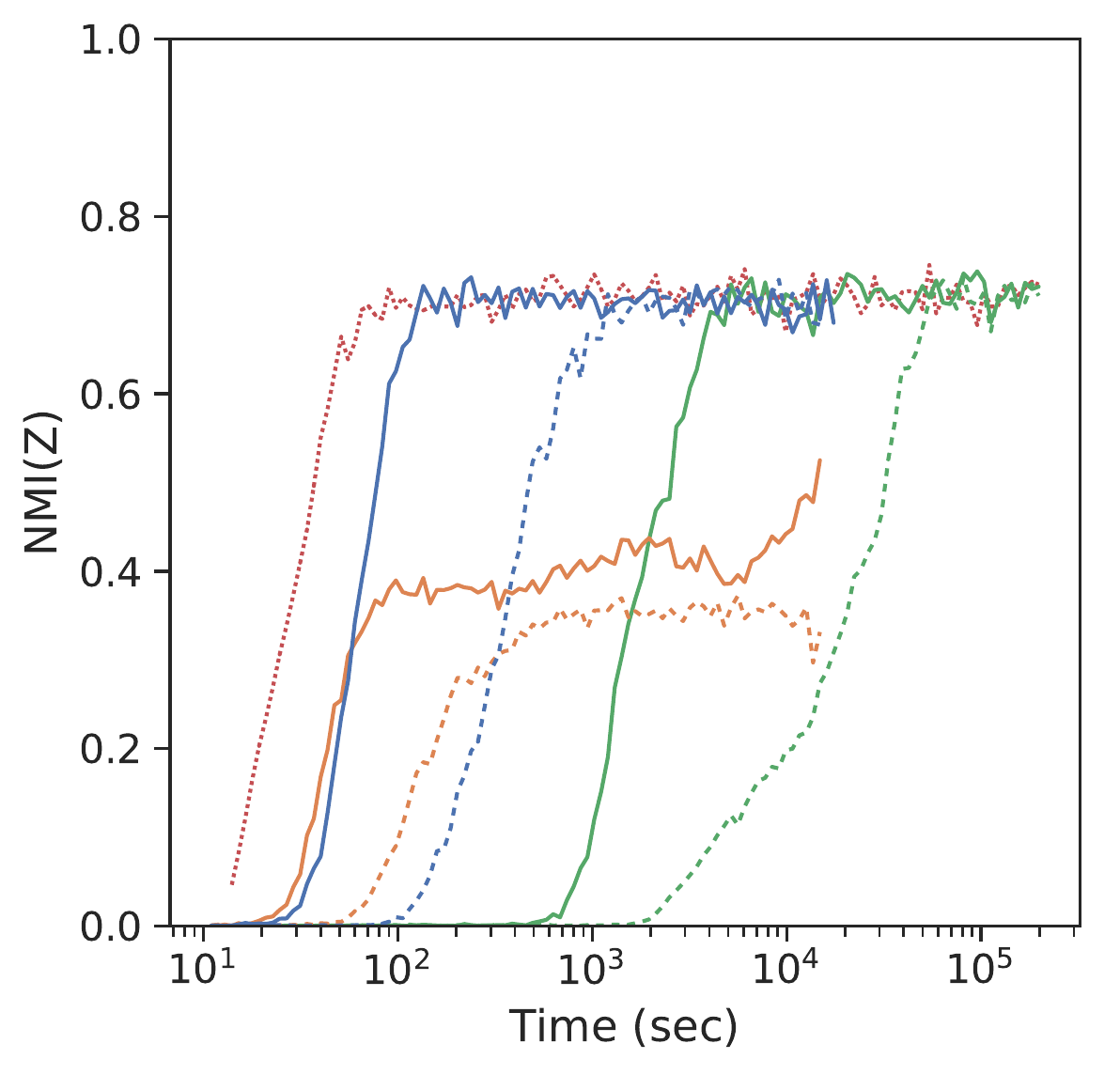}
    \end{minipage}
    \begin{minipage}[c]{.3\textwidth}
    \centering
        \includegraphics[width=\textwidth]{./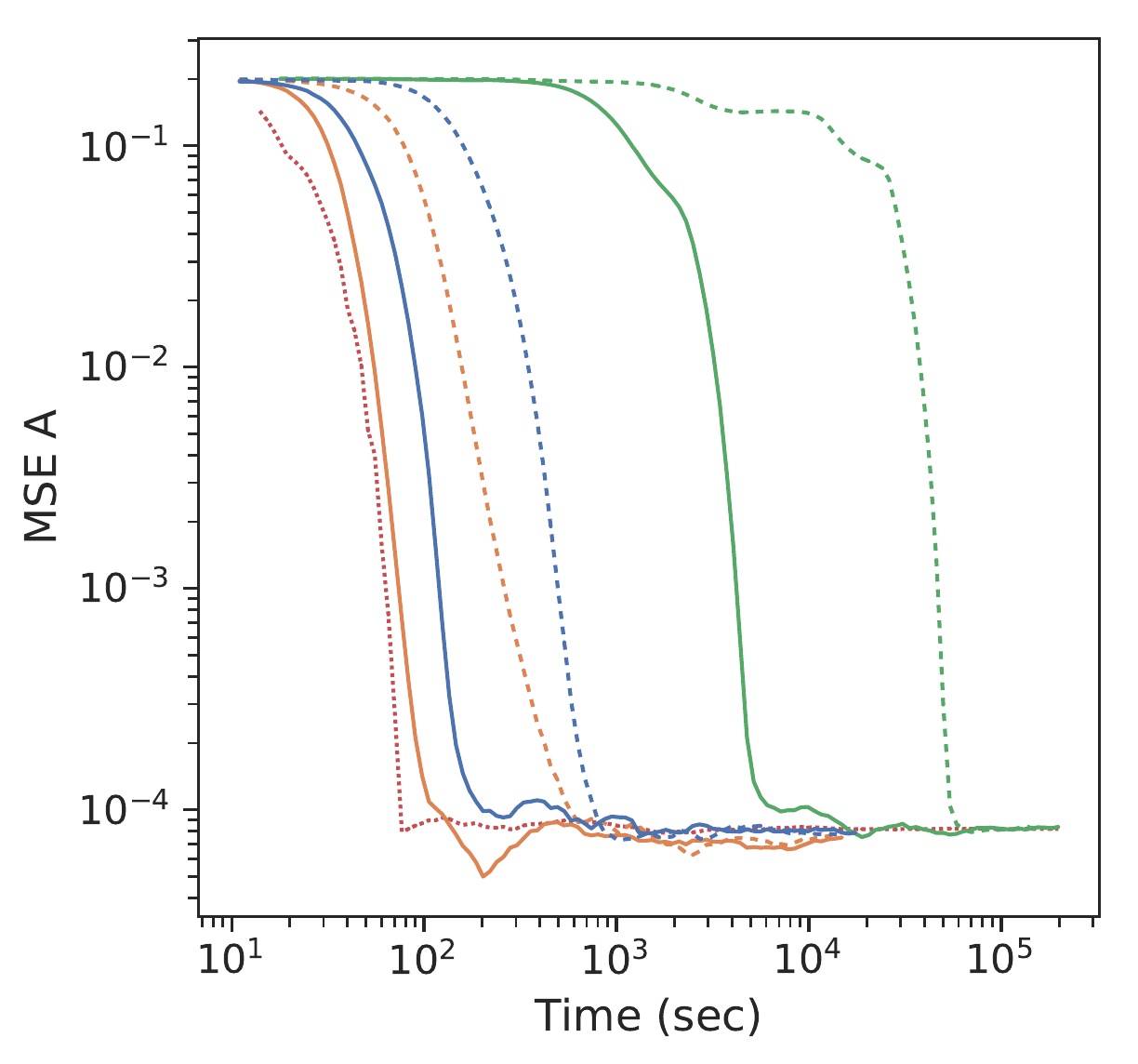}
    \end{minipage}
    \begin{minipage}[c]{.3\textwidth}
    \centering
        \includegraphics[width=\textwidth]{./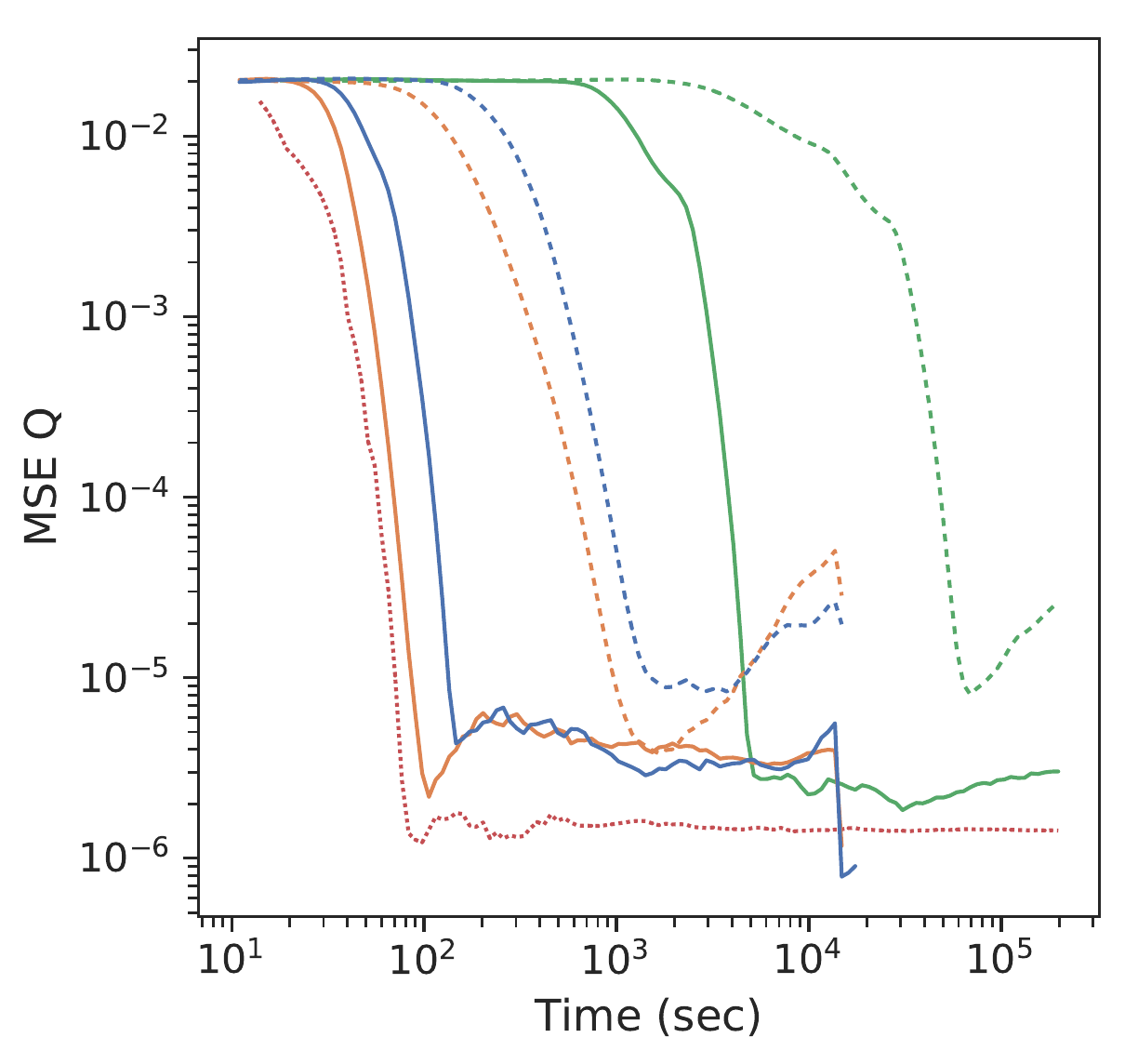}
    \end{minipage}

    \begin{minipage}[c]{.05\textwidth}
    \centering
    \hbox{\rotatebox{90}{\hspace{1em} $T = 10^6$}}
    \end{minipage}
    \begin{minipage}[c]{.3\textwidth}
    \centering
        \includegraphics[width=\textwidth]{./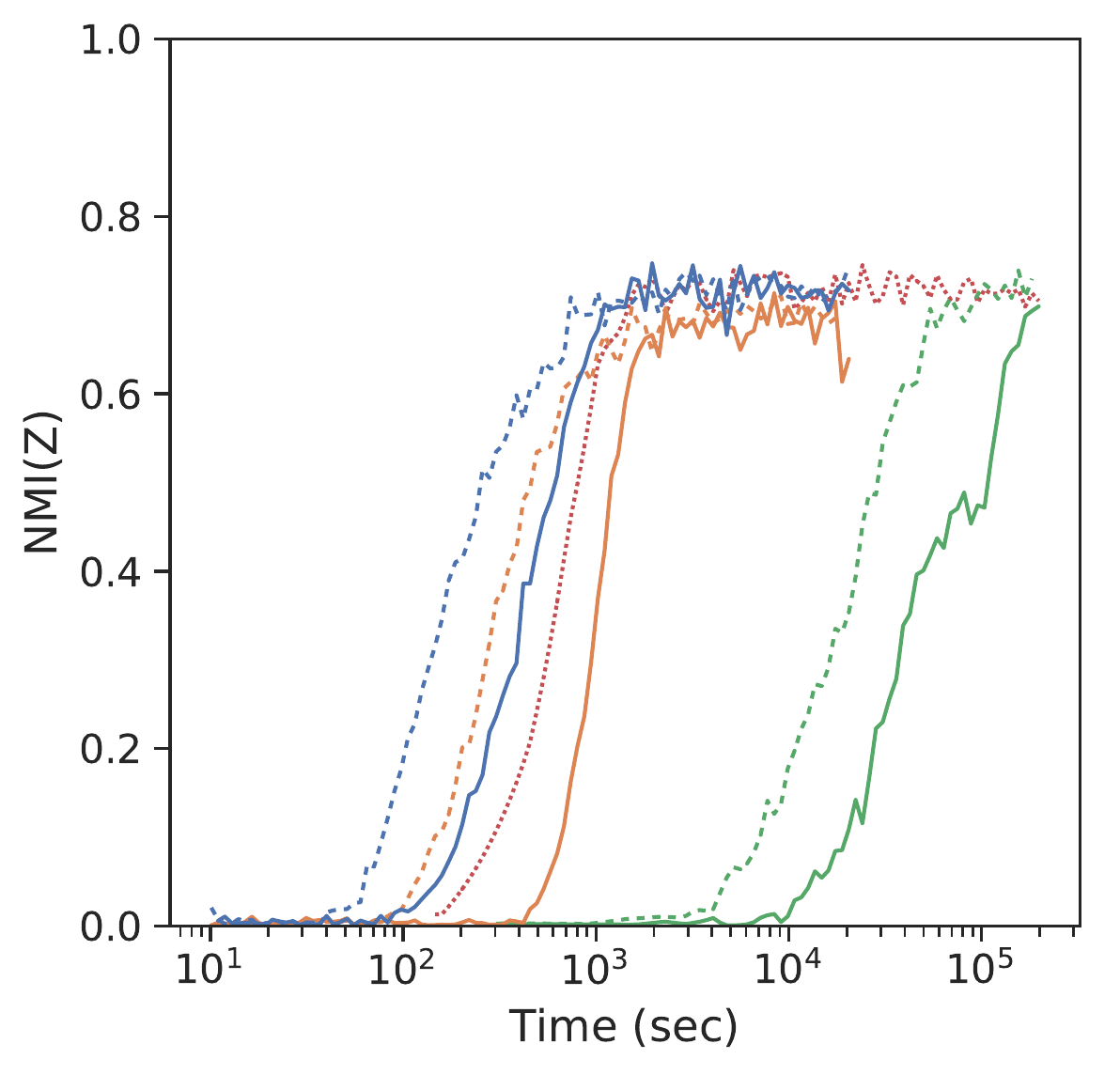}
    \end{minipage}
    \begin{minipage}[c]{.3\textwidth}
    \centering
        \includegraphics[width=\textwidth]{./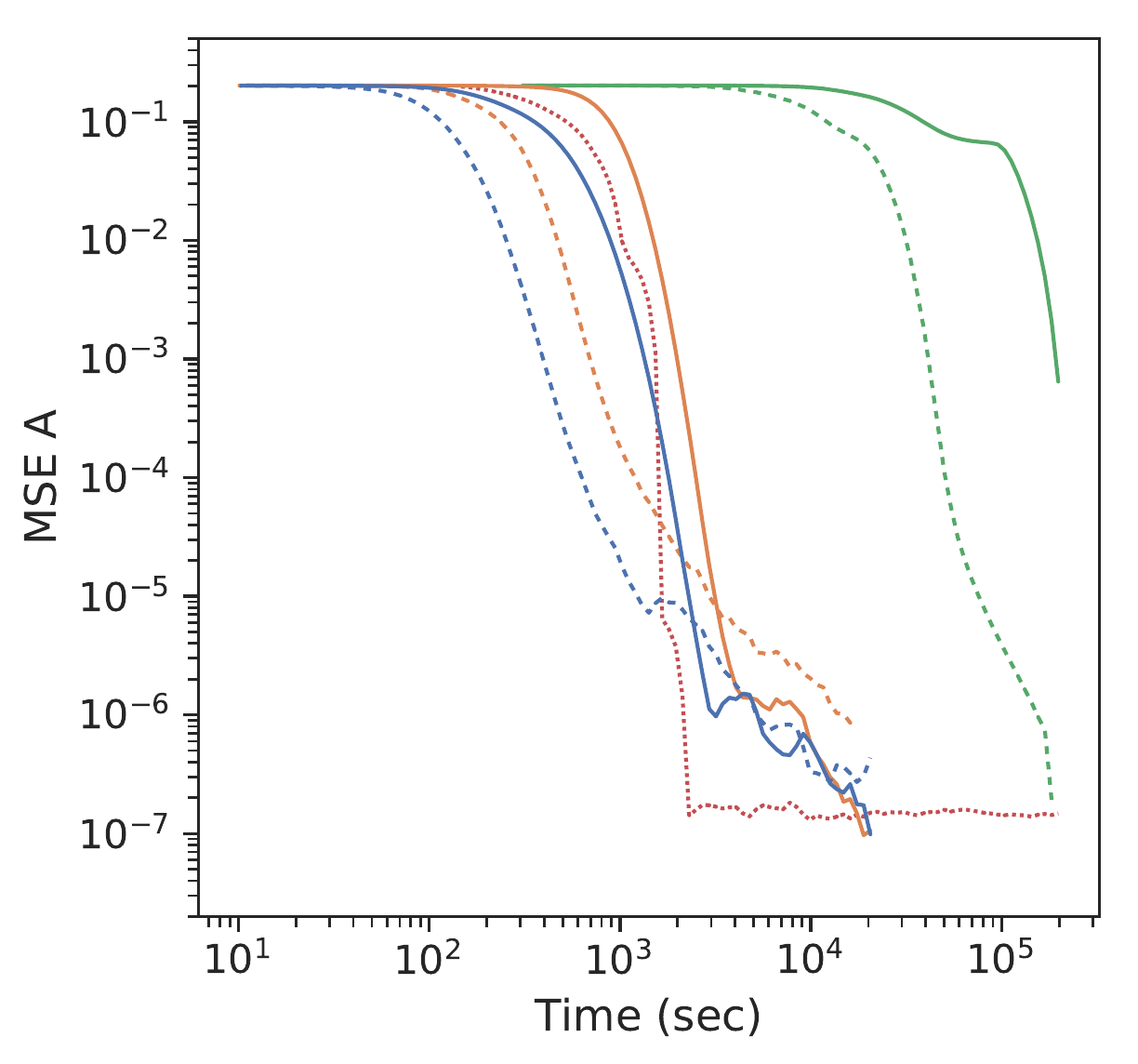}
    \end{minipage}
    \begin{minipage}[c]{.3\textwidth}
    \centering
        \includegraphics[width=\textwidth]{./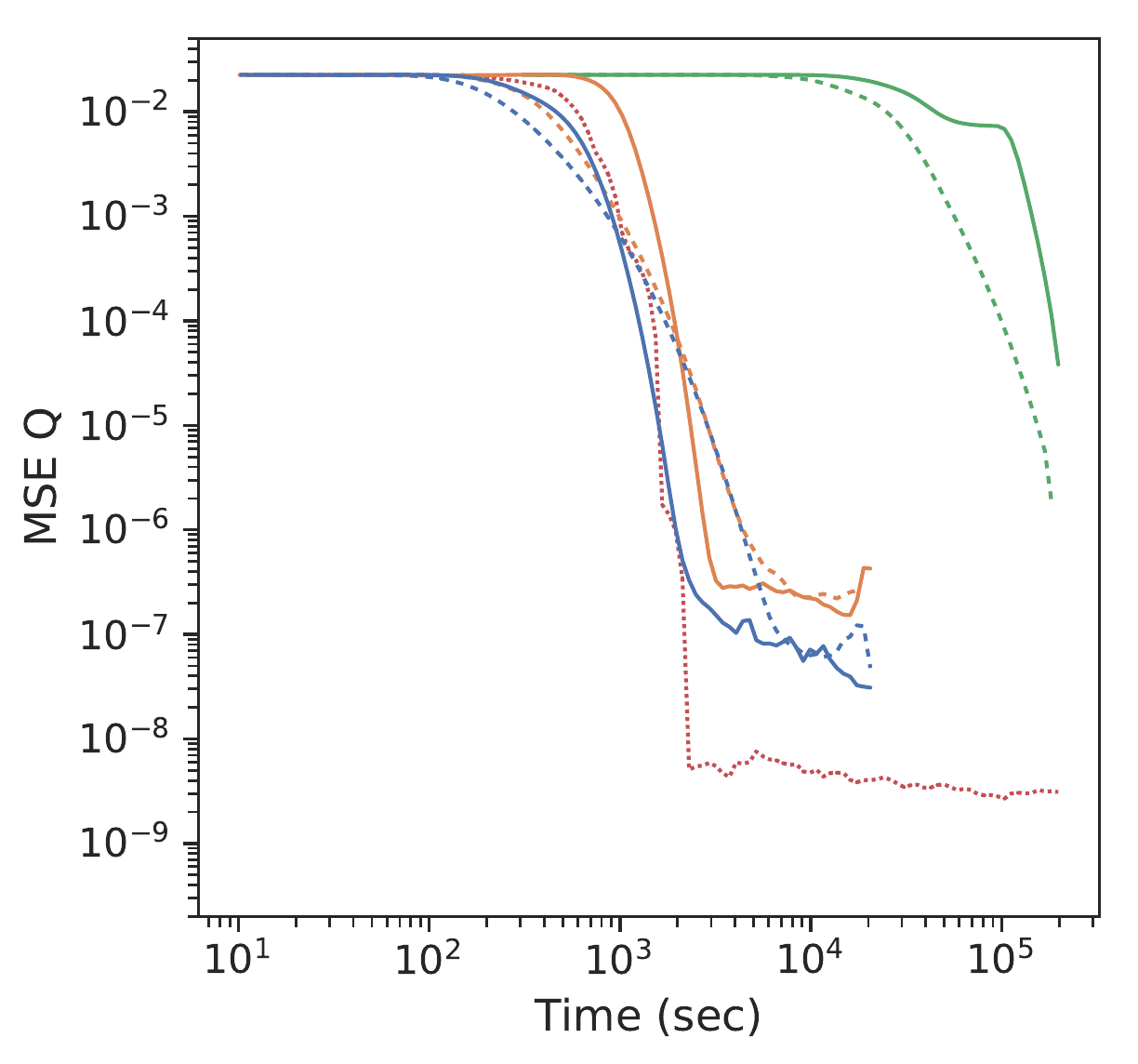}
    \end{minipage}
    \caption{Additional Metrics vs Runtime on ARHMM data with $T = 10^4$ (top), $T=10^6$ (bottom), for different methods:
    \textcolor{BrickRed}{(Gibbs)}, \textcolor{ForestGreen}{(Full)},  \textcolor{Orange}{(No Buffer)} and \textcolor{Blue}{(Buffer)} SGMCMC.
    For SGMCMC methods, solid (\fullline) and dashed (\dashedline) lines indicate SGRLD and SGLD respectively.
    The different metrics are: (left) NMI, (center) $MSE(\hat{A}^{(s)}, A^*)$ (right) $MSE(\hat{Q}^{(s)}, Q^*)$.
}
    \label{fig:arhmm_synth_extra_metrics}
\end{figure}

\subsubsection{LGSSM}
Figure~\ref{fig:lds_extra_metrics} are plots of additional metrics for the LGSSM synthetic data.

\begin{figure}[!htb]
\centering
    \begin{minipage}[c]{.05\textwidth}
    \centering
    \hbox{\rotatebox{90}{\hspace{1em} $T = 10^4$}}
    \end{minipage}
    \begin{minipage}[c]{.3\textwidth}
    \centering
        \includegraphics[width=\textwidth]{./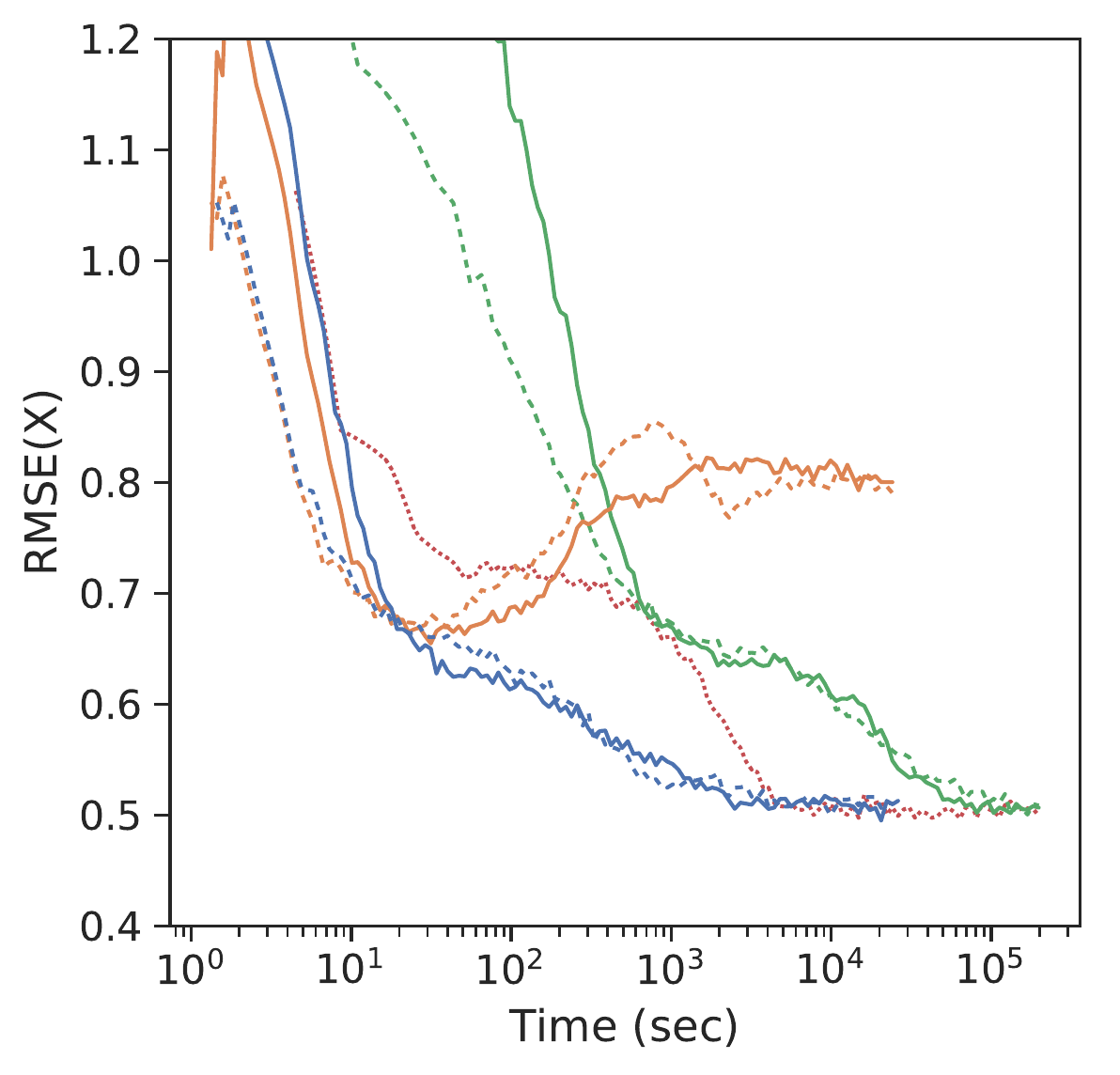}
    \end{minipage}
    \begin{minipage}[c]{.3\textwidth}
    \centering
        \includegraphics[width=\textwidth]{./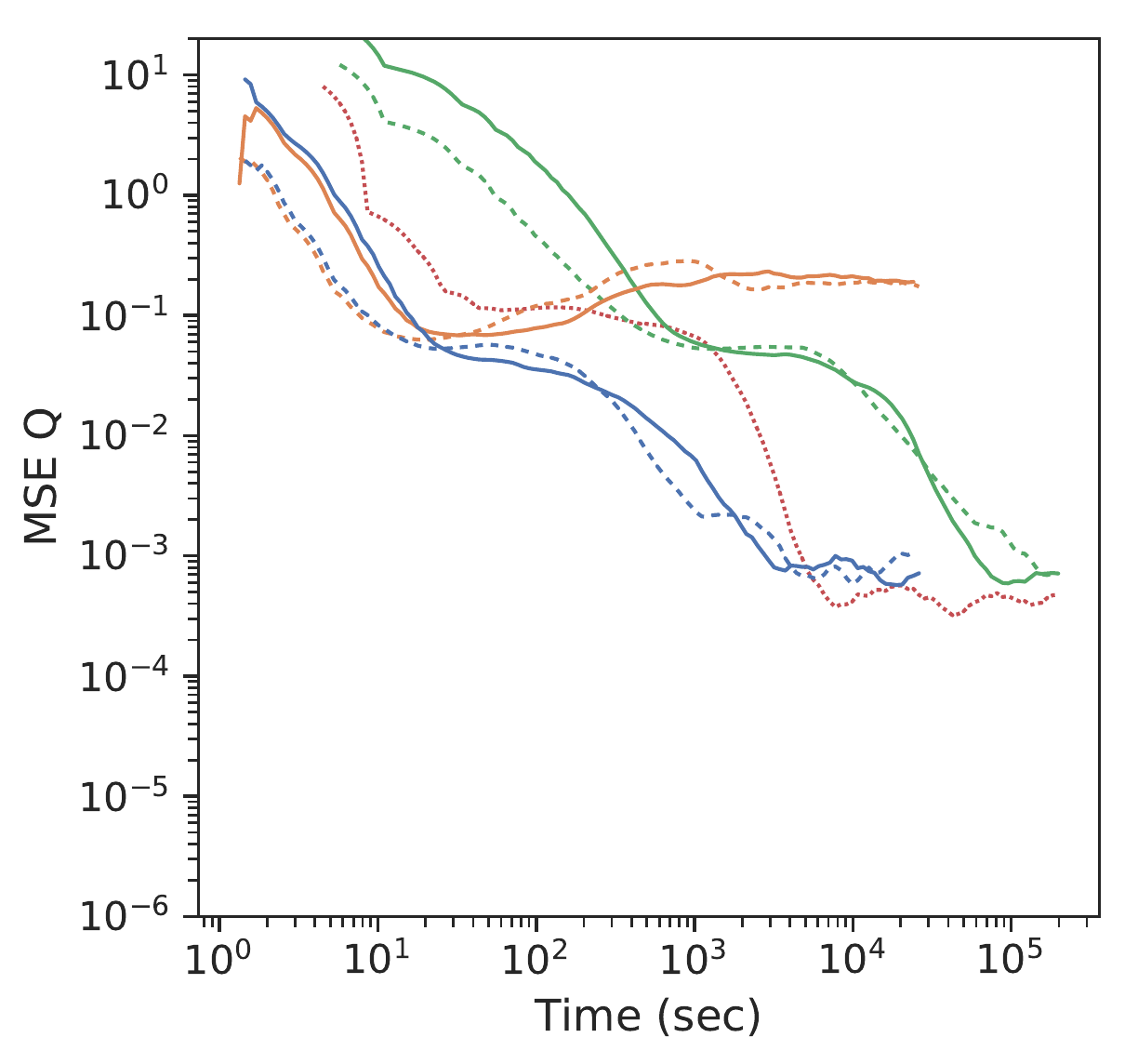}
    \end{minipage}
    \begin{minipage}[c]{.3\textwidth}
    \centering
        \includegraphics[width=\textwidth]{./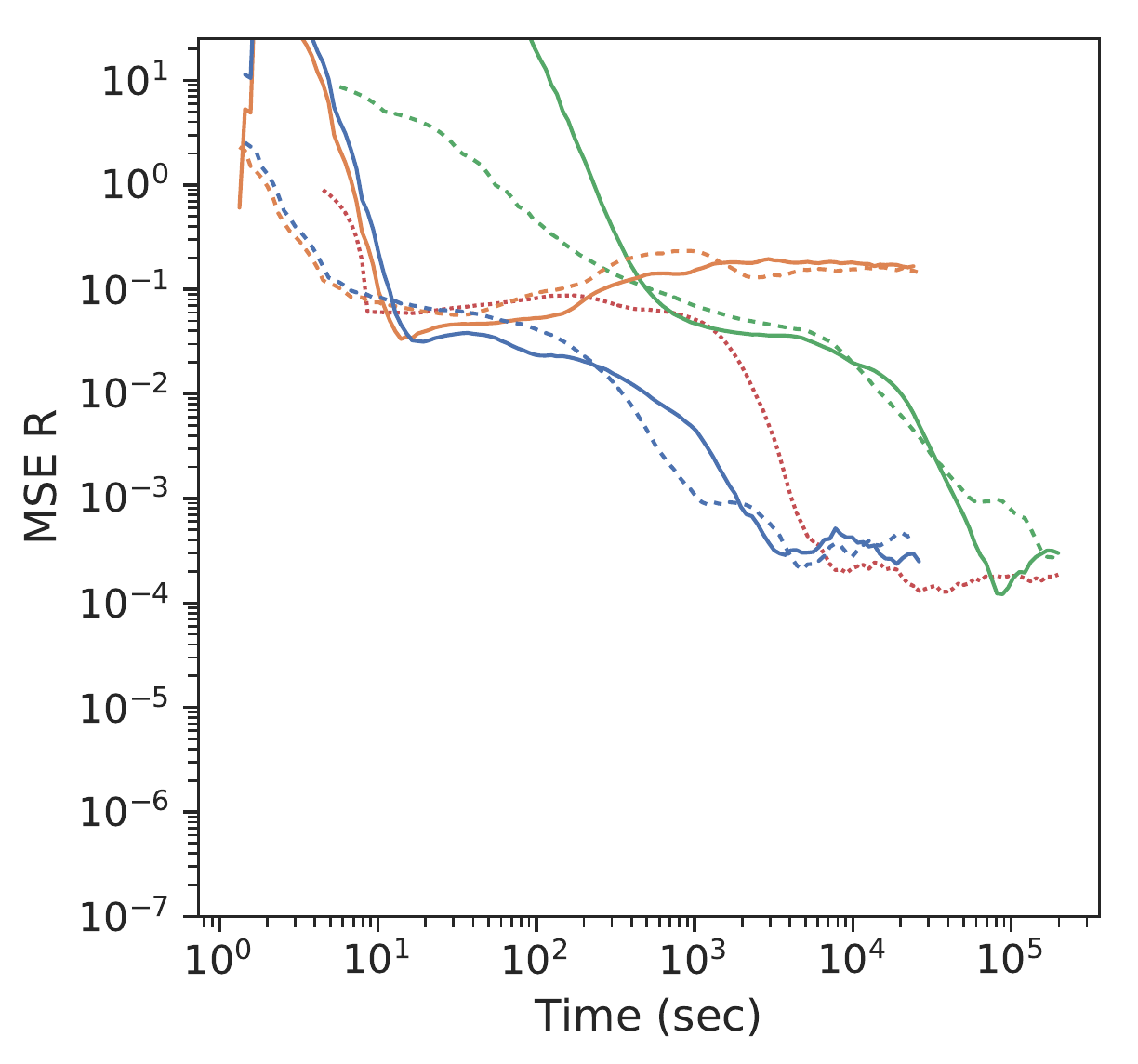}
    \end{minipage}
    
    \begin{minipage}[c]{.05\textwidth}
    \centering
    \hbox{\rotatebox{90}{\hspace{1em} $T = 10^6$}}
    \end{minipage}
    \begin{minipage}[c]{.3\textwidth}
    \centering
        \includegraphics[width=\textwidth]{./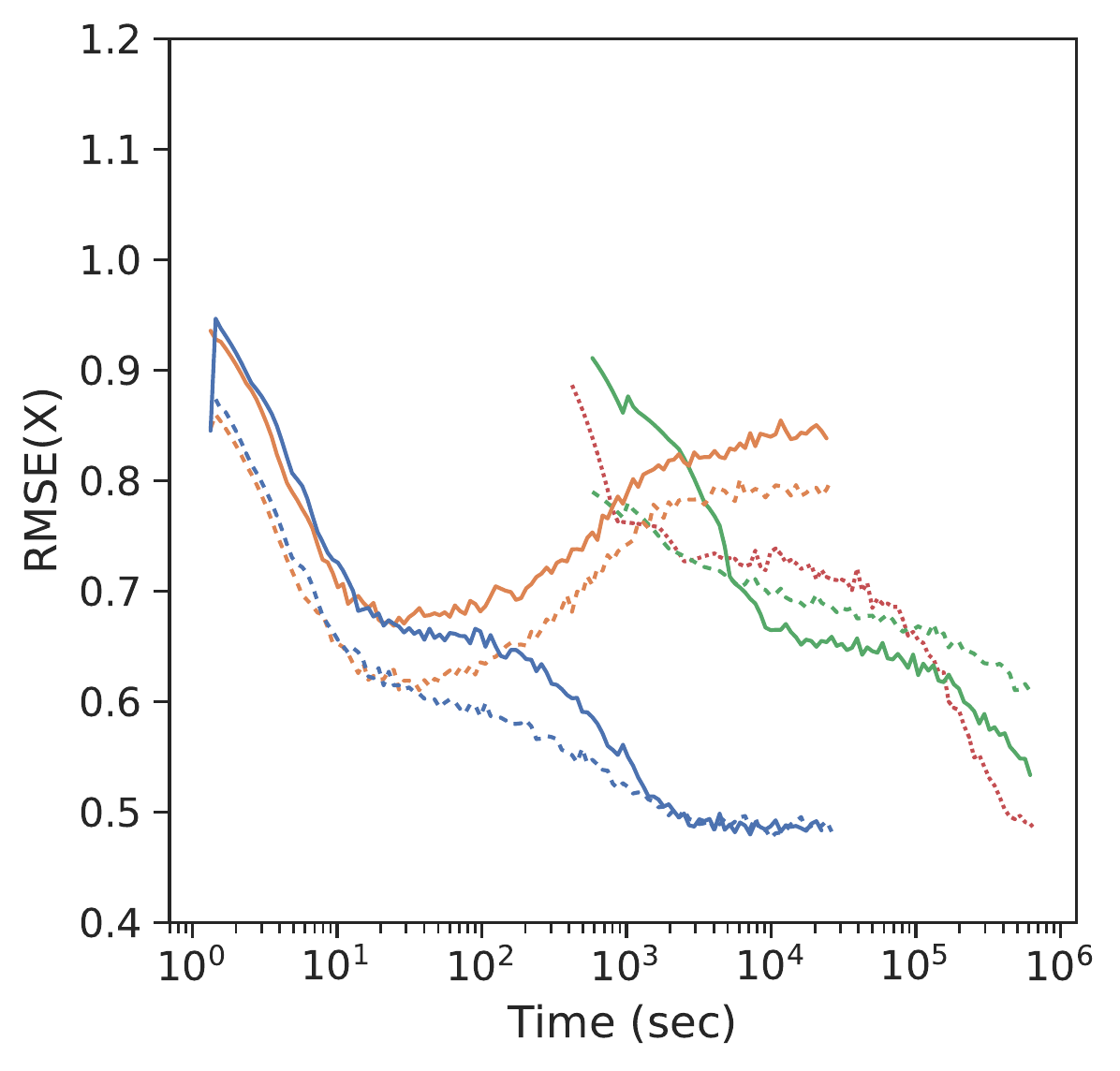}
    \end{minipage}
    \begin{minipage}[c]{.3\textwidth}
    \centering
        \includegraphics[width=\textwidth]{./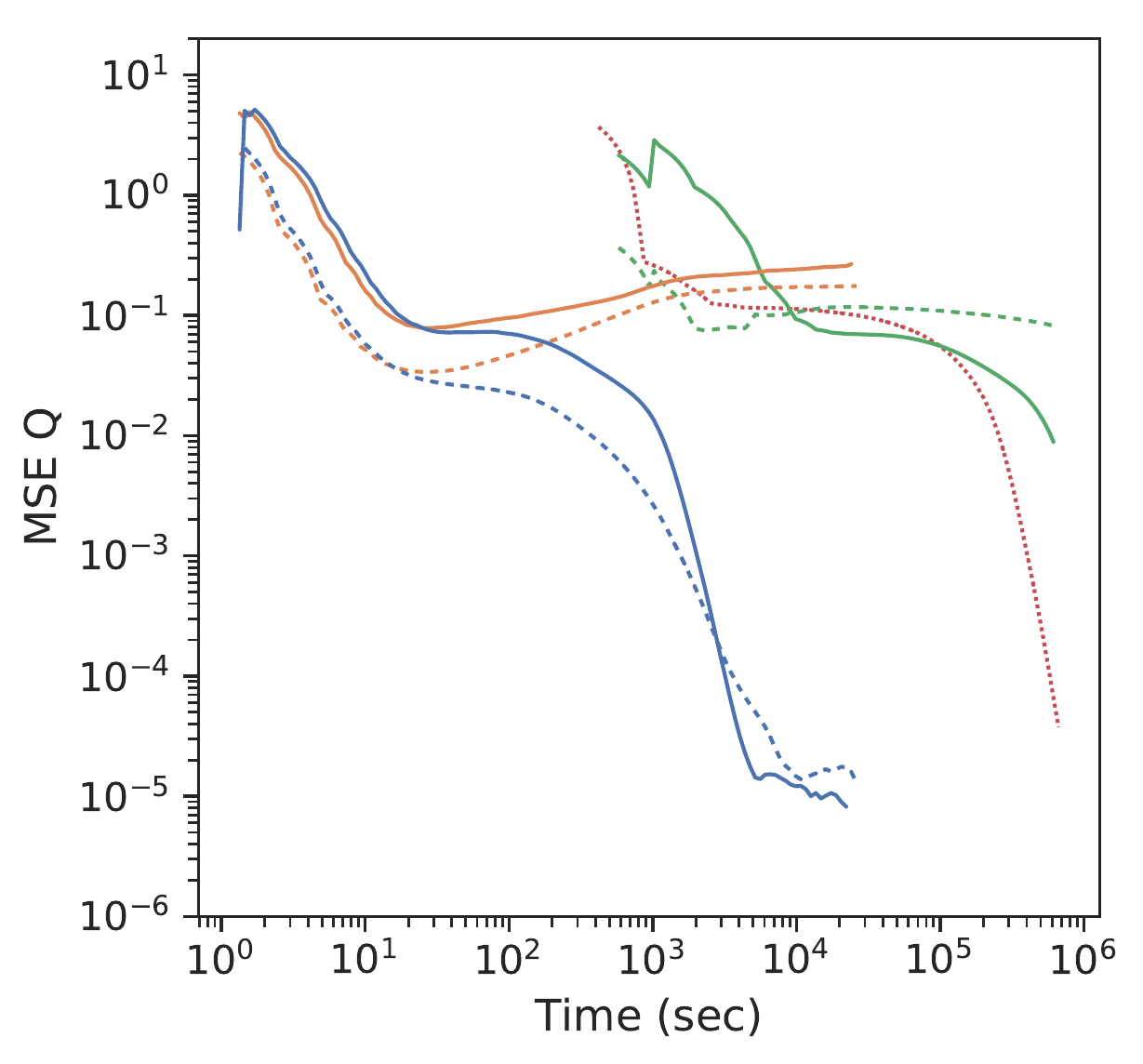}
    \end{minipage}
    \begin{minipage}[c]{.3\textwidth}
    \centering
        \includegraphics[width=\textwidth]{./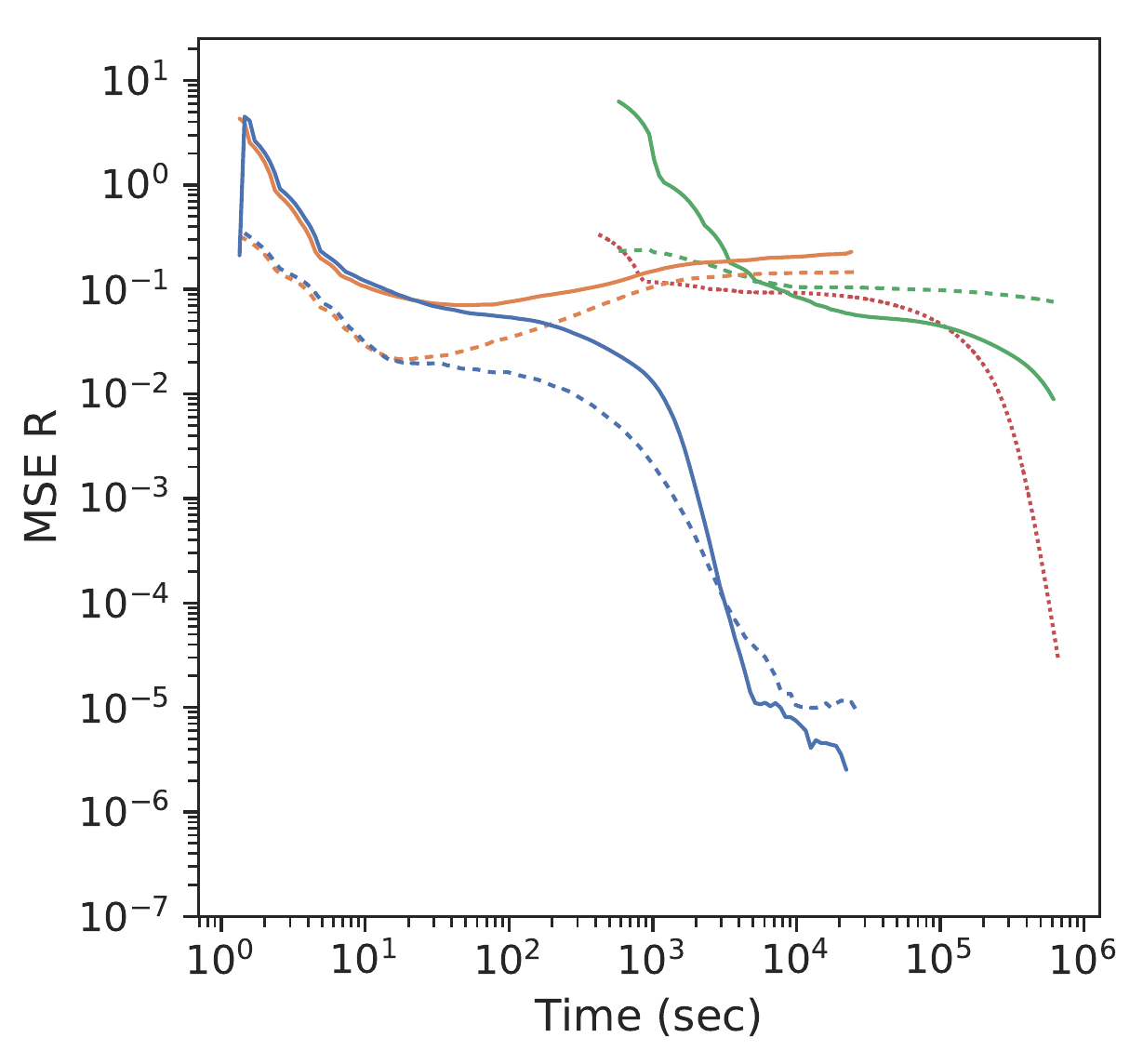}
    \end{minipage}

    \caption{Additional Metrics vs Runtime on LGSSM data with $T = 10^4$ (top), $T=10^6$ (bottom), for different methods:
    \textcolor{BrickRed}{(Gibbs)}, \textcolor{ForestGreen}{(Full)},  \textcolor{Orange}{(No Buffer)} and \textcolor{Blue}{(Buffer)} SGMCMC.
    For SGMCMC methods, solid (\fullline) and dashed (\dashedline) lines indicate SGRLD and SGLD respectively.
    The different metrics are: (left) root-mean squared error (RMSE) between $\hat{x}$ and $x^*$, (center) estimation error $MSE(\hat{Q}^{(s)}, Q^*)$ (right) estimation error $MSE(\hat{R}^{(s)}, R^*)$ .
}
    \label{fig:lds_extra_metrics}
\end{figure}

\subsubsection{SLDS}
Figure~\ref{fig:slds_extra_metrics} are plots of additional metrics for the SLDS data.
\begin{figure}[!htb]
\centering
    \begin{minipage}[c]{.05\textwidth}
    \centering
    \hbox{\rotatebox{90}{\hspace{1em} $|\FULLSEQ| = 10^4$}}
    \end{minipage}
    \begin{minipage}[c]{.25\textwidth}
    \centering
        \includegraphics[width=\textwidth]{./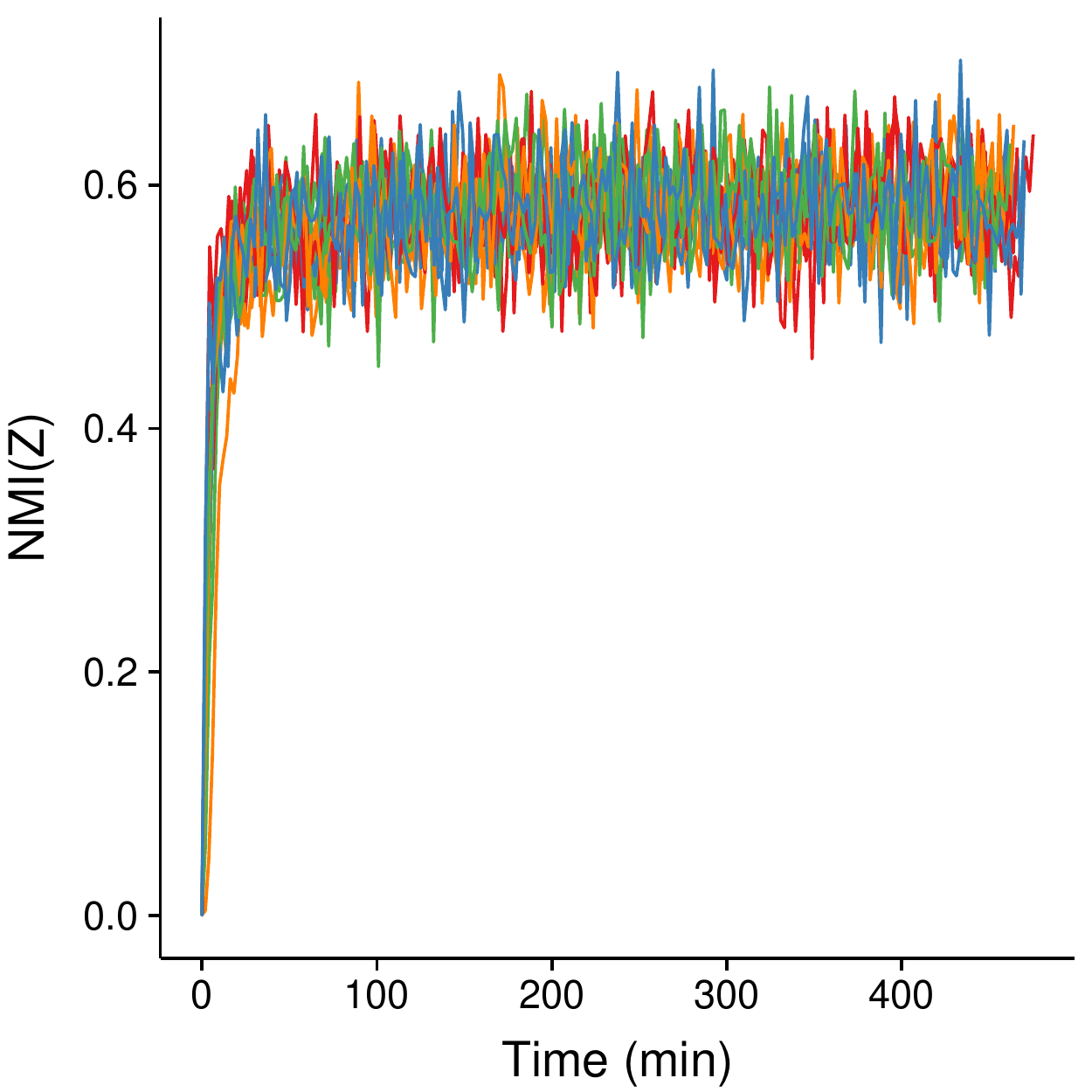}
    \end{minipage}
    \begin{minipage}[c]{.25\textwidth}
    \centering
        \includegraphics[width=\textwidth]{./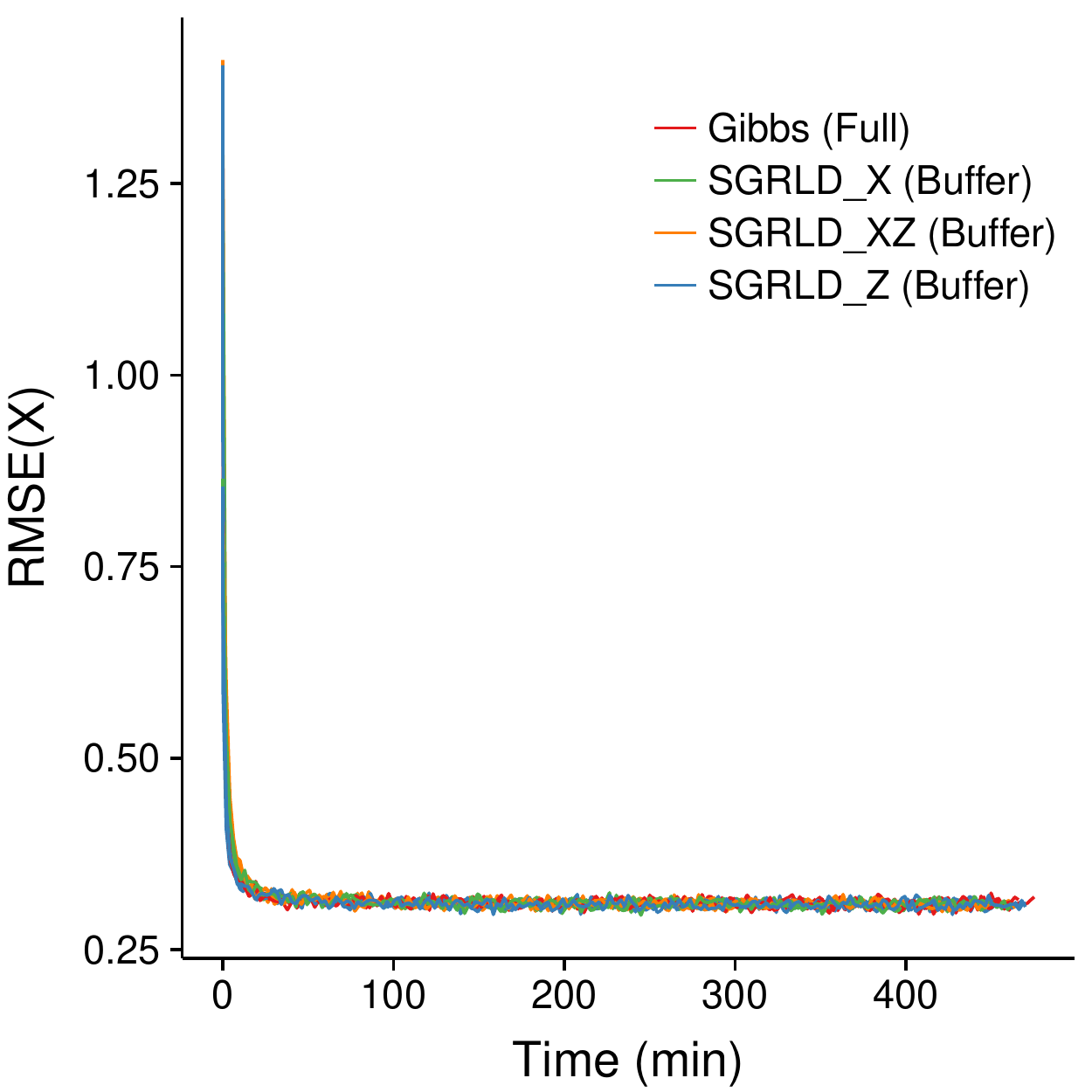}
    \end{minipage}
    \begin{minipage}[c]{.4\textwidth}
    \centering
        \includegraphics[width=\textwidth]{./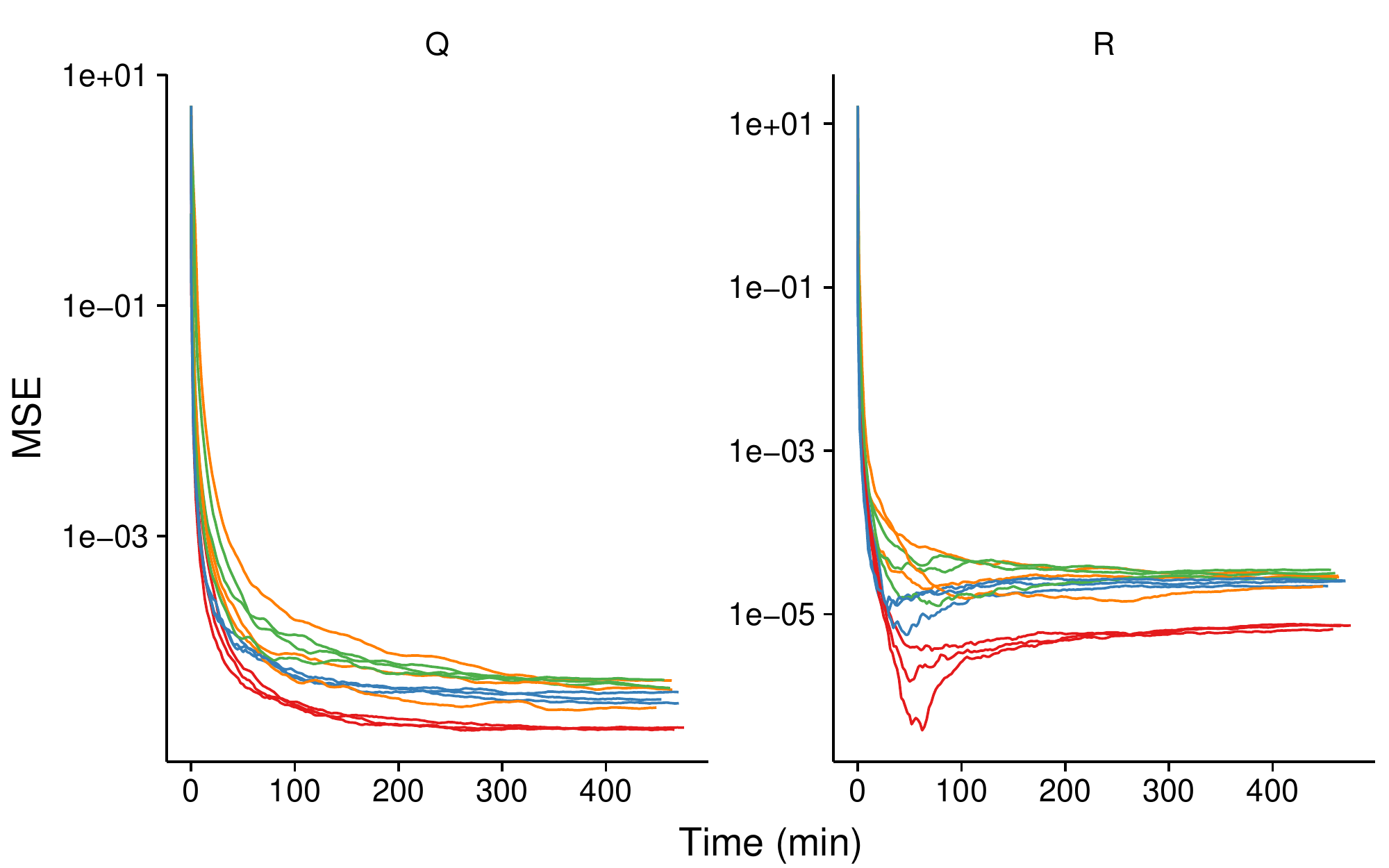}
    \end{minipage}

    \begin{minipage}[c]{.05\textwidth}
    \centering
    \hbox{\rotatebox{90}{\hspace{1em} $|\FULLSEQ| = 10^6$}}
    \end{minipage}
    \begin{minipage}[c]{.25\textwidth}
    \centering
        \includegraphics[width=\textwidth]{./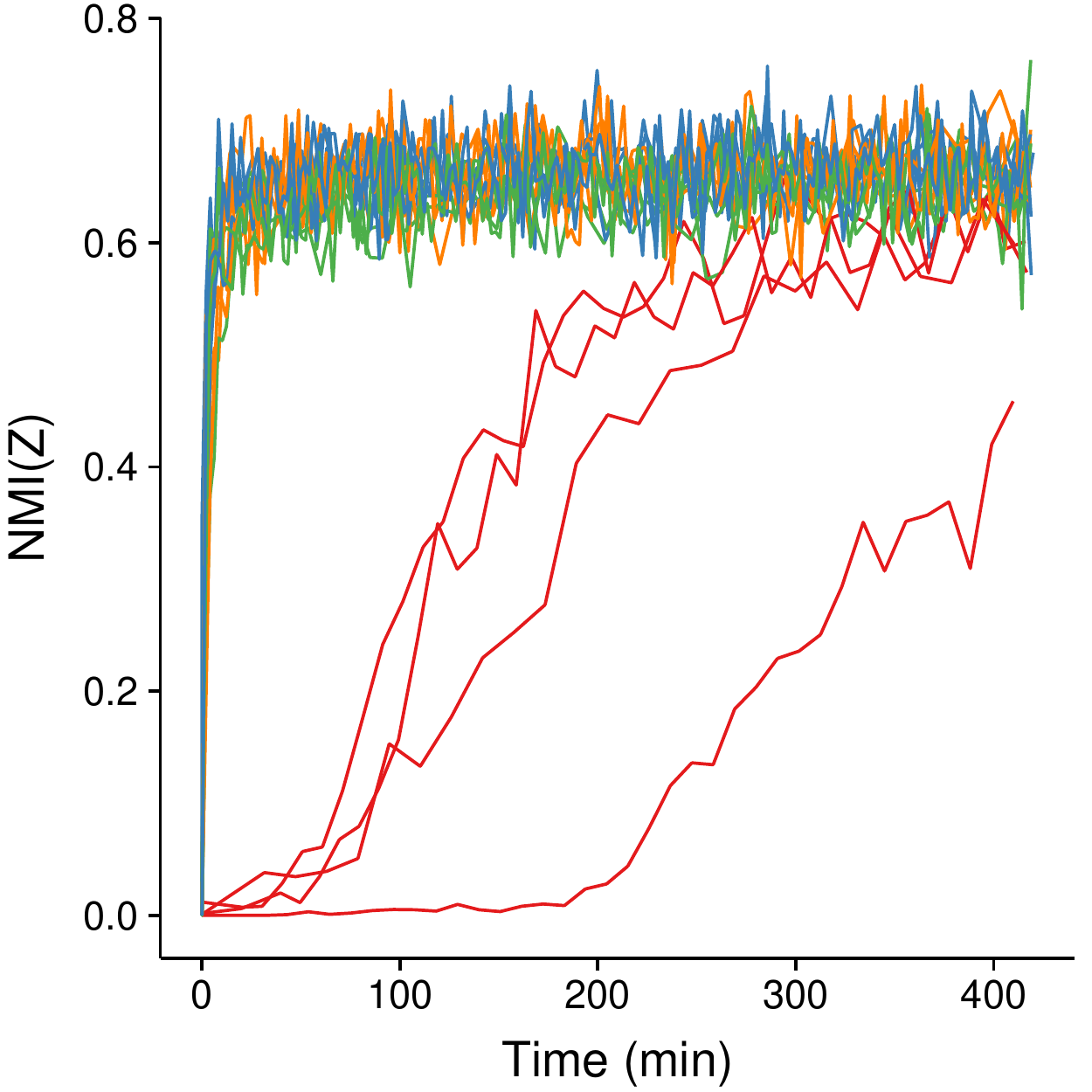}
    \end{minipage}
    \begin{minipage}[c]{.25\textwidth}
    \centering
        \includegraphics[width=\textwidth]{./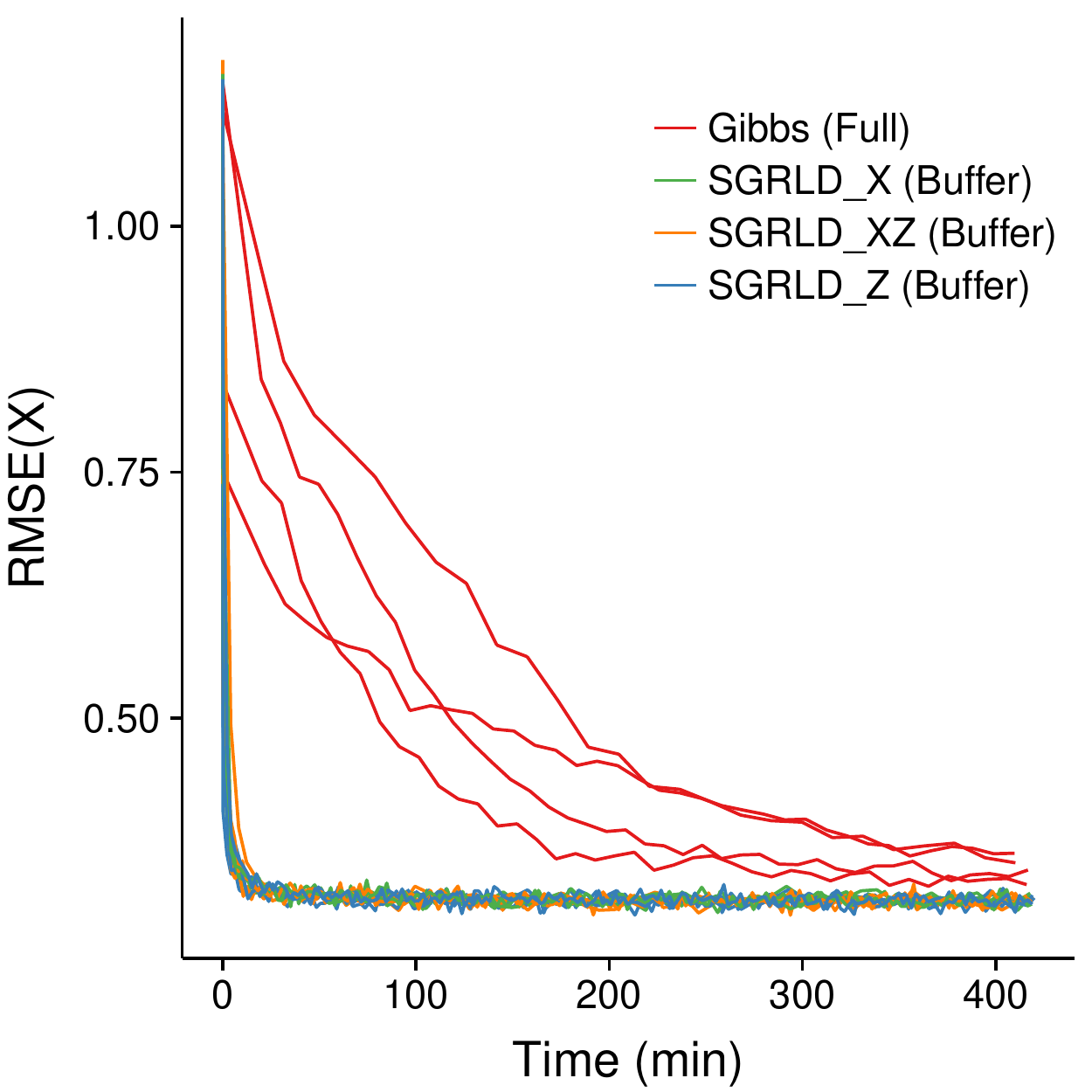}
    \end{minipage}
    \begin{minipage}[c]{.4\textwidth}
    \centering
        \includegraphics[width=\textwidth]{./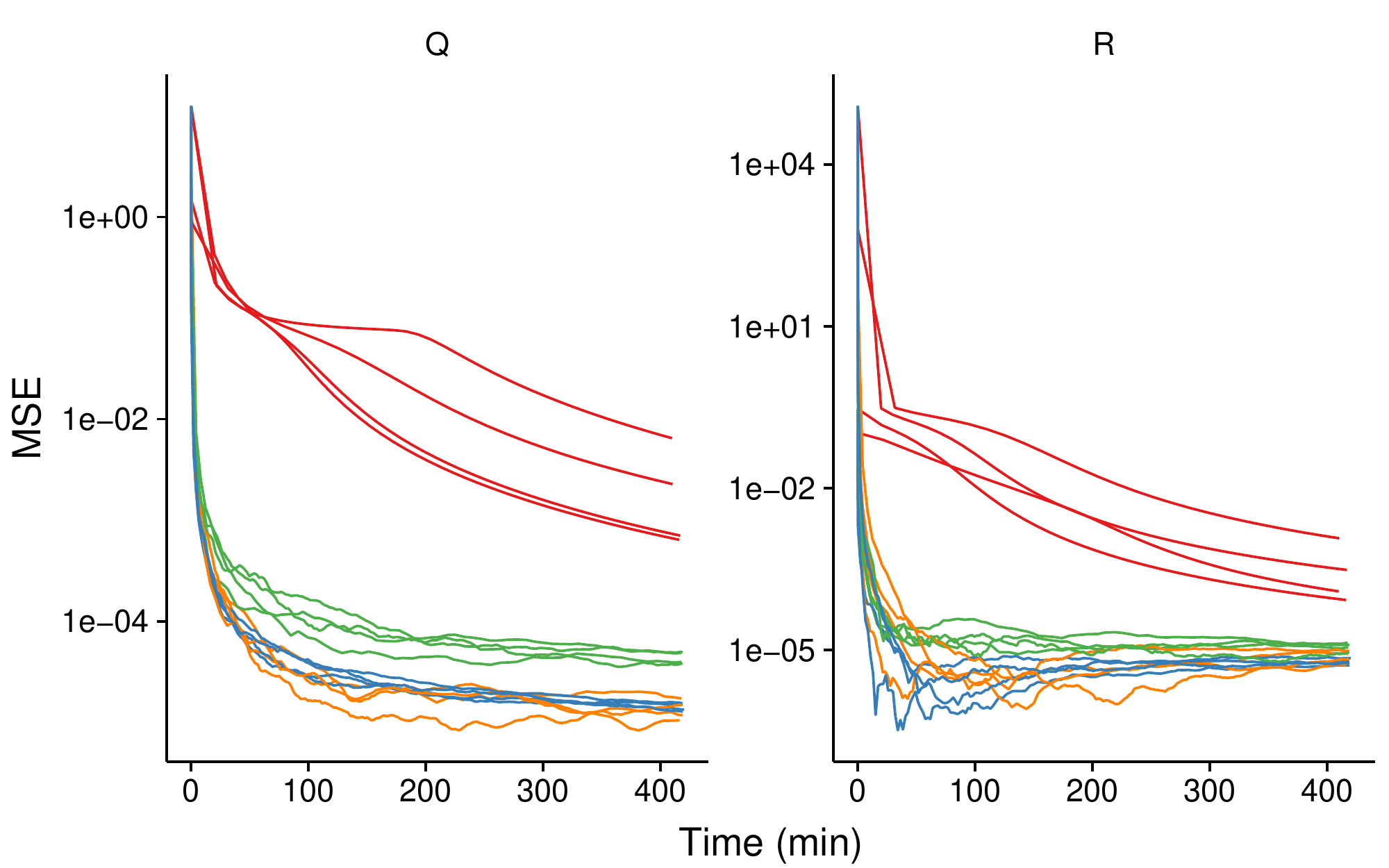}
    \end{minipage}
    \caption{Additional Metrics vs Runtime on SLDS data: (Top) $|\FULLSEQ| = 10^4$, (Bottom) $|\FULLSEQ| = 10^6$. (Left)  NMI between $\hat{z}$ and $z^*$. (Center) root-mean square error (RMSE) between $\hat{x}$ and $x^*$, (Right) estimation error $\| \theta^{(s)} - \theta^* \|$.
Methods: \textcolor{BrickRed}{Gibbs}, \textcolor{ForestGreen}{SGRLD X}, \textcolor{YellowOrange}{SGRLD XZ}, and \textcolor{Blue}{SGLRD Z}.
}
    \label{fig:slds_extra_metrics}
\end{figure}

\end{document}